\documentclass[12pt]{article}
\usepackage[margin=1in]{geometry}

\usepackage[authoryear, sort&compress, round]{natbib}
\usepackage[utf8]{inputenc} 
\usepackage[T1]{fontenc}    
\usepackage{hyperref}       
\usepackage{url}            
\usepackage{booktabs, multicol, multirow}       
\usepackage{amsfonts}       
\usepackage{amsmath}
\usepackage{nicefrac}       
\usepackage{microtype}      
\usepackage{mathtools}

\usepackage{upgreek,morenotations2,rotating}

\usepackage{arydshln}

\usepackage[toc,page]{appendix}

\title{\papertitle}

\author{
  Vivien Tran-Thien \qquad Richard Nock \\
 Google\\
{\normalsize $\{$vtranthien,richardnock$\}$@google.com} \\
}
\date{}

\begin{document}

\maketitle

\begin{abstract}
  Speculative decoding is a successful technique speeding up inference of a target autoregressive language model via a fast drafter model. Lossy speculative decoding allows a drift with respect to the target to further improve speed. Interestingly, it has been observed experimentally that the resulting model can \textit{also} beat the target \textit{quality-wise}. Our paper formally proves how such a feat is possible with a formal approach to lossy speculative decoding called \textit{mentored decoding}. To get there, we connect inference to a celebrated ML training theory, \textit{boosting}, and proceed via the generalization of mentored decoding to the whole set of $f$-divergences. We uncover key properties of mentored decoding, among which (i) the particularly appealing geometric nature of the total variation case, (ii) simple approximations for any $f$-divergence in direct relation with boosting compliance, and (iii) a \textit{divergence independent} $O(n)$ space and $O(\mathrm{sort}(n))$ time data structure built on drafter and target outputs, which allows to query the optimal parameters of the dual problem in $O(\log n)$ time and constructing optimal mentored distributions in $O(n)$ time for any $f$-divergence.
\end{abstract}

\section{Introduction}\label{sec-int}

Large language model (LLM) inference is often memory-bandwidth bound during sequential token generation. \textit{Speculative decoding} \citep{leviathan2023fast,chen2023accelerating} (SD) accelerates this process by using a smaller, faster draft model to propose candidate tokens, which are subsequently verified in parallel by the target model. Crucially, SD constrains the \textit{output distribution} to be the same as the target's, which inherently constrains the overall acceptance probability as a tight function of drafter and target. This fundamental limit is not an artifact of SD: \citet{sun2023spectr} proved that SD achieves the optimal acceptance rate under exact target distribution matching. To lift this cap, \citet{Tran-Thien_2023} first framed \textit{mentored decoding} (MD) as a constrained optimization problem maximizing draft acceptance subject to bounded divergence between target and the output. The target, authorizing deviations with respect to its output as long as they do not substantially diverge, becomes the \textit{mentor} in MD and the final output, which mixes tokens from both models, is a composite output distribution from an ensemble model. Initially, \citet{Tran-Thien_2023} used the reverse Kullback-Leibler divergence as divergence measure. To the best of our knowledge, this was the first formal attempt to alleviate SD's acceptance probability cap, even when heuristic proposals started in fact to flourish from the introduction of \textit{lenient} SD in \citet{leviathan2023fast}.

It is hard to exaggerate the experimental success of SD \citep{kim2023speculative,cai2024medusa,li2024eagle,fu2024break,yang2023inference,he2024rest, wang2025diversed, hao2026cactus} (and many others, See Section \ref{sec-rel}). Among the chorus of approval for speeding up inference, distinct voices later started to emerge, either on the fact that the drafter, even when smaller than the target, can occasionally produce high quality tokens that are then underutilized \citep{DBLP:conf/icml/LiaoXD0MSSX25}, or, more importantly, that the \textit{combination} of models achieved in the composite output can in fact beat the target on \textit{quality metrics} as well \citep{DBLP:conf/aaai/QinHPCS25, li2026targetimitationcollaborationspeculative, zhong2025speeding}. While the technical leads in the formal analysis of SD/MD inference speed-up alone are already scarce \citep{leviathan2023fast,Tran-Thien_2023, sun2023spectr, yin2024theoretical, pankratov-alistarh-2026-speculative}, there is to our knowledge no such analysis combining the possibility of speeding up inference to that of improving any quality metric on the output. \\
\noindent\textbf{Our paper} proposes the first analysis of this kind, on joint inference efficiency and model quality properties of mentored decoding as originally designed in \citet{Tran-Thien_2023}, demonstrating in particular how the MD setting achieves connections with one of machine learning (ML)'s seminal training framework especially suited to analyze the quality of model combinations: Boosting \citep{sfBF}. Our contribution to get there is threefold: (i) we substantially improve the state of the art understanding of MD, (ii) we design and analyze a new boosting approach for the connection, and (iii) we design and analyze efficient algorithms to operate this connection on the MD side.\\
\noindent\textbf{On improving MD understanding}, we use as a warmup the particular case of the total variation divergence. \citet{yin2024theoretical} partially covered the case but left aside the characterization of the set of optimal solutions. It turns out that it has absolutely remarkable properties. First, a deceptively simple geometric appeal: it is the intersection of the $n$-dimensional hyperrectangle defined by the drafter and target coordinates with the probability simplex and activating the divergence constraint. Second, a remarkable extent: this set is big enough to contain the optimal solutions for \textit{all} strictly convex divergences. Its properties bestow optimal solutions with unique appealing geometric and computational features, yielding extremely simple optimal solutions like the convex combination used in several papers \citep{yin2024theoretical,wang2025diversed,zhong2025speeding}. We then characterize the general solution for any $f$-divergence. In particular, for any strictly convex $f$, the optimal mentored distribution is unique and takes an exceptionally simple, intuitive coordinate-wise clamping form: $\pi^* = \max\{\alpha q, \min\{p, \beta q\}\}$ with $0 \le \alpha < 1 < \beta$, tracing a one-dimensional trajectory in the simplex connecting target $q$ to 
drafter $p$. Remarkably, this trajectory is independent from $f$. Additionally, for any generator $f$ differentiable in $z=1$, the curve giving the threshold $f$-divergence as a function of the optimal acceptance probability is \textit{always of right-derivative 0} at speculative decoding's "minimal" acceptance probability. Hence, it is always possible to at least reasonably improve SD's acceptance probability at negligible divergence cost to the target.\\
\noindent\textbf{On the connection with Boosting}, we first design a multiclass extension of the self-normalized boosting algorithm of \citet{DBLP:journals/ai/NockN07}, simpler and more efficient than AdaBoost yet giving rates that compete with the state of the art \citep{10.1214/aos/1024691352}. Mentored decoding being an inference technique, we develop two distinct paths connecting it with boosting. The first path is general and relies on a novel use of boosting, showing how the composite \textit{outputs} of mentored decoding "hides" a combination of \textit{models} that exhibits boosting properties. Since the MD's output depends on the drafter and target's output distributions, we ultimately deliver boosting compliance for \textit{all} related combinations of models, depending on these distributions and also on boosting's key parameter: the \textit{edge} of the drafter and target's last layers. This makes it possible to evaluate how well drafter and target "complement" each other from the output quality's standpoint, offering a concrete criterion to then select drafter and / or target from a pool of already available models -- that now abound in repositories of public and private spaces. Our second path connecting mentored decoding and boosting is specific to the total variation divergence, for which the conveniences of the set of optimal solutions make it possible to carve at reduced formal cost the distribution corresponding to the boosted ensemble of drafter and target directly in the optimal set of mentored decoding.\\
\noindent\textbf{From the standpoint of algorithms}, another remarkable invariant emerges at the level of generality of all $f$-divergences: we show that there exists a simple \textit{divergence independent} "breakpoint" data structure of size $\leq n$ (=the vocabulary size) which then allows to compute the optimal per-token acceptance and resampling probabilities, for \textit{any} $f$; the computation of this data structure takes $O(\mathrm{sort}(n))$, i.e. the complexity of sorting $n$ reals. While solving a non-linear constrained optimization problem per token might seem 
computationally demanding, the practical runtime overhead is in fact negligible. First, our data structure reduces the optimization to a single pass query over $O(\log n)$ precomputed breakpoints. This can then be used to approximately find the optimal parameters in $O(1)$ -- i.e. with guarantees on the divergence --, and this can also be used to find the \textit{exact} optimal parameters for the dual problem in $O(1)$ -- i.e. minimize the $f$-divergence subject to lowerbounded acceptance probability --. Second, in modern LLM inference where top-$k$ truncation is standard, the optimization domain reduces naturally from $n$ to just $k$ candidates.\\
\noindent\textbf{Our paper is organized as follows}: the next Section \ref{sec-rel} summarizes related work. Then, follow three key parts of our paper, organized so that readers familiar with only one of the two frameworks used (lossy speculative decoding and boosting) may easily process the part on which they are most familiar and then connect with the other one: Section \ref{sec-men-dec} presents the main results on the mentored decoding side, Section \ref{sec-boost} presents the boosting side and its connection to mentored decoding, finally Section \ref{sec-algo-prop-md} presents the algorithmic sides of the theory discussed. A following Section \ref{sec-disc} discusses additional topics related to mentored decoding and boosting, and a last Section \ref{sec-conc} concludes with avenues for future research. Our paper is self-contained: all proofs are given either in the main body of the paper or in an Appendix starting page \pageref{sec-app-proofs}.

\section{Related Work}\label{sec-rel}

On the pure \textbf{speculative decoding} side, i.e. \textit{lossless decoding}, \textit{Blockwise Parallel Decoding} \citep{stern2018blockwise} pioneered interleaving fast draft sequence generation with parallel target verification to accelerate greedy sequence-to-sequence decoding. \citet{xia2022speculative} refined this approach and coined the term \textit{speculative decoding}, drawing analogy to speculative execution in computer architecture. \citet{leviathan2023fast,chen2023accelerating} independently generalized the framework to multinomial sampling, establishing the standard rejection-sampling formulation described in Section~\ref{sec-men-dec}. \citet{sun2023spectr} proved that this formulation achieves the optimal acceptance rate under exact target distribution matching. Following its inception, speculative decoding has evolved along several dimensions. A first one moved towards better aligned or faster draft models: Self-speculative decoding \citep{kim2023speculative,zhang2024draft,liu2024kangaroo,gloeckle2024better,cai2024medusa} eliminates the need for a separate draft model by adding lightweight prediction heads, skipping transformer layers, pruning sub-networks, or early-exiting from the target model. \textit{EAGLE} and its variants \citep{li2024eagle,li2024eagle2,li2026eagle} perform autoregressive drafting over target hidden feature representations rather than discrete tokens. DistillSpec \citep{zhou2024distillspec} aligns draft models to target models during training by minimizing $f$-divergence objectives. Recently, \textit{DFlash} \citep{chen2026dflash} proposed non-autoregressive block diffusion models for low-latency draft prediction. A second one moved towards model-free drafting: \citet{fu2024break} introduced \textit{Lookahead Decoding}, generating candidate n-grams via parallel Jacobi fixed-point iteration without relying on a draft model. \textit{LLMA} \citep{yang2023inference} copies recurring n-gram patterns directly from the input prompt or reference documents, while \textit{REST} \citep{he2024rest} retrieves candidate phrases from external datastores. A third one moved towards multi-draft and tree verification: Rather than proposing a single linear sequence of candidate tokens, tree-based speculation generates candidate trees verified in parallel using tree-attention masks \citep{miao2024specinfer,chen2024sequoia,li2024eagle2}. \citet{sun2023spectr} proposed \textit{SpecTr}, using optimal transport to verify multiple drafts. \citet{hu2025towards} established that optimal multi-draft speculative decoding (MDSD) reduces via total unimodularity to subset selection, introducing \textit{Greedy Draft Selection} as an efficient and theoretically grounded candidate selection strategy. It has been observed that the performances of speculative decoding depend on many factors \citep{liu2026speculative}. In deep contrast with the work, essentially experimental, that flourished after the seminal work of \citet{leviathan2023fast,chen2023accelerating}, the theory side of speculative decoding has remained in close contact with the seminal work, with essentially one exception digging in the expected number of tokens successfully predicted \citep{pankratov-alistarh-2026-speculative}.\\

\noindent Because lossless speculative decoding strictly preserves the target distribution, its acceptance rate is fundamentally bounded by the divergence between drafter and target. To further increase throughput, several works have explored relaxing this exact-matching constraint towards \textit{lossy speculative decoding}. \citet{xia2026practical} provide an empirical benchmark of most of these lossy decoding strategies. Many approaches are fundamentally heuristic in nature. In their seminal paper, \citet{leviathan2023fast} introduced \textit{Lenient Speculative Decoding}, making the per-token acceptance probability dependent on a factor that skews it. Subsequent work proposed various heuristic acceptance criteria: \textit{Typical Acceptance Sampling} \citep{cai2024medusa} accepts candidate tokens based on entropy heuristics; \textit{Fuzzy Speculative Decoding} \citep{holsman2025fuzzy} unconditionally accepts draft tokens whenever the step-level divergence between drafter and target falls below a scalar threshold $T$; \textit{Speculative Contrastive Decoding} \citep{yuan2024speculative} incorporates contrastive penalties to steer generation away from draft errors; and \citet{narasimhan2025faster} adapt speculative verification to model cascading deferral rules. Other work include using big models or more than two models \citep{byun2025model,li2026targetimitationcollaborationspeculative}, adding a linear head on top of the target called a \textit{judge} -- being another example of last layer retraining -- \citep{bachmann2025judge}, completing the process with information from prefill \citep{wang-etal-2025-alignment}, completing the process with guessing appropriate draft length \citep{zhang-etal-2025-draft}, etc. \citep{holsman2025fuzzy}. \\

\noindent The first work \textbf{formalizing} the problem of mentored decoding as a constrained optimization problem maximizing draft acceptance subject to bounded (reverse Kullback-Leibler) divergence is  \citet{Tran-Thien_2023}. \citet{yin2024theoretical} analyzed the problem under Total Variation distance, characterizing the linear Pareto frontier. Inspired by this result, \textit{DIVERSED} \citep{wang2025diversed} introduced dynamic ensemble verification by sampling from a convex combination between drafter and target, like \citet{zhong2025speeding}. Under forward Kullback-Leibler divergence, \textit{Cactus} \citep{hao2026cactus} optimizes candidate acceptance via a second-order Taylor approximation on the sampled token's coordinate, though without globally controlling the divergence of the resulting joint output distribution.\\

\noindent Finally, the papers observing that allowing some drift can beat the target's own metrics are experimental \citep{DBLP:conf/aaai/QinHPCS25, li2026targetimitationcollaborationspeculative, zhong2025speeding}, as to our knowledge there is no formal work on the subject.

\section{Mentored decoding}\label{sec-men-dec}

We first provide some definitions needed for this Section. $n$ is the vocabulary size, $\Delta_n$ is the $n$-probability simplex, $[n] \defeq \{1, 2, ..., n\}$. Bold faces like $\ve{z}$ denote vectors, and their coordinates are denoted like $z_i$. Binary relations between vectors of the same dimension are coordinate-wise: $\ve{a} \leq \ve{b}$ means $a_i \leq b_i, \forall i \in [n]$. For any $\ve{a}, \ve{b} \in \mathbb{R}^n$ such that $\ve{a} \leq \ve{b}$, we let $[\ve{a}, \ve{b}] \defeq \prod_{i\in [n]} [a_i, b_i]$. A prompt to the drafter and target models yields two distributions $\ve{p}\in \Delta_n$ (drafter) and $\ve{q}\in \Delta_n$ (target). Speculative and mentored decoding operate by generating multiple draft outputs and checking acceptance in parallel with the target. Checking a token is probabilistic and relies on a vector of acceptance probabilities $\ve{r} \in [0,1]^n$; if rejected, a resampling distribution $\ve{s}\in \Delta_n$ resamples a new token. Then, the same algorithm resumes until complete sequence generation. We refer e.g. to \citet{leviathan2023fast,Tran-Thien_2023} for more details on the algorithmic side. We also define the $f$-divergences between $\ve{\pi}\in \Delta_n$ and $\ve{q}\in \Delta_n$  as \citep{fdiv-AliSilvey-1966,cEI}:
\begin{eqnarray}
  D_f(\ve{\pi}\| \ve{q}) & \defeq & \sum_i q_i f\left(\frac{\pi_i}{q_i}\right),\label{eq-def-f-div}
\end{eqnarray}
where the \textit{generator}
\begin{eqnarray}
  f: \mathbb{R}^+ \rightarrow \mathbb{R} \mbox{ is convex and such that } f(1) = 0.\label{eq-const-f}
\end{eqnarray}

\subsection{One problem, two parameterizations}\label{subsec-one-two}
Without further ado, we define the core inference problem on which we focus.
\begin{definition}\label{def-f-md-2}
  For any $f$ as per \eqref{eq-const-f}, $\ve{p}, \ve{q} \in \Delta_n, D \geq 0$, the $f$-mentored decoding (MD) problem is defined as find
 \begin{align}
      \textsc{md}^2_f(\ve{p}, \ve{q}; D) \defeq \arg\min _{\ve{r} \in [0,1]^n, \ve{s} \in  \Delta_n} -\ve{p}^\top \ve{r}  \quad \mbox{s.t. } D_{f} (\ve{p} \odot \ve{r} + (1-\ve{p}^\top \ve{r})\cdot\ve{s} \|\ve{q}) \leq D, \label{def-pb-f-md-2-general}\tag{\textbf{$f$-MD-2}}
 \end{align}
 where $\odot$ is Hadamard product.
\end{definition}
This problem was introduced by \citet{Tran-Thien_2023} with the specific choice $f_{\mathrm{rKL}}(z) \defeq -\ln(z)$, the reverse-KL divergence. Note that if $D = 0$, the problem formalizes speculative decoding (SD). \eqref{def-pb-f-md-2-general} is a direct parameterization of MD: we directly seek the couple of vector of acceptance probabilities $\ve{r}$ and resampling distribution $\ve{s}$. A convenient result that we now state and prove is that this problem admits an equivalent parameterization with a single parameter, the mentored distribution itself. Let us define it:
\begin{align}
      \textsc{md}^1_f(\ve{p}, \ve{q}; D) \defeq \arg\min _{\ve{\pi} \in \Delta_n} D_{\mathrm{TV}} (\ve{\pi} \|\ve{p})  \quad \mbox{s.t. } D_{f} (\ve{\pi} \|\ve{q}) \leq D,  \label{def-pb-f-md-1-general}\tag{\textbf{$f$-MD-1}}
\end{align}
where $D_{\mathrm{TV}}$ denotes the total variation divergence, whose generator is $f_{\mathrm{TV}} (z) \defeq |z-1|/2$.
\begin{lemma}\label{lem-equiv-sol}
  Let $\oslash$ denote the coordinate-wise division. For any $\ve{\pi} \in \textsc{md}^1_f(\ve{p}, \ve{q}; D)$, we have $(\ve{r}, \ve{s}) \in \textsc{md}^2_f(\ve{p}, \ve{q}; D)$ where $\ve{r}, \ve{s}$ are defined as:
\begin{eqnarray}
  \ve{r} \defeq \min \{\ve{1}, \ve{\pi} \oslash \ve{p}\} ; \quad \ve{s} \defeq \left\{
  \begin{array}{ccl}
    \frac{1}{1-\ve{p}^\top\ve{r}} \cdot \left(\ve{\pi} - \ve{p}\odot\ve{r}\right) = \frac{1}{1-\ve{p}^\top\ve{r}} \cdot \max\{\ve{0}, \ve{\pi} - \ve{p}\}& \mbox{ if } & \ve{p}^\top\ve{r} < 1\\
    \mbox{any } \ve{s} \in \Delta_n & \mbox{ if } & \ve{p}^\top\ve{r} = 1
    \end{array}\right.\label{eq-r-s-from-pi} .
\end{eqnarray}
Respectively, for any $(\ve{r}, \ve{s}) \in \textsc{md}^2_f(\ve{p}, \ve{q}; D)$, we have $\ve{\pi} \in \textsc{md}^1_f(\ve{p}, \ve{q}; D)$ with
\begin{eqnarray}
  \ve{\pi} & \defeq & \ve{p} \odot \ve{r} + (1-\ve{p}^\top \ve{r})\cdot\ve{s}. \label{eq-pi-from-r-s}
\end{eqnarray}
Finally, the corresponding objective functions are related by $\ve{p}^\top \ve{r} = 1 - D_{\mathrm{TV}} (\ve{\pi} \|\ve{p})$.
\end{lemma}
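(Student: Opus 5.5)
The plan is to set up a correspondence between feasible points of the two problems under which objectives match, and to show that optimal points map to optimal points. I work with the map $\Phi(\ve{r},\ve{s}) \defeq \ve{p}\odot\ve{r} + (1-\ve{p}^\top\ve{r})\cdot\ve{s}$ from $[0,1]^n\times\Delta_n$ to $\Delta_n$. It lands in $\Delta_n$ because it is nonnegative and its coordinates sum to $\ve{p}^\top\ve{r} + 1 - \ve{p}^\top\ve{r} = 1$. By definition, $(\ve{r},\ve{s})$ is feasible for \eqref{def-pb-f-md-2-general} if and only if $\Phi(\ve{r},\ve{s})$ is feasible for \eqref{def-pb-f-md-1-general}, since the constraint is the same expression $D_f(\Phi(\ve{r},\ve{s})\|\ve{q})\leq D$.

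The central inequality is $\ve{p}^\top\ve{r} \leq 1 - D_{\mathrm{TV}}(\Phi(\ve{r},\ve{s})\|\ve{p})$ for every feasible pair. Write $\ve{\pi}=\Phi(\ve{r},\ve{s})$. Because $\ve{\pi}\geq \ve{p}\odot\ve{r}$ coordinatewise and $\ve{p}\odot\ve{r}\leq\ve{p}$, we get $\min\{\pi_i,p_i\}\geq p_ir_i$. Summing over $i$ and using the identity $\sum_i \min\{\pi_i,p_i\} = 1 - D_{\mathrm{TV}}(\ve{\pi}\|\ve{p})$ gives the claim. That identity holds for any two distributions, because $|a-b| = a+b-2\min\{a,b\}$.

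Conversely, given any $\ve{\pi}\in\Delta_n$, I take $\ve{r},\ve{s}$ as in \eqref{eq-r-s-from-pi}. Then $p_i r_i = \min\{p_i,\pi_i\}$, with the convention $r_i=1$ when $p_i=0$, which is harmless. This gives $\ve{p}^\top\ve{r} = 1-D_{\mathrm{TV}}(\ve{\pi}\|\ve{p})$ with equality. I also check that $\ve{\pi}-\ve{p}\odot\ve{r} = \max\{\ve{0},\ve{\pi}-\ve{p}\}\geq \ve{0}$ and that it sums to $1-\ve{p}^\top\ve{r}$. Hence $\ve{s}\in\Delta_n$ when $\ve{p}^\top\ve{r}<1$ and $\Phi(\ve{r},\ve{s})=\ve{\pi}$. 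When $\ve{p}^\top\ve{r}=1$, the TV distance is $0$, so $\ve{\pi}=\ve{p}=\ve{p}\odot\ve{r}$, and $\Phi(\ve{r},\ve{s})=\ve{\pi}$ for any $\ve{s}$. So this construction gives a right inverse of $\Phi$ that preserves feasibility and attains equality in the central inequality.

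Optimality then follows by comparing values. Let $V_1$ be the optimal TV value of \eqref{def-pb-f-md-1-general} and $V_2$ the optimal acceptance of \eqref{def-pb-f-md-2-general}. The central inequality gives $V_2\leq 1-V_1$, and the right inverse gives $V_2\geq 1-V_1$, so $V_2=1-V_1$.

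For the first claim, take $\ve{\pi}\in\textsc{md}^1_f$. The pair built from it is feasible and has acceptance $1-V_1=V_2$, so it is optimal. For the second claim, take an optimal $(\ve{r},\ve{s})$ and set $\ve{\pi}=\Phi(\ve{r},\ve{s})$. This $\ve{\pi}$ is feasible and $1-D_{\mathrm{TV}}(\ve{\pi}\|\ve{p})\geq \ve{p}^\top\ve{r}=V_2=1-V_1$, so $\ve{\pi}$ attains $V_1$ and is optimal. The inequality is then forced to be an equality, which gives the final objective relation $\ve{p}^\top\ve{r}=1-D_{\mathrm{TV}}(\ve{\pi}\|\ve{p})$ at optimal pairs. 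For the pairs built from $\ve{\pi}$ in the first claim, this relation holds by construction.

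The main obstacle is modest: handling the degenerate cases cleanly. These are $p_i=0$, where $\ve{\pi}\oslash\ve{p}$ is undefined and I read the min as $1$, and the case $\ve{p}^\top\ve{r}=1$. The existence of minimizers, which the $\arg\min$ sets implicitly assume, follows from compactness and lower semicontinuity of $D_f$ and can be taken for granted.
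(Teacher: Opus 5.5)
Your proof is correct and follows essentially the same route as the paper. Both rest on the coordinatewise bound $p_i r_i \le \min\{\pi_i, p_i\}$ and on the fact that the construction \eqref{eq-r-s-from-pi} attains this bound with equality; the only difference is that the paper argues each direction by contradiction while you compare the optimal values $V_2 = 1 - V_1$ directly, which has the small bonus of justifying the objective relation at optimal pairs of \eqref{def-pb-f-md-2-general} (the paper writes $D_{\mathrm{TV}}(\ve{\pi}\|\ve{p}) = 1-\ve{p}^\top\ve{r}$ there, where only ``$\le$'' is available a priori, though that inequality suffices for its argument).
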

Proof in Appendix, Section \ref{sec-proof-lem-equiv-sol}. Lemma \ref{lem-equiv-sol} allows us to work with whichever parameterization fits best to context; we denote $f$-MD as the general problem of $f$ mentored decoding, with whichever parameterization. The overall probability to accept a token in the MD setting is defined as
\begin{eqnarray*}
  \pacc(MD) & \defeq & \ve{p}^\top \ve{r}.
\end{eqnarray*}
The expression is the same for SD, only in this case we would have the "hidden" constraint bound $D$ to be zero. We make the following assumptions regarding mentored decoding.
  \begin{assumption}\label{assum-mda}
We assume $0< D < D_{\mathrm{TV}} (\ve{p} \|\ve{q})$, $\ve{p} \neq \ve{q}$, $\ve{p} > \ve{0}$ and $\ve{q} > \ve{0}$.
\end{assumption}
Note the weakness of those statements: if it does not hold that $0< D < D_{\mathrm{TV}} (\ve{p} \|\ve{q})$, the problem is trivial ($D$ being non-negative, either $\ve{\pi} = \ve{p}$ or $\ve{\pi} = \ve{q}$ is optimal); similarly if $p_i = q_i$ for some $i\in [n]$ then any optimum trivially meets $\pi_i = p_i = q_i$ so the coordinate can be dropped from solving \eqref{def-pb-f-md-1-general}, notwithstanding a replacement of the unit mass constraint of the probability simplex by a $1-q_i$ mass constraint. Finally, the assumptions $\ve{p} > \ve{0}$ and $\ve{q} > \ve{0}$ are reasonable for LLMs, or any neural net architecture in which the last layer is a softmax.  Finally, note that we should theoretically add the technical assumption that $f$ be proper in \eqref{eq-const-f}, but it is in fact always met for any $f$ relevant to our context, and even more given Assumption \ref{assum-mda} with which we can always restrict all of our analysis on a closed interval of the real line.

\begin{figure*}
  \centering
  \resizebox{\columnwidth}{!}{\begin{tabular}{c?c?c}\Xhline{2pt}
                                \includegraphics[trim=0bp 0bp 0bp 0bp,clip,width=0.3\columnwidth]{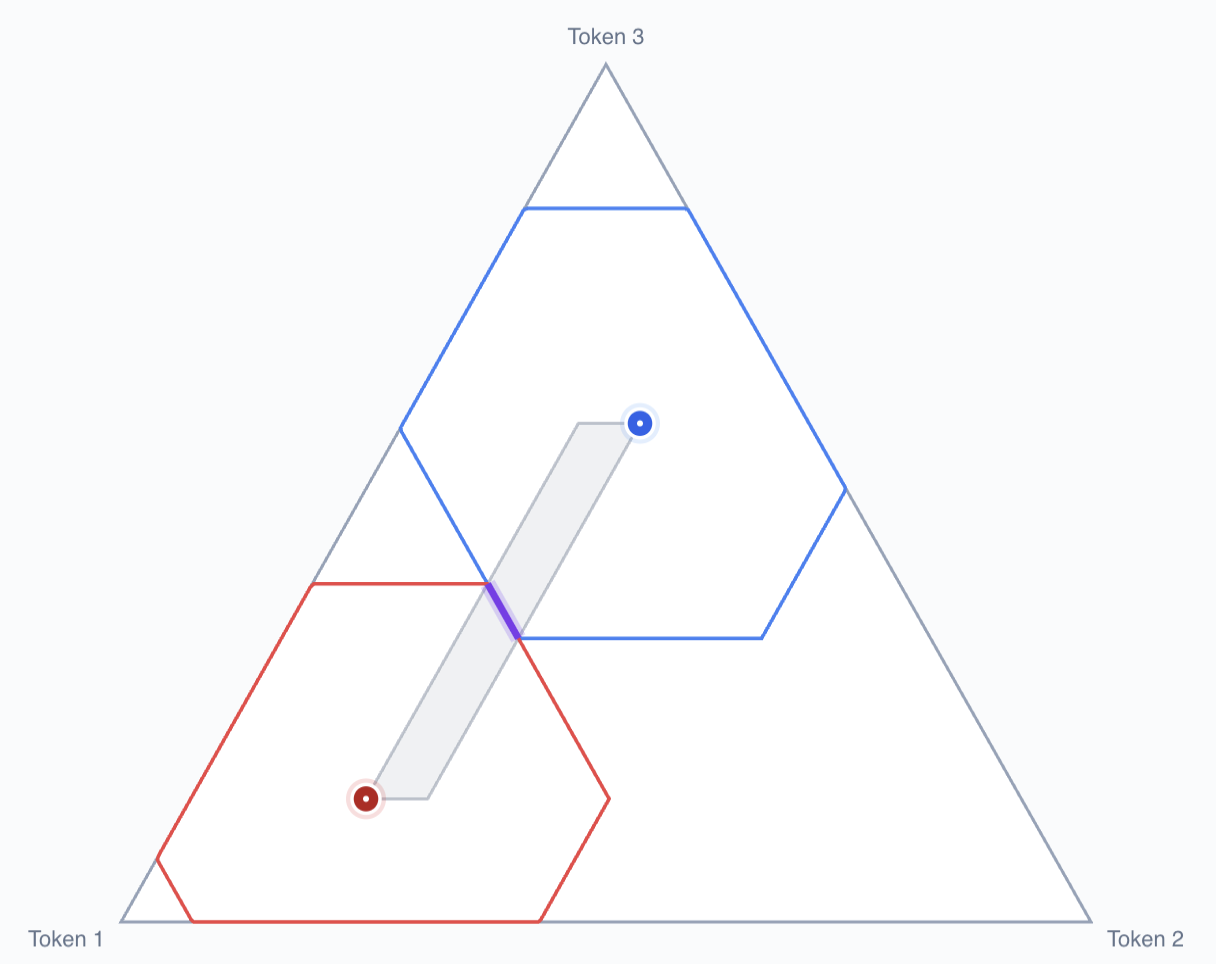} & \includegraphics[trim=0bp 0bp 0bp 0bp,clip,width=0.3\columnwidth]{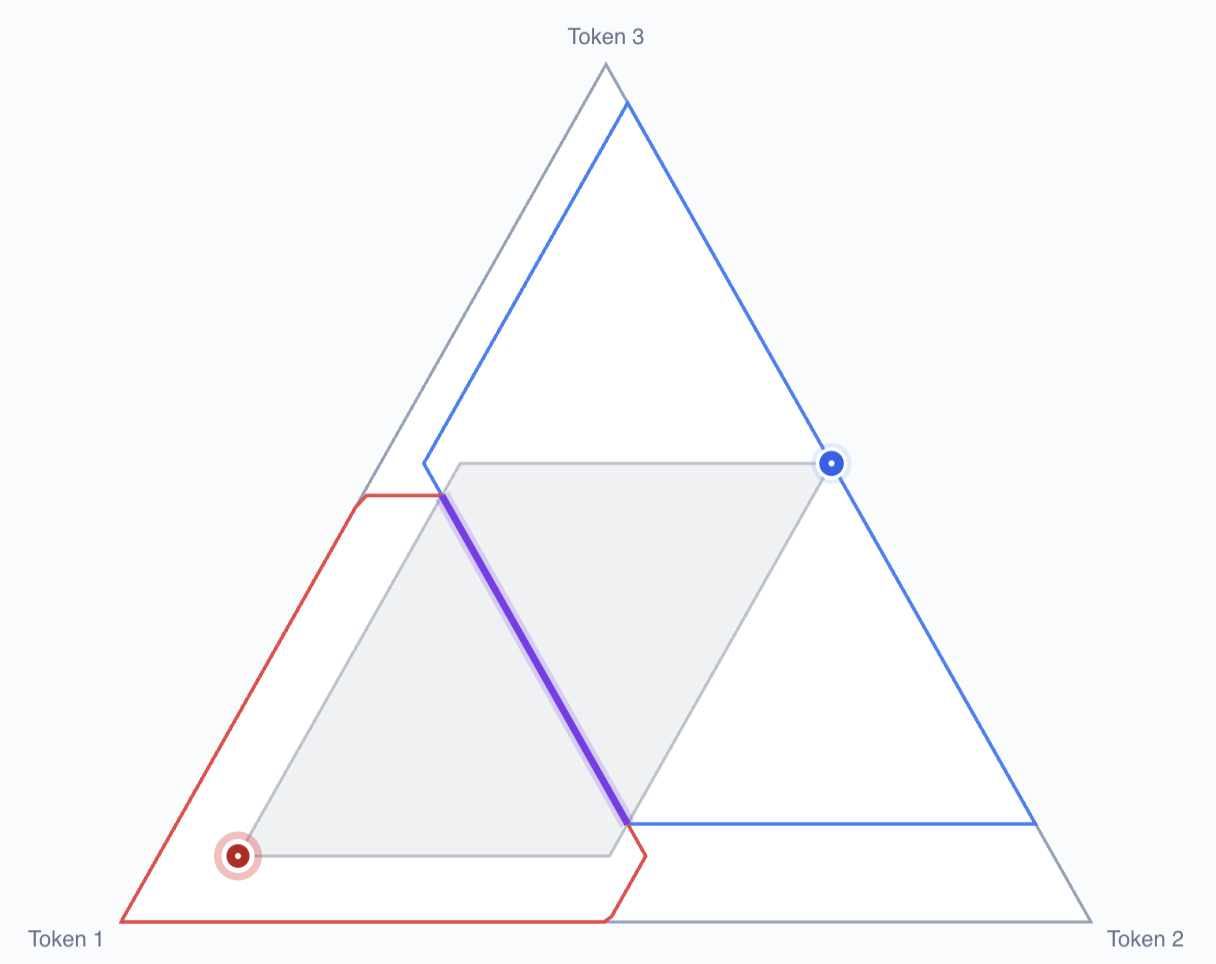} & \includegraphics[trim=0bp 0bp 0bp 0bp,clip,width=0.3\columnwidth]{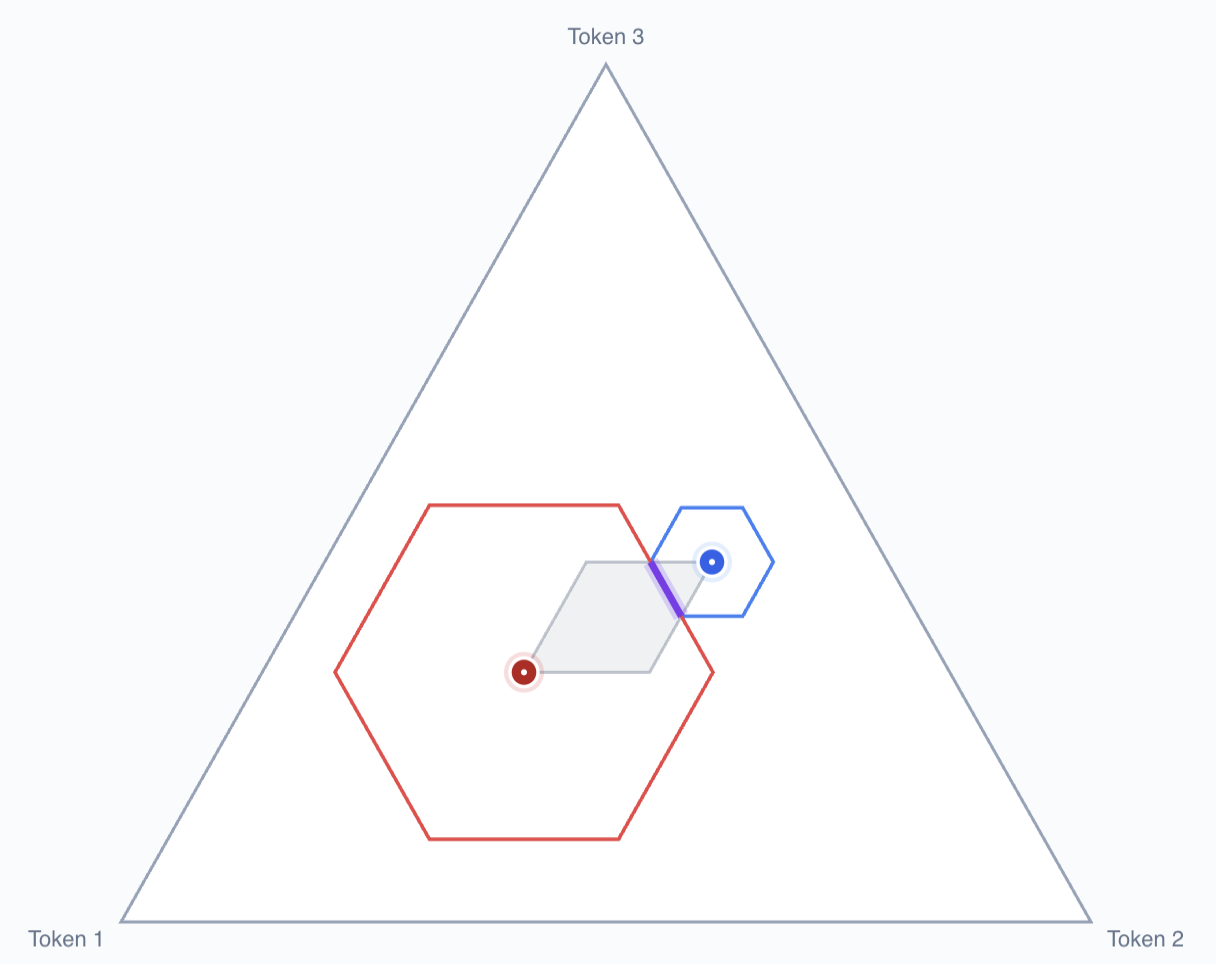} \\ \Xhline{2pt}
    \end{tabular}}
  \caption{Example of optimal solutions for the TV-MD problem with $n=3$ tokens in $\Delta_3$, as intersection (thick segment) between the two $L_1$ balls centered in $\ve{p}$ and $\ve{q}$ (in color {\color{red} red} and {\color{blue} blue}) whose radii define $D$ and the optimal objective value for the total variation divergences in \eqref{def-pb-f-md-1-general}. The shaded area is set $\left[\min\{\ve{p},  \ve{q}\}, \max\{\ve{p},  \ve{q}\}\right] \cap \Delta_3$ (see text).}
    \label{fig:tv-balls}
  \end{figure*} 

\subsection{Warmup: the special case of the total variation divergence}\label{subsec-tv}

The case  $f = f_{\mathrm{TV}}$ in $f$-MD is especially interesting: its set of optimal solutions has a beautiful geometric characterization and its proof is a few liner. We denote it as TV-MD.
\begin{theorem}\label{thm-TV-MD}
Under Assumption \ref{assum-mda}, we have
  \begin{eqnarray}
    \textsc{md}^1_{ f_{\mathrm{TV}}}(\ve{p}, \ve{q}; D) & = & \Delta_n \cap \{D_{\mathrm{TV}} (. \|\ve{q}) = D\} \cap \left[\min\{\ve{p},  \ve{q}\}, \max\{\ve{p},  \ve{q}\}\right] .\label{def-opt-tv-tv}
\end{eqnarray}
  \end{theorem}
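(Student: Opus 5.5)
The proof rests on one lower bound, the triangle inequality for the $L_1$ norm, together with an analysis of exactly when that bound is attained. For any $\ve{\pi}\in\Delta_n$ we have $D_{\mathrm{TV}}(\ve{p}\|\ve{q}) \leq D_{\mathrm{TV}}(\ve{\pi}\|\ve{p}) + D_{\mathrm{TV}}(\ve{\pi}\|\ve{q})$, since $D_{\mathrm{TV}}$ is half the $L_1$ distance. Any feasible $\ve{\pi}$ satisfies $D_{\mathrm{TV}}(\ve{\pi}\|\ve{q})\leq D$, so
\[
D_{\mathrm{TV}}(\ve{\pi}\|\ve{p}) \geq D_{\mathrm{TV}}(\ve{p}\|\ve{q}) - D .
\]
By Assumption \ref{assum-mda}, the right-hand side is strictly positive.

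I will first show that this bound is attained, so that the optimal value equals $D_{\mathrm{TV}}(\ve{p}\|\ve{q})-D$. The witness is $\ve{\pi}_t \defeq (1-t)\ve{q} + t\ve{p}$ with $t \defeq D/D_{\mathrm{TV}}(\ve{p}\|\ve{q}) \in (0,1)$. It lies in $\Delta_n$ and in the box $[\min\{\ve{p},\ve{q}\},\max\{\ve{p},\ve{q}\}]$. A direct computation gives $D_{\mathrm{TV}}(\ve{\pi}_t\|\ve{q}) = tD_{\mathrm{TV}}(\ve{p}\|\ve{q}) = D$ and $D_{\mathrm{TV}}(\ve{\pi}_t\|\ve{p}) = (1-t)D_{\mathrm{TV}}(\ve{p}\|\ve{q}) = D_{\mathrm{TV}}(\ve{p}\|\ve{q})-D$. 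Consequently, $\ve{\pi}$ is optimal if and only if it is feasible and equality holds in the chain above.

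Equality in the chain holds if and only if two things happen together:
\begin{itemize}
\item[(a)] $D_{\mathrm{TV}}(\ve{\pi}\|\ve{q}) = D$;
\item[(b)] the triangle inequality is tight, i.e. $|p_i-q_i| = |p_i-\pi_i|+|\pi_i-q_i|$ for every $i$.
\end{itemize}
The triangle inequality is a sum of coordinate-wise scalar triangle inequalities, so it is tight exactly when each scalar one is. For reals, $|p_i-q_i| = |p_i-\pi_i|+|\pi_i-q_i|$ holds if and only if $\pi_i$ lies between $p_i$ and $q_i$, that is, $\pi_i\in[\min\{p_i,q_i\},\max\{p_i,q_i\}]$. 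This gives both inclusions at once.
\begin{itemize}
\item If $\ve{\pi}$ is optimal, then equality forces (b), so $\ve{\pi}$ lies in the box. Equality also forces $D_{\mathrm{TV}}(\ve{\pi}\|\ve{q})\geq D$, and combined with feasibility this gives (a).
\item Conversely, if $\ve{\pi}\in\Delta_n$ lies in the box and satisfies $D_{\mathrm{TV}}(\ve{\pi}\|\ve{q})=D$, then it is feasible. Its objective equals $D_{\mathrm{TV}}(\ve{p}\|\ve{q})-D$, which is the optimal value.
\end{itemize}

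The only delicate point is the bookkeeping behind the phrase "equality in the chain": one must keep track of both inequalities used. The first is the triangle inequality, which yields the box condition. The second is $D_{\mathrm{TV}}(\ve{\pi}\|\ve{q})\leq D$, which yields the equality constraint. The existence of the witness is what guarantees that the lower bound is the true optimum, and hence that the set in \eqref{def-opt-tv-tv} is nonempty. Everything else is routine.
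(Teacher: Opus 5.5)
Your proof is correct and follows essentially the same route as the paper. Both use the triangle-inequality lower bound, attained by the convex combination $(1-t)\ve{q}+t\ve{p}$ with $t = D/D_{\mathrm{TV}}(\ve{p}\|\ve{q})$, and both characterize optimality as tightness of the triangle inequality (the box) plus an active constraint. Two small points are in your favour: your coordinate-wise scalar tightness argument replaces the paper's reparametrization $\pi_i = \alpha_i p_i + (1-\alpha_i) q_i$ and its use of $|1-\alpha_i|+|\alpha_i| = 1$ iff $\alpha_i\in[0,1]$, and you make explicit the two-inequality bookkeeping that the paper handles informally.
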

\begin{proof}
  The TV divergence satisfies the triangle inequality, hence
  \begin{eqnarray}
D_{\mathrm{TV}} (\ve{q} \|\ve{p}) & \leq & D_{\mathrm{TV}} (\ve{\pi} \|\ve{p}) + D_{\mathrm{TV}} (\ve{\pi} \|\ve{q}) \label{def-ineq}.
  \end{eqnarray}
  The LHS is fixed and the objective to minimize is $D_{\mathrm{TV}} (\ve{\pi} \|\ve{p})$. Any $\ve{\pi} \in \Delta_n$ can be formulated as $\pi_i = \alpha_i p_i + (1-\alpha_i) q_i, \alpha_i \in \mathbb{R}, \forall i \in [n]$ since $\ve{p}\neq \ve{q}$, which yields after simplification for the RHS of \eqref{def-ineq}:
  \begin{eqnarray}
D_{\mathrm{TV}} (\ve{\pi} \|\ve{p}) + D_{\mathrm{TV}} (\ve{\pi} \|\ve{q}) & = & \sum_{i \in [n]} (|1-\alpha_i| + |\alpha_i|) \cdot |p_i - q_i|. \label{sum-div}
  \end{eqnarray}
  Choose $\alpha_i = \alpha \in [0,1], \forall i\in [n]$: the RHS equals $D_{\mathrm{TV}} (\ve{q} \|\ve{p})$ and so \eqref{def-ineq} becomes an equality: this $\ve{\pi}$ is optimal for TV-MD if it maximizes $D_{\mathrm{TV}} (\ve{\pi} \|\ve{q})$ under the constraint, i.e. it makes it active as $D_{\mathrm{TV}} (\ve{\pi} \|\ve{q}) = D$, which happens for the choice $\alpha = D / D_{\mathrm{TV}} (\ve{q} \|\ve{p})$. To be optimal thus generally requires to keep (i) the equality in \eqref{def-ineq} and (ii) the constraint active. Since $f(z) \defeq |1-z| + |z|$ satisfies $f([0,1]) = \{1\}$ and is $>1$ otherwise, optimality requires $\alpha_i \in [0,1], \forall i \in [n]$ in \eqref{sum-div} and thus $\pi_i \in [\min\{p_i, q_i\}, \max\{p_i, q_i\}], \forall i \in [n]$. The set of optimal solutions is thus $\{\ve{\pi} \in \left[\min\{\ve{p},  \ve{q}\}, \max\{\ve{p},  \ve{q}\}\right] \cap \Delta_n: D_{\mathrm{TV}} (\ve{\pi} \|\ve{q}) = D\}$, as claimed.
\end{proof}
It is worth mentioning that \citet{yin2024theoretical} analyzed the TV relaxation of SD. While they provided a thorough analysis of the optimal \textit{losses}, they did not provide the analytic form of the optimum, which has neat properties. Indeed, the TV divergence constraints define $L_1$ balls. The optimal solution of \eqref{def-opt-tv-tv} is the intersection between the simplex and two tangent balls, one whose radius depends on the optimal objective and one whose radius is parameter $D$. Figure \ref{fig:tv-balls} exemplifies three such cases (See also Figure \ref{fig:breakpoints-and-f-balls} for other $f$-MD cases). The optimal set being this "big" and "nice" naturally opens the question as to whether such optimal solutions that speed up inference might in fact be grounded in a model producing $\ve{\pi}$ that, since it is a function of the target and drafter models, could \textit{compete with or beat} the target model in terms of \textit{quality}. We shall indeed give a formal positive answer in Section \ref{sec-boost}, but before, we address and solve $f$-MD for a general $f$ \eqref{eq-const-f}.

\subsection{$f$-mentored decoding: general solution}\label{subsec-f-gen}

\begin{figure*}
  \centering
  \resizebox{0.7\columnwidth}{!}{\includegraphics[trim=30bp 20bp 30bp 20bp,clip,width=0.3\columnwidth]{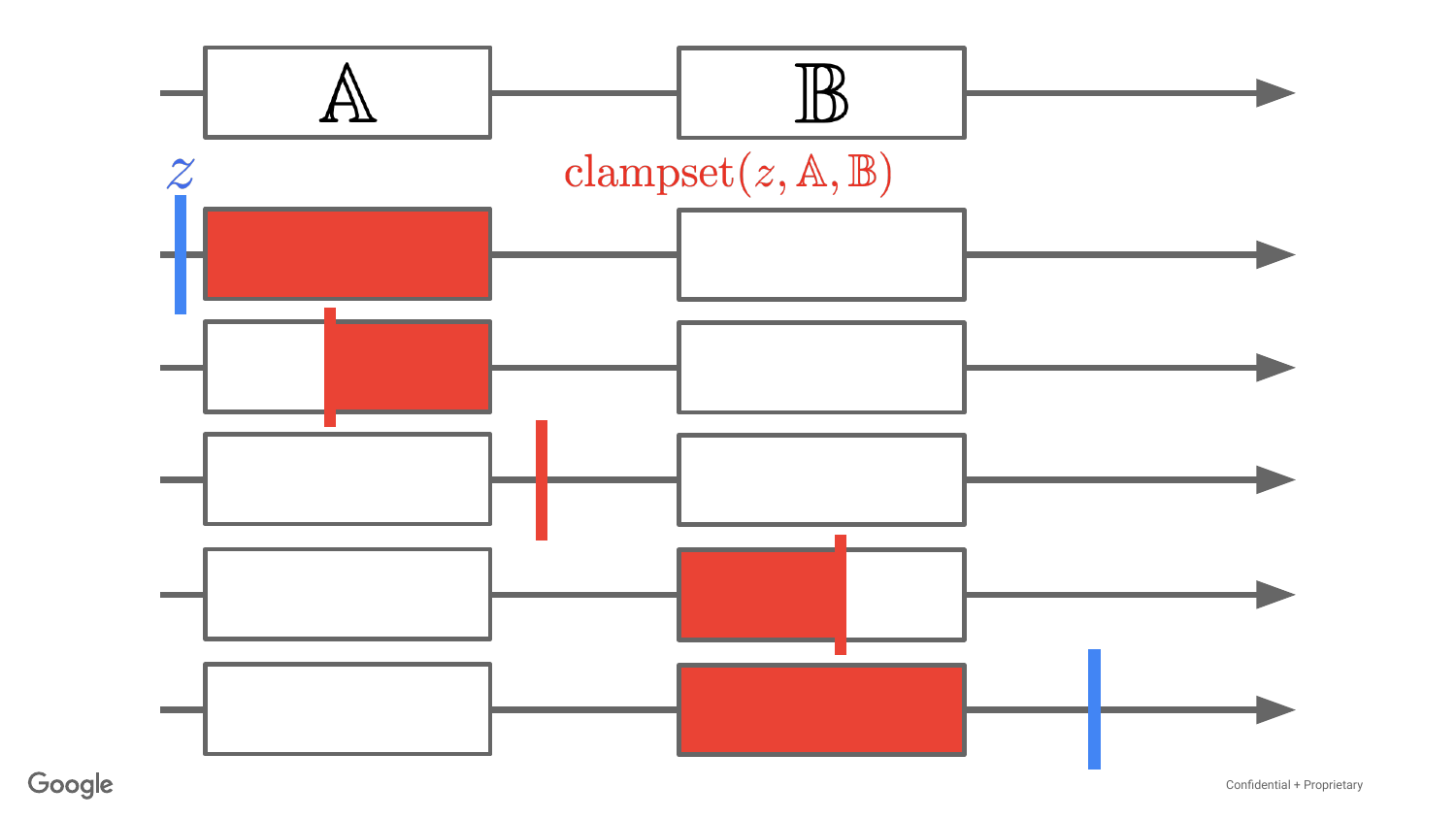}}
  \caption{Illustration of the definition of $\clamps(z, \mathbb{A}, \mathbb{B})$ \eqref{eq-def-clampset} (in {\color{red} red}) when $\mathbb{A} < \mathbb{B}$ are intervals (figured by rectangles) and $z$ (thick vertical bar) moves along the real line (see text).}
    \label{fig:def-clampset}
  \end{figure*}
  
To step up to the general case, we need additional definitions. Binary relations defined over sets of reals are true iff they hold for any applicable elements: for example, $\mathbb{A} \leq \mathbb{B}$ it true iff $a \leq b, \forall a \in \mathbb{A}, b\in \mathbb{B}$. For any $z \in \mathbb{R}, \mathbb{A} \subseteq \mathbb{R}$, we let:
\begin{eqnarray*}
  \mins(z, \mathbb{A}) & \defeq & \{s \in \mathbb{A} : s \leq z\}  ,\\
  \maxs(z, \mathbb{A}) & \defeq & \{s \in \mathbb{A} : s \geq z\}  ,
\end{eqnarray*}
and for any $\mathbb{B} \subset \mathbb{R}$ such that $\mathbb{A} < \mathbb{B}$,
\begin{eqnarray}
  \clamps(z, \mathbb{A}, \mathbb{B}) \defeq  \mins(0, \mathbb{B} - z) + \maxs(0, \mathbb{A} - z) + \iver{\inf \mathbb{A} \leq z \leq \sup \mathbb{B}} \cdot \{z\}, \label{eq-def-clampset}
  \end{eqnarray}
where $\iver{.}$ denotes Iverson's bracket \citep{kTN}. Some properties are notable.
\begin{lemma}\label{lemCMM}
For any $z \in \mathbb{R}, \mathbb{A} < \mathbb{B} \subset \mathbb{R}$,
  \begin{eqnarray}
    \{\min\{z, z'\}: z' \in \clamps(z, \mathbb{A}, \mathbb{B})\} & = & \{z\} + \mins(0, \mathbb{B} - z), \label{pMMC3}\\
    \{\max\{0, z'-z\}: z' \in \clamps(z, \mathbb{A}, \mathbb{B})\} & = & \maxs(0, \mathbb{A} - z). \label{pMMC4}
  \end{eqnarray}
  \end{lemma}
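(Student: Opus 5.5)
The proof will be a direct case analysis on where $z$ sits relative to $\mathbb{A}$ and $\mathbb{B}$. Throughout, I read the set sum in \eqref{eq-def-clampset} as in Figure~\ref{fig:def-clampset}: a Minkowski sum in which an empty summand, or a summand killed by the Iverson bracket, contributes the neutral set $\{0\}$. The identities \eqref{pMMC3} and \eqref{pMMC4} are then checked by computing $\clamps(z,\mathbb{A},\mathbb{B})$ explicitly in each case and applying $\min\{z,\cdot\}$ and $\max\{0,\cdot-z\}$ elementwise.

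\textbf{Step 1.} First I will show that $\mins(0,\mathbb{B}-z)$ and $\maxs(0,\mathbb{A}-z)$ cannot both contain a nonzero element. Suppose $b-z\leq 0$ and $a-z\geq 0$ for some $b\in\mathbb{B}$ and $a\in\mathbb{A}$. Then $b\leq z\leq a$, which contradicts $\mathbb{A}<\mathbb{B}$. This is the only place the hypothesis $\mathbb{A}<\mathbb{B}$ is used. It means that at most one of the two "pull" terms is active, so the sum in \eqref{eq-def-clampset} never mixes a push-down with a push-up.

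\textbf{Step 2.} Next I will split the real line into three regimes.
\begin{itemize}
\item[(i)] The elements of $\mathbb{B}$ lying below $z$ are relevant, i.e.\ $\mins(0,\mathbb{B}-z)\neq\emptyset$. Then $\maxs(0,\mathbb{A}-z)=\emptyset$, and $\clamps$ consists of points of $\mathbb{B}$ at or below $z$, together with $z$ itself when $z\leq\sup\mathbb{B}$.
\item[(ii)] The symmetric regime with $\mathbb{A}$ above $z$. Here $\clamps$ consists of points of $\mathbb{A}$ at or above $z$, possibly together with $z$.
\item[(iii)] The middle regime, where both pull sets are empty and $\clamps=\{z\}$.
\end{itemize}
In each regime I will evaluate $\min\{z,z'\}$. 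In (i) every $z'$ is $\leq z$, so the result is $\{z\}+\mins(0,\mathbb{B}-z)$. In (ii) and (iii) every $z'$ is $\geq z$, so the result is $\{z\}$, which again equals $\{z\}+\mins(0,\mathbb{B}-z)$ because that second summand is empty (i.e.\ neutral). The same bookkeeping applied to $\max\{0,z'-z\}$ gives \eqref{pMMC4}: the result is $\maxs(0,\mathbb{A}-z)$ in (ii), and $\{0\}$, i.e.\ the empty/neutral case, in (i) and (iii).

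\textbf{Main obstacle.} The difficulty is not analytic but notational. One has to make the empty-set and Iverson conventions for set sums consistent, so that in every regime both sides of \eqref{pMMC3} and \eqref{pMMC4} denote the same set. The boundary cases $z=\inf\mathbb{A}$ and $z=\sup\mathbb{B}$ need the most care, because there $z$ belongs both to the Iverson term and to a pull set, via the element $0\in\mathbb{B}-z$ or $0\in\mathbb{A}-z$. I will handle these by checking directly that the extra contribution $0$ (respectively the point $z$) appears on both sides, and I will treat these endpoint cases separately before the three generic regimes.
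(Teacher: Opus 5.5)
Your case split is the same as the paper's two-line proof: the left side of \eqref{pMMC3} is $\{z\}$ unless $\mathbb{B}$ has points $\le z$, in which case it is $\mins(z,\mathbb{B})$, and symmetrically for \eqref{pMMC4}. Your Step~1 is correct; it is exactly the use of $\mathbb{A}<\mathbb{B}$ that the paper leaves implicit.

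The gap is in the step you yourself call the main obstacle: you resolve it inconsistently and place the difficulty in the wrong cases.

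\textbf{Your declared convention fails outside $[\inf\mathbb{A},\sup\mathbb{B}]$.} You say a bracket-killed summand contributes $\{0\}$ to a Minkowski sum. Then for $z<\inf\mathbb{A}$ or $z>\sup\mathbb{B}$ the translation by $z$ disappears. In that case $\clamps(z,\mathbb{A},\mathbb{B})$ is the set of displacements $\mathbb{A}-z$ or $\mathbb{B}-z$, not a set of points of $\mathbb{A}$ or $\mathbb{B}$. So your description of regimes (i)--(ii) is false there, and so are both identities. For example, take $z=3$, $\mathbb{A}=[4,5]$, $\mathbb{B}=[6,7]$. You get $\clamps=[1,2]$, so the left sides are $[1,2]$ and $\{0\}$, while the right sides are $\{3\}$ and $[1,2]$.

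\textbf{Your regime descriptions use a different reading.} Phrases such as ``points of $\mathbb{B}$ at or below $z$, together with $z$ itself when $z\le\sup\mathbb{B}$'' treat the bracket term as a union, not as a translate. Under that reading, regime (i) gives $\mins(z,\mathbb{B})\cup\{z\}$. This equals $\{z\}+\mins(0,\mathbb{B}-z)=\mins(z,\mathbb{B})$ only when $z\in\mathbb{B}$. It fails for $\mathbb{A}=\{0\}$, $\mathbb{B}=\{1,3\}$, $z=2$.

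So the delicate cases are not the endpoints $z=\inf\mathbb{A}$ and $z=\sup\mathbb{B}$. They are all $z$ outside $[\inf\mathbb{A},\sup\mathbb{B}]$ under one reading, and non-interval sets under the other.

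\textbf{The repair.} Fix a reading under which the lemma holds for arbitrary $\mathbb{A}<\mathbb{B}$, namely the one the paper implicitly uses. In the proof of Theorem~\ref{th-opt1}, the paper writes $\{p_i\}+\mins(0,q_iL_\alpha(-f')-p_i)+\maxs(0,q_iL_\beta(-f')-p_i)$ and identifies it with $\clamps$ ``by definition''. That is, take
$\clamps(z,\mathbb{A},\mathbb{B})=\{z\}+\mins(0,\mathbb{B}-z)+\maxs(0,\mathbb{A}-z)$,
where $\{z\}$ is an unconditional translate, an empty pull set acts as $\{0\}$, and the bracket plays no role. By your Step~1, your three regimes then give $\clamps=\mins(z,\mathbb{B})$, $\maxs(z,\mathbb{A})$ and $\{z\}$ respectively. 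Both identities follow from your observation that every element is $\le z$ in (i) and $\ge z$ in (ii)--(iii). No separate endpoint analysis is needed.
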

  \begin{proof}
In \eqref{pMMC3} the LHS is $z$ unless $\{s \in \mathbb{B} : s \leq z\} \neq \emptyset$, in which case it is this set, i.e. $\mins(z, \mathbb{B})$. In \eqref{pMMC4} the LHS is $0$ unless $\{s \in \mathbb{A} : s-z \geq 0\}\neq \emptyset$, in which case it is this set, i.e. $\maxs(0, \mathbb{A} - z)$.
\end{proof}

\paragraph{Analysis for $f$ convex differentiable} we start with the case where $f$ is differentiable in \eqref{eq-const-f}, and later relax differentiability. 
\begin{lemma}\label{lem-slater}
Assumption \ref{assum-mda} implies Slater's constraint qualification satisfied for \eqref{def-pb-f-md-1-general} and \eqref{def-pb-f-md-2-general}.
\end{lemma}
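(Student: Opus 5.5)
The plan is to exhibit a strictly feasible point for each problem. Slater's condition for a convex program requires convexity and a point in the relative interior of the domain at which the (non-affine) inequality constraint is strict. For \eqref{def-pb-f-md-1-general}, the objective $D_{\mathrm{TV}}(\cdot\|\ve{p})$ is convex. The constraint $\ve{\pi}\mapsto D_f(\ve{\pi}\|\ve{q})$ is convex, being a sum of perspectives $q_i f(\pi_i/q_i)$ with $q_i>0$ fixed. The simplex constraints are affine, so only the $f$-divergence constraint has to be made strict.

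\textbf{Problem 1.} The natural candidate is $\ve{\pi}=\ve{q}$. We have $D_f(\ve{q}\|\ve{q})=\sum_i q_i f(1)=0<D$, since $D>0$ by Assumption \ref{assum-mda}. Moreover $\ve{q}>\ve{0}$, so $\ve{q}$ lies in the relative interior of $\Delta_n$. This also keeps $f$ finite and continuous near $1$, because a convex real-valued $f$ on $\mathbb{R}^+$ is continuous on the interior of $(0,\infty)$. Hence $\ve{q}$ is a Slater point. If a neighborhood version is wanted, small perturbations $\ve{\pi}$ of $\ve{q}$ inside the simplex keep $D_f(\ve{\pi}\|\ve{q})<D$ by continuity.

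\textbf{Problem 2.} We need $(\ve{r},\ve{s})$ with $\ve{r}$ in the interior of $[0,1]^n$, $\ve{s}$ in the relative interior of $\Delta_n$, and $D_f(\ve{p}\odot\ve{r}+(1-\ve{p}^\top\ve{r})\ve{s}\|\ve{q})<D$. I would take $\ve{r}=t\ve{1}$ with small $t\in(0,1)$ and $\ve{s}=(\ve{q}-t\ve{p})/(1-t)$. This choice of $\ve{s}$ makes the mixture exactly $\ve{q}$, since
\[
t\ve{p}+(1-t)\ve{s}=\ve{q}.
\]
Because $\ve{q}>\ve{0}$, taking $t<\min_i q_i/p_i$ gives $\ve{s}>\ve{0}$, and $\ve{s}$ sums to $1$, so $\ve{s}$ is in the relative interior of $\Delta_n$. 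The constraint value is then $0<D$. This matches Lemma \ref{lem-equiv-sol}'s correspondence between $\ve{\pi}$ and $(\ve{r},\ve{s})$.

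\textbf{Main obstacle.} \eqref{def-pb-f-md-2-general} is not jointly convex in $(\ve{r},\ve{s})$, because of the bilinear term $(1-\ve{p}^\top\ve{r})\ve{s}$. Slater there must therefore be understood either as the existence of a strictly feasible interior point, which is what is used for KKT-type arguments, or through the reparameterization of Lemma \ref{lem-equiv-sol}. That lemma maps the problem to the convex \eqref{def-pb-f-md-1-general}, or equivalently to the convex problem in $(\ve{u},\ve{v})=(\ve{p}\odot\ve{r},(1-\ve{p}^\top\ve{r})\ve{s})$. In these variables the constraints are linear apart from $D_f(\ve{u}+\ve{v}\|\ve{q})\le D$, and the Slater point $\ve{u}=t\ve{p}$, $\ve{v}=\ve{q}-t\ve{p}$ above transfers directly. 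I would state the lemma's conclusion for problem 2 in this reparameterized sense.
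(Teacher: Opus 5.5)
Your proposal is correct and follows essentially the paper's route. The paper likewise exhibits an explicit point with $\ve{0}<\ve{t}<\ve{p}$ (where $\ve{t}=\ve{p}\odot\ve{r}$) and $\ve{s}>\ve{0}$, namely $\ve{t}=\delta(\ve{p}+\tilde{\ve{\pi}})$ and $\ve{s}=\frac{1-\delta}{1-2\delta}\tilde{\ve{\pi}}-\frac{\delta}{1-2\delta}\ve{p}$, whose mixture equals a positive $\tilde{\ve{\pi}}$ close to $\ve{q}$ with $D_f(\tilde{\ve{\pi}}\|\ve{q})\le D$; your choice of mixing exactly to $\ve{q}$ is cleaner, since it gives the strict inequality $0<D$ that Slater's condition actually requires (the paper only records $\le D$), and your remark that \eqref{def-pb-f-md-2-general} becomes convex only after the change of variables $(\ve{p}\odot\ve{r},(1-\ve{p}^\top\ve{r})\ve{s})$ addresses a point the paper passes over.
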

Proof in Appendix, Section \ref{sec-proof-lem-slater}. So KKT conditions are necessary and sufficient for optimality in the study of $f$-MD. We now make a connection between $\textsc{md}^{2}_f(\ve{p}, \ve{q}; D)$ and another problem, whose objective is the following one:
\begin{eqnarray}
\textsc{is}_f(\ve{p}, \ve{q}; D) \defeq \left\{\ve{\pi} \in \Delta_n: \left\{
\begin{array}{l}
\exists \alpha < \beta \in \mathrm{Im}(-f'): \ve{\pi} \in\clamps(\ve{p}, L_\beta(-f') \cdot \ve{q}, L_\alpha(-f') \cdot \ve{q})\\
D_{f} (\ve{\pi} \|\ve{q}) = D
\end{array}
\right.\right\}, \label{isAB}
  \end{eqnarray}
  where $L_y(g) \defeq \{z : g(z) = y\}$ denotes the $y$-level set of function $g$. As we shall explain later when relaxing the differentiability assumption on $f$, the set described in \eqref{isAB} is the same as in \eqref{def-opt-tv-tv} when properly relaxing the derivative to the subdifferential. In such a context, looking at Figure \ref{fig:tv-balls} for an example, let us keep in mind for now that \eqref{isAB} elicits isodivergence sets on the probability simplex which, just like \eqref{def-opt-tv-tv} does for the case of the TV divergence, denote the set of optimal solutions that we seek for $f$-MD.

  Without further ado, we state and prove the main Theorem that elicits the connections between \eqref{def-pb-f-md-2-general} and \eqref{isAB}.
  \begin{theorem}\label{th-opt1}
Under Assumption \ref{assum-mda}, there exists a bijection between $\textsc{md}^2_f(\ve{p}, \ve{q}; D)$ and $\textsc{is}_f(\ve{p}, \ve{q}; D)$. Specifically,
\begin{itemize}
\item [($\textsc{md}^2_f\rightarrow\textsc{is}_f$)] for any couple $(\ve{r}, \ve{s}) \in \textsc{md}^2_f(\ve{p}, \ve{q}; D)$, we have $\ve{\pi} \defeq \ve{p} \odot \ve{r} + (1-\ve{p}^\top \ve{r}) \ve{s} \in \textsc{is}_f(\ve{p}, \ve{q}; D)$ for the choices
\begin{eqnarray}
\alpha & \defeq & \expect_{i \sim \ve{u}} \left[(-f') \left(\frac{\pi_i}{q_i}\right)\right], \quad\mbox{ with }\ve{u} \defeq \frac{1}{1-\ve{p}^\top \ve{r}} \cdot (\ve{p}-\ve{p} \odot \ve{r}) \in \Delta_n.\label{mdisAlpha}\\
\beta & \defeq & \expect_{i \sim \ve{s}} \left[(-f') \left(\frac{\pi_i}{q_i}\right)\right]\label{mdisBeta}
\end{eqnarray}
\item [($\textsc{is}_f\rightarrow\textsc{md}^2_f$)] for any $\ve{\pi} \in\textsc{is}_f(\ve{p}, \ve{q}; D)$, the couple $(\ve{r}, \ve{s})$ defined in \eqref{eq-r-s-from-pi} is in $\textsc{md}^2_f(\ve{p}, \ve{q}; D)$.
\end{itemize}
\end{theorem}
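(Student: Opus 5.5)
We will work with the single-parameter formulation \eqref{def-pb-f-md-1-general}, using Lemma \ref{lem-equiv-sol} to move between $\ve{\pi}$ and $(\ve{r},\ve{s})$. By Lemma \ref{lem-slater}, the KKT conditions characterize optimality, so the task is to write them down and show they are equivalent to membership in \eqref{isAB}.

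\textbf{Setting up the KKT system.} Since $D_{\mathrm{TV}}$ is not differentiable, we lift to \eqref{def-pb-f-md-2-general}, where the objective $-\ve{p}^\top\ve{r}$ is linear. The constraint, however, is not jointly convex in $(\ve{r},\ve{s})$. To avoid this, we reparameterize by $\ve{a}\defeq \ve{p}\odot\ve{r}\in[\ve{0},\ve{p}]$ and $\ve{b}\defeq(1-\ve{p}^\top\ve{r})\ve{s}\geq \ve{0}$, subject to $\ve{1}^\top(\ve{a}+\ve{b})=1$. The problem becomes: minimize $-\ve{1}^\top\ve{a}$ subject to $D_f(\ve{a}+\ve{b}\|\ve{q})\leq D$, which is convex. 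We introduce multipliers:
\begin{itemize}
\item $\lambda\geq 0$ for the divergence constraint;
\item $\nu$ for the mass constraint;
\item $\mu_i^{\pm}\geq 0$ for the box constraints on $a_i$;
\item $\eta_i\geq 0$ for $b_i\geq 0$.
\end{itemize}
With $\pi_i=a_i+b_i$ and $g_i\defeq f'(\pi_i/q_i)$, stationarity gives
\begin{align*}
-1+\lambda g_i+\nu+\mu_i^+-\mu_i^-=0,\qquad \lambda g_i+\nu-\eta_i=0.
\end{align*}
The constraint must be active, i.e. $\lambda>0$. Otherwise $\nu=\eta_i\geq 0$ and $\nu=1-\mu_i^++\mu_i^-$, which forces $\ve{a}=\ve{p}$ and $\ve{b}=\ve{0}$. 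That gives $\ve{\pi}=\ve{p}$, violating $D<D_{\mathrm{TV}}(\ve{p}\|\ve{q})$ under Assumption \ref{assum-mda}. Hence $D_f(\ve{\pi}\|\ve{q})=D$.

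\textbf{Reading off the clamp.} Set $\beta\defeq \nu/\lambda$ and $\alpha\defeq(\nu-1)/\lambda$, so that $\alpha<\beta$. Complementary slackness then yields three regimes:
\begin{itemize}
\item If $b_i>0$, then $-g_i=\beta$, so $\pi_i\in L_\beta(-f')q_i$ and necessarily $\pi_i\geq p_i$ (since $a_i=p_i$ whenever $-g_i>\alpha$).
\item If $0<a_i<p_i$, then $-g_i=\alpha$, so $\pi_i\in L_\alpha(-f')q_i$ and $\pi_i=a_i\leq p_i$.
\item Otherwise $\pi_i=p_i$, with $-g_i\in[\alpha,\beta]$.
\end{itemize}
Monotonicity of $-f'$ (nonincreasing, by convexity) converts these into the three branches of $\clamps(p_i, L_\beta(-f')q_i, L_\alpha(-f')q_i)$. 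The ordering $L_\beta(-f')\cdot q_i< L_\alpha(-f')\cdot q_i$ required by \eqref{eq-def-clampset} holds because $\alpha<\beta$. Lemma \ref{lemCMM} is the tool that matches $\min\{p_i,\pi_i\}=a_i$ and $\max\{0,\pi_i-p_i\}=b_i$ with the $\mins$/$\maxs$ parts. This establishes ($\textsc{md}^2_f\rightarrow\textsc{is}_f$).

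\textbf{Identifying $\alpha$ and $\beta$.} For the explicit formulas, note that $u_i>0$ only when $a_i<p_i$. On those coordinates $-g_i=\alpha$, with the boundary case $a_i=0$ handled separately, so averaging $-g_i$ under $\ve{u}$ gives \eqref{mdisAlpha}. Similarly, $s_i>0$ forces $-g_i=\beta$, which gives \eqref{mdisBeta}.

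\textbf{Converse direction.} Given $\ve{\pi}\in\textsc{is}_f$, we construct the multipliers explicitly:
\begin{itemize}
\item choose any $\lambda>0$ with $\nu=\lambda\beta$;
\item set $\lambda(\beta-\alpha)=1$, which is possible since $\alpha<\beta$;
\item set $\eta_i=\lambda(\beta+g_i)$ and $\mu_i^{\pm}$ from the first stationarity equation.
\end{itemize}
The clamp membership guarantees the correct signs and complementary slackness, and the active constraint handles the divergence slackness. Sufficiency of KKT under convexity then gives optimality of $(\ve{a},\ve{b})$, hence of $(\ve{r},\ve{s})$ from \eqref{eq-r-s-from-pi} via Lemma \ref{lem-equiv-sol}. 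The two maps are mutually inverse on $\ve{\pi}$, which gives the bijection.

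\textbf{Main obstacle.} The step we expect to be hardest is the sign bookkeeping at the boundaries: coordinates where $a_i=0$ while $p_i>0$, or where $\ve{p}^\top\ve{r}=1$ so that $\ve{s}$ is arbitrary. These must be matched exactly to the Iverson-bracket branch of \eqref{eq-def-clampset}. This is also where $\beta$ might fail to be well defined by \eqref{mdisBeta}, and we would check that it still lies in $\mathrm{Im}(-f')$.
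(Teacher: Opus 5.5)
Your route is the paper's KKT argument, but with a better parameterization.

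\textbf{Comparison with the paper.} The paper writes its Lagrangian in $(\ve{t},\ve{s})$ with $\ve{t}=\ve{p}\odot\ve{r}$ and $\ve{s}\in\Delta_n$. There the constraint $D_f(\ve{t}+(1-\ve{1}^\top\ve{t})\ve{s}\|\ve{q})\leq D$ contains the bilinear term $(1-\ve{1}^\top\ve{t})\ve{s}$ and is not jointly convex in general. So citing Slater to claim ``KKT is necessary and sufficient'' is not literally justified there. Its $(\textsc{is}_f\rightarrow\textsc{md}^2_f)$ direction, which only checks KKT, quietly relies on that sufficiency; it also omits the constraint $\ve{t}\geq\ve{0}$ from the Lagrangian. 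Your substitution $\ve{b}=(1-\ve{p}^\top\ve{r})\ve{s}$ gives a genuinely convex program. Slater holds at $\ve{\pi}=\ve{q}$ since $D>0$, and sufficiency in the converse becomes automatic. Your quantities match the paper's: $\beta=\nu/\lambda$ and $\alpha=(\nu-1)/\lambda=\beta-1/\lambda$ are exactly those of Lemma~\ref{lemAB}, and your explicit $\eta_i,\mu_i^{\pm}$ play the role of the paper's $\ve{\chi},\ve{\nu}$.

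\textbf{First loose end: the case $a_i=0$.} You flagged it but did not close it, and your three regimes miss it (there $\pi_i=0\neq p_i$). It cannot occur at an optimum.
\begin{itemize}
\item If $b_i>0$, then $-g_i=\beta$, so $\mu_i^+-\mu_i^-=\lambda(\beta-\alpha)=1>0$ and $a_i=p_i$. Hence $a_i=0$ forces $b_i=0$ and $\pi_i=0$.
\item With $a_i=0<p_i$ we have $\mu_i^+=0$, so $\mu_i^-\geq 0$ gives $-f'(0)\leq\alpha$. (If $f'(0^+)=-\infty$, such a point admits no finite multipliers and is excluded outright.)
\item Since $-f'$ is nonincreasing, $-g_j\leq -f'(0)\leq\alpha<\beta$ for every $j$. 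No coordinate can then have $b_j>0$, so $\ve{b}=\ve{0}$.
\item The mass constraint with $\ve{a}\leq\ve{p}$ then forces $\ve{a}=\ve{p}$, contradicting $a_i=0$.
\end{itemize}
This is the content of Lemma~\ref{lemPOST}. It also settles your other worry. The case $\ve{p}^\top\ve{r}=1$ would give $\ve{\pi}=\ve{p}$, which is infeasible, so $\ve{b}\neq\ve{0}$ and $\ve{s}$ is well defined. Moreover $\alpha$ and $\beta$ are attained on the supports of $\ve{u}$ and $\ve{s}$, so both lie in $\mathrm{Im}(-f')$.

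\textbf{Second loose end: the $\lambda>0$ step.} The conclusion $\ve{\pi}=\ve{p}$ contradicts $D<D_f(\ve{p}\|\ve{q})$, not $D<D_{\mathrm{TV}}(\ve{p}\|\ve{q})$. The former is what the paper's Lemma~\ref{lemR1} actually uses, and it is genuinely needed. Scaling $f$ by a small constant allows $D_f(\ve{p}\|\ve{q})<D<D_{\mathrm{TV}}(\ve{p}\|\ve{q})$. Then $\ve{\pi}=\ve{p}$ is optimal with a slack constraint and does not belong to $\textsc{is}_f(\ve{p},\ve{q};D)$.
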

Proof in Appendix, Section \ref{sec-proof-th-opt1}. Using Lemma \ref{lem-equiv-sol}, we get as immediate corollary another characterization of $\textsc{md}^1_f(\ve{p}, \ve{q}; D)$:
\begin{eqnarray*}
\textsc{md}^1_f(\ve{p}, \ve{q}; D) & = & \textsc{is}_f(\ve{p}, \ve{q}; D),
\end{eqnarray*}
which is not unreminiscent of the case of the total variation in \eqref{def-opt-tv-tv} (more on this later). Finally, we have the following Lemma stating some important properties of $L_\alpha$ and $L_\beta$ in ($\textsc{md}^2_f\rightarrow\textsc{is}_f$), whose proof is given in the proof of Theorem \ref{th-opt1}.
\begin{lemma}\label{lemLAlphaBeta}
$\alpha, \beta$ in \eqref{mdisAlpha}, \eqref{mdisBeta} satisfy:
\begin{eqnarray}
L_\beta(-f') < L_\alpha(-f') & ; &  \min L_\beta(-f') \leq 1 \leq \max L_\alpha(-f'),\label{lAB-1}\\
\min_i p_i/q_i < \max L_\beta(-f')& ; &  \min L_\alpha(-f') < \max_i p_i/q_i.\label{lAB-2}
\end{eqnarray}
\end{lemma}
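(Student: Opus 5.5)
Throughout, $(\ve{r},\ve{s})\in\textsc{md}^2_f(\ve{p},\ve{q};D)$ is an optimal couple, $\ve{\pi}$ is defined by \eqref{eq-pi-from-r-s}, and $g\defeq -f'$ is non-increasing, since $f$ is convex. The plan is to extract the KKT conditions of \eqref{def-pb-f-md-2-general}, which Lemma \ref{lem-slater} makes necessary, and read off where the ratios $\pi_i/q_i$ can sit relative to the level sets of $g$.

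\textbf{KKT step.} Let $\lambda\ge 0$ be the multiplier of the divergence constraint, $\nu$ the multiplier of $\ve{1}^\top\ve{s}=1$, and $\mu_i\ge0$ the multipliers of $s_i\ge0$. Write $\rho\defeq 1-\ve{p}^\top\ve{r}$ and $\ve{g}\defeq (g(\pi_i/q_i))_i$. Using $\partial \pi_j/\partial r_i=p_i(\delta_{ij}-s_j)$ and $\partial\pi_j/\partial s_i=\rho\,\delta_{ij}$, stationarity in $r_i$ on the box $[0,1]$ gives
\[
-1+\lambda\bigl(\ve{s}^\top\ve{g}-g(\pi_i/q_i)\bigr)\ \begin{cases}\le 0 & \text{if } r_i=1,\\ =0 & \text{if } 0<r_i<1,\\ \ge0 & \text{if } r_i=0.\end{cases}
\]
Stationarity in $s_i$ gives $g(\pi_i/q_i)=\mathrm{const}$ on $\mathrm{supp}(\ve{s})$, and a value of $g(\pi_i/q_i)$ at least that constant off the support. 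With the definition \eqref{mdisBeta}, this constant is exactly $\beta$. Next, $\lambda>0$: if $\lambda=0$, the $r$-condition forces $\ve{r}=\ve{1}$ and hence $\ve{\pi}=\ve{p}$, contradicting $D<D_{\mathrm{TV}}$. Therefore every $i$ with $r_i<1$ satisfies $g(\pi_i/q_i)\le\beta-1/\lambda$. Averaging over $\ve{u}$, which is supported on $\{i : r_i<1\}$, shows $\alpha\le\beta-1/\lambda<\beta$.

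\textbf{Deriving \eqref{lAB-1}.} The first claim, $L_\beta(g)<L_\alpha(g)$, follows from $\alpha<\beta$ and the monotonicity of $g$: if $x\in L_\beta$ and $y\in L_\alpha$ with $y\le x$, then $g(y)\ge g(x)=\beta>\alpha$, a contradiction. For $\min L_\beta\le1$, note that $\ve{\pi}\ne\ve{q}$ and $\sum_i\pi_i=\sum_i q_i$ give some $j$ with $\pi_j<q_j$. Such a $j$ is either in $\mathrm{supp}(\ve{s})$, or it satisfies $\pi_j\le p_j$. I would argue that every point with $s_j=0$ and $r_j=1$ has $g(\pi_j/q_j)\ge\beta$ by the $s$-condition. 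Hence some coordinate with ratio below $1$ has $g$-value at least $\beta$, so $g(1)\le\beta$ and the level set $L_\beta$ contains a point $\le 1$. The symmetric argument uses a coordinate with $\pi_j>q_j$ together with the bound $g\le\alpha$ on the $r$-deficient set to get $\max L_\alpha\ge1$. Some care is needed when a level set is empty or is a whole interval; there I would take closures and use continuity of $f'$.

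\textbf{Deriving \eqref{lAB-2}.} Coordinates with $r_i<1$ have $\pi_i<p_i$, since $\pi_i=p_ir_i+\rho s_i$ and the KKT conditions force $s_i=0$ there. They also have $g(\pi_i/q_i)\le\alpha$ on average, so some such $i$ has $\pi_i/q_i\ge\min L_\alpha$, which gives $\min L_\alpha<p_i/q_i\le\max_i p_i/q_i$. Symmetrically, coordinates in $\mathrm{supp}(\ve{s})$ have $\pi_i>p_ir_i$, with $r_i=1$ by the $r$-condition, so $\pi_i>p_i$. Their ratios lie in $L_\beta$, which yields $\max L_\beta>\min_i p_i/q_i$.

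\textbf{Main obstacle.} The hard part is the boundary bookkeeping in the KKT step. I have to show that the supports of $\ve{u}$ and $\ve{s}$ are disjoint, and that $\rho>0$, so that $\ve{u}$ and $\ve{s}$ are well defined. The case $\rho=0$ is excluded because it would force $\ve{\pi}=\ve{p}$, contradicting $D<D_{\mathrm{TV}}$. I also expect the degenerate level sets of a non-strictly convex $f$ to need separate handling in the $\min L_\beta\le1\le\max L_\alpha$ part.
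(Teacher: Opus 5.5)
Your overall route (KKT on $(\ve{r},\ve{s})$, then locating the ratios $\pi_i/q_i$ relative to the level sets of $g=-f'$) is the paper's; the paper only substitutes $\ve{t}=\ve{p}\odot\ve{r}$. Your proofs of $L_\beta(-f')<L_\alpha(-f')$ and of \eqref{lAB-2} are fine. The gap is in $\min L_\beta(-f')\le 1\le \max L_\alpha(-f')$, and it has two parts.

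\textbf{Wrong sign.} The sign you read off from stationarity in $s_i$ is reversed. With $-\mu_i s_i$, $\mu_i\ge0$, in the Lagrangian, stationarity reads $\lambda\rho f'(\pi_i/q_i)+\nu-\mu_i=0$. Off $\mathrm{supp}(\ve{s})$ this gives $f'(\pi_i/q_i)\ge -\nu/(\lambda\rho)$, i.e.\ $g(\pi_i/q_i)\le\beta$, not $\ge\beta$. Intuitively, resampling mass goes where the marginal cost $f'$ is smallest, so $\beta$ is the \emph{maximum} of $g$ over the coordinates.

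\textbf{Non sequitur.} Even granting your sign, ``a coordinate with ratio $x<1$ has $g(x)\ge\beta$, so $g(1)\le\beta$'' does not follow. Monotonicity gives $g(1)\le g(x)$, and a lower bound on $g(x)$ then bounds nothing. To get $g(1)\le\beta$ you need an \emph{upper} bound $g\le\beta$ at some ratio $\le1$. Your symmetric step fails the same way. An upper bound $g\le\alpha$ on the $r$-deficient set, taken at a ratio $>1$, says nothing about $g(1)\ge\alpha$. What you need is the \emph{lower} bound the $r$-condition gives where $r_j=1$, namely $g(\pi_j/q_j)\ge\beta-1/\lambda\ge\alpha$.

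With the signs corrected, the step is short and needs no case split; it is exactly the paper's ``otherwise $\ve{\pi}$ would not be a distribution'' argument. Every coordinate satisfies $\alpha\le g(\pi_i/q_i)\le\beta$. The upper bound comes from the $s$-condition. The lower bound comes from the $r$-condition: $g=\beta-1/\lambda$ when $0<r_i<1$, $g\ge\beta-1/\lambda$ when $r_i=1$, and $r_i=0$ is excluded as in Lemma \ref{lemPOST}. So $\alpha=\beta-1/\lambda$, and $\alpha$ is attained on the $r$-deficient set.

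Since $\sum_i\pi_i=\sum_i q_i$, some $j$ has $\pi_j\le q_j$ and some $k$ has $\pi_k\ge q_k$. Hence $g(1)\le g(\pi_j/q_j)\le\beta$ and $g(1)\ge g(\pi_k/q_k)\ge\alpha$. Because $\beta$ is attained on $\mathrm{supp}(\ve{s})$ and $\alpha$ on the $r$-deficient set, monotonicity of $g$ gives $\min L_\beta(-f')\le 1\le\max L_\alpha(-f')$.

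A minor point: ruling out $\lambda=0$ requires $\ve{\pi}=\ve{p}$ to be infeasible, i.e.\ $D<D_f(\ve{p}\|\ve{q})$ as in the paper's proof, not $D<D_{\mathrm{TV}}(\ve{p}\|\ve{q})$.
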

Note that \eqref{lAB-2} follows from the fact that $\alpha, \beta$ are expectations in \eqref{mdisAlpha}, \eqref{mdisBeta}. An additional important result is the following one, which states that $\ve{r}$ has full support.
\begin{lemma}\label{lemPOST}
Under Assumption \ref{assum-mda}, any couple $(\ve{r}, \ve{s}) \in \textsc{md}^2_f(\ve{p}, \ve{q}; D)$ satisfies $\ve{r} > \ve{0}$.
  \end{lemma}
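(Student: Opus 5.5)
The plan is to reduce the claim to strict positivity of the associated mentored distribution, and then read that positivity off the clamping characterization of Theorem \ref{th-opt1} together with the ordering of level sets in Lemma \ref{lemLAlphaBeta}.

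\textbf{Step 1: reduction to $\ve{\pi} > \ve{0}$.} Take $(\ve{r}, \ve{s}) \in \textsc{md}^2_f(\ve{p}, \ve{q}; D)$ and set $\ve{\pi} \defeq \ve{p} \odot \ve{r} + (1-\ve{p}^\top \ve{r})\ve{s}$. By Theorem \ref{th-opt1}, $\ve{\pi} \in \textsc{is}_f(\ve{p}, \ve{q}; D)$. The coordinate $r_i$ is tied to $\pi_i$ by $r_i = \min\{1, \pi_i/p_i\}$; this is \eqref{eq-r-s-from-pi} applied in the reverse direction of Theorem \ref{th-opt1}, or equivalently Lemma \ref{lem-equiv-sol}. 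Since $\ve{p} > \ve{0}$ by Assumption \ref{assum-mda}, we get $r_i > 0$ if and only if $\pi_i > 0$. It therefore suffices to show $\pi_i > 0$ for every $i$.

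If one is worried that an arbitrary optimal $\ve{r}$ might differ from $\min\{1,\pi_i/p_i\}$, we first show that optimality forces this form. The objective is $\ve{p}^\top\ve{r} = 1 - D_{\mathrm{TV}}(\ve{\pi}\|\ve{p})$ at the optimum, and in general $p_i r_i \le \min\{p_i,\pi_i\}$. Summing, $\ve{p}^\top \ve{r} \le \sum_i \min\{p_i,\pi_i\} = 1 - D_{\mathrm{TV}}(\ve{\pi}\|\ve{p})$, so equality must hold coordinatewise, which gives $r_i = \min\{1,\pi_i/p_i\}$.

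\textbf{Step 2: case analysis on the clamp.} Let $\alpha, \beta$ be as in \eqref{mdisAlpha} and \eqref{mdisBeta}. By definition of $\clamps$ in \eqref{eq-def-clampset}, applied with $\mathbb{A} = L_\beta(-f')\cdot q_i$ and $\mathbb{B} = L_\alpha(-f')\cdot q_i$, each coordinate $\pi_i$ falls into one of three cases.
\begin{itemize}
\item[(a)] $\pi_i = p_i$. Then $\pi_i > 0$ immediately.
\item[(b)] $\pi_i \in L_\beta(-f')\, q_i$ with $\pi_i \ge p_i$ (upward clamp). Then $\pi_i \ge p_i > 0$.
\item[(c)] $\pi_i \in L_\alpha(-f')\, q_i$ with $\pi_i \le p_i$ (downward clamp). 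This is the only case in which $\pi_i = 0$ could occur.
\end{itemize}

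\textbf{Step 3: ruling out $\pi_i = 0$ in case (c).} Suppose $\pi_i = 0$. Then $0 \in L_\alpha(-f')$. The set $L_\beta(-f')$ is nonempty, because $\beta \in \mathrm{Im}(-f')$, and it lies in the domain $\mathbb{R}^+$ of $f$. By \eqref{lAB-1} we have $L_\beta(-f') < L_\alpha(-f')$. Hence every element of $L_\alpha(-f')$ is strictly larger than some nonnegative number, so $0 \notin L_\alpha(-f')$. This is a contradiction, and so $\ve{\pi} > \ve{0}$, hence $\ve{r} > \ve{0}$.

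\textbf{Main obstacle: the boundary point $z = 0$.} The delicate point is how $f'$ behaves at $0$. The quantity $f'(0)$ may be $-\infty$, for instance for $f_{\mathrm{rKL}}$, or the point $0$ may simply not be covered by the differentiable analysis. I would handle this by interpreting level sets through the subdifferential, as the paper announces for the non-differentiable case, and checking that \eqref{lAB-1} still forces $\min L_\alpha > 0$.

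As a safety net that does not depend on the level-set machinery, I would add a direct exchange argument. If $\pi_i = 0 < p_i$, then, because $\ve{\pi}$ and $\ve{p}$ both sum to one, some $j$ has $\pi_j > p_j$. Moving mass $\varepsilon$ from $j$ to $i$ strictly decreases $D_{\mathrm{TV}}(\ve{\pi}\|\ve{p})$. Now mix this with a small step toward $\ve{q}$. By convexity this mixing reduces $D_f(\cdot\|\ve{q})$ by an amount of order $t\,D$. The transfer, in turn, changes $D_f(\cdot\|\ve{q})$ by at most $O(\varepsilon)$ whenever the right-derivative of $f$ at $0$ is finite, and it decreases it when that derivative is $-\infty$. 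Choosing $\varepsilon$ and $t$ appropriately then keeps the divergence constraint satisfied while strictly improving the objective. This contradicts optimality and confirms $\pi_i > 0$.
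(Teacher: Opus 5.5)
Your Steps 1--3 are correct and match the paper's proof: the paper notes that $r_ip_i=0$ would put $0$ in $L_\alpha(-f')$, which (since $-f'$ is non-increasing on $\mathbb{R}^+$) would also force $0\in L_\beta(-f')$ and hence $\alpha=\beta$, contradicting $\alpha<\beta$ --- the same contradiction you obtain from $L_\beta(-f')<L_\alpha(-f')$ with $L_\beta(-f')\neq\emptyset$. Your safety-net exchange argument does not justify the $\varepsilon$-versus-$t$ balancing as written, but the mixing step is unnecessary: by convexity of $f$ (comparing chord slopes), moving a small mass $\varepsilon$ from a coordinate with ratio $\pi_j/q_j>0$ to one with ratio $0$ never increases $D_f(\cdot\|\ve{q})$, so the plain transfer already gives a contradiction with optimality and yields a KKT-free proof.
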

  Proof in Appendix, Section \ref{sec-proof-lemPOST}.

  \paragraph{Mentored decoding: analysis for general $f$ \eqref{eq-const-f}} A simple trick allows to alleviate the differentiability condition on $f$ and prove the result for any convex $f$ \eqref{eq-const-f}, and it proceeds from the simple example of how Theorem \ref{th-opt1} also covers the case of TV, whose generator is $f(z) \defeq |z-1|$, non differentiable only in $z=1$. We first smooth the generator in an open $\delta$-neighborhood of 1, eventually with a $y$-translation of the graph to keep $f(1) =0$. We want to prevent $L_\alpha(-f'), L_\beta(-f')$ to be picked from this neighborhood, so we are going to tune $\delta > 0$. If $L_\alpha(-f')$ is in, as $\delta \searrow 0$, the objective converges to that of speculative decoding, and if $L_\beta(-f')$ is in, as $\delta \searrow 0$, the $f$-divergence value goes to 0. So we can pick $\delta>0$ small enough for $L_\alpha(-f')$ to be out of the neighborhood (objective small enough) with $L_\beta(-f')$ out of the neighborhood (acceptable divergence).

We thus end up with only one possible solution, $L_\alpha(-f') = L_{-1}(-f') = [1+\delta, +\infty)$ and $L_\beta(-f') = L_{1}(-f') = (-\infty, 1-\delta]$ and thus according to \eqref{isAB} all solutions of (\textsc{is}) satisfy
 \begin{eqnarray}
  \ve{\pi} & \in & \clamps(\ve{p}, (-\infty, 1-\delta] \cdot \ve{q}, \cdot [1+\delta, +\infty) \cdot \ve{q}). \label{setClamp1}
  \end{eqnarray}
Note that $\forall i \in [n]$, we have
\begin{eqnarray*}
  \lefteqn{\clamps(p_i, (-\infty, 1-\delta] \cdot q_i, \cdot [1+\delta, +\infty) \cdot q_i)}\\
  &= & \left\{ 
\begin{array}{ccl}
\left[p_i, (1-\delta) q_i\right] & \mbox{ if } & p_i < (1-\delta) q_i\\
\left[(1+\delta) q_i, p_i\right] & \mbox{ if } & p_i > (1+\delta) q_i\\
p_i & \mbox{ if } & p_i \in [(1-\delta) q_i, (1+\delta) q_i]
\end{array}
\right. .
\end{eqnarray*}
Because $p_i \neq q_i, \forall i$ (Assumption \ref{assum-mda}), we can further choose $\delta > 0$ small enough so that we always have 
\begin{eqnarray}
p_i & \not\in & [(1-\delta) q_i, (1+\delta) q_i], \forall i\label{propPI}.
\end{eqnarray}
The set \eqref{setClamp1} simplifies as:
\begin{eqnarray*}
\lefteqn{\clamps(\ve{p}, (-\infty, 1-\delta] \cdot \ve{q}, \cdot [1+\delta, +\infty) \cdot \ve{q})}\\
& = & \left[\min\{\ve{p}, (1+\delta) \cdot \ve{q}\}, \max\{\ve{p}, (1-\delta) \cdot \ve{q}\}\right],
  \end{eqnarray*}
and none of the intervals is empty thanks to \eqref{propPI}. We thus get
\begin{eqnarray}
\textsc{is}_{f}(\ve{p}, \ve{q}; D) & = & \left\{\ve{\pi} \in \left[\min\{\ve{p}, (1+\delta) \cdot \ve{q}\}, \max\{\ve{p}, (1-\delta) \cdot \ve{q}\}\right] \cap \Delta_n \wedge D_{f} (\ve{\pi} \|\ve{q}) = D \right\}. \label{eq-opt-is-tv}
\end{eqnarray}
This set converges to $\textsc{is}_{TV}(\ve{p}, \ve{q}; D)$ as $\delta \rightarrow 0$ and $f$ converges to the generator of TV in any $\ell_p$ norm in the interval $[\min_i p_i/q_i, \max_i p_i/q_i]$, and we check that the solution found in \eqref{eq-opt-is-tv} matches \eqref{def-opt-tv-tv}.

Now, any convex function defined on an open convex set is differentiable anywhere except maybe on a set of measure zero \citep[Theorem 25.5]{rCA}, so for any point of non differentiability of a general $f$, our analysis above also holds for a sufficiently small $\delta > 0$, for which, after passing to the limit with $\delta \rightarrow 0$, we get the proof that $\textsc{is}_f(\ve{p}, \ve{q}; D)$ in \eqref{isAB} generalizes to
  \begin{eqnarray}
    \lefteqn{\textsc{is}_f(\ve{p}, \ve{q}; D)}\nonumber\\
  \hspace{-0.5cm} & \hspace{-0.7cm} \defeq & \hspace{-0.7cm} \left\{\ve{\pi} \in \Delta_n: \left\{
                                             \begin{array}{l}
                                               \hspace{-0.3cm}
  \begin{array}{l}
    \exists g, h \in \partial f, \\
    \exists \alpha \in \mathrm{Im}(-g), \exists \beta \in \mathrm{Im}(-h)
    \end{array}
\hspace{-0.3cm} : \left\{
  \begin{array}{l}
    \alpha < \beta \\
    \ve{\pi} \in\clamps(\ve{p}, L_\beta(-h) \cdot \ve{q}, L_\alpha(-g) \cdot \ve{q})
 \end{array}
\right.   \\
\hspace{-0.2cm} D_{f} (\ve{\pi} \|\ve{q}) = D
\end{array}
\right. \hspace{-0.5cm} \right\}, \label{isAB-nonconvex}
  \end{eqnarray}
  where $\partial f$ is the subdifferential of $f$; \eqref{mdisAlpha}, \eqref{mdisBeta} become, for $g, h  \in \partial f$,
  \begin{eqnarray}
\alpha \defeq \expect_{i \sim \ve{u}} \left[(-g) \left(\frac{\pi_i}{q_i}\right)\right] & , & \beta \defeq \expect_{i \sim \ve{s}} \left[(-h) \left(\frac{\pi_i}{q_i}\right)\right],\label{mdisAlphaBeta}
  \end{eqnarray}
  and $\ve{u}$ does not change. Since the level sets of the subdifferential of a strictly convex function are singletons, we immediately get the following Corollary as a consequence of \eqref{isAB-nonconvex} and the definition of $\clamps$.

\begin{corollary}
\label{cor-strictly-cvx}
    Under Assumption \ref{assum-mda}, suppose $f$ in \eqref{eq-const-f} is strictly convex. Then $\textsc{is}_f(\ve{p}, \ve{q}; D)$ is a singleton consisting of the unique $\ve{\pi} \in \Delta_n$ satisfying (i) $D_{f} (\ve{\pi} \|\ve{q}) = D$ and (ii) 
    \begin{eqnarray}
    \ve{\pi} & = & \max\{(1-b) \cdot \ve{q}, \min\{\ve{p}, (1+a) \cdot \ve{q}\}\} \label{eq-pi-ab}
    \end{eqnarray}
    for some unique $a > 0, b \in (0,1]$.
\end{corollary}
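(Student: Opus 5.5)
Since $f$ is strictly convex, the subdifferential $\partial f$ is a strictly monotone multifunction. Hence $-g$ is strictly decreasing for any selection $g \in \partial f$, and every level set $L_y(-g)$ is either empty or a single point. Write $z_\beta \defeq L_\beta(-h)$ and $z_\alpha \defeq L_\alpha(-g)$ for the points given in \eqref{isAB-nonconvex}. The condition $\alpha < \beta$ together with the strict decrease of $-\partial f$ gives $z_\beta < z_\alpha$. Lemma~\ref{lemLAlphaBeta}, carried over through the smoothing argument, gives $z_\beta \leq 1 \leq z_\alpha$. The plan has four steps: set $1-b \defeq z_\beta$ and $1+a \defeq z_\alpha$, unfold the definition of $\clamps$ on these singletons, prove that $a$ and $b$ lie in the claimed ranges, and prove uniqueness.

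\textbf{Unfolding.} Fix $i$. The sets $\mathbb{A} = \{(1-b)q_i\}$ and $\mathbb{B} = \{(1+a)q_i\}$ are singletons, so by \eqref{eq-def-clampset} there are three cases. If $p_i > (1+a)q_i$, only the $\mins$ term survives and $\clamps(p_i,\mathbb{A},\mathbb{B}) = \{(1+a)q_i\}$. If $p_i < (1-b)q_i$, only the $\maxs$ term survives and the set is $\{(1-b)q_i\}$. If $p_i$ lies in between, the set is $\{p_i\}$. One has to be careful here with the Minkowski sums of empty sets in \eqref{eq-def-clampset}: the offsets $\mathbb{B}-z$ and $\mathbb{A}-z$ must translate back to the points $(1+a)q_i$ and $(1-b)q_i$, exactly as in the TV computation preceding \eqref{eq-opt-is-tv}. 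Combining the three cases gives $\pi_i = \max\{(1-b)q_i, \min\{p_i,(1+a)q_i\}\}$, which is \eqref{eq-pi-ab}. Condition (i) is the constraint $D_f(\ve{\pi}\|\ve{q}) = D$ already built into $\textsc{is}_f$.

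\textbf{Strict ranges.} Next I show $a > 0$ and $b \in (0,1]$. Since $z_\beta$ is a ratio $\pi_i/q_i \geq 0$ on the support of $\ve{s}$, we get $1-b \geq 0$, so $b \leq 1$. Now suppose $a = 0$. Then \eqref{eq-pi-ab} gives $\ve{\pi} \leq \ve{q}$ coordinatewise, and since both vectors sum to one, $\ve{\pi} = \ve{q}$, so $D_f(\ve{\pi}\|\ve{q}) = 0 < D$, contradicting Assumption~\ref{assum-mda}. The case $b = 0$ is symmetric: it forces $\ve{\pi} \geq \ve{q}$, hence again $\ve{\pi} = \ve{q}$. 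The non-strict inequalities $z_\beta \leq 1 \leq z_\alpha$ therefore become strict.

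\textbf{Uniqueness.} This is the main obstacle. Define $\ve{\pi}(a,b)$ by \eqref{eq-pi-ab}. The mass $\ve{1}^\top \ve{\pi}(a,b)$ is continuous and nondecreasing in $a$ and nonincreasing in $b$, and the unit-mass constraint cuts out a monotone curve in the $(a,b)$ plane running from $\ve{q}$ (at $a=b=0$) towards $\ve{p}$. On this curve I would show that $D_f(\ve{\pi}\|\ve{q})$ is strictly increasing. Indeed, moving along the curve shifts mass from coordinates with $\pi_i/q_i = 1-b < 1$ and onto coordinates with $\pi_i/q_i = 1+a > 1$, so the derivative of $D_f$ along the curve is $(f'(1+a) - f'(1-b))$ times the mass transferred. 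This factor is positive because $f'$ is strictly increasing; where $f$ is not differentiable, the same holds with one-sided derivatives. Hence the level $D$ is attained at exactly one point, and we conclude uniqueness of $\ve{\pi}$. Uniqueness of $(a,b)$ additionally requires that $a$ and $b$ are pinned down by $\ve{\pi}$. This holds because some coordinate is actually clamped on each side; otherwise $\ve{\pi} = \ve{p}$, which would violate $D < D_{\mathrm{TV}}(\ve{p}\|\ve{q})$ through Theorem~\ref{thm-TV-MD}-type reasoning. Finally, existence follows because $\textsc{md}^1_f$ is nonempty: it is the minimization of a continuous function over a compact set, which is feasible by Lemma~\ref{lem-slater}.
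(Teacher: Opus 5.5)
Your proof is correct once the hypothesis discussed below is fixed. Its core is the paper's: the paper's whole argument is that strict convexity makes the level sets of $-\partial f$ singletons, so the set-valued clamp in \eqref{isAB-nonconvex} collapses to \eqref{eq-pi-ab}. That is your ``Unfolding'' step.

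The paper stops there and takes the singleton property, the ranges of $a,b$ and the uniqueness of $(a,b)$ for granted. You actually prove them, in two moves:
\begin{itemize}
\item $a=0$ (resp.\ $b=0$) forces $\ve{\pi}\le\ve{q}$ (resp.\ $\ve{\pi}\ge\ve{q}$), hence $\ve{\pi}=\ve{q}$, which contradicts $D>0$.
\item $D_f(\cdot\|\ve{q})$ is strictly increasing along the unit-mass curve, with derivative in $a$ equal to $q(\mathbb{A})\,(f'(1+a)-f'(1-b))>0$ between breakpoints (one-sided derivatives at kinks).
\end{itemize}
The unit-mass condition is exactly \eqref{kktconst3}, so that curve is $\mathcal{C}(\ve{p},\ve{q})$ of Lemma \ref{lem-cont-ab}. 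Your argument is therefore also a direct route to the $f$-independence in Theorem \ref{thmUniversal}.

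This route is elementary and constructive. It gives existence by the intermediate value theorem, since $D_f$ runs continuously from $0$ at $\ve{q}$ to $D_f(\ve{p}\|\ve{q})$ at $\ve{p}$. It also gives uniqueness of both $\ve{\pi}$ and $(a,b)$ at once, without KKT. The usual shortcut is a midpoint argument on \eqref{def-pb-f-md-1-general}: two distinct optima average to an optimum with $D_f(\cdot\|\ve{q})<D$ by strict convexity, contradicting activeness of the constraint ($\lambda>0$, Lemma \ref{lemR1}). Combined with $\textsc{md}^1_f=\textsc{is}_f$ this is shorter, but it only yields uniqueness of $\ve{\pi}$; you would still need your ``clamped on both sides'' remark for $(a,b)$.

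The step to fix is the exclusion of $\ve{\pi}=\ve{p}$. It requires $D<D_f(\ve{p}\|\ve{q})$. No ``Theorem \ref{thm-TV-MD}-type reasoning'' extracts this from the bound $D<D_{\mathrm{TV}}(\ve{p}\|\ve{q})$ in Assumption \ref{assum-mda}, because the two divergences are unrelated for general $f$. With only the TV bound the statement can actually fail. Take $f(z)=\varepsilon(z-1)^2$ with $\varepsilon$ small enough that $D_f(\ve{p}\|\ve{q})<D_{\mathrm{TV}}(\ve{p}\|\ve{q})$:
\begin{itemize}
\item any $D\in(D_f(\ve{p}\|\ve{q}),D_{\mathrm{TV}}(\ve{p}\|\ve{q}))$ gives $\textsc{is}_f(\ve{p},\ve{q};D)=\emptyset$, by your own monotonicity;
\item $D=D_f(\ve{p}\|\ve{q})$ gives $\{\ve{p}\}$, with non-unique $(a,b)$.
\end{itemize}
The paper silently works under $D<D_f(\ve{p}\|\ve{q})$. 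That is what Lemma \ref{lemR1} uses, and your existence step through Theorem \ref{th-opt1} needs it too, so state it explicitly.

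Two minor points:
\begin{itemize}
\item At a kink of $f$, $\alpha<\beta$ only yields $z_\beta\le z_\alpha$, since two different subgradient selections can sit at the same point. Equality forces $\ve{\pi}=\ve{q}$, so your range argument removes it.
\item $z_\beta\le 1\le z_\alpha$ follows directly from $\ve{1}^\top\ve{\pi}=1$: if $z_\alpha<1$ then every $\pi_i<q_i$. There is no need to carry Lemma \ref{lemLAlphaBeta} through the smoothing.
\end{itemize}
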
  
As a consequence of Theorem \ref{th-opt1} and Lemma \ref{lem-equiv-sol}, $\ve{\pi}$ in \eqref{eq-pi-ab} also satisfies $\ve{\pi} \in \textsc{md}^1_f(\ve{p}, \ve{q}; D)$. Corollary \ref{cor-strictly-cvx} shows that the optimal solution of mentored decoding has a remarkable simple form in most cases. We now show that there is in fact much more to this simple form.

\paragraph{Simple approximations to $f$-MD} In our path to join the properties of mentored decoding and boosting, we need an intermediate result of independent interest. For any $\mathbb{A} \subseteq \mathbb{R}, z \in \mathbb{R}$, we let $z \cdot \mathbb{A} \defeq \{zz' : z' \in \mathbb{A}\}$. For any $a, b$, let
  \begin{eqnarray}
  \mathbb{A} & \defeq & \{i : p_i > (1+a) \cdot q_i\}, \label{defAbis}\\
  \mathbb{I} & \defeq & \{i : p_i \in q_i \cdot [1-b, 1+a]\}, \label{defIbis}\\
  \mathbb{B} & \defeq & \{i : p_i < (1-b) \cdot q_i\}. \label{defBbis}
  \end{eqnarray}
  The dependence of $\mathbb{A}, \mathbb{B}$ and $\mathbb{I}$ on $a,b$ is implicit for the sake of readability. We now define an important set of couples of reals
  \begin{definition}\label{def-cab}
For any $\ve{p}, \ve{q}$ output to the drafter and target, respectively, let $\mathcal{C}(\ve{p}, \ve{q})$ be the set of couples $(a,b)$ satisfying:
\begin{eqnarray}
     a & \in & \left[0,\max_i \frac{p_i}{q_i} - 1\right), \label{kktconst1}\\
    b & \in & \left[0, 1 - \min_i \frac{p_i}{q_i}\right), \label{kktconst2}\\
    -a q(\mathbb{A}) + b q(\mathbb{B}) & = & p(\mathbb{I}) - q(\mathbb{I}),\label{kktconst3}
\end{eqnarray}
where we have let $p(\mathbb{M}) \defeq 0 + \sum_{i\in \mathbb{M}} p_i$ for any $\mathbb{M} \subseteq [n]$ (and similarly, $q(\mathbb{M}) \defeq 0 + \sum_{i\in \mathbb{M}} q_i$). 
    \end{definition}
Set $\mathcal{C}(\ve{p}, \ve{q})$ has important properties, that we now state. 
\begin{theorem}\label{thmAPPROX1}
  $\forall (a,b) \in \mathcal{C}(\ve{p}, \ve{q})$, the choice 
\begin{eqnarray}
  \ve{r} & \defeq & \min\{\ve{1}, (1+a)\cdot \ve{q} \oslash \ve{p}\},\label{eq-defR}\\
  \ve{s} & \defeq &  (1-\ve{p}^\top \ve{r})^{-1} \cdot\max\{\ve{0}, (1-b)\cdot \ve{q} - \ve{p}\} \label{defS}
\end{eqnarray}
has the properties that the corresponding mentored distribution $\ve{\pi} = \ve{p} \mbox{ clamped to } [1-b, 1+a]\cdot \ve{q} \in \Delta_n$ \eqref{eq-pi-from-r-s} and:
\begin{enumerate}
\item [(I)] The corresponding acceptance probability of mentored decoding, $\pacc(MD)$, satisfies
    \begin{eqnarray}
\pacc(MD) & = & \pacc(SD) + a q(\mathbb{A}) + (p(\mathbb{I}_{>1})-q(\mathbb{I}_{>1})),\label{eq-pacc-sd-md}
    \end{eqnarray}
    where $\pacc(SD)$ is the acceptance probabilities of speculative decoding and 
    \begin{eqnarray*}
    \mathbb{I}_{>1} & \defeq & \left\{i : p_i\in q_i \cdot (1, 1+a]\right\} \quad \mbox{($\subseteq \mathbb{I}$, with the convention $p(\emptyset) = q(\emptyset) \defeq 0$)}.
    \end{eqnarray*}
  \item [(II)] for any $f$ as per \eqref{eq-const-f}, $\exists u\in [1-b, 1+a]$ such that the choice $(\ve{r}, \ve{s})$ in \eqref{eq-defR}, \eqref{defS} satisfies $(\ve{r}, \ve{s}) \in \textsc{md}^2_f(\ve{p}, \ve{q}; D)$ \eqref{def-pb-f-md-2-general} for
    \begin{eqnarray}
D & = & q(\mathbb{A}) \cdot f(1+a)  + q(\mathbb{B}) \cdot f(1-b) + q(\mathbb{I}) \cdot f(u).\label{Dapprox}
\end{eqnarray}
\end{enumerate}
\end{theorem}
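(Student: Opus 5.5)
The plan is to verify the claims directly from the clamped form of $\ve{\pi}$, then certify optimality through the characterization $\textsc{md}^1_f=\textsc{is}_f$ (Theorem \ref{th-opt1} in its subdifferential form \eqref{isAB-nonconvex}, together with Lemma \ref{lem-equiv-sol}).

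\textbf{Step 1: $\ve{\pi}$ is a valid mentored distribution.} From \eqref{eq-defR}, $\ve{p}\odot\ve{r}=\min\{\ve{p},(1+a)\ve{q}\}$. From \eqref{defS}, $(1-\ve{p}^\top\ve{r})\ve{s}=\max\{\ve{0},(1-b)\ve{q}-\ve{p}\}$. Their sum is $\pi_i=(1+a)q_i$ on $\mathbb{A}$, $\pi_i=p_i$ on $\mathbb{I}$ and $\pi_i=(1-b)q_i$ on $\mathbb{B}$, which is the clamp of $\ve{p}$ to $[1-b,1+a]\cdot\ve{q}$.

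Its total mass is
\begin{eqnarray*}
(1+a)q(\mathbb{A})+p(\mathbb{I})+(1-b)q(\mathbb{B}) & = & 1-aq(\mathbb{A})+bq(\mathbb{B})+p(\mathbb{I})-q(\mathbb{I})\cdot 0 - \ldots,
\end{eqnarray*}
which, after using $q(\mathbb{A})+q(\mathbb{I})+q(\mathbb{B})=1$, equals $1$ exactly by \eqref{kktconst3}. Conditions \eqref{kktconst1}--\eqref{kktconst2} ensure $\mathbb{A}$ and $\mathbb{B}$ are nonempty whenever they are needed. They also ensure $\ve{s}\in\Delta_n$, since the normalizer $1-\ve{p}^\top\ve{r}$ equals $\sum_i\max\{0,(1-b)q_i-p_i\}$ by the mass identity.

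\textbf{Step 2: part (I).} This is bookkeeping. We have $\pacc(MD)=\ve{p}^\top\ve{r}=p(\mathbb{B})+p(\mathbb{I})+(1+a)q(\mathbb{A})$. Similarly, $\pacc(SD)=\sum_i\min\{p_i,q_i\}=p(\mathbb{B})+p(\mathbb{I}\setminus\mathbb{I}_{>1})+q(\mathbb{I}_{>1})+q(\mathbb{A})$. Subtracting the two expressions gives \eqref{eq-pacc-sd-md}.

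\textbf{Step 3: the divergence value in (II).} Splitting the sum defining $D_f(\ve{\pi}\|\ve{q})$ over the three index sets gives $q(\mathbb{A})f(1+a)+q(\mathbb{B})f(1-b)+\sum_{i\in\mathbb{I}}q_if(p_i/q_i)$. The last term is $q(\mathbb{I})$ times a $\ve{q}$-weighted average of values of $f$ on the interval $[1-b,1+a]$. Since $f$ is convex, it is continuous on this interval, so by the intermediate value theorem the average equals $f(u)$ for some $u\in[1-b,1+a]$; if $\mathbb{I}=\emptyset$, any $u$ works. Thus $\ve{\pi}$ makes the constraint active at the $D$ of \eqref{Dapprox}.

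\textbf{Step 4: optimality.} It remains to show $\ve{\pi}\in\textsc{is}_f(\ve{p},\ve{q};D)$ as in \eqref{isAB-nonconvex}. I would pick a single subgradient selection $g=h\in\partial f$ and set $\alpha\defeq -g(1+a)$ and $\beta\defeq -g(1-b)$. Then $1+a\in L_\alpha(-g)$ and $1-b\in L_\beta(-g)$, and monotonicity of subgradients gives $L_\beta(-g)\le L_\alpha(-g)$. The definition of $\clamps$ then contains $\pi_i$ coordinatewise:
\begin{itemize}
\item on $\mathbb{A}$, $\pi_i=(1+a)q_i\in\mins(p_i,L_\alpha\cdot q_i)$;
\item on $\mathbb{B}$, $\pi_i=(1-b)q_i\in\maxs(p_i,L_\beta\cdot q_i)$;
\item on $\mathbb{I}$, $\pi_i=p_i$ lies between the two level sets.
\end{itemize}
Theorem \ref{th-opt1} together with Lemma \ref{lem-equiv-sol} then yields $(\ve{r},\ve{s})\in\textsc{md}^2_f$.

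\textbf{Main obstacle.} The hard part is the requirement $\alpha<\beta$ with strict separation of the level sets. This fails when $f$ is affine on $[1-b,1+a]$, and also in the boundary cases $a=0$ or $b=0$. My first attempt would be to choose $g$ as the right derivative at $1+a$ and $h$ as the left derivative at $1-b$, which maximizes the gap. If $f$ is genuinely affine on the interval, I would fall back on a direct argument. Here $D_f$ restricted to $[\min\{\ve{p},\ve{q}\},\max\{\ve{p},\ve{q}\}]$ behaves like a scaled TV, and Theorem \ref{thm-TV-MD}'s triangle-inequality argument shows that any clamp lying in that box with the constraint active is optimal.
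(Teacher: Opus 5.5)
Outside the degenerate case, your argument is the paper's. The paper writes the KKT multipliers out explicitly, with $\alpha=(-f')(1+a)$, $\beta=(-f')(1-b)$ (or subgradients $g,h\in\partial f$) and $\lambda=1/(\beta-\alpha)$. That is precisely the certificate behind the $\textsc{is}_f\to\textsc{md}^2_f$ direction of Theorem~\ref{th-opt1} that you cite. It then does the same mean-value and bookkeeping computations for \eqref{Dapprox} and (I). One slip: in your mass display the signs of the $a$ and $b$ terms are flipped. The total is $1+aq(\mathbb{A})-bq(\mathbb{B})+p(\mathbb{I})-q(\mathbb{I})$, which is $1$ by \eqref{kktconst3}.

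The genuine gap is your fallback for the degenerate case $\max\partial f(1+a)=\min\partial f(1-b)$. It does not work, and no argument can.
\begin{itemize}
\item In that case $f(z)=c(z-1)$ on $[1-b,1+a]$, so \eqref{Dapprox} gives $D=c\,(aq(\mathbb{A})-bq(\mathbb{B})+p(\mathbb{I})-q(\mathbb{I}))=0$ by \eqref{kktconst3}.
\item $D_f$ then vanishes identically on ratios in that interval. It does not behave like a scaled TV, which would require a kink at $1$.
\item Theorem~\ref{th-opt1} is unavailable, since it needs $D>0$.
\item The claim itself can be false. Take $f\equiv 0$, or $f(z)=\max\{0,|z-1|-\tfrac12\}$ with $[1-b,1+a]\subset(\tfrac12,\tfrac32)$. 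For small $\theta>0$, the point $(1-\theta)\ve{\pi}+\theta\ve{p}$ keeps every ratio in the flat zone, so it is feasible at $D=0$. It is also strictly closer to $\ve{p}$ in TV, because $\ve{\pi}\neq\ve{p}$ (as $\mathbb{A}\neq\emptyset$ by \eqref{kktconst1}). By Lemma~\ref{lem-equiv-sol}, your $(\ve{r},\ve{s})$ is therefore not optimal.
\end{itemize}

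The right move is to add the hypothesis $\max\partial f(1+a)>\min\partial f(1-b)$. The paper's proof uses this silently when it asserts $\lambda=1/(\beta-\alpha)>0$. Under it, your choice of $g$ as the right derivative at $1+a$ and $h$ as the left derivative at $1-b$ closes the argument. This works because $\partial f(z)\subseteq[\min\partial f(1-b),\max\partial f(1+a)]$ for every $z\in[1-b,1+a]$, which is exactly what the coordinates in $\mathbb{I}$ need.

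For strictly convex $f$ the hypothesis holds at every $(a,b)\neq(0,0)$, since \eqref{kktconst1}--\eqref{kktconst3} force $a=0$ iff $b=0$. At $(0,0)$ the claim is trivial, because $D=0$ forces $\ve{\pi}=\ve{q}$.
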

Proof in Appendix, Section \ref{sec-proof-thmAPPROX1}. We check that Theorem \ref{thmAPPROX1} is optimal in the sense that $a, b \rightarrow 0$, we have the convergence $D \rightarrow f(1) = 0$ and $\pacc(MD)  \rightarrow \pacc(SD) $, since $(\ve{r}, \ve{s})$ converges towards the solution of speculative decoding. By definition, $p(\mathbb{I}_{>1}) > q(\mathbb{I}_{>1})$ if $\mathbb{I}_{>1} \neq \emptyset$ and obviously $a q(\mathbb{A}) \geq 0$, so both added terms in \eqref{eq-pacc-sd-md} contribute to having $\pacc(MD) > \pacc(SD)$.\\

Remember that $u\in [1-b, 1+a]$ and $f(1) = 0$ \eqref{eq-const-f} so the unknown term in \eqref{Dapprox} may be quite small depending on the choice of $f$. 
  We have already seen that TV is special in $f$-divergences for mentored decoding: its set of solutions is a simple geometric problem, which, for any other $f$, becomes substantially more involved. It turns out that the TV divergence holds another singular property.

\paragraph{One mentor to rule them all and the role of TV-MD} Before tackling boosting, we show two important invariants. First, under some lightweight conditions on $f$ -- satisfied in particular by all strictly convex generators --, all optimal solutions of $f$-MD are also in the set of optimal solutions for the total variation divergence. Second the set of optimal mentored distributions as $D$ ranges as per Assumption \ref{assum-mda} \textit{are the same} for any strictly convex $f$. 
\begin{theorem}\label{thmUniversal}
Under assumption \ref{assum-mda}, for any $f$ as per \eqref{eq-const-f} and any $(\ve{r}, \ve{s}) \in \textsc{md}^2_f(\ve{p}, \ve{q}; D)$ such that $\alpha, \beta$ in \eqref{mdisAlphaBeta} satisfy $L_\beta(-h) \leq 1 \leq L_\alpha(-g)$, there exists $D'>0$ such that
\begin{eqnarray*}
(\ve{r}, \ve{s}) & \in & \textsc{md}^2_{f_{\mathrm{TV}}}(\ve{p}, \ve{q}; D').
\end{eqnarray*}
Hence, any such optimal solution to $f$-MD is also optimal for the total variation divergence. Furthermore, for any strictly convex generators $f, g$ \eqref{eq-const-f},
\begin{eqnarray}
\{\textsc{md}^2_f(\ve{p}, \ve{q}; D) : D \mbox{ as per Assumption \ref{assum-mda}}\} = \{\textsc{md}^2_g(\ve{p}, \ve{q}; D) : D \mbox{ as per Assumption \ref{assum-mda}}\}. \label{eq-opt-f-g-md}
\end{eqnarray}
\end{theorem}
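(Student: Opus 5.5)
Both claims reduce to the geometric characterization of TV-MD in Theorem \ref{thm-TV-MD}, combined with the clamping characterization of $f$-MD optima in Theorem \ref{th-opt1} and its subdifferential extension \eqref{isAB-nonconvex}.

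\textbf{First claim.} Take $(\ve{r},\ve{s}) \in \textsc{md}^2_f(\ve{p},\ve{q};D)$ and set $\ve{\pi}$ as in \eqref{eq-pi-from-r-s}. By Lemma \ref{lem-equiv-sol} and \eqref{isAB-nonconvex}, $\ve{\pi}\in\clamps(\ve{p}, L_\beta(-h)\cdot\ve{q}, L_\alpha(-g)\cdot\ve{q})$. The plan is to show that this forces $\ve{\pi}\in[\min\{\ve{p},\ve{q}\},\max\{\ve{p},\ve{q}\}]$, coordinate by coordinate. Fix $i$. The definition \eqref{eq-def-clampset} leaves three possibilities.
\begin{itemize}
\item If $p_i$ exceeds some element of $L_\alpha(-g) q_i$, then $\pi_i/q_i \in L_\alpha(-g)$ with $\pi_i \le p_i$. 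The hypothesis $1 \le L_\alpha(-g)$ gives $\pi_i \ge q_i$, so $\pi_i\in[q_i,p_i]$.
\item If $p_i$ lies below some element of $L_\beta(-h) q_i$, then symmetrically $\pi_i\in[p_i,q_i]$, using $L_\beta(-h)\le 1$.
\item Otherwise $\pi_i=p_i$, which lies in the interval trivially.
\end{itemize}
Hence $\ve{\pi}$ lies in the hyperrectangle. I will then set $D' \defeq D_{\mathrm{TV}}(\ve{\pi}\|\ve{q})$. By Theorem \ref{thm-TV-MD}, $\ve{\pi}\in\textsc{md}^1_{f_{\mathrm{TV}}}(\ve{p},\ve{q};D')$, provided $D'$ satisfies Assumption \ref{assum-mda}. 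Applying Lemma \ref{lem-equiv-sol} then transfers optimality back to $(\ve{r},\ve{s})$, since the map \eqref{eq-r-s-from-pi} recovers $(\ve{r},\ve{s})$ from $\ve{\pi}$. To see this, note that $\ve{\pi}$ in the rectangle with $\ve{s}$ supported on $\{\pi_i>p_i\}$ is consistent with $\ve{r}=\min\{\ve{1},\ve{\pi}\oslash\ve{p}\}$; I would verify that this holds for any optimal pair, or pass through $\ve{\pi}$ directly.

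The step I expect to be delicate is checking that $0<D'<D_{\mathrm{TV}}(\ve{p}\|\ve{q})$. For $D'>0$: $\ve{\pi}\neq\ve{q}$, because otherwise the constraint is slack with $D>0$ and one could move toward $\ve{p}$, contradicting optimality. For $D' < D_{\mathrm{TV}}(\ve{p}\|\ve{q})$: $\ve{\pi}\neq\ve{p}$, because $D_f(\ve{p}\|\ve{q})>D$ (otherwise the problem is trivial). Then, since $\ve{\pi}$ lies in the rectangle, $D_{\mathrm{TV}}(\ve{\pi}\|\ve{p})+D_{\mathrm{TV}}(\ve{\pi}\|\ve{q})=D_{\mathrm{TV}}(\ve{p}\|\ve{q})$ by \eqref{sum-div}, which gives strictness.

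\textbf{Second claim.} For strictly convex $f$, Corollary \ref{cor-strictly-cvx} shows that the optimum is the unique $\ve{\pi}_{a,b}$ of \eqref{eq-pi-ab}, with $a,b$ tied by the simplex constraint \eqref{kktconst3}. I will argue that, as $D$ ranges over $(0,D_{\mathrm{TV}}(\ve{p}\|\ve{q}))$, these optima trace a one-parameter curve $\{\ve{\pi}_{a,b}:(a,b)\in\mathcal{C}(\ve{p},\ve{q})\}$ that does not depend on $f$. The argument has four steps.
\begin{enumerate}
\item Show that $\mathcal{C}$ is a monotone curve: the LHS of \eqref{kktconst3} is decreasing in $a$ and increasing in $b$, so along the constraint $b$ is a nondecreasing function of $a$.
\item Show that $D\mapsto D_f(\ve{\pi}_{a,b}\|\ve{q})$ is continuous and strictly increasing along the curve, by strict convexity, since each coordinate ratio moves monotonically away from $1$.
\item Show that this map covers exactly the admissible range, using the limits at $\ve{q}$ and at $\ve{p}$.
\item Conclude that each strictly convex $f$ induces a bijection between admissible $D$ and the points of this fixed curve.
\end{enumerate}
Via Lemma \ref{lem-equiv-sol}, the sets of solutions in \eqref{eq-opt-f-g-md} then coincide. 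The range endpoints for $D$ differ with $f$, but both families sweep the full open curve, which gives \eqref{eq-opt-f-g-md}.

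The main obstacle is the strict monotonicity in step 2: the coordinates in $\mathbb{I}$ leave the interval at different parameter values, so the curve is only piecewise linear.
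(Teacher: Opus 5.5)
Your argument for the first claim is the paper's argument. The paper also splits the clamp coordinatewise and uses $L_\beta(-h)\le 1\le L_\alpha(-g)$ to put every $\pi_i$ in $[\min\{p_i,q_i\},\max\{p_i,q_i\}]$, then appeals to the TV characterization. You are more careful than the paper in checking $0<D'<D_{\mathrm{TV}}(\ve{p}\|\ve{q})$, which Theorem~\ref{thm-TV-MD} requires. The transfer back to $(\ve{r},\ve{s})$ is immediate: $(\ve{r},\ve{s})$ is TV-feasible at $D'$, and being $f$-optimal it has $\ve{p}^\top\ve{r}=1-D_{\mathrm{TV}}(\ve{\pi}\|\ve{p})$, which is the TV-optimal value.

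For the second claim you take a different route. The paper combines two directions:
\begin{itemize}
\item the sufficiency direction, namely the KKT certificate in the proof of Theorem~\ref{thmAPPROX1}: every $(a,b)\in\mathcal{C}(\ve{p},\ve{q})$ gives a clamp $\ve{\pi}_{a,b}$ that is optimal for \emph{every} generator at $D=D_f(\ve{\pi}_{a,b}\|\ve{q})$;
\item the necessary direction: for strictly convex $f$, every optimum is such a clamp.
\end{itemize}
Both inclusions then follow with no continuity or monotonicity argument. You use only the necessary direction. In place of the certificate, you show that the divergence is continuous and strictly increasing along the curve and sweeps $(0,D_f(\ve{p}\|\ve{q}))$. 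That is more work, but it needs no sufficiency certificate. It also yields an $f$-independent monotone parametrization of the optimal path by $D$.

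Three points need fixing.

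(i) Step~2 as justified is wrong. A ratio moving away from $1$ makes $q_i f(\pi_i/q_i)$ increase only if $f$ is minimized at $1$, and \eqref{eq-const-f} does not require that. For $f(z)=z\log z$, the terms on $\mathbb{B}$ actually decrease as $b$ grows.
\begin{itemize}
\item The fix: replace $f$ by $\tilde f(z)\defeq f(z)-c\,(z-1)$ with $c\in\partial f(1)$. This leaves $D_f(\cdot\|\ve{q})$ unchanged on $\Delta_n$.
\item $\tilde f$ is strictly decreasing on $(0,1]$ and strictly increasing on $[1,\infty)$, so every term is monotone.
\item The $\mathbb{A}$-terms increase strictly as long as $\mathbb{A}\neq\emptyset$.
\end{itemize}
The piecewise linearity you flag as the main obstacle is harmless.

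(ii) In step~1, the sets $\mathbb{A},\mathbb{B},\mathbb{I}$ move with $(a,b)$, so monotonicity of the left side of \eqref{kktconst3} alone is not the right quantity. Argue instead via $\sum_i \pi_{a,b,i}$, which is continuous, nondecreasing in $a$ and nonincreasing in $b$. Alternatively, simply cite Lemma~\ref{lem-cont-ab}.

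(iii) Your last sentence reads ``$D$ as per Assumption~\ref{assum-mda}'' as $D\in(0,D_f(\ve{p}\|\ve{q}))$. This contradicts your earlier ``as $D$ ranges over $(0,D_{\mathrm{TV}}(\ve{p}\|\ve{q}))$''. Only the $f$-dependent reading makes \eqref{eq-opt-f-g-md} true. With the literal common range it fails in general: take $g=cf$ with $c$ large, and $g$ reaches only a shorter initial arc of the curve than $f$. State the $f$-dependent reading explicitly; the paper's proof glosses over this point too.
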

Proof in Appendix, Section \ref{sec-proof-thmUniversal}. We stress the importance of these properties, both from the standpoint of finding optimal mentored distributions (see also Section \ref{sec-algo-prop-md}) and also for the particular case of the total variation, whose remarkable properties already included modeling the optimal rejection metric for SD \citep[Theorem 2]{yin2024theoretical}.

\section{Mentored decoding meets boosting}\label{sec-boost}

In this Section, we connect mentored decoding as analyzed in Section \ref{sec-men-dec} to one of ML's most famous training framework, boosting \citep{sfBF}. Our main boosting algorithm is different from the classical blueprint, so we shall have to introduce and analyze it first. But before, we define the general boosting framework. We have access to a training sample $\mathcal{S} \defeq \{w_{i}, (\ve{x}_i, \ve{y}_i) , \ve{x}_i \in \mathcal{X}, \ve{y}_i \in \mathcal{Y}\}_{i \in [m]}$ of $m$ \textit{examples}. Here, $\mathcal{X}$ is the set of all possible inputs of a LLM, including prompts, etc.. We adopt the lightweight approach of \citet{zzrsMC} for $\mathcal{Y}$. $\mathcal{Y} \defeq \{\ve{y} \in \mathbb{R}^n : \ve{1}^\top \ve{y} = 0\}$ and $\ve{y}_i$ has two possible coordinates, $1/n_i$ and $-1/(n-n_i)$; $y_{ij} = 1/n_i$ iff token $j$ is a potential next token for $\ve{x}_i$ and $n_i$ is the number of such potential next tokens. We denote $\mathcal{Y}_i \defeq \{j : y_{ij} = 1/n_i\}$ and $\overline{\mathcal{Y}}_i \defeq [n] \backslash \mathcal{Y}_i $. We assume without loss of generality that $0<n_i<n, \forall i \in [m]$ so none of these sets is empty. Finally, $\ve{0} < \ve{w} \in \Delta_m$ is the initial weight vector of the training sample, usually uniform.

\paragraph{Boosting in our LLM context} Even when our embedding of boosting in mentored decoding shall be made with two models, one drafter and one target, we first develop a general theory for any number of such models. Also, distinguishing drafters and targets makes no real sense for the general boosting theory we first develop, so let us assume first we have a sequence of $T> 1$ LLMs whose last layer (real) prediction is denoted $\ve{h}_t : \mathcal{X} \rightarrow \mathbb{R}^n, t \in [T]$. Note that we assume that these models are already available, which makes sense in the current state of LLMs, but we might as well \textit{train} sequentially $T$ models as is usually the case in boosting. The results we present here are oblivious to how the models are made available.

\paragraph{Predictions} Should we use separately each of these models, the corresponding probability vectors $\ve{p}_t$ to predict the next token would be proportional to $\exp \ve{h}_t$. In our case however and for technical reasons, we are going to renormalize $\ve{h}_t$ by a scalar positive constant computed from the training sample, thus playing no role in ranking probabilities. Let
\begin{eqnarray}
\ve{p}_t(\ve{x}) & \defeq & \frac{1}{Z_t} \cdot \exp\left(\frac{1}{h_{t,\infty}}\cdot \ve{h}_t(\ve{x})\right) \in \Delta_n, \quad \mbox{ with } h_{t,\infty}\defeq \max_{j\in [m]} \|\ve{h}_{t}(\ve{x}_j)\|_\infty,
\end{eqnarray}
where $Z_t$ is used for normalization. Importantly, $h_{t,\infty}$ is the max $L_\infty$ norm of $\ve{h}_{t}$ \textit{on training}: it is thus trivially computable and finite. In boosting's jargon, each such predictor is called a \textit{weak} predictor because boosting provides a way to craft an ensemble from each of them with rapidly improving quality even when each weak predictor is just slightly better than random guessing. Boosting works by combining all last layers -- or equivalently all these $T$ probability vectors -- to get a boosted output $\ve{p}_T(\ve{x})$. The quality of $\ve{p}_T(\ve{x})$ is evaluated by comparing, for each training example $i\in [m]$, output probabilities for its potential next tokens in $\mathcal{Y}_i$ to the other ones in $\overline{\mathcal{Y}}_i$. Specifically, we want the coordinates of $\ve{p}_T(\ve{x})$ in $\mathcal{Y}_i$ to be large enough compared to those in $\overline{\mathcal{Y}}_i $, where comparisons use the geometric average of the corresponding sets. The geometric average has the essential property to be zero-attracting: for such successful examples, it will prevent in general \textit{any} coordinate of $\ve{p}_T(\ve{x})$ in $\mathcal{Y}_i$ to be too close to zero.

\subsection{The boosting scheme for general $T$}

Our boosting scheme relies on a substantial generalization of \citep{DBLP:journals/ai/NockN07} to the multiclass case and geared to the analysis of probabilities and not real valued predictions. Define the sequence of weights $\ve{w}_t \in \Delta_m, t\in [T]$ such that $\ve{w}_1 \defeq \ve{w}>\ve{0}$ is the weight vector in $\mathcal{S}$ and otherwise obeys the recurrence
      \begin{eqnarray}
w_{(t+1)i} & \defeq & w_{ti} \cdot \frac{1 -\frac{\mu_t}{2 h_{t, \infty}}\cdot \ve{y}_{i}^\top \ve{h}_t(\bm{x}_i)}{1-\mu_t^2}, t \in [T], i \in [m]\label{eq-def-weights}
      \end{eqnarray}
      (note that formula \eqref{eq-def-weights} is self-normalized in $\Delta_m$: there is no normalization coefficient as e.g. in AdaBoost), where coefficient $\mu_t$ is an \textbf{edge} defined as
      \begin{eqnarray}
        \mu_t & \defeq & \frac{1}{2 h_{t,\infty}} \cdot \sum_{i \in [m]} w_{ti} \cdot \ve{y}_{i}^\top \ve{h}_t(\bm{x}_i), t \in [T]. \label{eq-def-mut}
      \end{eqnarray}
      H{\"o}lder's inequality and the definition of $\mathcal{Y}$ imply $|\ve{y}_{i}^\top \ve{h}_t (\bm{x}_i)| \leq \|\ve{y}_{i}\|_1 \cdot h_{t, \infty}$ and $\|\ve{y}_{i}\|_1 = n_i * (1/n_i) + (n-n_i) * (1/(n-n_i)) = 2$, so $|\mu_t| \leq 1$. In fact, let us assume without loss of generality that $|\mu_t| < 1$ otherwise either $\ve{h}_t(\bm{x}_i)$ or $-\ve{h}_t(\bm{x}_i)$ has the same signs as $\bm{y}_i$ for all $i\in [m]$ and so we are guaranteed $p_{tj}(\ve{x}_i) > p_{tk}(\ve{x}_i)$ for any $i \in [m], j \in \mathcal{Y}_i, k \in \overline{\mathcal{Y}}_i$, which would defeat the purpose of boosting $\ve{h}_t$. Secondly, if $\mu_t \leq 0$ then by just flipping $\ve{h}_t \rightarrow -\ve{h}_t$, we get the new $\mu_t \geq 0$. To summarize, we observe
      \begin{eqnarray*}
        \mu_t & \in & [0,1), \forall t \in [T].
      \end{eqnarray*}
The fact that our weight update does without normalization coefficient is a crucial differentiator with the AdaBoost lineage of boosting algorithms \citep{10.1214/aos/1024691352,sfBF}: it saves the algorithmic computation of the normalizing coefficient, and more importantly, the simple closed form of the weights shall be important for the analysis of boosting in the context of mentored decoding.
      We now construct $\tilde{\ve{\pi}}_T(\ve{x})$, the boosted output. We voluntarily name it with the same symbol as the mentored distribution of mentored decoding.
      \begin{definition}
        For $\mu_t$ defined in \eqref{eq-def-mut}, let
         \begin{eqnarray}
        c_t & \defeq & \frac{1}{4}\cdot \ln\left( \frac{1+\mu_t}{1-\mu_t}\right), t \in [T]. \label{eq-def-ct}
         \end{eqnarray}
         The boosted output model $\tilde{\ve{\pi}}_T: \mathcal{X} \rightarrow \Delta_m$ is defined as:
      \begin{eqnarray}
     \tilde{\ve{\pi}}_T (\ve{x}) & \defeq & \frac{1}{Z_T} \cdot \prod_{t=1}^{T} \left(\ve{p}_t(\ve{x})\right)^{\frac{c_t}{\sum_{u \in [T]} c_u}} \in \Delta_n,\label{eq-def-boosted-p}
      \end{eqnarray}
      where $Z_T$ is the normalization coefficient.
    \end{definition}    
    Note that in the context of next token prediction, the full computation of \eqref{eq-def-boosted-p} is optional. In particular, we can always spare the computation of $Z_T$. We now analyze the boosting abilities of $\tilde{\ve{\pi}}_T $.

\subsection{Boosting the individual predictions in $\tilde{\bm{\pi}}_T$: main theorem}

For any set of non negative reals $\mathcal{A}$, $\overline{\mathcal{A}}^G$ denotes the geometric average with uniform weights of the elements of $\mathcal{A}$: for example, $\overline{\{1, 2\}}^G \defeq 1^{1/2} \cdot 2^{1/2} = \sqrt{2}$.
    \begin{theorem}\label{th-boost-P}
      For any $T>1$, suppose without loss of generality that the sequence $\mu_1, \mu_2, ..., \mu_T$ is non-negative and with expectation $\expect[\mu]> 0$. Then $\tilde{\ve{\pi}}_T$ in \eqref{eq-def-boosted-p} satisfies
 \begin{eqnarray}
     \pr_{i \sim \ve{w}}\left[ \overline{\{\tilde{\pi}_{T,j}(\ve{x}_i), j \in \mathcal{Y}_i\}}^G \leq \rho \cdot \overline{\{\tilde{\pi}_{T,j}(\ve{x}_i), j \in \overline{\mathcal{Y}}_i\}}^G \right] \leq \exp \left(-\frac{1}{6} \cdot \sum_{t=1}^T \mu_t^2\right), \forall \rho \leq \exp\left(\frac{2Q}{3}\right), \label{eq-bound-boosted-2}
 \end{eqnarray}
with $Q \defeq \expect[\mu] + \frac{\var[\mu]}{\expect[\mu]}$ and $\var[\mu]$ is the variance of the sequence $\mu_1, \mu_2, ..., \mu_T$.
    \end{theorem}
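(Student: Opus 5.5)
The plan is a margin argument in the style of AdaBoost, applied to the self-normalized weights \eqref{eq-def-weights}. Write $u_{ti} \defeq \ve{y}_i^\top \ve{h}_t(\ve{x}_i)/(2h_{t,\infty}) \in [-1,1]$.

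\textbf{Rewriting the event.} In the logarithm of the ratio of geometric averages in \eqref{eq-bound-boosted-2}, the normalizers $Z_T$ and $Z_t$ cancel, because they are common to all coordinates. Since $\ve{y}_i$ has value $1/n_i$ on $\mathcal{Y}_i$ and $-1/(n-n_i)$ on $\overline{\mathcal{Y}}_i$, the difference of the uniform averages of $h_{t,j}(\ve{x}_i)$ over $\mathcal{Y}_i$ and over $\overline{\mathcal{Y}}_i$ is exactly $\ve{y}_i^\top\ve{h}_t(\ve{x}_i)$. Hence
\begin{eqnarray*}
\ln \frac{\overline{\{\tilde{\pi}_{T,j}(\ve{x}_i), j \in \mathcal{Y}_i\}}^G}{\overline{\{\tilde{\pi}_{T,j}(\ve{x}_i), j \in \overline{\mathcal{Y}}_i\}}^G} & = & \frac{2}{\sum_u c_u}\sum_t c_t u_{ti} .
\end{eqnarray*}
The bad event for example $i$ is therefore $\sum_t c_t u_{ti} \leq \frac{\ln\rho}{2}\sum_t c_t$. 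If $\rho\leq 1$, this bad event is contained in the one for $\rho=1$, so it suffices to treat $\ln\rho\in[0,2Q/3]$.

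\textbf{Bounding the weight of bad examples.} Unrolling \eqref{eq-def-weights} gives $w_{(T+1)i} = w_i\prod_t (1-\mu_t u_{ti})/(1-\mu_t^2)$. Since $\ve{w}_{T+1}\in\Delta_m$, this yields, for any set $\mathcal{B}\subseteq[m]$,
\begin{eqnarray*}
\sum_{i\in\mathcal{B}} w_i \prod_t \frac{1-\mu_t u_{ti}}{1-\mu_t^2} & \leq & 1 .
\end{eqnarray*}
To lower bound the product, I use concavity of $u\mapsto \ln(1-\mu u)$ on $[-1,1]$: it lies above its chord through $u=\pm 1$. This gives
\begin{eqnarray*}
\ln(1-\mu_t u) & \geq & \tfrac12\ln(1-\mu_t^2) - 2c_t u ,
\end{eqnarray*}
which is exactly where the definition \eqref{eq-def-ct} of $c_t$ enters. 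Consequently $\prod_t (1-\mu_t u_{ti})/(1-\mu_t^2) \geq \prod_t(1-\mu_t^2)^{-1/2}\exp(-2\sum_t c_t u_{ti})$. On the bad set this is at least $\prod_t(1-\mu_t^2)^{-1/2}\rho^{-\sum_t c_t}$. Combining with the simplex bound,
\begin{eqnarray*}
\pr_{i\sim\ve{w}}[\mathrm{bad}] & \leq & \prod_t\sqrt{1-\mu_t^2}\cdot\rho^{\sum_t c_t} \;\leq\; \exp\Big(-\tfrac12\sum_t\mu_t^2 + \ln\rho\sum_t c_t\Big).
\end{eqnarray*}

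\textbf{Calibrating $\rho$ (expected obstacle).} It remains to show that $\ln\rho\sum_t c_t \leq \frac13\sum_t\mu_t^2$ whenever $\ln\rho\leq 2Q/3$. Note that $Q=\expect[\mu^2]/\expect[\mu]=\sum_t\mu_t^2/\sum_t\mu_t$. So the requirement amounts to $\sum_t c_t\leq \frac12\sum_t\mu_t$, up to the exact constant bookkeeping. This is the delicate step, because $c_t = \frac12\mathrm{artanh}(\mu_t)\geq \mu_t/2$ and $c_t$ blows up as $\mu_t\to1$. I would first try to tighten the chord inequality, for instance by applying it with a rescaled slope or by keeping the exact $\frac12\ln(1-\mu_t^2)$ term instead of bounding it by $-\mu_t^2/2$, so as to absorb the cubic excess $c_t-\mu_t/2 = \mu_t^3/6+\dots$. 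Failing that, I would trade part of the $\frac12\sum_t\mu_t^2$ budget; the constant $\frac16$ in the statement suggests exactly this kind of slack. The fallback is to split $\mu_t$ small versus large and bound $\mathrm{artanh}$ on each regime.
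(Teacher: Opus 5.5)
\textbf{There is a genuine gap at the final step.} Up to your intermediate bound, your argument is the paper's own. The chord inequality is the Nock--Nielsen bound \eqref{eqb1}, the unrolling-plus-simplex step is \eqref{funeq1}, and your rewriting of the event matches \eqref{eq-latent-c}. The resulting bound $\Pr[\mathrm{bad}]\leq\prod_t\sqrt{1-\mu_t^2}\,\rho^{\sum_t c_t}$ is strong enough to finish the proof.

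The trouble is the linearization $\sqrt{1-\mu_t^2}\leq e^{-\mu_t^2/2}$. After it, at $\ln\rho=\frac23 Q$ you need $\sum_t c_t\leq\frac12\sum_t\mu_t$. That is false as soon as one $\mu_t>0$, because $c_t=\frac12\mathrm{artanh}(\mu_t)>\frac{\mu_t}{2}$ on all of $(0,1)$. So this step is not merely delicate; along your route it cannot be closed. Your fallbacks do not help. The $\frac16$ in the statement is exactly what is left after spending $\frac13\sum_t\mu_t^2$ of the budget, so there is nothing more to trade. No split into regimes can give $\mathrm{artanh}(\mu)\leq\mu$.

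\textbf{The missing idea} (the paper's Lemmas \ref{lem-Ratio} and \ref{lem-bsup}) is to keep $\frac12\ln(1-\mu_t^2)$ exact and let its higher-order terms pay for those of $\mathrm{artanh}$. The target exponent should be $-\frac{\theta}{4}\sum_t\mu_t$, not $-\frac12\sum_t\mu_t^2+\cdots$. With $\theta=\ln\rho\geq0$, use the convergent expansions (valid since $\mu_t<1$) $\frac12\ln(1-\mu^2)=-\sum_{k\geq1}\frac{\mu^{2k}}{2k}$ and $\theta c=\sum_{k\geq1}\frac{\theta}{2(2k-1)}\mu^{2k-1}$. Group the terms of $\frac12\sum_t\ln(1-\mu_t^2)+\theta\sum_t c_t+\frac{\theta}{4}\sum_t\mu_t$ by order:
\begin{itemize}
\item The lowest-order group is $-\frac12\sum_t\mu_t^2+\frac{3\theta}{4}\sum_t\mu_t$, which is nonpositive iff $\theta\leq\frac23 Q$.
\item For $k\geq2$, the group of orders $(2k-1,2k)$ is nonpositive iff $\theta\leq\frac{2k-1}{k}\cdot\frac{\sum_t\mu_t^{2k}}{\sum_t\mu_t^{2k-1}}$.
\end{itemize}
Because $\theta$ couples all the $t$'s, you need the moment-ratio inequality $\frac{\sum_t\mu_t^{2k}}{\sum_t\mu_t^{2k-1}}\geq\frac{\sum_t\mu_t^{2}}{\sum_t\mu_t}$. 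It follows by symmetrizing, with $x_t=\mu_t$: $2\bigl(\sum_t x_t\sum_t x_t^{2k}-\sum_t x_t^2\sum_t x_t^{2k-1}\bigr)=\sum_{s,t}x_sx_t(x_t-x_s)(x_t^{2k-2}-x_s^{2k-2})\geq 0$. With it, every higher-order group is implied by $\theta\leq\frac23 Q$.

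This gives $\prod_t\sqrt{1-\mu_t^2}\,e^{\theta\sum_t c_t}\leq e^{-\frac{\theta}{4}\sum_t\mu_t}$ for all $\theta\in[0,\frac23 Q]$. Taking $\theta=\frac23 Q$ yields $e^{-\frac16\sum_t\mu_t^2}$. Monotonicity of the bad event in $\rho$ then covers every smaller $\rho$.
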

    Proof in Appendix, Section \ref{sec-proof-th-boost-P}.
    \begin{remark}
      The RHS of \eqref{eq-bound-boosted-2} also applies to the boosting scheme of the seminal paper of \citet{10.1214/aos/1024691352}, which does not give an explicit rate for the empirical risk, apart from mentioning that it is exponentially decreasing.
\end{remark}
Hence, we are guaranteed that a rapidly growing proportion of training sample will have a geometric average of the probabilities for the true next tokens larger than the geometric average of the other "bad" tokens by a "margin" factor $\rho > 1$. Note the quantitative advantage of the geometric average being zero-attracting for those "good" examples: if the geometric average of the bad tokens is $>0$, then no coordinate in the good tokens can be zero. We now summarize a more qualitative analysis based on Theorem \ref{th-boost-P}. 
\begin{definition}\label{def-ba}
  The boosting advantage of the sequence $\{\ve{h}_{t}\}_{t\in [T]}$ is the quantity
\begin{eqnarray}
A\left(\{\ve{h}_{t}\}_{t\in [T]}\right) & \defeq & \sum_{t=1}^T \mu_t^2, \label{def-boostad}
\end{eqnarray}
where $\mu_t$ is defined in \eqref{eq-def-mut}. 
\end{definition}
Introducing boosting's so-called Weak Learning Assumption \citep{10.1214/aos/1024691352,DBLP:journals/ai/NockN07}:
    \begin{align}
      \exists \upgamma > 0 : |\mu_t| \geq \upgamma, \forall t \in [T], \label{eq-def-wla}\tag{\textbf{WLA}}
    \end{align}
    we get an $\Omega(T)$ boosting advantage:
    \begin{eqnarray}
A\left(\{\ve{h}_{t}\}_{t\in [T]}\right) & \geq & \upgamma^2 \cdot T, \label{b-adv-lin}
\end{eqnarray}
    and so under \eqref{eq-def-wla}, for any $\varepsilon > 0$, we have that a proportion $\geq 1 - \varepsilon$ of the training examples observe $\overline{\{\tilde{\pi}_{T,j}(\ve{x}_i), j \in \mathcal{Y}_i\}}^G >  \rho \cdot \overline{\{\tilde{\pi}_{T,j}(\ve{x}_i), j \in \overline{\mathcal{Y}}_i\}}^G$ as soon as 
    \begin{eqnarray}
T & = & \left\lceil\frac{6}{\upgamma^2} \cdot \log \frac{1}{\varepsilon}\right\rceil,\label{eq-bsup-T}
    \end{eqnarray}
    and the largest possible $\rho$ is $\geq \exp(2\upgamma/3)$. Note also that if each $\ve{h}_{t}$ were to be chosen uniformly at random in a set of, say, unit-$L_2$ norm predictors, then the expectation over randomness would give $\expect[\ve{y}_{i}^\top \ve{h}_t(\bm{x}_i)] = 0$ for each $i \in [m]$, which justifies the name weak predictors for our sequence of $\ve{h}_{t}$ as the \eqref{eq-def-wla} only requires them to slightly beat such a random performance. Finally, in the context of LLMs, note also that \eqref{eq-bound-boosted-2} provides a simple way to cherry pick a subset of available pretrained models, by greedily picking the one maximizing $|\mu_t|$\footnote{The greedy selection may not be optimal over all sequences of inclusion, see Section \ref{sec-disc}.}.

    We now have the tools to connect boosting and mentored decoding. We achieve this in two Subsections, first tackling the case of the total variation, and then the general case. The way we fold boosting in is different in both cases.

\subsection{Mentored decoding and boosting: the case of total variation}\label{subsec-boost-TV}

\begin{figure*}
  \centering
  \resizebox{0.8\columnwidth}{!}{\begin{tabular}{cc}\Xhline{2pt}
 \includegraphics[trim=0bp 0bp 0bp 0bp,clip,width=0.3\columnwidth]{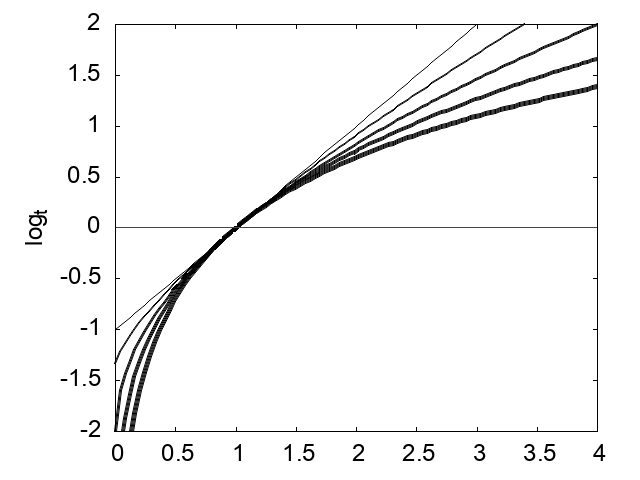}&
                                                        \includegraphics[trim=0bp 0bp 0bp 0bp,clip,width=0.28\columnwidth]{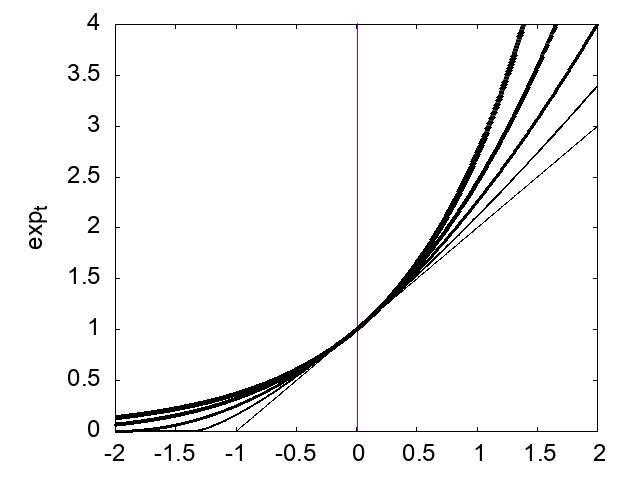}  \\ \Xhline{2pt}
    \end{tabular}}
  \caption{Plots of $\log_t$ (left) and $\exp_t$ (right), for $t = 0, 1/4, 1/2, 3/4, 1$ in black curves where thickness increases with $t$ (see text).}
    \label{fig:t-exp-log}
  \end{figure*} 

Mentored decoding builds a mentored distribution $\ve{\pi}$ that depend on the output $\ve{p}, \ve{q}$ of the drafter and target. From the boosting standpoint, which analyzes the composite / ensemble \textit{model} producing $\ve{\pi}$, we thus end up analyzing the boosting ability of potentially \textit{as many ensemble models} as there can be for \textit{any} outputs of the drafter and target. The connection between mentored decoding and boosting is made by a combination of the models' outputs specific to each output $\ve{p}, \ve{q}$, in such a way that it always yields guarantees on the exponential rate in \eqref{eq-bound-boosted-2} while being optimal from the mentored decoding problem \eqref{def-pb-f-md-2-general} \eqref{def-pb-f-md-1-general}, and the key parameters of these two problems -- the edges \eqref{eq-def-mut} for boosting, the divergence constraint $D$ for mentored decoding -- depend on a real parameter function of $\ve{p}$ and $\ve{q}$, whose existence is guaranteed by Assumption \ref{assum-mda}. From now on, in the context of mentored decoding, the boosting setting corresponds to the specific case of $T=2$ models. This is obviously a very small number of models, but boosting has this property that the marginal improvement of the first few models due to boosting is usually dramatically larger than for the next ones: combining drafter and target models may be sufficient for the boosted model to be better than each of them. Of course, our boosting setting may also apply to mentored decoding settings involving more than two models.

\paragraph{Computation of the mentored distribution} The key non-trivial constraint for any mentored distribution $\ve{\pi}$ to be optimal for $f_{\mathrm{TV}}$-MD is to belong to the hyperrectangle defined by $\ve{p}$ and $\ve{q}$ \eqref{def-opt-tv-tv}. We analyze the construction of the boosted model in \eqref{eq-def-boosted-p} and show how its output from $\ve{p}$ and $\ve{q}$ can be compliant with this constraint. For the analysis, we introduce the tempered versions of log and exp \cite[Chapter 7]{nGT}:
  \begin{eqnarray}
\log_t (z) \defeq \frac{1}{1-t}\cdot\left(z^{1-t}-1\right) & , & \exp_t (z) \defeq \left[1+(1-t) z\right]^{1/(1-t)}_+ \quad ([z]_+ \defeq \max\{0,z\})\label{defExpLogT},
  \end{eqnarray}
  where the case $t=1$ is the extension by continuity to the $\log$ and $\exp$ functions, respectively (see Figure \ref{fig:t-exp-log} for examples). Our focus is essentially on $t\in (0,1)$, for which the concavity / convexity of functions is the same as for $t=1$, see also \citet{DBLP:conf/aaai/AmidNNW24,DBLP:conf/aistats/AmidNW23,DBLP:conf/nips/NockAW23,nGT} for further relevant properties. The following Lemma is central to our analysis. We let $\iver{.}$ be Iverson's bracket \citep{kTN}, i.e. the Boolean truth value of the predicate inside.
\begin{lemma}\label{lem-tempered-mentor}
  For any $\ve{p}, \ve{q} \in \Delta_n$ satisfying $\ve{p}, \ve{q} > \ve{0}$, any $0\leq \alpha \leq 1$, denote
  \begin{eqnarray}
    i^* & = & \arg\max_i \left(\frac{q_i}{p_i}\right)^{\iver{p_i > q_i}-\alpha}\label{eq-def-istar}
  \end{eqnarray}
  (without loss of generality, this is a singleton). Suppose the following holds:
  \begin{eqnarray}
    \exp_\alpha \expect_{i \sim \ve{p}} \log_\alpha \frac{q_i}{p_i} & \geq & \max_i \min \left\{\frac{q_i}{p_i}, \frac{p_i}{q_i}\right\} \quad \mbox{ if $p_{i_*} > q_{i_*}$}, \label{eq-cond-tmp-1}\\
    \exp_{1-\alpha} \expect_{i \sim \ve{q}} \log_{1-\alpha} \frac{p_i}{q_i} & \geq & \max_i \min \left\{\frac{q_i}{p_i}, \frac{p_i}{q_i}\right\} \quad \mbox{ if $p_{i_*} < q_{i_*}$}. \label{eq-cond-tmp-2}
  \end{eqnarray}
  Then if we let
  \begin{eqnarray}
    \ve{v}_\alpha & \propto & \ve{p}^\alpha \odot \ve{q}^{1-\alpha} \in \Delta_n \label{def-mu-alpha}
  \end{eqnarray}
  the distribution obtained from the coordinate-wise geometric average of $\ve{p}$ and $\ve{q}$, the following holds:
  \begin{eqnarray}
    \ve{v}_\alpha \in [\min\{\ve{p}, \ve{q}\}, \max\{\ve{p}, \ve{q}\}].\label{eq-const-tv-md}
  \end{eqnarray}
\end{lemma}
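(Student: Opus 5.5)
The plan is to reduce \eqref{eq-const-tv-md} to a single scalar inequality on the normalization constant of $\ve{v}_\alpha$, and then to recognise the tempered quantities in \eqref{eq-cond-tmp-1}, \eqref{eq-cond-tmp-2} as powers of that constant. Write
\begin{eqnarray*}
Z & \defeq & \sum_i p_i^\alpha q_i^{1-\alpha}.
\end{eqnarray*}
Then $v_{\alpha,i} = q_i (p_i/q_i)^\alpha / Z = p_i (q_i/p_i)^{1-\alpha}/Z$. Since $Z = \expect_{i\sim\ve{q}}[(p_i/q_i)^\alpha]$, Jensen's (or H\"older's) inequality gives $Z\leq 1$. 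The extreme cases $\alpha\in\{0,1\}$ are trivial, because then $\ve{v}_\alpha \in\{\ve{q},\ve{p}\}$; so we work with $\alpha\in(0,1)$.

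\textbf{Step 1 (coordinate-wise reduction).} Fix $i$ and split on the sign of $p_i-q_i$. The case $p_i=q_i$ is trivial: there $v_{\alpha,i}\geq q_i$ follows from $Z\leq 1$, and $v_{\alpha,i}\leq q_i$ is implied by the condition below. If $p_i>q_i$, then $v_{\alpha,i}\geq q_i$ is equivalent to $(p_i/q_i)^\alpha\geq Z$. This holds automatically, since the left-hand side is $>1\geq Z$. The remaining constraint $v_{\alpha,i}\leq p_i$ is equivalent to $Z\geq (q_i/p_i)^{1-\alpha}$. Symmetrically, if $p_i<q_i$, then $v_{\alpha,i}\geq p_i$ is automatic, and $v_{\alpha,i}\leq q_i$ is equivalent to $Z\geq (p_i/q_i)^{\alpha}$. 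Both cases are covered by the single expression $Z\geq (q_i/p_i)^{\iver{p_i>q_i}-\alpha}$. Hence \eqref{eq-const-tv-md} is equivalent to
\begin{eqnarray*}
Z & \geq & \left(\frac{q_{i^*}}{p_{i^*}}\right)^{\iver{p_{i^*}>q_{i^*}}-\alpha},
\end{eqnarray*}
with $i^*$ as in \eqref{eq-def-istar}.

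\textbf{Step 2 (identifying the tempered means).} Using \eqref{defExpLogT}, we have $\log_\alpha(z)=(z^{1-\alpha}-1)/(1-\alpha)$. Therefore $\expect_{i\sim\ve{p}}\log_\alpha(q_i/p_i) = (Z-1)/(1-\alpha)$, and applying $\exp_\alpha(y)=[1+(1-\alpha)y]_+^{1/(1-\alpha)}$ gives exactly $Z^{1/(1-\alpha)}$. Likewise $\log_{1-\alpha}(z)=(z^\alpha-1)/\alpha$, which yields $\exp_{1-\alpha}\expect_{i\sim\ve{q}}\log_{1-\alpha}(p_i/q_i) = Z^{1/\alpha}$. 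Positivity of $Z$ means the $[\cdot]_+$ never clips, so these identities hold exactly.

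\textbf{Step 3 (conclusion).} Suppose first that $p_{i^*}>q_{i^*}$. Condition \eqref{eq-cond-tmp-1} reads $Z^{1/(1-\alpha)}\geq \max_i\min\{q_i/p_i,p_i/q_i\}$. The right-hand side is at least $q_{i^*}/p_{i^*}$, so raising both sides to the power $1-\alpha>0$ gives $Z\geq (q_{i^*}/p_{i^*})^{1-\alpha}$, which is the required inequality. Suppose instead that $p_{i^*}<q_{i^*}$. Condition \eqref{eq-cond-tmp-2} gives $Z^{1/\alpha}\geq p_{i^*}/q_{i^*}$, hence $Z\geq (p_{i^*}/q_{i^*})^\alpha$. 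In both cases Step 1 then yields \eqref{eq-const-tv-md}.

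The only step requiring care is Step 1. The point is to notice that, in each case, one of the two box constraints is free because $Z\leq 1$. The other constraint is exactly the quantity maximised in \eqref{eq-def-istar}, so that checking the single index $i^*$ suffices for all coordinates.
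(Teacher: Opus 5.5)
Your proof is correct and follows the paper's own argument: $Z\le 1$ by AM--GM/Jensen, reduction of the box constraint to $Z\ge\max_i (q_i/p_i)^{\iver{p_i>q_i}-\alpha}$, and identification of the two tempered means as $Z^{1/(1-\alpha)}$ and $Z^{1/\alpha}$. Your Step~3 is stated more accurately than the paper's version. The paper writes this maximum as an exact power of $\max_i\min\{q_i/p_i,p_i/q_i\}$ and claims equivalence, whereas in general only your inequality $\max_i\min\{q_i/p_i,p_i/q_i\}\ge\min\{q_{i^*}/p_{i^*},p_{i^*}/q_{i^*}\}$ holds, which is all that sufficiency needs. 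Both arguments tacitly exclude $p_{i^*}=q_{i^*}$: neither hypothesis covers it, and there, for $\alpha\in(0,1)$, the conclusion forces $\ve{p}=\ve{q}$.
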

Proof in Appendix, Section \ref{sec-proof-lem-tempered-mentor}. Note that the Lemma is useful only if $\ve{p} \neq \ve{q} $: otherwise, it can hold only when $\ve{p} = \ve{q}$. When $\ve{p} \neq \ve{q} $ and $\ve{p}, \ve{q} > \ve{0}$, it is easy to show that $\log_t$ continuously converges to $z \mapsto z - 1$ as $t\rightarrow 0$ so that the LHS of \eqref{eq-cond-tmp-1} (resp. \eqref{eq-cond-tmp-2}) continuously converges to 1 as $\alpha \rightarrow 0$ (resp. $\alpha \rightarrow 1$), since we observe $\exp_t(0) = 1, \forall t \in [0,1]$. Since the RHS are $<1$, \eqref{eq-cond-tmp-1} (resp. \eqref{eq-cond-tmp-2}) necessarily holds for any $0<\alpha<\alpha_*$ (resp $1-\alpha_*<\alpha<1$) for a small enough $\alpha_*<1$. So under our Assumption \ref{assum-mda}, mentored decoding for the output can be accompanied by a "qualitative" form of boosting for the models. We now complete it with a quantitative one, first describing the ensemble model.

\paragraph{The combination of drafter and target} We now define two key parameters to analyze the imbrication of boosting and mentored decoding.
\begin{definition}\label{def-epsilonpq}
For any $\ve{p}$, $\ve{q}$ complying with Assumption \ref{assum-mda}, let $\rhopq, \epsilonpq$ denote any reals such that $\epsilonpq \in (0,1], \rhopq \geq 0, \epsilonpq (1+\rhopq) \leq 1$ and:
\begin{align}
    \begin{array}{ccc}
      \min_i \max \left\{\frac{q_i}{p_i}, \frac{p_i}{q_i}\right\} \geq 1 + \rhopq,\quad & \max_i \frac{q_i}{p_i} \leq \frac{1}{\epsilonpq}, \quad & \min_i \frac{q_i}{p_i} \geq \epsilonpq.
\end{array} \label{eq-def-eps}\tag{\textbf{ED}}
\end{align}
\end{definition}
While the two rightmost conditions bound the most dissimilar coordinates in $\ve{q}$ and $\ve{p}$, the leftmost is a condition on the most similar one, all in term of density ratio. For example, if $\ve{p} \defeq (3/10, 1/10, 3/5)$ and $\ve{q} \defeq (2/5, 1/5, 2/5)$, the coordinate realizing the leftmost to the rightmost condition in \eqref{eq-def-eps} (with equality) will be the first, second and third respectively. Intuitively, the "freedom" in the joint choice of $\ve{p}$ and $\ve{q} $ augments as $\epsilonpq$ and $\rhopq$ decrease: as $\rhopq, \epsilonpq \rightarrow 0$, the most similar coordinates can be as close to 1 as desired, the most dissimilar coordinates in terms of the ratio $q_./p_.$ can span as much as $\mathbb{R}_+$ as desired. In our context however, we can expect the opposite: drafter and target outputs should achieve some level of agreement in their outputs because they were trained to achieve some quality level in their predictions. The more they would agree on $\ve{p}$ and $\ve{q}$ coordinate-wise, the larger we can pick $\epsilonpq$. We however need to eventually decrease $\rhopq$ for \eqref{eq-def-eps} to remain true, keeping in mind we must keep $\rhopq>0$ because of Assumption \ref{assum-mda}.

From the boosting standpoint, we expect the target model to be better than the drafter from the edge standpoint, so the boosted ensemble first includes the target and then the drafter (interestingly enough, this greedy strategy can prove suboptimal, see Section \ref{sec-disc}). While the first boosting coefficient strictly follows \eqref{eq-def-ct}, the boosted coefficient of the drafter may be (nonlinearly) \textit{scaled down} to ensure that the resulting mentored distribution $\ve{\pi}$ is optimal for the $f_{\mathrm{TV}}$-MD problem. This scaling depends on $\epsilonpq$ and $\rhopq$. First, $\alpha$ in \eqref{def-mu-alpha} is simply
\begin{eqnarray}
\alpha & \defeq & \frac{c_2}{c_2+c_\target}, \label{eq-def-alpha-boost}
\end{eqnarray}
where both $c$s are computed using \eqref{eq-def-ct}. The edge $\mu_\target$ is as in \eqref{eq-def-mut} and thus its computation does not depend on $\ve{p}, \ve{q}$. However $\mu_2$ used for $c_2$ is $\mu_\drafter$ eventually clamped:
\begin{eqnarray}
\mu_2 & \defeq & \min \left\{\mu_\drafter, \frac{\left(1+\mu_\target\right)^{\rhopq \epsilonpq}-\left(1-\mu_\target\right)^{\rhopq \epsilonpq}}{\left(1+\mu_\target\right)^{\rhopq \epsilonpq}+\left(1-\mu_\target\right)^{\rhopq \epsilonpq}}\right\},\label{eq-def-mu2}
\end{eqnarray}
where $\mu_\drafter$ follows \eqref{eq-def-mut}.
      
\paragraph{Main theorem} Armed with these definitions, we now prove the Theorem that brings $f_{\mathrm{TV}}$-MD optimality and boosting.

\begin{theorem}\label{thm-boost-and-mentor-TV}
  The computation of $\alpha$ as in \eqref{eq-def-alpha-boost} simultanously yields:
  \begin{itemize}
  \item $\ve{v}_\alpha$ in \eqref{def-mu-alpha} is optimal for the $f_{\mathrm{TV}}$-MD problem for the choice $D = D_{f_\mathrm{TV}} (\ve{v}_\alpha \|\ve{q})$;
  \item the corresponding boosted model with coefficients $c_\target, c_2$ has boosted advantage satisfying $A\left(\{\ve{h}_1 \defeq \ve{h}_\target, \ve{h}_2 \defeq \ve{h}_\drafter\}\right) \geq  (1 +\rhopq^2 \epsilonpq^2)\cdot \mu^2_\target$.
  \end{itemize}
  Finally, we observe
  \begin{eqnarray}
    D_{f_\mathrm{TV}} (\ve{v}_\alpha \|\ve{q}) & \leq & \rhopq\epsilonpq\label{eq-bound-tv-alpha}.
  \end{eqnarray}
  \end{theorem}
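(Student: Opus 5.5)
The plan is to separate the statement into an MD part and a boosting part, linked by one elementary estimate on $\alpha$. First we rewrite the clamp in \eqref{eq-def-mu2}. With $k \defeq \rhopq\epsilonpq \in [0,1]$ (note $\epsilonpq(1+\rhopq)\leq 1$ gives $\rhopq\epsilonpq<1$), the second argument of the min is exactly $\tanh(k\,\mathrm{artanh}\,\mu_\target)$. Since \eqref{eq-def-ct} reads $c_t = \frac{1}{2}\mathrm{artanh}\,\mu_t$, we get $c_2 \leq k\, c_\target$. As $z\mapsto z/(z+c_\target)$ is increasing, \eqref{eq-def-alpha-boost} then yields
\begin{eqnarray*}
\alpha & \leq & \frac{\rhopq\epsilonpq}{1+\rhopq\epsilonpq},
\end{eqnarray*}
so $\alpha$ is small, in a way controlled by \eqref{eq-def-eps}. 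Everything else should follow from this bound.

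\textbf{Optimality for $f_{\mathrm{TV}}$-MD.} By Theorem \ref{thm-TV-MD}, it suffices to show $\ve{v}_\alpha \in [\min\{\ve{p},\ve{q}\},\max\{\ve{p},\ve{q}\}]$ and $0<D_{f_\mathrm{TV}}(\ve{v}_\alpha\|\ve{q})<D_{f_\mathrm{TV}}(\ve{p}\|\ve{q})$; the constraint is then active by the choice of $D$. Write $v_{\alpha,i} = q_i (p_i/q_i)^\alpha/Z$ with $Z \defeq \expect_{i\sim\ve{q}}(p_i/q_i)^\alpha$. Hölder gives $Z\leq 1$, and \eqref{eq-def-eps} gives $Z\geq \epsilonpq^\alpha$. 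On coordinates where $p_i>q_i$, we have $p_i/q_i\geq 1+\rhopq$. The lower bound $v_{\alpha,i}\geq q_i$ is then immediate from $Z\leq 1$. The upper bound $v_{\alpha,i}\leq p_i$ reduces to $Z\geq (p_i/q_i)^{\alpha-1}$, which follows from $\epsilonpq^{\alpha}\geq(1+\rhopq)^{-(1-\alpha)}$ once $\alpha$ is small enough. On coordinates where $p_i<q_i$, the lower bound $v_{\alpha,i}\geq p_i$ is easy. The upper bound $v_{\alpha,i}\leq q_i$ requires $Z\geq (p_i/q_i)^\alpha$ with $p_i/q_i\leq 1/(1+\rhopq)$.

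I expect this last inequality to be the main obstacle: the crude bound $Z\geq\epsilonpq^\alpha$ is too weak here, given $\epsilonpq(1+\rhopq)\leq 1$. Rather than bounding $Z$ by hand, I would invoke Lemma \ref{lem-tempered-mentor}, which exactly packages these coordinatewise comparisons. It remains to check \eqref{eq-cond-tmp-1}/\eqref{eq-cond-tmp-2} for $\alpha\leq \rhopq\epsilonpq/(1+\rhopq\epsilonpq)$. The right-hand sides are $\leq 1/(1+\rhopq)$ by the leftmost condition of \eqref{eq-def-eps}. For the left-hand sides, I would use that $\log_\alpha$ is close to $z-1$ for small $\alpha$, together with $\expect_{\ve{p}}[q/p]=1$ and the range $q_i/p_i\in[\epsilonpq,1/\epsilonpq]$, to bound the tempered mean from below by roughly $1-O(\alpha/\epsilonpq)$. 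The strict inequalities on $D$ then follow because $\ve{p}\neq\ve{q}$ and $0<\alpha<1$ put $\ve{v}_\alpha$ strictly between the two.

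\textbf{Bound \eqref{eq-bound-tv-alpha}.} Since $\ve{v}_\alpha$ lies in the hyperrectangle, $D_{f_\mathrm{TV}}(\ve{v}_\alpha\|\ve{q}) = \sum_{i:p_i>q_i}(v_{\alpha,i}-q_i)$. Each term is $q_i\big((p_i/q_i)^\alpha/Z-1\big)$, with $p_i/q_i\leq 1/\epsilonpq$ and $Z\geq\epsilonpq^\alpha$. Using $z^\alpha-1\leq\alpha(z-1)$ for $z\geq 1$, together with the bound on $\alpha$ (up to the $\epsilonpq^{-\alpha}$ factor, which I would absorb using $\alpha\leq\rhopq\epsilonpq$), this should give a total at most $\rhopq\epsilonpq$.

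\textbf{Boosting advantage.} By Definition \ref{def-ba}, $A=\mu_\target^2+\mu_2^2$, so it suffices to show $\mu_2\geq\rhopq\epsilonpq\,\mu_\target$. If the clamp in \eqref{eq-def-mu2} is active, this follows from $\tanh(kx)\geq k\tanh(x)$ for $k\in[0,1]$, $x\geq 0$, which is concavity of $\tanh$ on $\mathbb{R}_+$ with $\tanh 0=0$. If instead $\mu_2=\mu_\drafter$, we need $\mu_\drafter\geq\rhopq\epsilonpq\mu_\target$. I would look for this as a consequence of the implicit assumption that the drafter is a weak learner comparable to the target. Otherwise, I would read the clamp as always binding in the regime of interest, and flag this as a case the proof must address explicitly.
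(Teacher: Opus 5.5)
\textbf{Verdict: there is a genuine gap, and it cannot be closed, because the first bullet and the bound \eqref{eq-bound-tv-alpha} are false as stated. The paper's own proof has the same hole.}

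\textbf{What is correct.} Your rewriting of the clamp as $c_2\le\rhopq\epsilonpq\,c_\target$, hence $\alpha\le\rhopq\epsilonpq/(1+\rhopq\epsilonpq)$, is correct. Your handling of the coordinates with $p_i>q_i$ also works: $Z\ge\epsilonpq^\alpha$ together with $\rhopq\epsilonpq\ln(1/\epsilonpq)\le\ln(1+\rhopq)$, which holds under $\epsilonpq(1+\rhopq)\le 1$, gives $v_{\alpha,i}\le p_i$.

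\textbf{Where it fails: $v_{\alpha,i}\le q_i$ when $p_i<q_i$.} You located the obstacle correctly, but your proposed fix does not reach it.
\begin{itemize}
\item For small $\alpha$, the maximizer $i^*$ in \eqref{eq-def-istar} lies in $\{p_i<q_i\}$. There $(q_i/p_i)^{-\alpha}\to 1$, whereas on $\{p_i>q_i\}$ one has $(q_i/p_i)^{1-\alpha}\to q_i/p_i<1$.
\item So the operative condition is \eqref{eq-cond-tmp-2}, not \eqref{eq-cond-tmp-1}, and your small-$\alpha$ expansion of $\log_\alpha$ only addresses the latter.
\item The left-hand side of \eqref{eq-cond-tmp-2} is $Z^{1/\alpha}$, the order-$\alpha$ power mean of $p_i/q_i$ under $\ve{q}$. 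It is nondecreasing in $\alpha$ and tends to $\exp(-\mathrm{KL}(\ve{q}\|\ve{p}))$ as $\alpha\to 0$, not to $1$.
\item Hence shrinking $\alpha$ makes this constraint harder, and \eqref{eq-def-eps} does not control it.
\end{itemize}

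\textbf{Counterexample to the first bullet.} Take $\ve{q}=(0.5,0.25,0.25)$ and $\ve{p}=(0.49,0.5,0.01)$. The inequality $v_{\alpha,1}\le q_1$ reads $2\cdot 0.98^\alpha\le 2^\alpha+0.04^\alpha$. This is an equality at $\alpha=0$, and by convexity it is false for every $\alpha\in(0,0.9]$; compare the slopes at $0$, namely $2\ln 0.98\approx-0.04$ versus $\ln 0.08\approx -2.53$. Meanwhile $\rhopq\epsilonpq<10^{-3}$ here. So $\ve{v}_\alpha$ leaves the hyperrectangle and, by Theorem \ref{thm-TV-MD}, is not TV-MD optimal whenever $\mu_\drafter>0$.

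\textbf{The paper's proof.} It goes through the same Lemma \ref{lem-tempered-mentor}. In its Case 2 it needs \eqref{eq-alpha2}, which bounds $1-\alpha$ by a ratio that never exceeds $1/2$, since $\min_i\max\{q_i/p_i,p_i/q_i\}^2\le\max_i(q_i/p_i)/\min_i(q_i/p_i)$. That is incompatible with small $\alpha$. The swap $u\leftrightarrow v$ contradicts \eqref{eq-def-alpha-boost}, which fixes $c_2$ as the exponent of $\ve{p}$.

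\textbf{An extra hypothesis that repairs the first bullet.} By Jensen, $Z^{1/\alpha}\ge \exp(-\mathrm{KL}(\ve{q}\|\ve{p}))$. So the hypothesis $\mathrm{KL}(\ve{q}\|\ve{p})\le\min_{i:p_i<q_i}\ln(q_i/p_i)$ would close your gap for every $\alpha$.

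\textbf{The bound \eqref{eq-bound-tv-alpha}.} Your sketch cannot close it either, and for the same kind of reason.
\begin{itemize}
\item To first order, $D_{f_{\mathrm{TV}}}(\ve{v}_\alpha\|\ve{q})=\frac{\alpha}{2}\,\expect_{\ve{q}}\left|\ln\frac{p}{q}-\expect_{\ve{q}}\ln\frac{p}{q}\right|+O(\alpha^2)$.
\item The mean absolute deviation of the log-ratio can exceed $2$, so $\alpha\le\rhopq\epsilonpq$ is not enough.
\item Concretely, take $\ve{q}=(1/2,1/2)$ and $\ve{p}=(1-e^{-4}/2,\,e^{-4}/2)$. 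Every admissible pair has $\rhopq\le 1-e^{-4}$ and $\epsilonpq\le e^{-4}$, so $\rhopq\epsilonpq\le 0.0180$, and $\ve{v}_\alpha$ does lie in the rectangle.
\item With the clamp active, however, $D_{f_{\mathrm{TV}}}(\ve{v}_\alpha\|\ve{q})\approx\frac{\alpha}{4}(4+\ln(2-e^{-4}))\approx 1.17\,\alpha>\rhopq\epsilonpq$. Numerically this is $0.0207$ against $0.0180$.
\item In the paper, the step that breaks is the claim that the TV is maximized by the two-point law on $\{\epsilonpq,1/\epsilonpq\}$. That law gives only about $0.0027$ here.
\end{itemize}

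\textbf{The boosting bullet.} Your flag is justified, and no better argument removes it. If $\mu_\drafter<\rhopq\epsilonpq\,\mu_\target$, the $\min$ in \eqref{eq-def-mu2} returns $\mu_\drafter$, and then $A=\mu_\target^2+\mu_\drafter^2<(1+\rhopq^2\epsilonpq^2)\mu_\target^2$. The paper only lower-bounds the second argument of the $\min$, via Lemma \ref{lem-perspective}, which is the same concavity fact as your $\tanh(kx)\ge k\tanh x$. It skips this case, so the bullet needs that extra hypothesis.
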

  Proof in Appendix, Section \ref{sec-proof-thm-boost-and-mentor-TV}. Importantly, the boosting advantage can be free from any dependence in $\epsilonpq$ and $\rhopq$ if there is no clamping of $\mu_2$ -- in such a case, we get $A = \mu^2_\target + \mu^2_\drafter$. In particular, quantity $\epsilonpq$ limits the quality of boosting and in the limit as $\epsilonpq \rightarrow 0$, the boosted guarantees of combining two models vanish and we can only guarantee a quality identical to the target model's. What we should expect in such a case, given that there is then a form of convergence of $\ve{v}_\alpha$ to the target output $\ve{q}$ in this case from \eqref{def-mu-alpha}, is that the fate of boosting clearly becomes a blessing for the TV divergence, namely that the authorized bound $D$ also converges to zero as a function of $\epsilonpq$. This is what \eqref{eq-bound-tv-alpha} guarantees.

\subsection{Mentored decoding and boosting: general case}\label{subsec-boost-f}

We now make use of the approximation to $f$-MD in Theorem \ref{thmAPPROX1} for boosting. Our path to get here is much different from Theorem \ref{thm-boost-and-mentor-TV}: the case of $f_{\mathrm{TV}}$-MD yields a huge set of optimal solutions which we showed can contain convenient boosting solutions as well. For general $f$ however, the set of optimal solution is in general as small as a singleton (if $f$ strictly convex). Instead of hammering boosting solutions in such a small set, we are going to show that the \textit{approximate} solutions to $f$-MD of Theorem \ref{thmAPPROX1} have \textit{de facto} nice boosting properties. In other words, the coordinates of the mentored distribution for the potential next tokens cannot be "too small" with respect to other coordinates, that are associated to tokens that cannot be potential next tokens. The amount by which both sets of coordinates compare to each other depends on parameters evaluated on a sample from the domain.

\begin{theorem}\label{thm-boost-plus-md-general-f}
  For any drafter and target models, and any sample $\mathcal{S}' \defeq \{w'_{i}, (\ve{x}_i, \ve{y}_i) , \ve{x}_i \in \mathcal{X}, \ve{y}_i \in \mathcal{Y}\}_{i \in [m']}$, denote respectively $\ve{p}_i, \ve{q}_i$ the outputs of drafter and target on input $\ve{x}_i$. Suppose the edges of the target and drafter on $\mathcal{S}'$ satisfy $\mu_\target, \mu_\drafter > 0$ \eqref{eq-def-mut}. For any $i \in [m']$ and any $(a_i,b_i) \in \mathcal{C}(\ve{p}_i, \ve{q}_i)$, let
\begin{eqnarray}
\ve{\pi}_i & \defeq & \ve{p}_i \mbox{ clamped to } [1-b_i, 1+a_i]\cdot \ve{q}_i, \forall i\in [m'] \label{eq-def-pii}
\end{eqnarray}
be the mentored distribution defined from drafter and target via the respective $\ve{r}_i, \ve{s}_i$ in \eqref{eq-defR}, \eqref{defS}. Let
\begin{eqnarray}
  k_i & \defeq & \sqrt{\frac{1+a_i}{1-b_i}} \quad (\geq 1).\label{eq-def-ki}
\end{eqnarray}
Then this mentored distribution satisfies
 \begin{eqnarray}
   \pr_{i \sim \ve{w}'}\left[ \overline{\{\pi_{ij}  : j \in \mathcal{Y}_i\}}^G \leq \rho \cdot \min\left\{k_i \cdot \epsilon^{\frac{c_2}{c_1+c_2}}_{\ve{p}_i\ve{q}_i}, \frac{1}{k_i}\right\}^2 \cdot \overline{\{\pi_{ij}  : j \in \overline{\mathcal{Y}}_i\}}^G \right]  & \leq & \exp \left(-\frac{\mu^2_1 + \mu^2_2}{6}\right), \label{eq-bound-boosted-md}
 \end{eqnarray}
for any $\rho \leq \exp\left(\frac{2\cdot (\mu^2_1 + \mu^2_2)}{3\cdot(\mu_1 + \mu_2)}\right)$. Here, $\mu_1\defeq \mu_\target, \mu_2\defeq \mu_\drafter$, $c_1, c_2$ are defined in \eqref{eq-def-ct} and $\epsilon_{\ve{p}_i \ve{q}_i}$ is as in \eqref{eq-def-eps}.
\end{theorem}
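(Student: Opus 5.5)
The plan is to reduce the statement to Theorem \ref{th-boost-P} with $T=2$, $\ve{h}_1 \defeq \ve{h}_\target$ and $\ve{h}_2 \defeq \ve{h}_\drafter$, run on $\mathcal{S}'$ with initial weights $\ve{w}'$. That theorem controls the boosted geometric model $\tilde{\ve{\pi}}_2(\ve{x}_i) \propto \ve{q}_i^{1-\alpha}\odot\ve{p}_i^{\alpha}$, with $\alpha \defeq c_2/(c_1+c_2)$, i.e. exactly $\ve{v}_\alpha$ in \eqref{def-mu-alpha}. Computing $Q$ for two edges gives $\expect[\mu]+\var[\mu]/\expect[\mu] = \expect[\mu^2]/\expect[\mu] = (\mu_1^2+\mu_2^2)/(\mu_1+\mu_2)$, which is the stated range for $\rho$. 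The same theorem supplies the right-hand side $\exp(-(\mu_1^2+\mu_2^2)/6)$, and $\mu_1,\mu_2>0$ gives $\expect[\mu]>0$. Since the normalization $Z$ cancels in ratios of geometric averages, what remains is a deterministic, per-example inequality. It should show that, whenever the event for $\ve{\pi}_i$ in \eqref{eq-bound-boosted-md} holds, the event for $\ve{v}_\alpha$ in \eqref{eq-bound-boosted-2} also holds. The probability bound then transfers by monotonicity.

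For this comparison, fix $i$ and set $\ve{p}=\ve{p}_i$, $\ve{q}=\ve{q}_i$. Write $\ve{\pi}$ for the clamped distribution and $\ve{v}$ for the unnormalized $\ve{p}^\alpha\odot\ve{q}^{1-\alpha}$. The aim is to bound each coordinate ratio $\pi_j/v_j$ above and below by constants that do not depend on $j$. Set $\lambda \defeq \pi_j/q_j \in [1-b,1+a]$ and $\tau \defeq p_j/q_j \in [\epsilonpq, 1/\epsilonpq]$, so that $v_j = q_j \tau^\alpha$. The following cases arise.
\begin{itemize}
\item On $\mathbb{I}$, $\pi_j = p_j$, so $\pi_j/v_j = \tau^{1-\alpha}$.
\item On $\mathbb{A}$, $\pi_j = (1+a)q_j$ and $\tau>1+a$, so $\pi_j/v_j = (1+a)\tau^{-\alpha} \in [(1+a)\epsilonpq^{\alpha}, (1+a)^{1-\alpha}]$.
\item On $\mathbb{B}$, $\pi_j=(1-b)q_j$ and $\tau<1-b$, giving the symmetric bounds.
\end{itemize}
Combining these gives $L \le \pi_j/v_j \le U$ with $L \gtrsim (1-b)\,\epsilonpq^{\alpha}$ and $U \lesssim 1+a$, up to exact exponents that I will fix by carrying out the case analysis carefully.

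Taking geometric averages over $\mathcal{Y}_i$ and over $\overline{\mathcal{Y}}_i$ then gives
\[
\overline{\{\pi_j\}_{\mathcal{Y}_i}}^G \big/ \overline{\{\pi_j\}_{\overline{\mathcal{Y}}_i}}^G \;\ge\; (L/U)\cdot \overline{\{v_j\}_{\mathcal{Y}_i}}^G \big/ \overline{\{v_j\}_{\overline{\mathcal{Y}}_i}}^G .
\]
The target factor is $\min\{k_i \epsilonpq^{\alpha}, 1/k_i\}^2 = \min\{(1+a)\epsilonpq^{2\alpha}/(1-b), (1-b)/(1+a)\}$. So I need $L/U$ to be at least this minimum. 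I will choose how to split the bounds so that one branch comes from the clamp only, namely $(1-b)/(1+a)$, and the other from the clamp combined with the ratio bound $\epsilonpq^{2\alpha}$.

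The main obstacle is this constant bookkeeping. The clamp interval $[1-b,1+a]$ and the interpolation exponent $\alpha$ interact differently on $\mathbb{A}$, $\mathbb{I}$ and $\mathbb{B}$, and the worst case of $L/U$ has to land exactly on the stated minimum, squared via $k_i$. The first thing I will try is to bound $\pi_j/v_j$ relative to the midpoint $\sqrt{(1+a)(1-b)}$, which explains the appearance of $k_i$, and to use $\epsilonpq(1+\rhopq)\le 1$ only to ensure $\epsilonpq\le 1$. If the per-coordinate bounds come out slightly weaker, I will absorb the difference using the fact that the statement has $\min$ rather than a product.
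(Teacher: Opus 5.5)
\textbf{Overall.} Your reduction matches the paper's proof. You apply Theorem \ref{th-boost-P} with $T=2$ (target first, drafter second) and identify the boosted output with $\ve{v}_\alpha$, $\alpha=c_2/(c_1+c_2)$. You then sandwich $\pi_j/v_j$ coordinatewise over the three clamp regimes, pass to geometric averages, and transfer the probability bound by event inclusion. Your $Q$ computation and the direction of the inclusion are right. Your per-coordinate bounds are also correct; your $\tau^{1-\alpha}$ on $\mathbb{I}$ is cleaner than the paper's Case (ii), which writes the exponent as $\alpha$ and compensates by keeping both exponents in its $R$.

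\textbf{The gap.} The final constant comparison is not done, and it is the actual content of the paper's argument. The shortcut you sketch for it does not work.
\begin{itemize}
\item On $\mathbb{B}$ the ratio reaches $(1-b)\epsilon^{-\alpha}$, writing $\epsilon\defeq\epsilonpq$. This exceeds $1+a$ when $\epsilon$ is small, so $U\lesssim 1+a$ is false.
\item Any uniform crude pair, such as $L\ge(1-b)\epsilon^{\alpha}$ and $U\le(1+a)\epsilon^{-\alpha}$, only yields $L/U\ge\epsilon^{2\alpha}/k^2$. That is smaller than both branches of $\min\{k\epsilon^\alpha,1/k\}^2$.
\item Nothing can be ``absorbed'', because the constant is fixed by the statement.
\item The midpoint normalization plays no role.
\end{itemize}

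\textbf{How to close it.} What is needed is a two-case split on the sign of $(1+a)\epsilon^\alpha-(1-b)$; these are exactly the paper's Cases (A) and (B). Your bullets give exactly $L=\min\{(1-b)^{1-\alpha},(1+a)\epsilon^{\alpha}\}$ and $U=\max\{(1+a)^{1-\alpha},(1-b)\epsilon^{-\alpha}\}$.
\begin{itemize}
\item If $(1+a)\epsilon^\alpha<1-b$: using $0\le\alpha\le 1$ and $0<1-b\le 1\le 1+a$, you get $(1+a)\epsilon^\alpha<1-b\le(1-b)^{1-\alpha}$. You also get $(1+a)^{1-\alpha}\epsilon^{\alpha}\le(1+a)\epsilon^\alpha<1-b$. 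Hence $L=(1+a)\epsilon^\alpha$, $U=(1-b)\epsilon^{-\alpha}$, and $L/U=k^2\epsilon^{2\alpha}$. In this case $k^2\epsilon^{2\alpha}$ is the smaller branch.
\item Otherwise: $L\ge 1-b$, since $(1-b)^{1-\alpha}\ge 1-b$. Also $U\le 1+a$, since $(1+a)^{1-\alpha}\le 1+a$ and $(1-b)\epsilon^{-\alpha}\le 1+a$ by the case hypothesis. Hence $L/U\ge 1/k^2$, which is now the smaller branch.
\end{itemize}
Thus $L/U\ge\min\{k\epsilon^\alpha,1/k\}^2$ in all cases, and your inclusion argument then closes the proof.
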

Proof in Appendix, Section \ref{sec-proof-thm-boost-plus-md-general-f}. We have two important remarks regarding Theorem \ref{thm-boost-plus-md-general-f}. First, any $f$-MD problem has optimal mentored distributions with the general form \eqref{eq-def-pii} (Theorem \ref{thmAPPROX1}), for any generator $f$ \eqref{eq-const-f}, so Theorem \ref{thm-boost-plus-md-general-f} applies to all instances of $f$-MD. Second, the Theorem holds for \textit{any} sample $\mathcal{S}' \defeq \{w'_{i}, (\ve{x}_i, \ve{y}_i) , \ve{x}_i \in \mathcal{X}, \ve{y}_i \in \mathcal{Y}\}_{i \in [m']}$ for which $ \mu_\target, \mu_\drafter > 0$, which is arguably a very weak assumption -- in fact weaker than the weak learning assumption \eqref{eq-def-wla}. Given a domain for which a sample $\mathcal{S}'$ is available, the Theorem can be used to get an indication of the general quality of mentored distributions obtained from Theorem \ref{thmAPPROX1}. The Theorem also carries qualitative value if we consider that we can leave the boosting parameters implicit, since we do not need the boosting model. In such a case, if the target model is substantially better than the drafter, $c_2/(c_1+c_2)$ is very small and we can reduce the $\min$ to the $k_i$ dependent part, making the factor of the geometric average $\Omega(\rho/k_i^2)$, i.e. independent of the boosting parameters. This makes $k_i$, which depends on the clamping in the mentored distribution, directly influence the quality of the coordinates for potential next token vs others. We also shall see, on a toy simulation, that $k$ can stay very close to 1 even for a substantial increase of the acceptance probability compared to $\pacc(SD)$ (Table \ref{tab:sim-3}).

\section{Algorithms and related properties}\label{sec-algo-prop-md}

  \begin{algorithm}[t]
\caption{\cabbreakpoints $(\textsc{r}, \textsc{c}, a, b, i, j, q_A, q_B, a_{\max}, b_{\max})$}\label{algo-cabbreakpoints}
\begin{algorithmic}
  \STATE  \textbf{Input:} sorted array $\textsc{r} \defeq \{(p_i / q_i, q_i)\}_{i=1}^n$ in increasing order of the ratio $p_./q_.$; current set of breakpoints $\textsc{c}$, parameters $0\leq a \leq a_{\max}$ all in \eqref{kktconst1}, parameters $0\leq b \leq b_{\max}$ all in \eqref{kktconst2}, indexes $i,j \in [n]$, probabilities $q_A, q_B \in [0, 1]$; // Ratio values in $\textsc{r}$ are called "ticks"
  \STATE  Step 1 : \textbf{if} $a \leq a_{\max}$ \textbf{and} $b \leq b_{\max}$ \textbf{then} $\textsc{c} \leftarrow \textsc{c} \cup \{(a, b)\}$; // add current breakpoint
  \STATE  Step 2 : \textbf{if} $a > a_{\max}$ \textbf{or} $b > b_{\max}$ \textbf{or} $j = 0$ \textbf{or} $i = n - 1$ \textbf{then return} $\textsc{c}$;
  \STATE  Step 3 : // compute thresholds
  \begin{eqnarray*}
    \tau_a \leftarrow \left(r_{i0} - 1 - a\right)\cdot q_A & ; & \tau_b \leftarrow \left(1-b-r_{j0}\right)\cdot q_B;
  \end{eqnarray*}
    \STATE  Step 4 : \textbf{if} $\tau_b < \tau_a$ \textbf{then}  // $b$ reaches a tick
    \STATE \hspace{1.5cm} 4.1 : $a  \leftarrow a + \frac{\tau_b}{q_A}$; 
    \STATE \hspace{1.5cm} 4.2 : $b \leftarrow 1 - r_{j0}$; 
    \STATE \hspace{1.5cm} 4.3 : $q_B \leftarrow q_B - r_{j1}$;
    \STATE \hspace{1.5cm} 4.4 : $j \leftarrow j - 1$;
    \STATE  Step 5 : \textbf{else if} $\tau_b > \tau_a$ \textbf{then} // $a$ reaches a tick
    \STATE \hspace{1.5cm} 5.1 : $a \leftarrow r_{i0} - 1$;
    \STATE \hspace{1.5cm} 5.2 : $b  \leftarrow b +\frac{\tau_a}{q_B}$;
    \STATE \hspace{1.5cm} 5.3 : $q_A \leftarrow q_A - r_{i1}$;
    \STATE \hspace{1.5cm} 5.4 : $i \leftarrow i + 1$;
    \STATE  Step 6 : \textbf{else} // $a$ and $b$ reach a tick
    \STATE \hspace{1.5cm} 6.1 : $a \leftarrow r_{i0} - 1$;
    \STATE \hspace{1.5cm} 6.2 : $b \leftarrow 1 - r_{j0}$; 
    \STATE \hspace{1.5cm} 6.3 : $q_A \leftarrow q_A - r_{i1}$;
    \STATE \hspace{1.5cm} 6.4 : $q_B \leftarrow q_B - r_{j1}$;
    \STATE \hspace{1.5cm} 6.5 : $i \leftarrow i + 1$;
    \STATE \hspace{1.5cm} 6.6 : $j \leftarrow j - 1$;
  \STATE  Step 9 : \cabbreakpoints $(\textsc{r}, \textsc{c}, a, b, i, j, q_A, q_B, a_{\max}, b_{\max})$;
\end{algorithmic}
\end{algorithm}

We now study the algorithmic side of the theory developed so far. Note that the algorithmic efficiency of boosting to compute $\mu_\drafter$ and $\mu_\target$ following \eqref{eq-def-mut} is orthogonal to the mentored decoding part and does not depart from boosting's blueprint complexity, save of course the normalization of boosting's distribution that we do not need to perform, unlike AdaBoost. Only $\epsilonpq, \rhopq$ in \eqref{eq-def-eps} need to be computed in addition. For any given $\ve{p}, \ve{q}$, the complexity is $O(n)$. This is no more than the computation of the resampling probability (Subsection \ref{subsec-one-two}), yet it also gets in the computation of acceptance probabilities and thus brings an additional computation cost when accepting tokens. This, of course, can be reduced, e.g. by quantization of the vectors. We now investigate the mentored decoding side, which has several non-trivial and very useful properties from an algorithmic standpoint.

    \begin{algorithm}[t]
\caption{\allcabbreakpoints $(\textsc{r}, a_{\max}, b_{\max})$}\label{algo-allcabbreakpoints}
\begin{algorithmic}
  \STATE  \textbf{Input:} sorted array $\textsc{r} \defeq \{(p_i / q_i, q_i)\}_{i=1}^n$ in increasing order of the ratio $p_./q_.$, $a_{\max} > 0$ as in \eqref{kktconst1}, $b_{\max} > 0$ as in \eqref{kktconst2};
  \STATE  \textbf{Output:} sequence of breakpoints $\textsc{c}(\ve{p}, \ve{q})$;
  \STATE  Step 1 : compute initial values
  \STATE \hspace{1.5cm} 1.1 : $ i  \leftarrow  \min \{k : r_{k0} > 1\}$; // $r_{k0}\defeq $ coordinate 0 of couple $\#k$ in $\textsc{r}$
  \STATE \hspace{1.5cm} 1.2 : $ j  \leftarrow  \max \{k : r_{k0} < 1\}$;
  \STATE \hspace{1.5cm} 1.3 : $ q_A  \leftarrow  \sum_{k \geq i} r_{k1}$; // $r_{k1}\defeq $ coordinate 1 of couple $\#k$ in $\textsc{r}$
  \STATE \hspace{1.5cm} 1.4 : $ q_B  \leftarrow  \sum_{k \leq j} r_{k1}$;
  \STATE \hspace{1.5cm} 1.5 : $ a  \leftarrow  0$;
  \STATE \hspace{1.5cm} 1.6 : $ b  \leftarrow  0$;
  \STATE \hspace{1.5cm} 1.7 : $ \textsc{c}  \leftarrow  \emptyset$; 
  \STATE  Step 2 : \cabbreakpoints $(\textsc{r}, \textsc{c}, a, b, i, j, q_A, q_B, a_{\max}, b_{\max})$; 
  \STATE  Step 3 : \textbf{return} $\textsc{c}$; // $\textsc{c}(\ve{p}, \ve{q})$
\end{algorithmic}
\end{algorithm}

\begin{figure*}
  \centering
   \includegraphics[trim=50bp 40bp 50bp 50bp,clip,width=0.8\columnwidth]{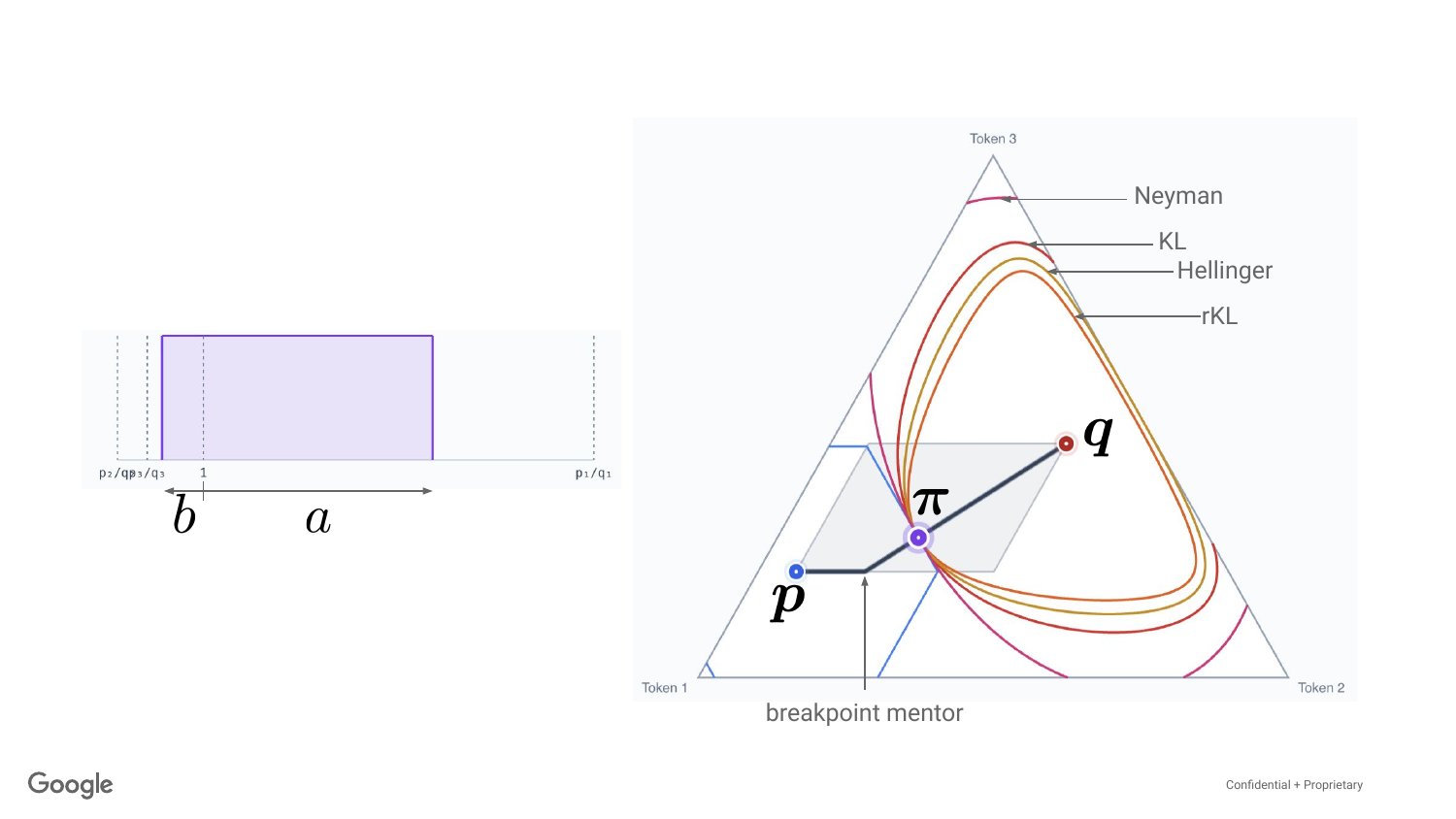} 
  \caption{In the $\Delta_3$ simplex (same setting as Figure \ref{fig:tv-balls}), we illustrate Theorem \ref{thmUniversal} and Lemma \ref{lem-b-from-a-and-c-breakpoints}. The thick dark line is the set of mentored distributions built from querying \cabalgo~in Algorithm \ref{algo-cabalgo}. The bend represents a mentored distribution corresponding to a breakpoint in $\textsc{c}(\ve{p}, \ve{q})$. We display the couple $(a, b)$ corresponding to the mentored distribution $\ve{\pi}$. Finally, for a set of four $f$-divergences (rKL = reverse KL), we display that $\ve{\pi}$ is also the optimal solution in all $\textsc{md}_f(\ve{p}, \ve{q}; .)$ because it is the intersection between the TV-ball defining the optimal objective and the $f$-divergence ball defining the constraint activated in $f$-MD.}
    \label{fig:breakpoints-and-f-balls}
  \end{figure*} 

\paragraph{On computing and querying $\mathcal{C}(\ve{p}, \ve{q})$ in Definition \ref{def-cab}} Theorem \ref{thmAPPROX1} shows that $\mathcal{C}(\ve{p}, \ve{q})$ is key to solving $f$-MD. Given outputs $\ve{p}, \ve{q}$ of the drafter and target, we show how to build a $O(n)$-sized data structure that we call \textit{breakpoints}, $\textsc{c}(\ve{p}, \ve{q})$, in $O(\mathrm{sort}(n))$ time. Such breakpoints are the cornerstone of our approach to get the desired elements of $\mathcal{C}(\ve{p}, \ve{q})$. Quite remarkably, the data structure \textit{does not depend on $f$}, and can thus be used for any applicable $f$ \eqref{eq-const-f} afterwards. The breakpoints are couples $(a,b) \in \mathcal{C}(\ve{p}, \ve{q})$. Apart from the speculative decoding solution for which $a=b=0$, all other couples of $\textsc{c}(\ve{p}, \ve{q})$ satisfy the invariant that at least one of $a$ and $b$ depends on a ratio $p_./q_.$. Each of such ratios being uniquely present at the exclusion of at least one extreme ratio, the cardinal $N$ of $\textsc{c}(\ve{p}, \ve{q})$ satisfies $N \leq n$.

    \begin{algorithm}[t]
\caption{\cabalgo $(\textsc{c}, \tilde{a})$}\label{algo-cabalgo}
\begin{algorithmic}
  \STATE  \textbf{Input:} breakpoints list $\textsc{c} \defeq \textsc{c} (\ve{p}, \ve{q}) \defeq \{(a_i, b_i) : i \in [N]\}$ in strict increasing order of both coordinates, query $0 \leq \tilde{a} \leq a_N$;
  \STATE  \textbf{Output:} $\tilde{b}$ such that $(\tilde{a}, \tilde{b}) \in \mathcal{C}(\ve{p}, \ve{q})$;
  \STATE  Step 1 : find $i$ such that $a_i \leq \tilde{a} \leq a_{i+1}$;
  \STATE  Step 2: // compute $\tilde{b}$
  \begin{eqnarray}
    \tilde{b} & \leftarrow & b + \frac{(\tilde{a}-a) \cdot q(\mathbb{A}_i)}{q(\mathbb{B}_i)} \label{eq-def-tildeb}; \quad \mbox{// $\mathbb{A}_i, \mathbb{B}_i$ are as in \eqref{defAbis}, \eqref{defBbis} for breakpoint $(a_i, b_i)$}
  \end{eqnarray}
  \STATE  Step 3 : \textbf{return} $\tilde{b}$;
\end{algorithmic}
\end{algorithm}

\begin{theorem}\label{thm-cab-from-c-breakpoints}
   The set of breakpoints $\textsc{c}(\ve{p}, \ve{q})$ returned by \allcabbreakpoints~in Algorithm \ref{algo-allcabbreakpoints} satisfy $\textsc{c}(\ve{p}, \ve{q}) \subseteq \mathcal{C}(\ve{p}, \ve{q})$.
 \end{theorem}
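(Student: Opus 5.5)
The plan is to prove, by induction on the recursive calls of \cabbreakpoints, an invariant on its arguments $(a, b, i, j, q_A, q_B)$. At every call we want:
\begin{enumerate}
\item[(i)] $q_A = q(\mathbb{A})$ and $q_B = q(\mathbb{B})$, where $\mathbb{A}, \mathbb{B}$ are the sets in \eqref{defAbis}, \eqref{defBbis} for the current $(a,b)$. Equivalently, $i$ is the smallest index with $r_{i0} > 1+a$ and $j$ the largest with $r_{j0} < 1-b$.
\item[(ii)] Equation \eqref{kktconst3} holds for the current $(a,b)$.
\end{enumerate}
Granting the invariant, every couple added at Step 1 passes the test $a \le a_{\max}$, $b \le b_{\max}$, with $a_{\max}, b_{\max}$ the bounds of \eqref{kktconst1}, \eqref{kktconst2}. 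The couple is also nonnegative, because $a$ and $b$ only increase from $0$: the thresholds $\tau_a, \tau_b$ are $\ge 0$ by (i). Hence each added couple lies in $\mathcal{C}(\ve{p},\ve{q})$. One point needs care at the endpoints: the bounds in \eqref{kktconst1}, \eqref{kktconst2} are strict, so I will check that the stopping tests in Step 2 ($j=0$, $i=n-1$) stop the recursion before $a$ reaches $\max_i p_i/q_i - 1$ or $b$ reaches $1-\min_i p_i/q_i$. Reading $a_{\max}, b_{\max}$ as supremum bounds, the extreme tick is never attained as long as an index remains on each side.

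The base case is Step 1 of \allcabbreakpoints. There $a=b=0$, so $\mathbb{A}=\{p>q\}$, $\mathbb{B}=\{p<q\}$ and $\mathbb{I}=\{p=q\}$. Under Assumption \ref{assum-mda}, $\mathbb{I}=\emptyset$, and otherwise $p(\mathbb{I})=q(\mathbb{I})$. The initializations of $i,j,q_A,q_B$ match (i). For (ii) the left-hand side of \eqref{kktconst3} is $0$ and so is the right-hand side.

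For the inductive step, the key remark is that both sides of \eqref{kktconst3} are piecewise linear in $(a,b)$. Rewrite \eqref{kktconst3} as $\sum_i \pi_i = 1$ with $\ve{\pi}$ the clamp of $\ve{p}$ to $[1-b,1+a]\cdot\ve{q}$, that is, as $\sum_{\mathbb{A}}(p_i-(1+a)q_i) = \sum_{\mathbb{B}}((1-b)q_i-p_i)$. Between ticks the sets $\mathbb{A},\mathbb{B}$ are fixed, so raising $a$ by $\delta_a$ lowers the left side by $\delta_a q_A$, and raising $b$ by $\delta_b$ lowers the right side by $\delta_b q_B$. Equality is preserved exactly when $\delta_a q_A = \delta_b q_B$. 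The thresholds $\tau_a=(r_{i0}-1-a)q_A$ and $\tau_b=(1-b-r_{j0})q_B$ are the amounts of "mass" that can be traded before $a$, respectively $b$, hits its next tick. Step 4 trades $\tau_b$, Step 5 trades $\tau_a$, and Step 6 trades both when they are equal. In each case the increments satisfy $\delta_a q_A=\delta_b q_B$, so (ii) is preserved on the closed segment: at the tick itself the coordinate leaving $\mathbb{A}$ or $\mathbb{B}$ contributes $0$ to both forms. Also, no other tick is crossed on the moving coordinate, since the minimum of the two thresholds was taken. After the move, the updates of $q_A,q_B,i,j$ remove exactly the coordinate that has just joined $\mathbb{I}$. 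This restores (i) with $\mathbb{A},\mathbb{B}$ defined by the strict inequalities, as in \eqref{defAbis}, \eqref{defBbis}.

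I expect the main obstacle to be the bookkeeping at ties and boundaries, rather than any conceptual difficulty. Several ratios may share the same tick, and the stopping tests must guarantee $q_A,q_B>0$ whenever a division occurs. For ties I would either assume distinct ratios without loss of generality, as the paper does elsewhere, or note that a tie yields a zero increment, which just adds a duplicate breakpoint that still satisfies the invariant. For the divisions I would use the invariant together with the tests $j\neq 0$ and $i\neq n-1$ to show that $\mathbb{A}$ and $\mathbb{B}$ are nonempty whenever Steps 4 and 5 divide. Termination follows since $i+(n-j)$ strictly increases.
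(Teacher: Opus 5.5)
Your proof is correct and follows essentially the same route as the paper's: induction along the breakpoints starting from $(0,0)$, where each of Steps 4--6 preserves \eqref{kktconst3}. This holds because the increments satisfy $\delta_a q(\mathbb{A}) = \delta_b q(\mathbb{B})$ and the index leaving $\mathbb{A}$ or $\mathbb{B}$ contributes zero exactly at its tick. Your rewriting $p(\mathbb{A})-(1+a)q(\mathbb{A}) = (1-b)q(\mathbb{B})-p(\mathbb{B})$ is a cleaner substitute for the paper's explicit algebra. Your treatment of the strict bounds in \eqref{kktconst1}, \eqref{kktconst2} and of $q_A, q_B > 0$ fills in bookkeeping the paper's proof leaves implicit.
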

 Proof in Appendix, Section \ref{sec-proof-thm-cab-from-c-breakpoints}. It is clear from \allcabbreakpoints~that all breakpoints returned are ordered in strictly increasing values of both coordinates $a$ and $b$. So let us denote $\textsc{c} (\ve{p}, \ve{q}) \defeq \{(a_i, b_i) : i \in [N]\}$, indexing its elements to reflect the order.
 
 \begin{lemma}\label{lem-b-from-a-and-c-breakpoints}
For any $0 \leq \tilde{a} \leq a_N$, $\tilde{b}$ returned by \cabalgo~in Algorithm \ref{algo-cabalgo} satisfies $(\tilde{a}, \tilde{b}) \in \mathcal{C}(\ve{p}, \ve{q})$.
 \end{lemma}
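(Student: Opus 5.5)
The strategy is to exploit piecewise linearity. On each interval between consecutive breakpoints, the sets $\mathbb{A},\mathbb{I},\mathbb{B}$ of \eqref{defAbis}--\eqref{defBbis} stay frozen, so constraint \eqref{kktconst3} becomes a linear relation between $a$ and $b$. By Theorem \ref{thm-cab-from-c-breakpoints}, every breakpoint $(a_i,b_i)$ already lies in $\mathcal{C}(\ve{p},\ve{q})$. It therefore suffices to show that the segment produced by \eqref{eq-def-tildeb} stays inside $\mathcal{C}(\ve{p},\ve{q})$.

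First I will use the construction in \allcabbreakpoints. Consecutive breakpoints $(a_i,b_i)\to(a_{i+1},b_{i+1})$ arise from one recursive call of \cabbreakpoints. In that call, $a$ advances until it reaches the next tick $r_{i0}-1$ from below, or $b$ advances until it reaches $1-r_{j0}$, whichever comes first (thresholds $\tau_a,\tau_b$). Consequently, for every $\tilde a\in[a_i,a_{i+1})$ and the matching $\tilde b\in[b_i,b_{i+1})$, no ratio $p_k/q_k$ crosses $1+\tilde a$ or $1-\tilde b$ in the strict sense. I will take $\mathbb{A}_i,\mathbb{B}_i$ as the sets in force during that step, i.e.\ those whose masses are $q_A,q_B$ in the call; matching this with "$\mathbb{A}_i,\mathbb{B}_i$ for breakpoint $(a_i,b_i)$" is a bookkeeping check. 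The sets are then the same on the half-open segment, and at the right endpoint only the tick elements move into $\mathbb{I}$. Tied ratios are handled by Step 6 or by repeated zero-length steps.

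Next comes the linear check. Write $\Phi(a,b)\defeq -a\,q(\mathbb{A})+b\,q(\mathbb{B})-p(\mathbb{I})+q(\mathbb{I})$. When the sets are frozen, $\Phi(\tilde a,\tilde b)-\Phi(a_i,b_i)=-(\tilde a-a_i)q(\mathbb{A}_i)+(\tilde b-b_i)q(\mathbb{B}_i)$, and \eqref{eq-def-tildeb} makes this difference $0$. Since $\Phi(a_i,b_i)=0$, we get $\Phi(\tilde a,\tilde b)=0$, which is \eqref{kktconst3}.

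Two degenerate points need care. At $\tilde a=a_{i+1}$ the sets change. Continuity of $\Phi$ across a tick handles this: an index $k$ with $p_k=(1+a)q_k$ contributes $-a q_k$ in $\mathbb{A}$ and $-(p_k-q_k)=-aq_k$ in $\mathbb{I}$, and the same holds on the $\mathbb{B}$ side. Alternatively, one can simply invoke Theorem \ref{thm-cab-from-c-breakpoints} at the endpoint. The other issue is well-definedness, $q(\mathbb{B}_i)>0$. This holds because the recursion stops when $j=0$, so $\mathbb{B}_i\neq\emptyset$ along every produced segment.

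Finally, the range constraints \eqref{kktconst1}--\eqref{kktconst2} follow from monotonicity. Since $q(\mathbb{A}_i),q(\mathbb{B}_i)\ge 0$, $\tilde b$ is nondecreasing in $\tilde a$ and lies between $b_i$ and $b_{i+1}$, both of which satisfy \eqref{kktconst2}. Likewise $\tilde a\in[a_i,a_{i+1}]\subseteq[0,a_N]$ satisfies \eqref{kktconst1}. I expect the main obstacle to be the first step: showing rigorously that the sets in \eqref{eq-def-tildeb} are exactly the ones frozen between the two breakpoints produced by the algorithm, including the tie and boundary cases (Step 6, and termination through $a_{\max},b_{\max}$). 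I would handle it by an invariant on the recursion: at each call, $q_A=q(\{k:p_k/q_k>1+a\})$ and $q_B=q(\{k:p_k/q_k<1-b\})$.
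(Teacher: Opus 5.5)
Your proposal is correct and follows essentially the paper's argument. Between two consecutive breakpoints the sets $\mathbb{A},\mathbb{I},\mathbb{B}$ stay frozen, so \eqref{kktconst3} reduces to a linear relation that \eqref{eq-def-tildeb} preserves; the paper checks this subcase by subcase ($b$ hits a tick, $a$ hits a tick, both), using a fraction $\epsilon\in(0,1)$ of the step, whereas your single function $\Phi$ unifies those subcases. Your explicit handling of the right endpoint (each index contributes the same amount on either side of a tick), of $q(\mathbb{B}_i)>0$, and of the range constraints \eqref{kktconst1}--\eqref{kktconst2} fills in points the paper leaves implicit.
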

 Proof in Appendix, Section \ref{sec-proof-lem-b-from-a-and-c-breakpoints}. We follow with a series of fundamental properties, most of which follow directly from Lemma \ref{lem-b-from-a-and-c-breakpoints}. 
\begin{lemma}\label{lem-cont-ab}
For any $a$ satisfying \eqref{kktconst1}, there exists $b$ such that $(a,b) \in \mathcal{C}(\ve{p}, \ve{q})$. Reciprocally, for any $b$ satisfying \eqref{kktconst2}, there exists $a$ such that $(a,b) \in \mathcal{C}(\ve{p}, \ve{q})$. Finally, the set of points $(\tilde{a}, \tilde{b})$ from Lemma \ref{lem-b-from-a-and-c-breakpoints} is a continuous strictly increasing function from $\mathbb{R}_+$ to ${\mathbb{R}}_+$.
  \end{lemma}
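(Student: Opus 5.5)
The plan is to study the balance function $\Phi(a,b) \defeq -a\,q(\mathbb{A}) + b\,q(\mathbb{B}) - p(\mathbb{I}) + q(\mathbb{I})$, so that \eqref{kktconst3} reads $\Phi(a,b)=0$. I would first rewrite $\Phi$ in clamped form: with $\ve{\pi}(a,b)$ equal to $\ve{p}$ clamped to $[1-b,1+a]\cdot\ve{q}$, we have $\sum_i \pi_i(a,b) = 1 + \Phi(a,b)$. Hence \eqref{kktconst3} is exactly the unit mass condition $\ve{1}^\top\ve{\pi}(a,b)=1$. From this form, each coordinate $\min\{p_i,\max\{(1-b)q_i,\cdot\}\}$ is continuous and piecewise linear in $(a,b)$. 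Consequently $\Phi$ is continuous and piecewise linear, with kinks only at the ticks $a = p_i/q_i - 1$ and $b = 1-p_i/q_i$.

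Next I would establish strict monotonicity. In $a$, $\Phi$ is nonincreasing, with slope $-q(\mathbb{A})$. On the range \eqref{kktconst1}, $\mathbb{A}\neq\emptyset$ because $a<\max_i p_i/q_i-1$, so $q(\mathbb{A})>0$ and $\Phi$ is strictly decreasing in $a$. Symmetrically, $\Phi$ is strictly increasing in $b$ on \eqref{kktconst2}, with slope $q(\mathbb{B})>0$.

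For existence, fix $a$ in \eqref{kktconst1}. At $b=0$, $\ve{\pi}(a,0) \geq \min\{\ve{p},\ve{q}\}$ coordinatewise and $\ve{\pi}(a,0) \leq \min\{\ve{p},(1+a)\ve{q}\}$, so $\sum_i\pi_i \le 1$ and $\Phi(a,0)\le 0$. As $b \nearrow 1-\min_i p_i/q_i$, all $\mathbb{B}$-coordinates become $p_i$, so $\ve{\pi}\to \min\{\ve{p},(1+a)\ve{q}\}$ after the lower clamp vanishes. That route, however, gives mass $\le 1$, which is the wrong direction.

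I would therefore reconsider the limit: as $b\to 1-\min p/q$, the lower clamp becomes inactive, so $\ve{\pi} = \min\{\ve{p},(1+a)\ve{q}\}$ and its mass is $\le 1$, with strict inequality when $a>0$ because some $p_i>(1+a)q_i$. This suggests that the proper parametrization is the reverse one. For $b$ fixed, $\Phi$ decreases in $a$: at $a=0$ the mass is $\ge 1$, and as $a\nearrow\max p/q-1$ the mass tends to $\sum\max\{(1-b)q_i,p_i\}\ge1$. Both endpoints give mass $\ge 1$, so the intermediate value theorem does not apply directly either.

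The real argument I would use is the joint one. Monotonicity shows that the zero set $\{\Phi=0\}$ is the graph of a nondecreasing relation between $a$ and $b$: increasing $a$ lowers $\Phi$, which must be compensated by increasing $b$. To get existence, I would start from $(0,0)$, where $\Phi=0$ because this is the SD solution $\min\{\ve{p},\ve{q}\}$ plus resampling mass. Lemma \ref{lem-b-from-a-and-c-breakpoints} then gives, for every $\tilde a\in[0,a_N]$, an explicit $\tilde b$ with $(\tilde a,\tilde b)\in\mathcal{C}$. The function $\tilde a\mapsto\tilde b$ is continuous and piecewise linear with positive slopes $q(\mathbb{A}_i)/q(\mathbb{B}_i)$, and it agrees at the breakpoints by construction in \allcabbreakpoints. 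That yields continuity and strict increase.

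Uniqueness of $b$ given $a$ follows from the strict monotonicity of $\Phi$ in $b$. The main obstacle is coverage: showing that the breakpoint chain reaches the whole range, i.e. that \allcabbreakpoints~terminates only when $a$ or $b$ hits its maximum in \eqref{kktconst1}–\eqref{kktconst2}. Only then do the $a$-range and the $b$-range both get exhausted, which gives the reciprocal statement. I would check this through the stopping rule in Step 2 of \cabbreakpoints. The remaining part of the domain beyond $a_N$ is handled by one final linear segment from the last breakpoint, using \eqref{eq-def-tildeb} with the final $\mathbb{A},\mathbb{B}$ sets; this segment stays valid until the first of $a,b$ reaches its supremum.
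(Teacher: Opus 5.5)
\textbf{Genuine gap.} Your proposal does not prove the first two claims (a $b$ for every $a$ in \eqref{kktconst1}, an $a$ for every $b$ in \eqref{kktconst2}). Lemma \ref{lem-b-from-a-and-c-breakpoints} only produces points for $\tilde a\in[0,a_N]$. You defer the rest to a ``coverage'' property of the breakpoint chain, which you leave unchecked.

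The patch you sketch does not work as stated. The return test in Step 2 ($j=0$ or $i=n-1$) can fire while further ticks still lie ahead. Past the next tick, a segment built from the last $\mathbb{A},\mathbb{B}$ no longer satisfies \eqref{kktconst3}: along it, $\Phi$ drifts by $q_k$ per unit of $a$, where $k$ is the index that changed set. Nor is there a ``first'' of $a,b$ to reach its supremum. They reach $\max_i p_i/q_i-1$ and $1-\min_i p_i/q_i$ together: the last segment, where $\mathbb{A}$ and $\mathbb{B}$ only contain the indices of extreme ratio, ends exactly at that corner. Establishing this is precisely the missing content, and it rests on the endpoint computation you got wrong.

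\textbf{The argument you discarded works.} Two slips made you reject it, and it is essentially the paper's route: a sign check at the two ends of the $b$-range plus continuity in $b$.
\begin{itemize}
\item First, $\ve{1}^\top\ve{\pi}(a,b)=1-\Phi(a,b)$, not $1+\Phi(a,b)$.
\item Second, at $b=0$ the lower clamp is $\ve{q}$ itself, so $\ve{\pi}(a,0)\geq\ve{q}$ and $\Phi(a,0)\leq 0$. Your bound $\ve{\pi}(a,0)\leq\min\{\ve{p},(1+a)\ve{q}\}$ fails whenever $p_i<q_i$.
\end{itemize}

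With these corrected, fix $a$ in \eqref{kktconst1}. As $b\nearrow 1-\min_i p_i/q_i$, we get $\Phi(a,b)\to 1-\sum_i\min\{p_i,(1+a)q_i\}>0$, since $a<\max_i p_i/q_i-1$ forces $\mathbb{A}\neq\emptyset$. Continuity of $\Phi$ and the intermediate value theorem then give a valid $b$.

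Symmetrically, fix $b$ in \eqref{kktconst2}. At $a=0$ the upper clamp is $\ve{q}$, so $\ve{\pi}(0,b)\leq\ve{q}$ and $\Phi(0,b)\geq 0$; your claim that the mass is $\geq 1$ there is reversed. As $a\nearrow\max_i p_i/q_i-1$, we get $\Phi(a,b)\to 1-\sum_i\max\{p_i,(1-b)q_i\}<0$, which gives $a$.

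Your (correct) strict monotonicity of $\Phi$ in each variable then gives uniqueness. So $\{\Phi=0\}$ within the ranges \eqref{kktconst1}--\eqref{kktconst2} is the graph of a strictly increasing bijection between these two intervals, hence continuous. The points of Lemma \ref{lem-b-from-a-and-c-breakpoints} lie in $\mathcal{C}(\ve{p},\ve{q})$, hence on this graph. This yields the third claim without analysing the algorithm's termination.
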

  Proof in Appendix, Section \ref{sec-proof-lem-cont-ab}. Whenever $f$ is strictly convex, the optimal mentored distribution is unique and thus Lemmata \ref{lem-b-from-a-and-c-breakpoints} and \ref{lem-cont-ab} guarantee the exhaustiveness of our algorithms. If $f$ is not strictly convex, such as for the total variation divergence (see Figure \ref{fig:breakpoints-and-f-balls}), our algorithms elicit one of many solutions.

  \paragraph{Finding optimal mentored distributions} There are two ways to use $\mathcal{C}(\ve{p}, \ve{q})$ for optimal mentored distributions. The first tackles solutions of $f$-MD in \eqref{def-pb-f-md-1-general}, in two steps: first, we find the successive indexes $i$ and $i+1$ in $\textsc{c} (\ve{p}, \ve{q})$ whose mentored distributions $\ve{\pi}_i$, $\ve{\pi}_{i+1}$ satisfy $D \in [D_f(\ve{\pi}_i \| \ve{q}), D_f(\ve{\pi}_{i+1} \| \ve{q})]$. Then, if necessary, we query \cabalgo~for a dichotomic search of the optimum sought to desired precision. In all cases, the whole complexity is $O(n \log n)$. This simple algorithm gets to the optimum at arbitrary desired precision. However, there is a \textit{much} cheaper way to get approximate solutions with guarantees, and it relies on the fast that if in addition to storing couples $(a, b)$, \allcabbreakpoints~also keeps track of the corresponding $q_A, q_B$ computed in the algorithm, then any stored quadruple $(a, b, q_A, q_B)$ allows to compute $D$ in \eqref{Dapprox} in $O(1)$ for any desired $u\in [1-b, 1+a]$. Hence if instead of computing actual values one relies on the corresponding upperbound $\hat{D}$ of $D$ that follows (for a conservative approach to approximation), the whole procedure described above drops in complexity from $O(n \log n)$ to $O(\log(n))$ to get to the target $\hat{D}$ and the corresponding parameters $a, b$. This, of course, is subject to the usefulness of \eqref{Dapprox} for such a goal. The remark after Theorem \ref{thmAPPROX1} applies: since $u\in [1-b, 1+a]$ and $f(1)=0$, since curve $\{(\tilde{a}, \tilde{b})\}$ obtained from Lemma \ref{lem-cont-ab} is continuous, strictly increasing and contains $(0,0)$, there is always an interval $[1-b, 1+a]$ for which such an approach is useful. What we also establish below, from a toy simulation standpoint and several $f$-divergences, is that usefulness can extend to a substantial range of acceptance probabilities (Table \ref{tab:sim-1-2}).
  
  The second way to use $\mathcal{C}(\ve{p}, \ve{q})$ consists in tackling the dual problem of $f$-MD, which is also interesting, especially if the drafter is good enough that we can constrain on the acceptance probability instead of the divergence to target. In this problem, subject to a minimal acceptance probability $P\geq \pacc(SD)$, one is required to find a mentored distribution with minimal $f$-divergence to the target. This problem admits much cheaper routines than for $f$-MD, namely $O(\log n)$ solution to compute the optimal couple $(a,b)$ (and thus $O(n)$ to compute each of the optimal $\ve{\pi}, \ve{r}, \ve{s}$). It consists in sandwiching the sought $P$ between those of two successive indexes $i$ and $i+1$ in $\textsc{c} (\ve{p}, \ve{q})$ -- say $P_i$ and $P_{i+1}$ --, and then doing a simple intrapolation on the respective parameters $(a_i, b_i)$ and $(a_{i+1}, b_{i+1})$ based on solving for $\beta$ the convex combination $P = \beta \cdot P_i + (1-\beta) \cdot P_{i+1}$. The fact that we can bypass any $f$-divergence computation because the solution is invariant to the choice of $f$ is a direct application of Theorem \ref{thmUniversal}. Notice that achieving $O(\log n)$ is without algorithmic frills, but allowing a few (lookup tables, hashtables, etc.) allows to bring it down to $O(1)$.
  
  \paragraph{Guaranteed cheap solutions with better $\pacc$ and small divergence} We state a fundamental property on the function giving the value of the divergence thershold $D$ as a function of the optimal acceptance probability $\pacc(MD)$ in $\textsc{md}^2_f(\ve{p}, \ve{q}; D)$ \eqref{def-pb-f-md-2-general}:
\begin{eqnarray}
  D_f(P) & \defeq & D : \exists (\ve{r}, \ve{s}) \in \textsc{md}^2_f(\ve{p}, \ve{q}; D) \mbox{ s.t. } \ve{p}^\top \ve{r} = P \label{eq-def-dfp}
\end{eqnarray}
(parameters $\ve{p}, \ve{q}$ are left implicit from context). For any function $g$ for which it exists, $g'_r$ denotes the right derivative.
\begin{theorem}\label{thm-opt-f}
  For any convex $f$ \eqref{eq-const-f}, the right derivative of $D_f(P)$ in $\pacc(SD)$ exists and satisfies
  \begin{eqnarray}
    (D_f)'_r(\pacc(SD)) & = & \max \partial f(1) - \min \partial f(1). \label{eq-der-right}
  \end{eqnarray}
  Furthermore, $D_f(P)$ is convex, strictly so iff $f$ is strictly convex.
  \end{theorem}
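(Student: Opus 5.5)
The plan is to treat $D_f(P)$ as the value function of the ``dual'' program
\begin{align*}
V(P) \defeq \min_{\ve{\pi}\in\Delta_n} D_f(\ve{\pi}\|\ve{q}) \quad \mbox{s.t. } D_{\mathrm{TV}}(\ve{\pi}\|\ve{p}) \leq 1-P .
\end{align*}
First I will check that $V$ coincides with $D_f$ in \eqref{eq-def-dfp} for $P \in [\pacc(SD), 1]$. By Lemma \ref{lem-equiv-sol}, $\ve{p}^\top\ve{r} = 1 - D_{\mathrm{TV}}(\ve{\pi}\|\ve{p})$. The constraint in \eqref{def-pb-f-md-1-general} is active at the optimum (Theorem \ref{th-opt1}: $D_f(\ve{\pi}\|\ve{q}) = D$), and the optimal acceptance is monotone in $D$. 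Together these make the $D \mapsto P$ correspondence of $f$-MD invertible onto $V$. This is a standard exchange of objective and constraint in a convex program, and Slater's condition holds by Lemma \ref{lem-slater}.

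Convexity then follows from the perturbation-function argument. The feasible set is defined by a jointly convex constraint whose right-hand side $1-P$ is affine in $P$, and the objective is convex. Take optimal $\ve{\pi}_1, \ve{\pi}_2$ for $P_1, P_2$ and $\lambda \in [0,1]$. The mixture $\lambda\ve{\pi}_1 + (1-\lambda)\ve{\pi}_2$ is feasible for $\lambda P_1 + (1-\lambda)P_2$ by convexity of $D_{\mathrm{TV}}(\cdot\|\ve{p})$, and convexity of $D_f(\cdot\|\ve{q})$ gives $V(\lambda P_1 + (1-\lambda)P_2) \leq \lambda V(P_1) + (1-\lambda)V(P_2)$.

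For strictness when $f$ is strictly convex, take $P_1 \neq P_2$. Then $\ve{\pi}_1 \neq \ve{\pi}_2$, since their TV distances to $\ve{p}$ differ because the TV constraint is active. Hence the ratio vectors $\ve{\pi}_k \oslash \ve{q}$ differ on some coordinate, and strict convexity of $f$ makes the inequality strict. For the converse, suppose $f$ is affine on some interval $[u,v]$. I will use the explicit, $f$-independent clamp parametrization $\ve{\pi}(a,b) = \max\{(1-b)\ve{q}, \min\{\ve{p}, (1+a)\ve{q}\}\}$ along the curve of Lemma \ref{lem-cont-ab}. By Theorem \ref{thmUniversal} and \eqref{isAB-nonconvex}, this curve contains an optimal point for every $P$. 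Along it, $P$ is piecewise linear in $a$ by \eqref{eq-pacc-sd-md} and \eqref{eq-def-tildeb}, and
\begin{align*}
D = q(\mathbb{A}) f(1+a) + q(\mathbb{B}) f(1-b) + \sum_{i \in \mathbb{I}} q_i f(p_i/q_i).
\end{align*}
Choosing $(a,b)$ so that $1+a$ or $1-b$ sweeps through $[u,v]$ should give an interval of $P$ on which $D_f$ is affine.

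I expect this converse to be the main obstacle. The affine piece of $f$ must actually be visited by the clamp thresholds, that is, it must intersect $[\min_i p_i/q_i, \max_i p_i/q_i]$ on the appropriate side of 1. I also need that both thresholds do not simultaneously sit in strictly convex regions. I would first try a case split on whether $[u,v]$ lies above or below 1, and use Lemma \ref{lem-cont-ab} to move only the relevant threshold. I may need to read ``strictly convex'' as strict convexity on the relevant range of ratios.

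For the right derivative at $\pacc(SD)$, I work near $(a,b) = (0,0)$. Under Assumption \ref{assum-mda}, $p_i \neq q_i$ for all $i$, so for all small $a > 0$ the set $\mathbb{I}$ is empty, and $\mathbb{A} = \{p_i > q_i\}$, $\mathbb{B} = \{p_i < q_i\}$ are fixed. Then \eqref{kktconst3} gives $b = a\,q(\mathbb{A})/q(\mathbb{B})$. Theorem \ref{thmAPPROX1}(I) gives $P = \pacc(SD) + a\,q(\mathbb{A})$, and (II), with $u$ irrelevant since $\mathbb{I} = \emptyset$, gives $D = q(\mathbb{A}) f(1+a) + q(\mathbb{B}) f(1-b)$. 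These points are optimal for every convex $f$ by the preceding argument. Expanding with one-sided derivatives $f(1+a) = a f'_+(1) + o(a)$ and $f(1-b) = -b f'_-(1) + o(b)$ gives
\begin{align*}
D = a\,q(\mathbb{A})\left(f'_+(1) - f'_-(1)\right) + o(a).
\end{align*}
Dividing by $P - \pacc(SD) = a\,q(\mathbb{A})$ and letting $a \searrow 0$ yields $f'_+(1) - f'_-(1) = \max\partial f(1) - \min\partial f(1)$, which is \eqref{eq-der-right}. As a sanity check, $f_{\mathrm{TV}}$ gives slope $1$, matching $D = P - \pacc(SD)$ from the triangle-equality argument of Theorem \ref{thm-TV-MD}.
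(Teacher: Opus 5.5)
Your right-derivative computation matches the paper's. The paper perturbs $z_\alpha = 1+\delta_\alpha$ and $z_\beta = 1-\delta_\beta$ with $\delta_\alpha q(\mathbb{I}_\alpha) = \delta_\beta q(\mathbb{I}_\beta)$ and reads off $f'_r(1) - f'_l(1)$; this is exactly your expansion with $b = a\,q(\mathbb{A})/q(\mathbb{B})$ via Theorem \ref{thmAPPROX1}. Your convexity argument is also the paper's: mix two optimal solutions and use convexity of $D_f(\cdot\|\ve{q})$. You carry it out in $\ve{\pi}$-space rather than by building $\tilde{\ve{r}}, \tilde{\ve{s}}$. Your observation that $\ve{\pi}_1 \neq \ve{\pi}_2$, because the TV constraint is active, fills a step the paper leaves implicit. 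So the derivative formula and the ``if'' half of strictness are fine.

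The gap is the converse, and your plan for it fails. First, you cannot move one threshold alone. Between ticks, \eqref{kktconst3} forces $q(\mathbb{B})\,\mathrm{d}b = q(\mathbb{A})\,\mathrm{d}a$, which is why the curve of Lemma \ref{lem-cont-ab} is strictly increasing in both coordinates. Combining this with $\mathrm{d}P = q(\mathbb{A})\,\mathrm{d}a$ and your own expression for $D$ gives, along the optimal curve,
\begin{align*}
(D_f)'_r(P) = f'_+(1+a) - f'_-(1-b).
\end{align*}
This is a sum of two nondecreasing functions of $P$. It is constant on an interval only if $f$ is affine \emph{simultaneously} on the ranges swept by $1+a$ and by $1-b$. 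An affine piece on one side alone still leaves $D_f$ strictly convex.

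Consequently, the ``only if'' direction is false as stated, even under your proposed reading of strict convexity on the range of ratios. Take $\ve{p} = (0.7, 0.3)$, $\ve{q} = (0.4, 0.6)$, and $f(z) = (z-1)^2$ for $z \leq 1$, $f(z) = 0$ for $z \geq 1$. This $f$ is convex with $f(1) = 0$, and it is flat on $[1, 1.75]$, inside the ratio range. Then $\pacc(SD) = 0.7$ and $D_f(P) = (P-0.7)^2/0.6$, which is strictly convex.

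The paper's proof only establishes the ``if'' direction. You should drop the converse, or replace it with the correct criterion: there is no interval of $P$ on which both $1+a(P)$ and $1-b(P)$ lie in affine pieces of $f$.
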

Proof in Appendix, Section \ref{sec-proof-thm-opt-f}. Suppose $f$ differentiable in $z=1$. Then \eqref{eq-der-right} crucially gives 
\begin{eqnarray}
    (D_f)'_r(\pacc(SD)) & = & 0, \label{eq-right-der-zero}
\end{eqnarray}
and so for \textit{any} such divergence, the neighborhood of the minimal acceptance probability will have divergence close to zero: depending on the $f$-divergence, it may be possible to get a substantial increase of the acceptance probability at a low cost divergence-wise.

\begin{table*}
  \centering
  \begin{tabular}{c|c|c}
    Name & Generator $f(z)$ & Comments \\ \Xhline{2pt}
    Kullback-Leibler (KL) & $z \log z$ & \\ \hline
    reverse Kullback-Leibler (rKL) & $-\log z$ \\ \hline
    Hellinger & $1 - \sqrt{z}$ & \\ \hline
    Neyman & $(z-1)^2$ & reverse $\chi^2$\\ \hline
    Pearson & $(z-1)^2/z$ & $\chi^2$\\ \hline
    $TV_2$ & $2 \cdot \max\left\{ f_{\mathrm{TV}}(z), 2 \cdot|z-1| -1\right\}$ & \\ \hline
    Amari($\alpha$) & $(z^\alpha - \alpha z + \alpha - 1)/(\alpha(\alpha-1))$ & $\alpha \in \mathbb{R}$ \\ \Xhline{2pt}
    \end{tabular}
    \caption{$f$-divergences used in our toy simulations; we recall $f_{\mathrm{TV}} (z) \defeq |z-1|/2$ (see text).}
    \label{tab:generators-f} 
\end{table*}

\paragraph{Toy simulation} We made a simple simulation of uniform $\ve{p}, \ve{q} \in \Delta_{n}$ for $n=100$. Table \ref{tab:sim-1} presents results obtained on these distributions. The left plot exemplifies Lemma \ref{lem-cont-ab} showing the strict monotonicity and continuity of the set of points returned by \cabalgo~in Algorithm \ref{algo-cabalgo}. The right plot displays the corresponding $f$-divergence as a function of the acceptance probability of mentored decoding, $\pacc(MD)$. From top to bottom in the legend, the generators of the $f$-divergences are: Kullback-Leibler, reverse Kullback-Leibler, Hellinger, Neyman, Pearson, a scaling of the total variation, $TV_2$, which replaces parts of the TV divergence generator by steeper segments and half lines, and finally two instances of Amari $\alpha$-divergences for $\alpha\in \{-1.5, 1.5\}$ \citep{anMO} (Table \ref{tab:generators-f} presents the associated generators). This plots clearly exemplifies the importance of Theorem \ref{thm-opt-f}, as for \textit{all} generators differentiable in $z=1$, $\pacc(SD)$ can be increased by more than $10\%$ at negligible divergence cost. For some divergences, such as the $\chi^2$, the divergence blows up at some point. This, of course, ultimately depends on $\ve{p}, \ve{q}$ and $f$. Table \ref{tab:sim-1-2} takes all the $f$-divergence curves and add the interval of possible $D$ values of \eqref{Dapprox} in Theorem \ref{thmAPPROX1}, in between the min and max values of $D$ as $u$ ranges in $[1-a, 1+b]$, for a range of acceptance probability for MD that ranges in between $\pacc(SD)$ and $\pacc(SD)$ plus 25$\%$. Remark that the bound is quite crude for some divergences (KL, rKL, Hellinger) but can be quite informative on the true $D$ for the others (Neyman, Pearson, $TV_2$, Amari's $\alpha$-divergence) even for a substantial increase of the acceptance probability past SD's. This simple experiment demonstrates the potential usefulness of Theorem \ref{thmAPPROX1} for approximate solutions to \eqref{def-pb-f-md-1-general} as described above.

Table \ref{tab:sim-2} further digs into the guarantees of Theorem \ref{thmAPPROX1}, showing on this example how the $f$-divergence bound $D$ and optimal acceptance probability of mentored decoding in $\textsc{md}^2_f(\ve{p}, \ve{q}; D)$ \eqref{def-pb-f-md-2-general} vary as a function of parameters $(\tilde{a}, \tilde{b})$ in $\mathcal{C}(\ve{p}, \ve{q})$ extracted by \cabalgo~in Algorithm \ref{algo-cabalgo}. Finally, Table \ref{tab:sim-3} computes the key coefficient $k$ \eqref{eq-def-ki} that governs our boosting bound in Theorem \ref{thm-boost-plus-md-general-f}. It shows, in this simulated case, that one can easily increase the acceptance probability by more than $10\%$ and still keep $k$ very close to 1, which is good news for the boosting bound \eqref{eq-bound-boosted-md}. Interestingly also, the dependence of $k$ in $a$ is close to being linear.

\begin{table*}
  \centering
  \begin{tabular}{cc}\Xhline{2pt}
    \includegraphics[trim=0bp 0bp 0bp 0bp,clip,width=0.45\columnwidth]{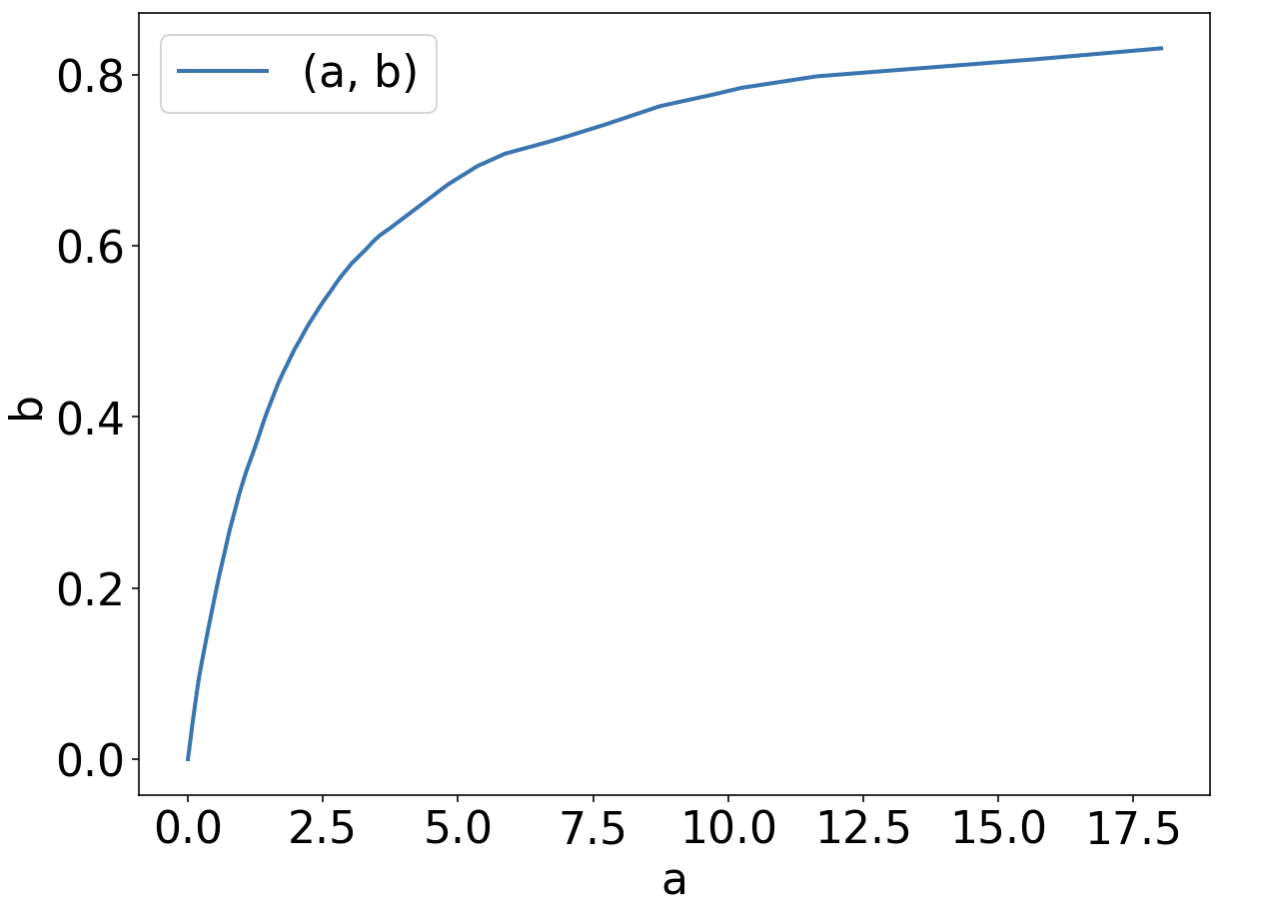}  &  \includegraphics[trim=0bp 0bp 0bp 0bp,clip,width=0.45\columnwidth]{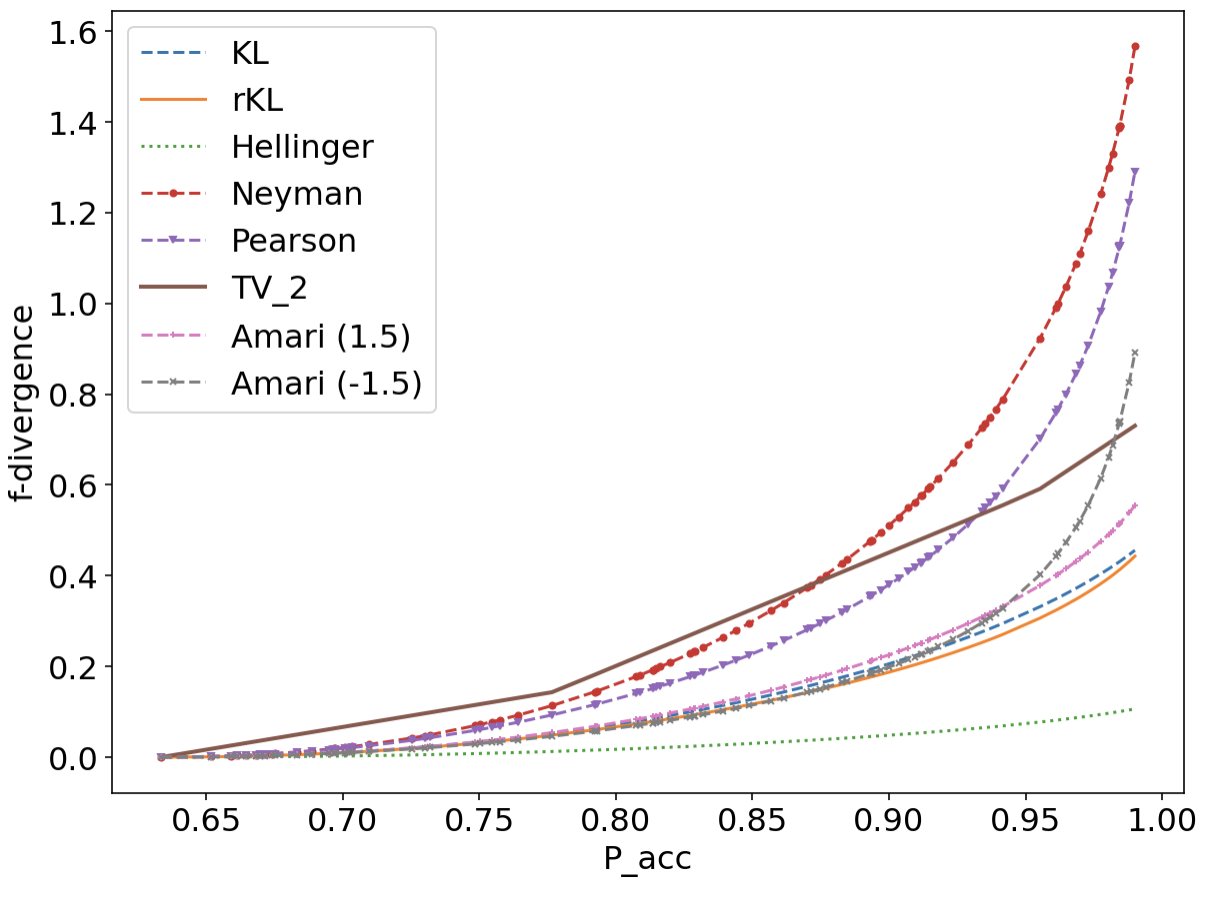} \\ \Xhline{2pt}
    \end{tabular}
    \caption{\textit{Left:} curve of points $(\tilde{a}, \tilde{b})$ in $\mathcal{C}(\ve{p}, \ve{q})$ from Lemma \ref{lem-cont-ab}, showing it is strictly increasing and continuous. \textit{Right:} curves showing the corresponding $f$-divergence bound $D$ in \eqref{def-pb-f-md-2-general} \eqref{def-pb-f-md-1-general} as a function of the optimal acceptance probability $\pacc(MD)$, for several $f$-divergences. Remark that for all except $TV_2$, the right derivative at $\pacc(SD)$ is indeed zero and the divergence stays close to 0 even past $>10\%$ increase acceptance probability $\pacc(MD)$ (see text).}
    \label{tab:sim-1}
  \end{table*}

 \setlength{\tabcolsep}{0pt}
\begin{table*}
  \centering
  \begin{tabular}{cccc}\Xhline{2pt}
    \includegraphics[trim=0bp 0bp 0bp 0bp,clip,width=0.25\columnwidth]{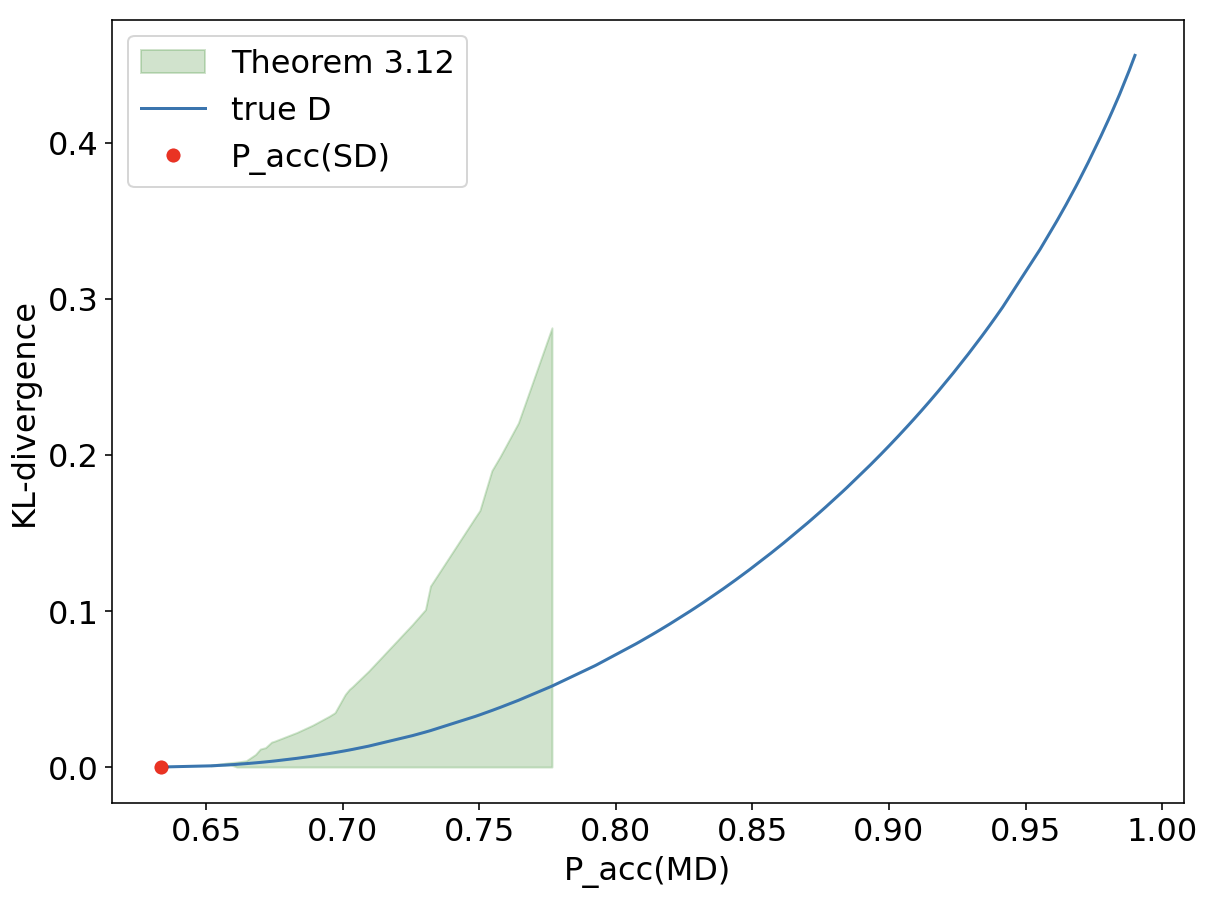}  & \includegraphics[trim=0bp 0bp 0bp 0bp,clip,width=0.25\columnwidth]{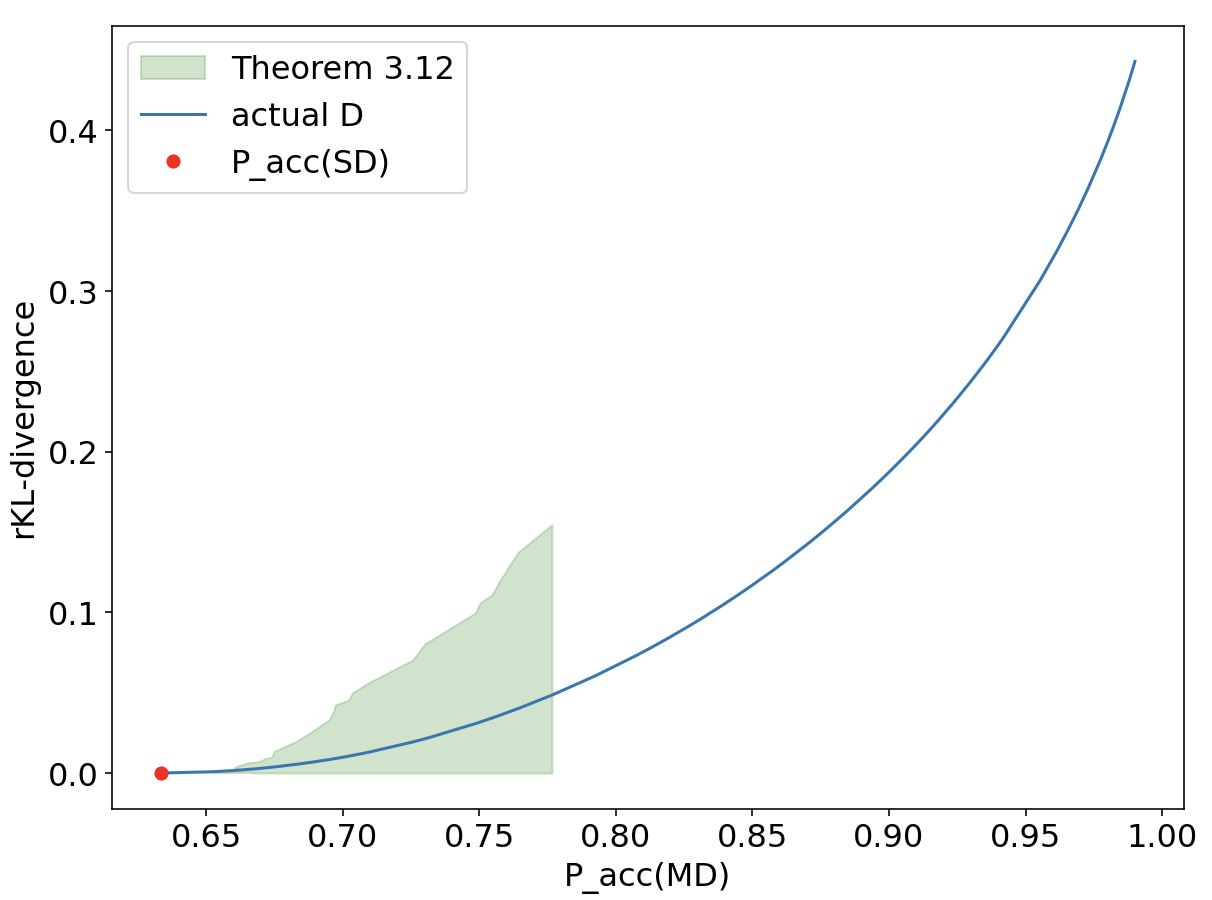}  & \includegraphics[trim=0bp 0bp 0bp 0bp,clip,width=0.25\columnwidth]{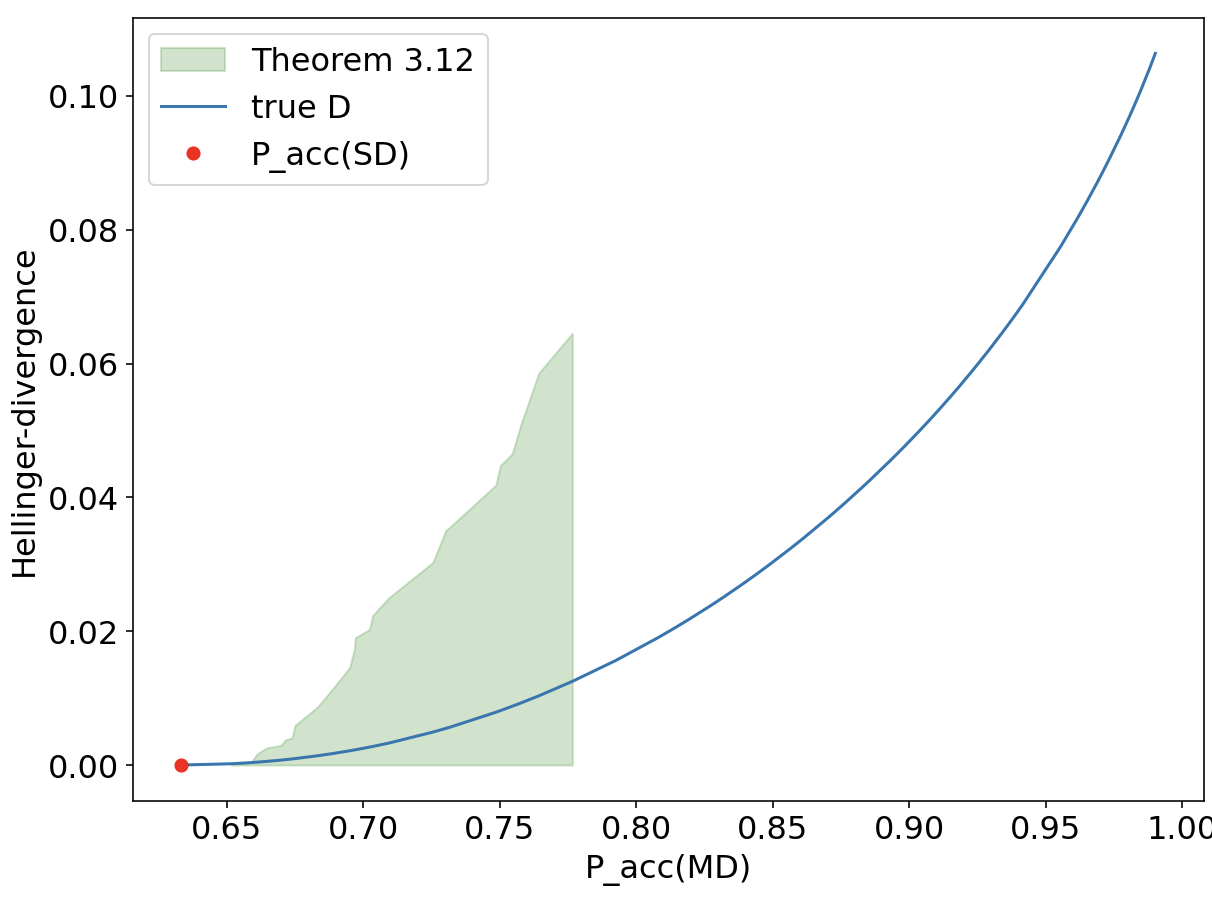}  & \includegraphics[trim=0bp 0bp 0bp 0bp,clip,width=0.25\columnwidth]{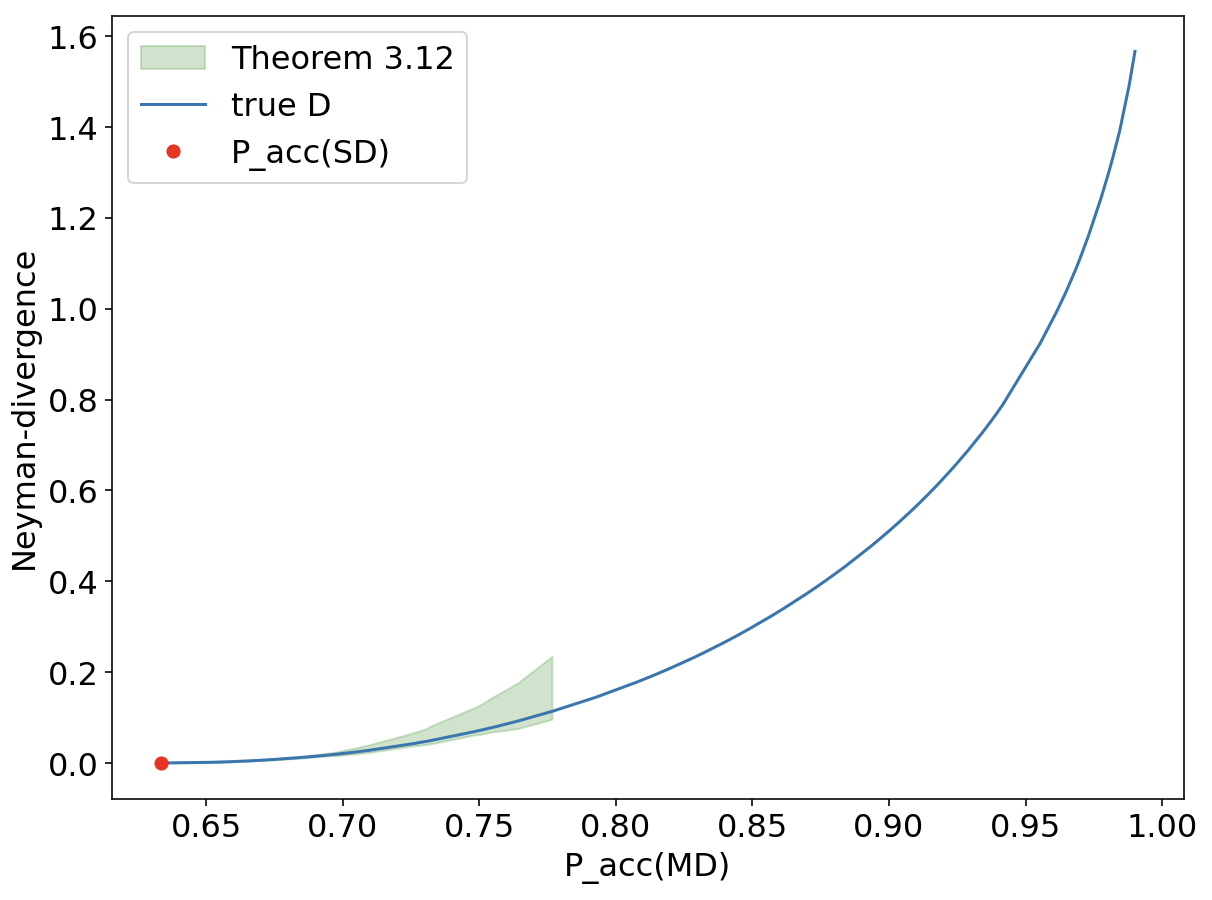}  \\ 
    Kullback-Leibler (KL) & reverse KL & Hellinger & Neyman \\ \Xhline{2pt}
    \includegraphics[trim=0bp 0bp 0bp 0bp,clip,width=0.25\columnwidth]{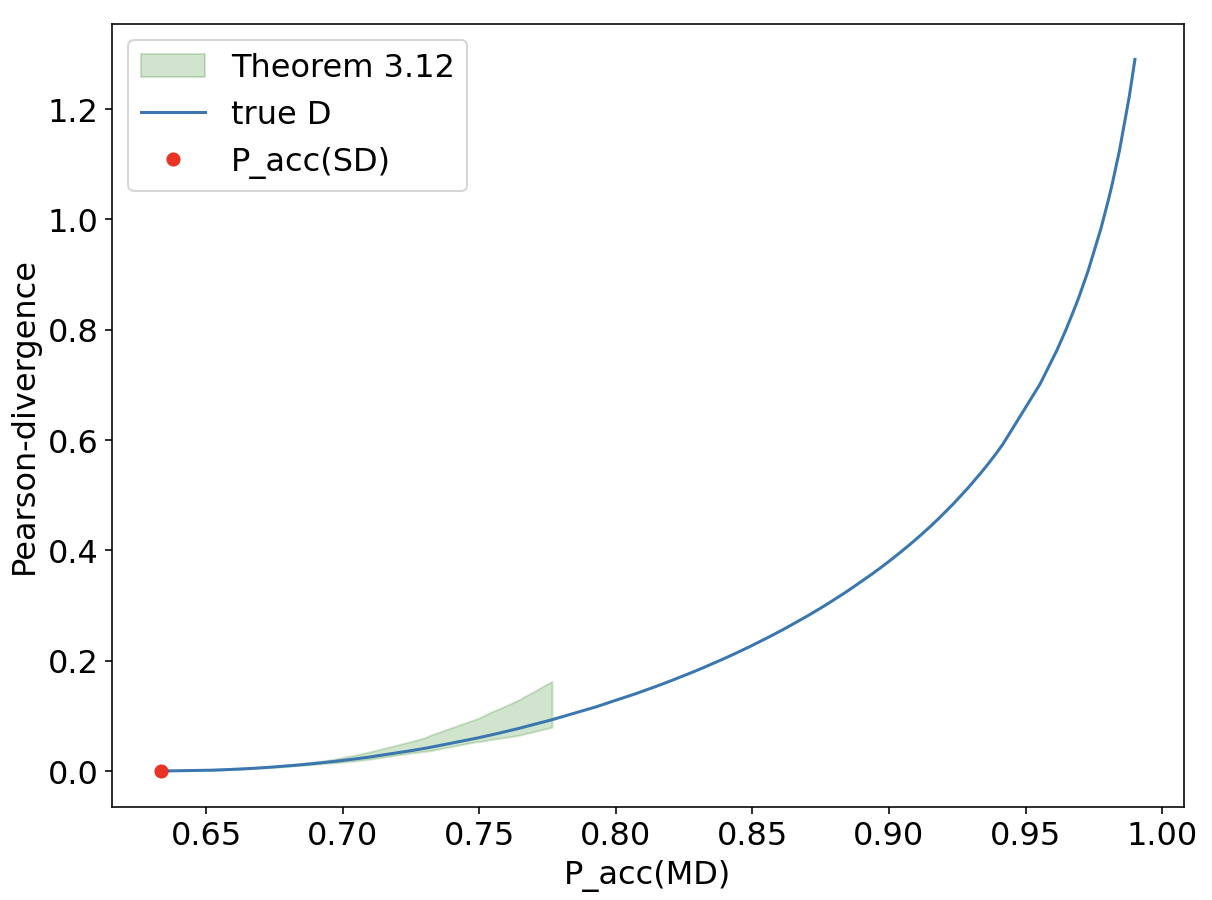}  & \includegraphics[trim=0bp 0bp 0bp 0bp,clip,width=0.25\columnwidth]{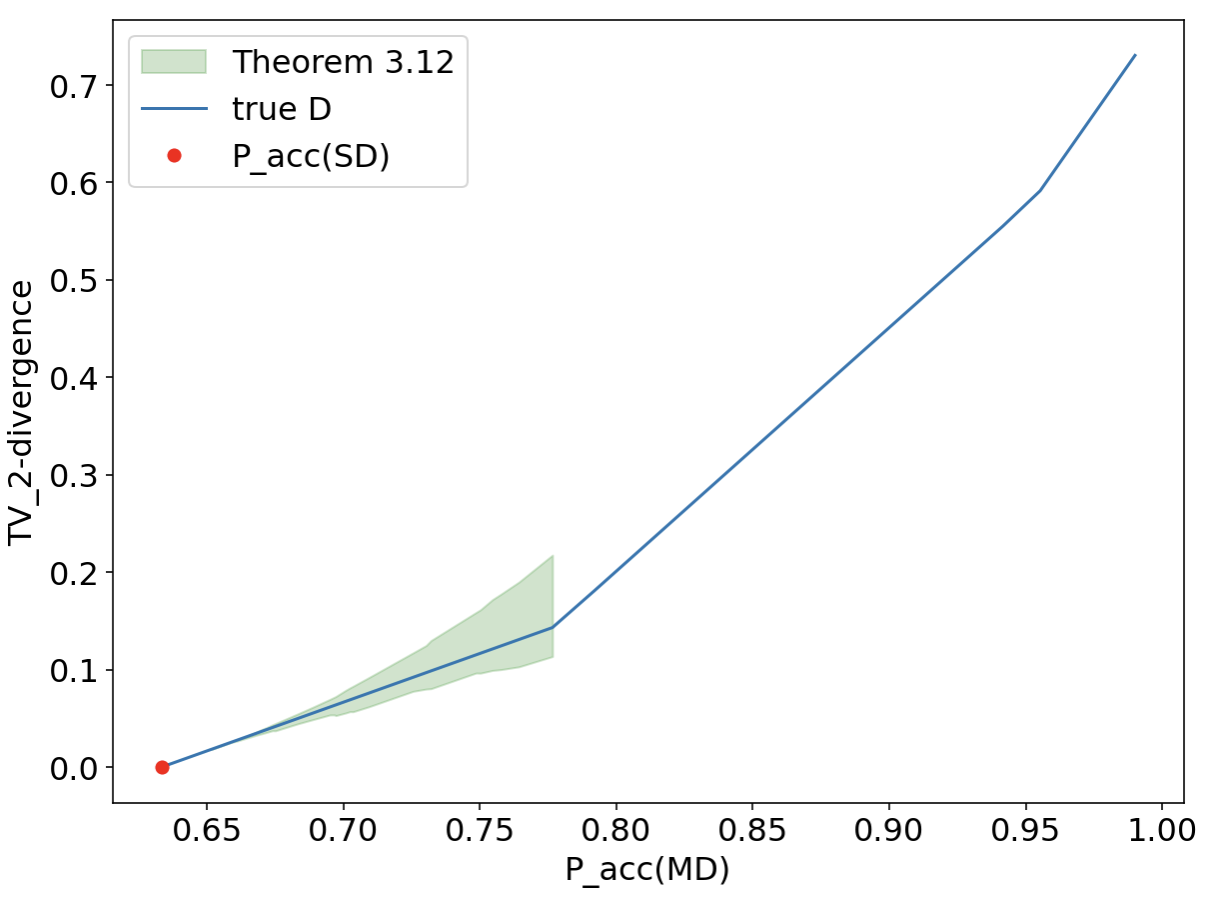}  & \includegraphics[trim=0bp 0bp 0bp 0bp,clip,width=0.25\columnwidth]{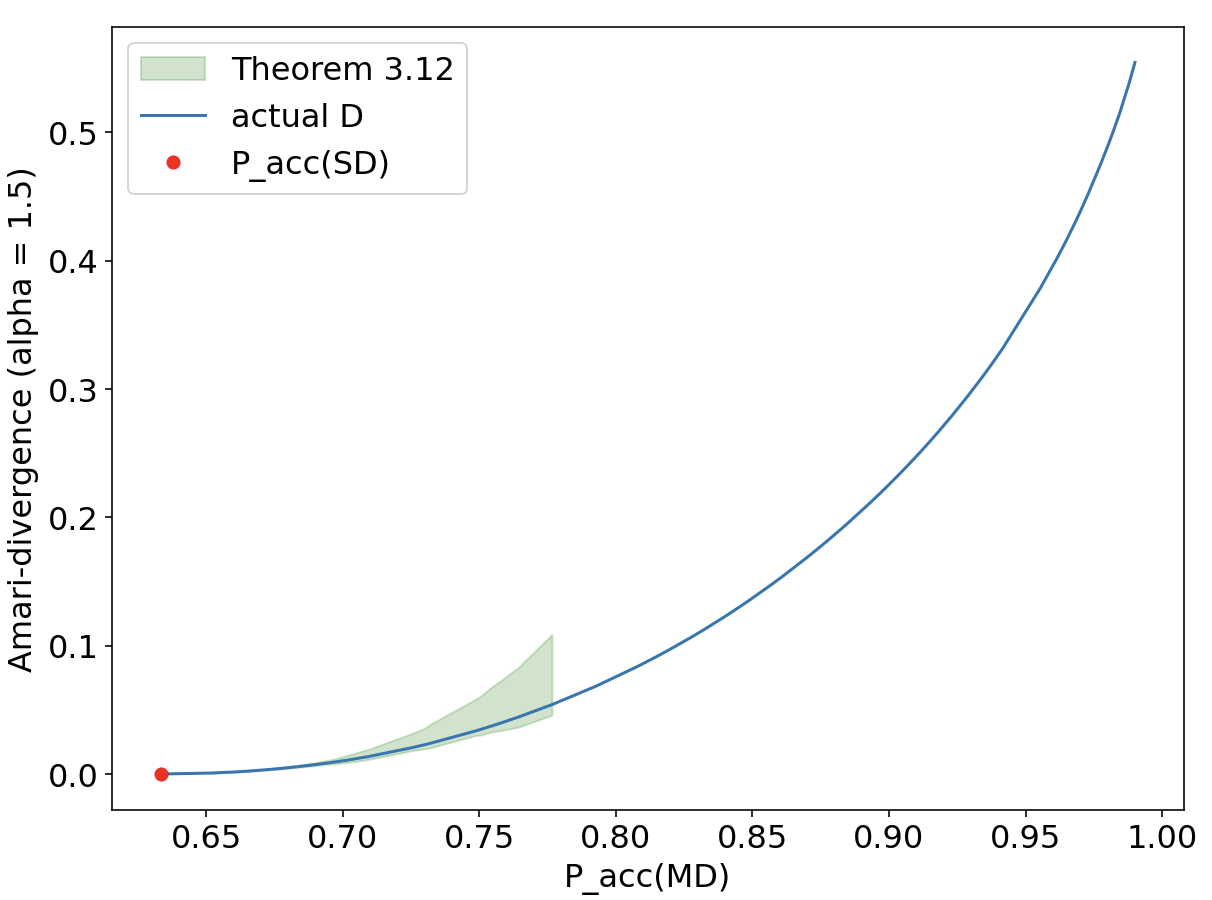}  & \includegraphics[trim=0bp 0bp 0bp 0bp,clip,width=0.25\columnwidth]{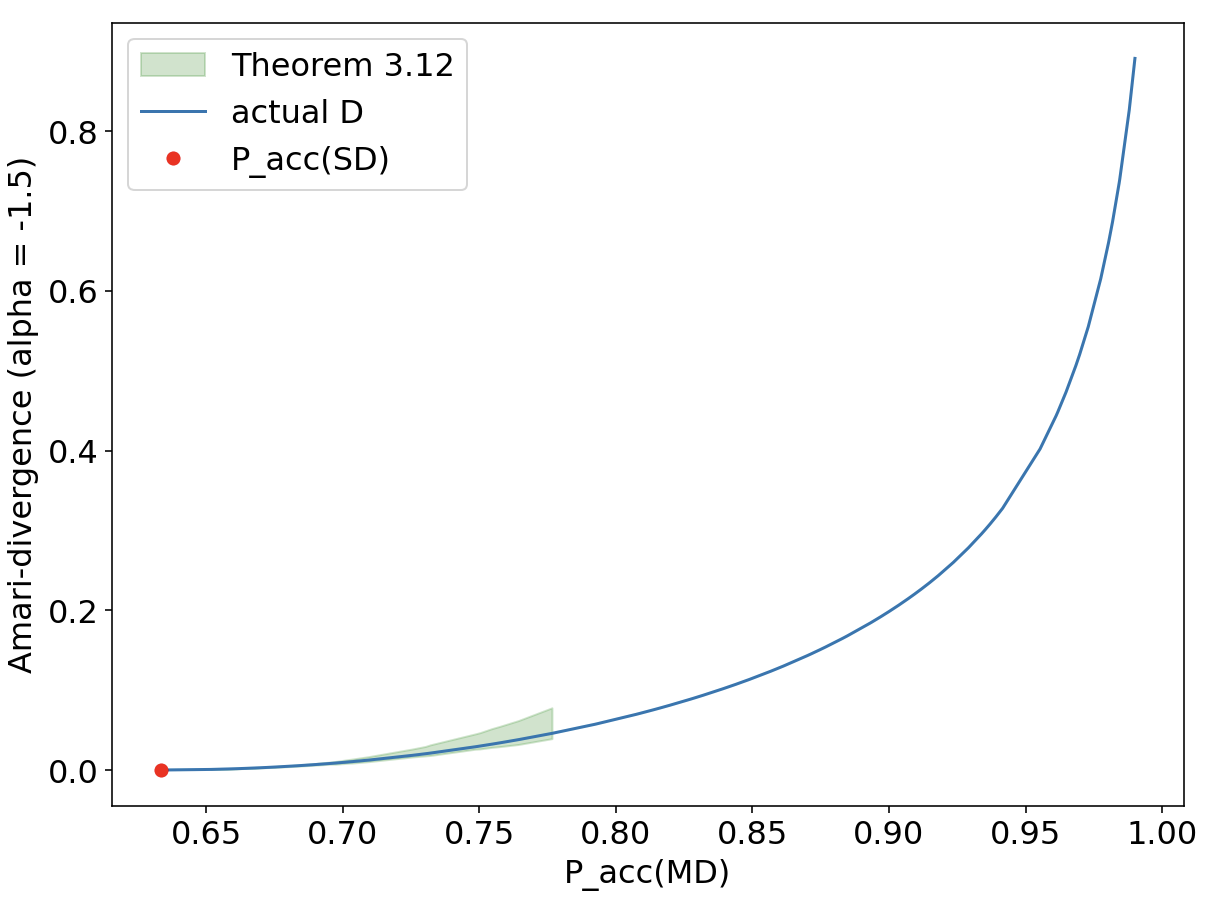}  \\ 
    Pearson & $TV_2$ & Amari ($1.5$) & Amari ($-1.5$) \\ \Xhline{2pt}
    \end{tabular}
    \caption{For each of the $f$-divergence plots in Table \ref{tab:sim-1} (Right), we plot the divergence as a function of $\pacc(MD)$, and in filled {\color{darkgreen} green} the interval of min and max values corresponding to \eqref{Dapprox} in Theorem \ref{thmAPPROX1}, over a range of $\pacc(MD)$ which covers $\pacc(SD)$ ({\color{red} red} dot) to $\pacc(SD)$ increased by 25$\%$ (see text).}
    \label{tab:sim-1-2} 
\end{table*}

 \setlength{\tabcolsep}{5pt}
\begin{table*}
  \centering
  \begin{tabular}{cc}\Xhline{2pt}
    \includegraphics[trim=0bp 0bp 0bp 0bp,clip,width=0.4\columnwidth]{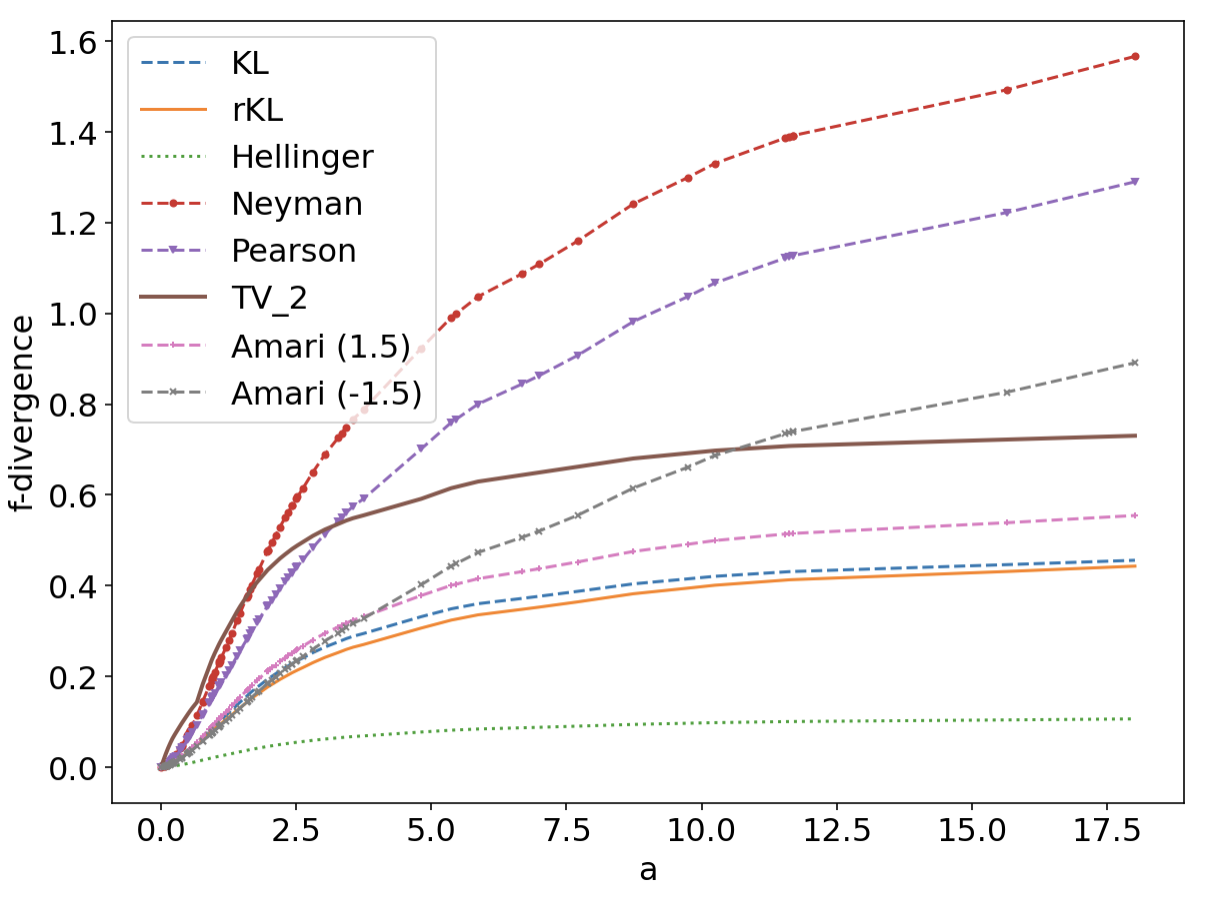}  &  \includegraphics[trim=0bp 0bp 0bp 0bp,clip,width=0.4\columnwidth]{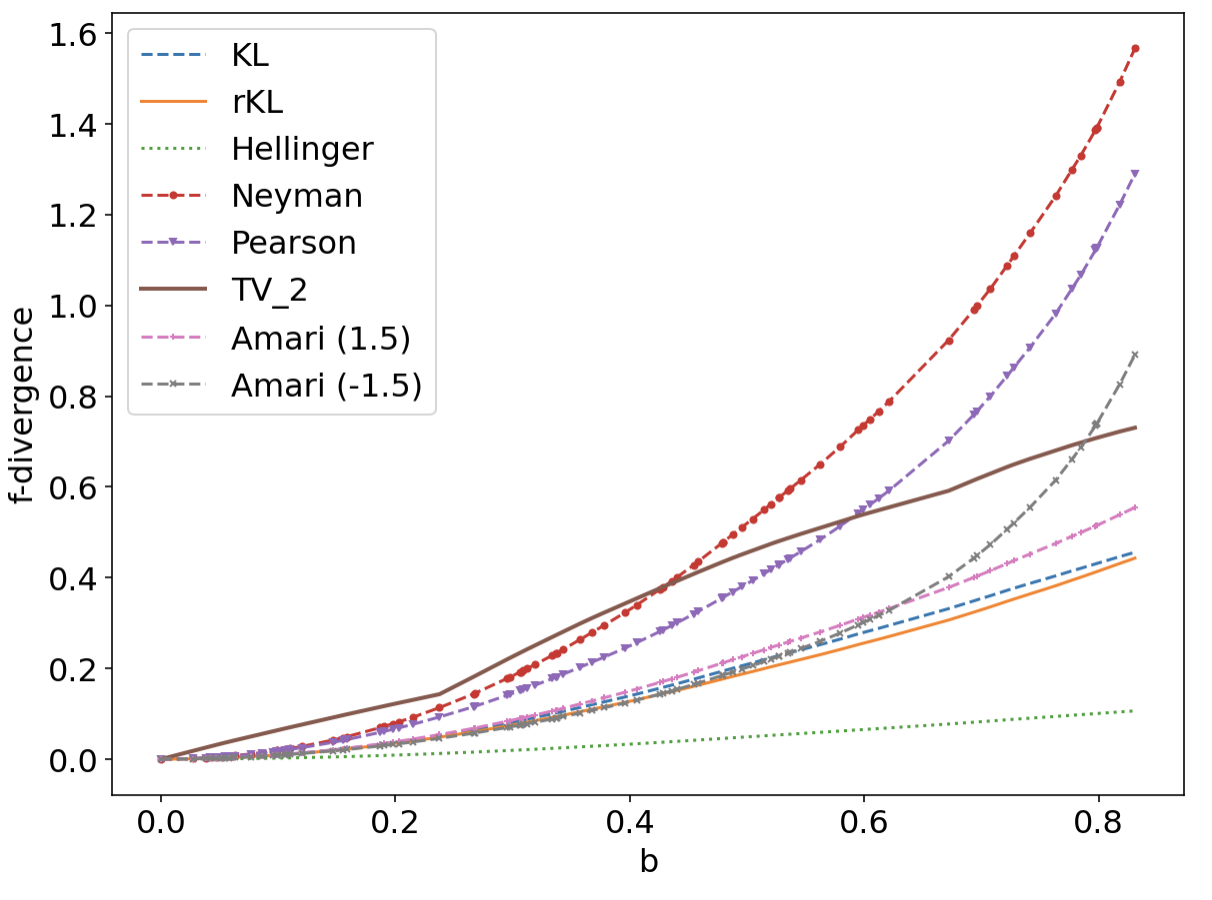} \\ \Xhline{2pt}
    \includegraphics[trim=0bp 0bp 0bp 0bp,clip,width=0.4\columnwidth]{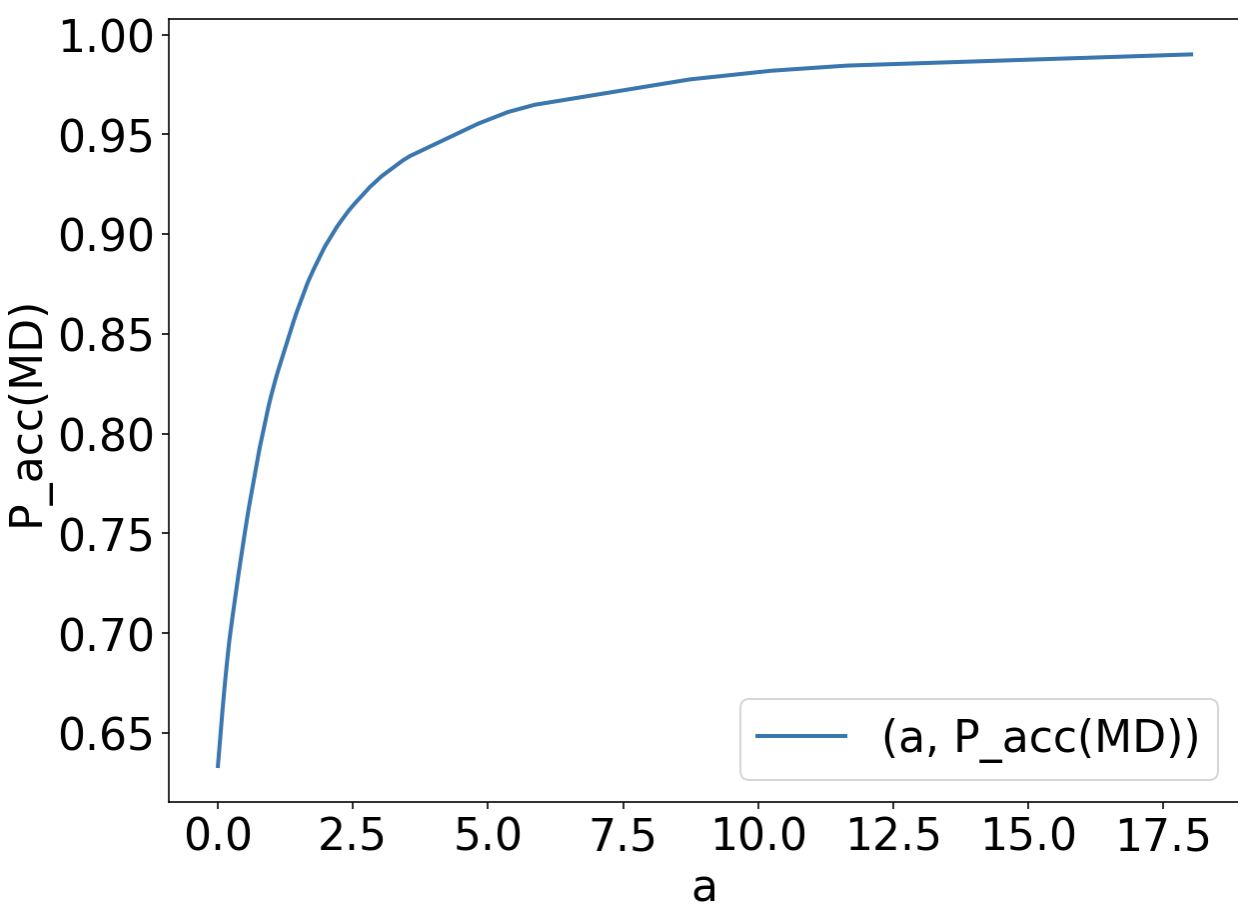}  &  \includegraphics[trim=0bp 0bp 0bp 0bp,clip,width=0.4\columnwidth]{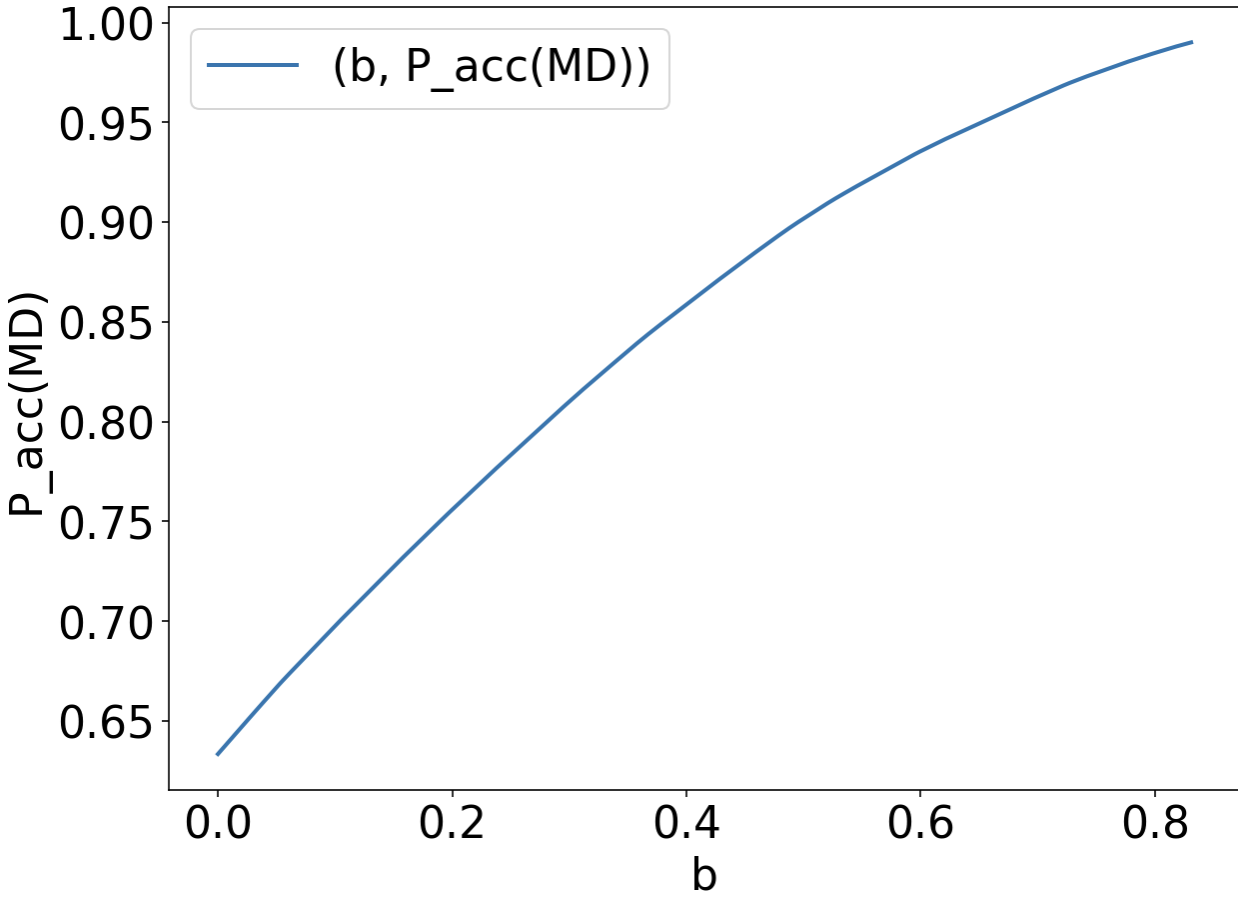} \\ \Xhline{2pt}
    \end{tabular}
    \caption{\textit{Top:} plots of the $f$-divergence bound $D$ in $\textsc{md}_f(\ve{p}, \ve{q}; D)$ as a function of $a$ (\textit{Left}) and $b$ (\textit{Right}), exemplifying Theorem \ref{thmAPPROX1}. \textit{Bottom:} plots of the optimal acceptance probability $\pacc(MD)$ as a function of $a$ (\textit{Left}) and $b$ (\textit{Right}), exemplifying Theorem \ref{thmAPPROX1} (see text).}
    \label{tab:sim-2}
\end{table*}

\begin{table*}
  \centering
  \begin{tabular}{cc}\Xhline{2pt}
    \includegraphics[trim=0bp 0bp 0bp 0bp,clip,width=0.45\columnwidth]{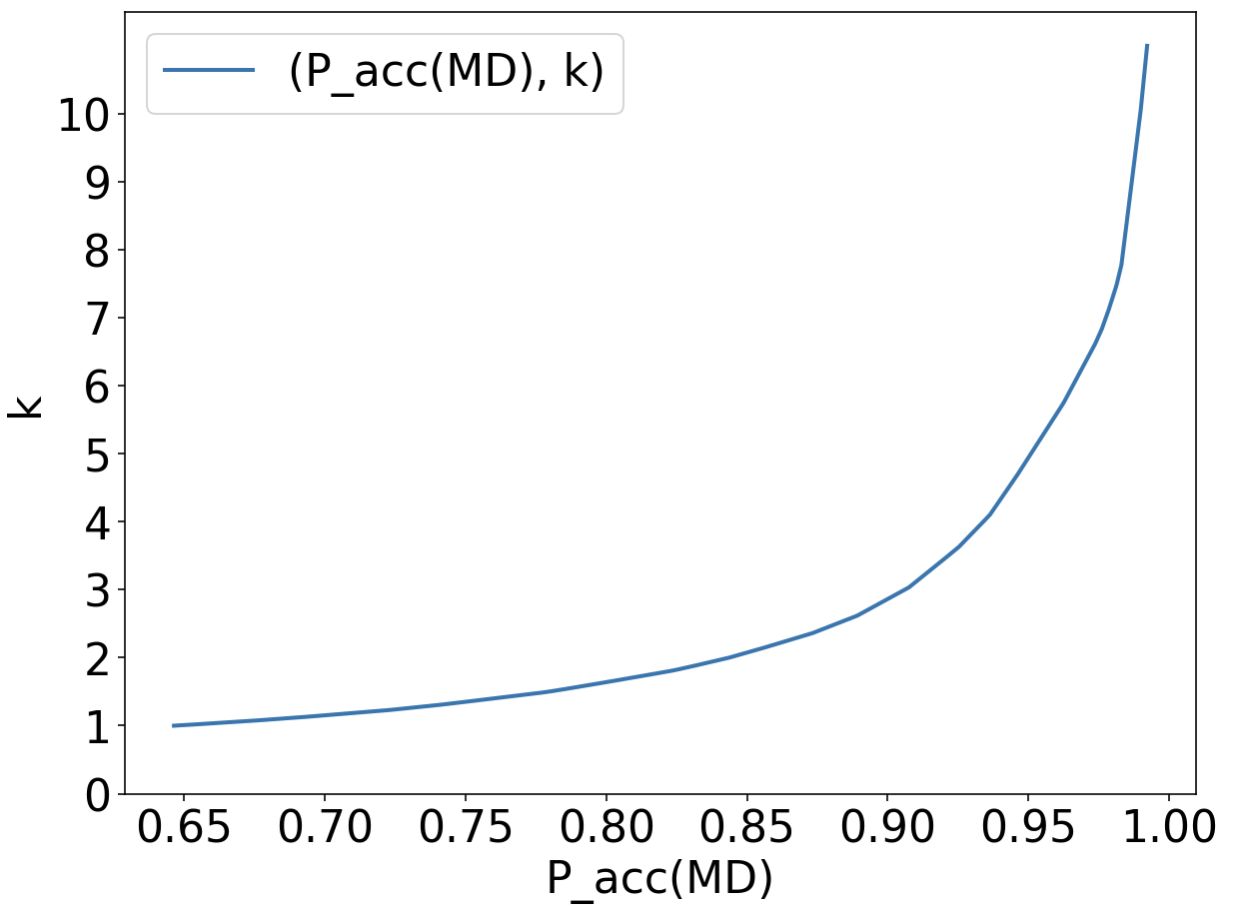}  &  \includegraphics[trim=0bp 0bp 0bp 0bp,clip,width=0.45\columnwidth]{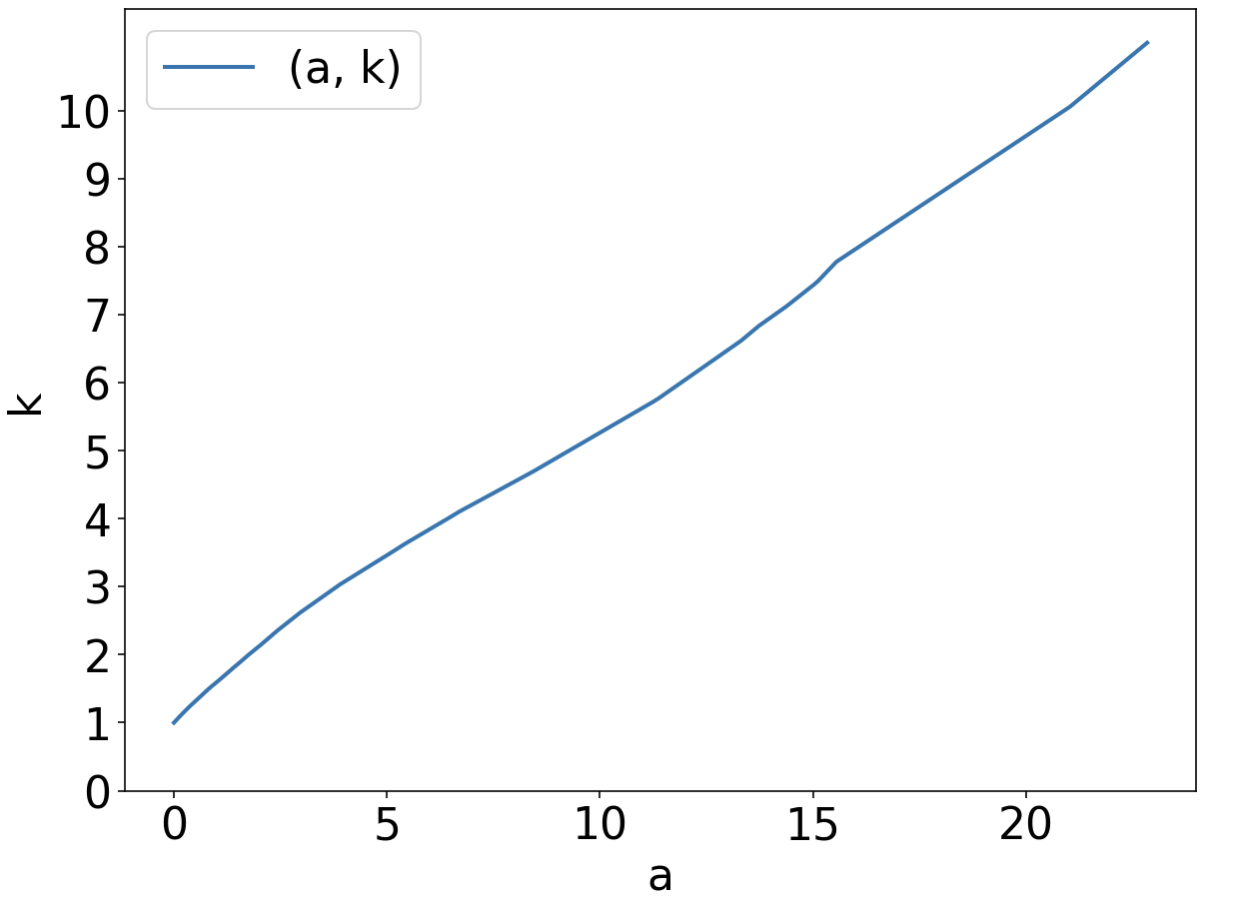} \\ \Xhline{2pt}
    \end{tabular}
    \caption{\textit{Left:} curve giving the boosting bound coefficient $k \defeq \sqrt{(1+a)/(1-b)}$ (Theorem \ref{thm-boost-plus-md-general-f}) as a function of the acceptance probability $\pacc(MD)$; remark that $k$ remains close to $1$ even for substantial increase of the acceptance probability compared to the speculative decoding solution. \textit{Right:} curve giving coefficient $k$ as a function of $a$ (see text).}
    \label{tab:sim-3}
\end{table*}

We have also performed a second simulation in which the objective was to help visualize the mentored distribution. In Table \ref{tab:sim-4}, we have computed $\ve{p}, \ve{q} \in \Delta_{n}$ for $n=1000$ both following discretized Beta distributions, and ordered in the $x$ axis the indexes in increasing $q_. / p_.$ ratios. Then, we have computed, from top-left to bottom-right, the mentored distribution (thick purple curve) by putting in evidence function $i \mapsto \min\{\pi_i, p_i\}$ whose area (purple) is the acceptance probability and the $f$-divergence is Kullback-Leibler, for values of $b = \{0, 0.1, ..., 1.0\}$ in Definition \ref{def-cab}. We can clearly identify and track the three sets of indices defining sets \eqref{defAbis}, \eqref{defIbis}, \eqref{defBbis}.

\begin{table*}
  \centering
  \begin{tabular}{cccc}\Xhline{2pt}
    \includegraphics[trim=0bp 20bp 0bp 0bp,clip,width=0.22\columnwidth]{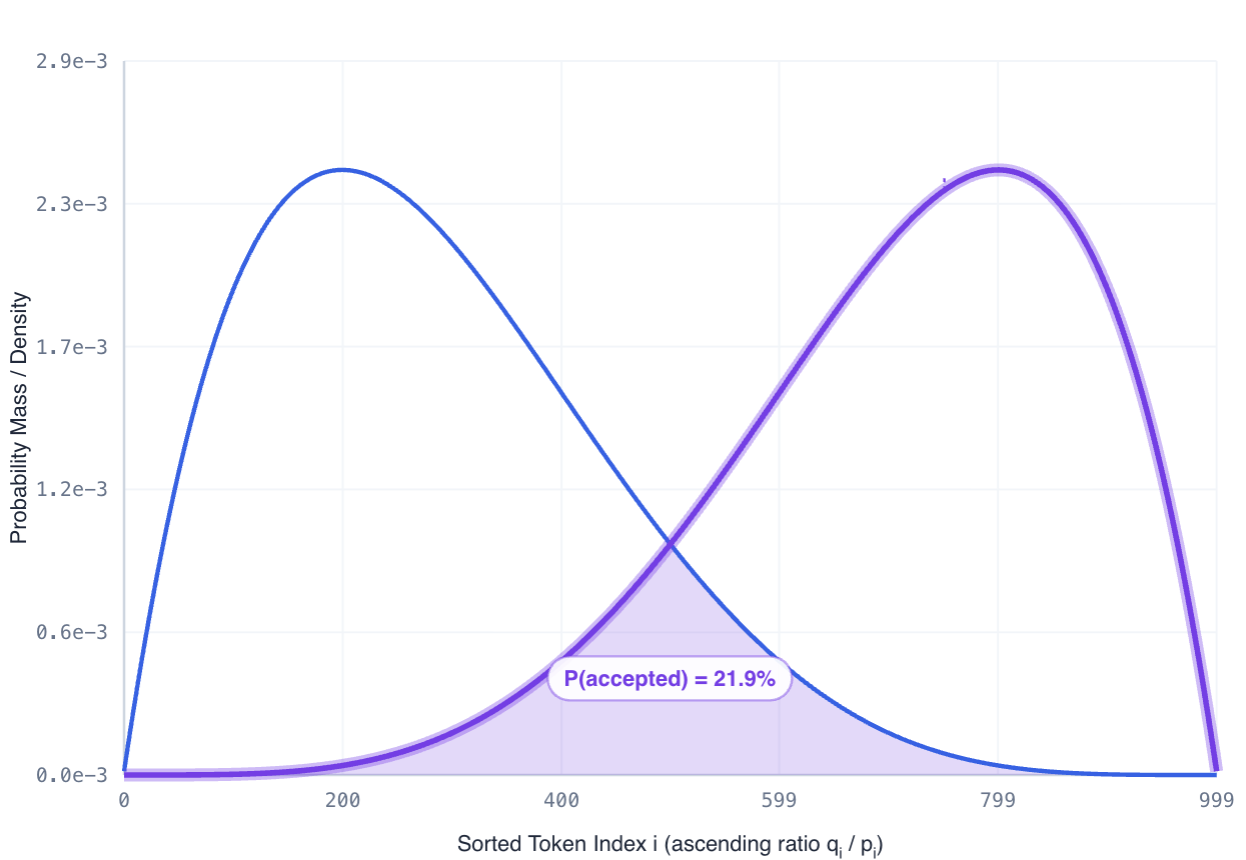}  &  \includegraphics[trim=0bp 20bp 0bp 0bp,clip,width=0.22\columnwidth]{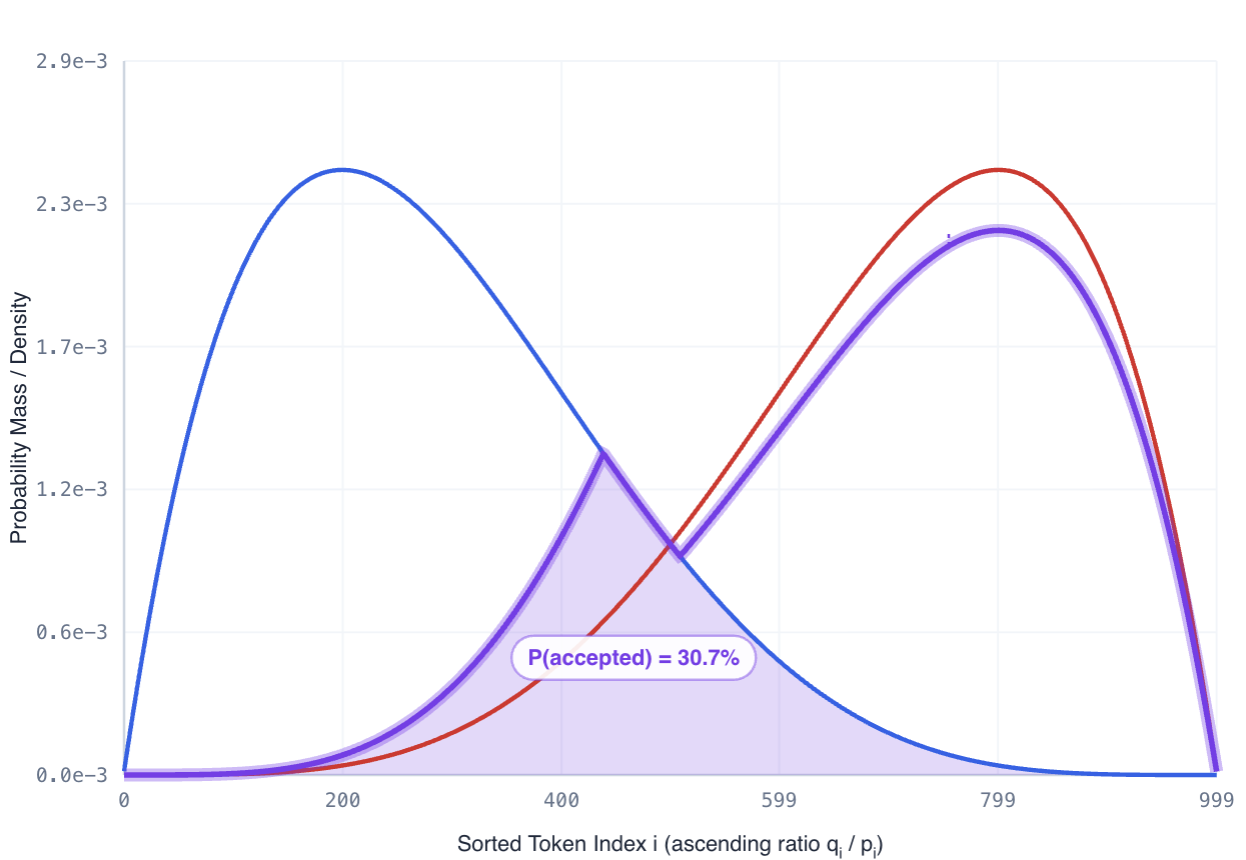}  &  \includegraphics[trim=0bp 20bp 0bp 0bp,clip,width=0.22\columnwidth]{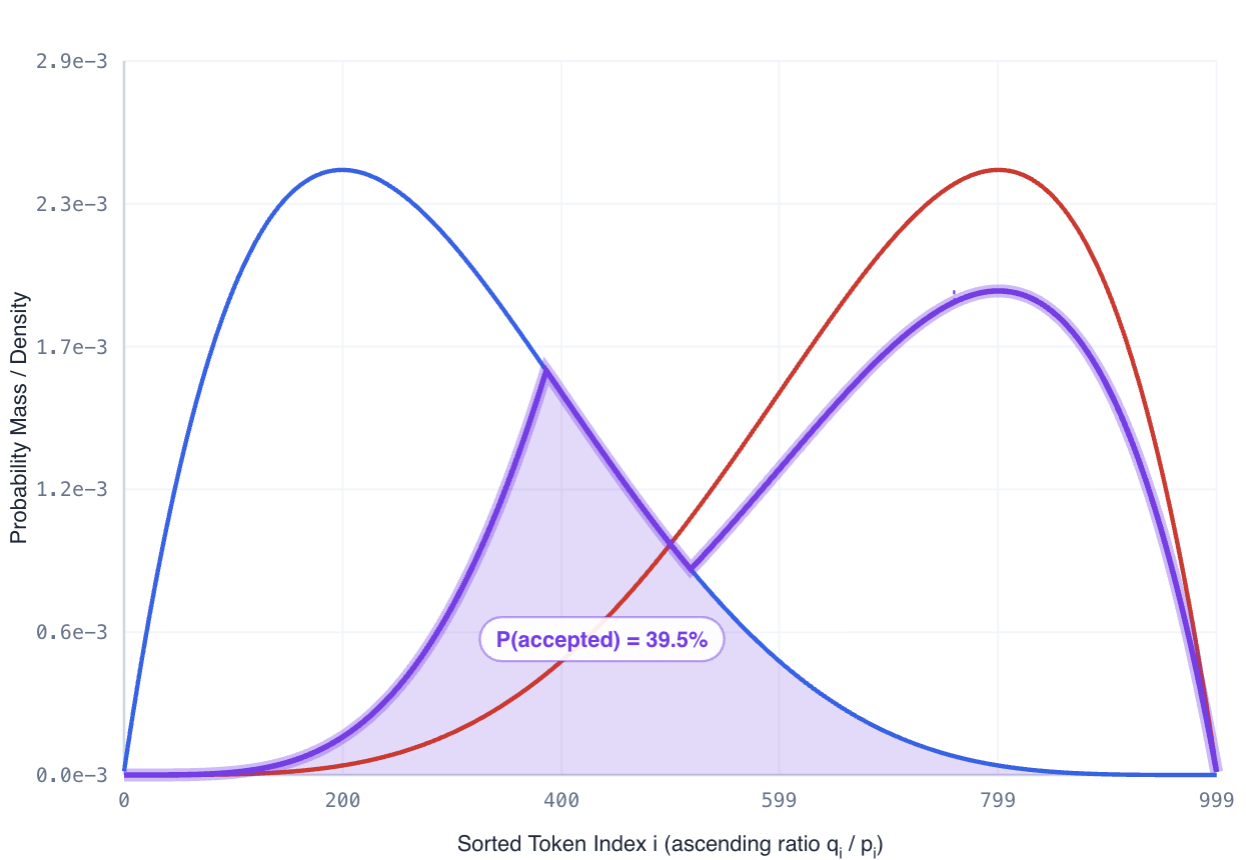}  &  \includegraphics[trim=0bp 20bp 0bp 0bp,clip,width=0.22\columnwidth]{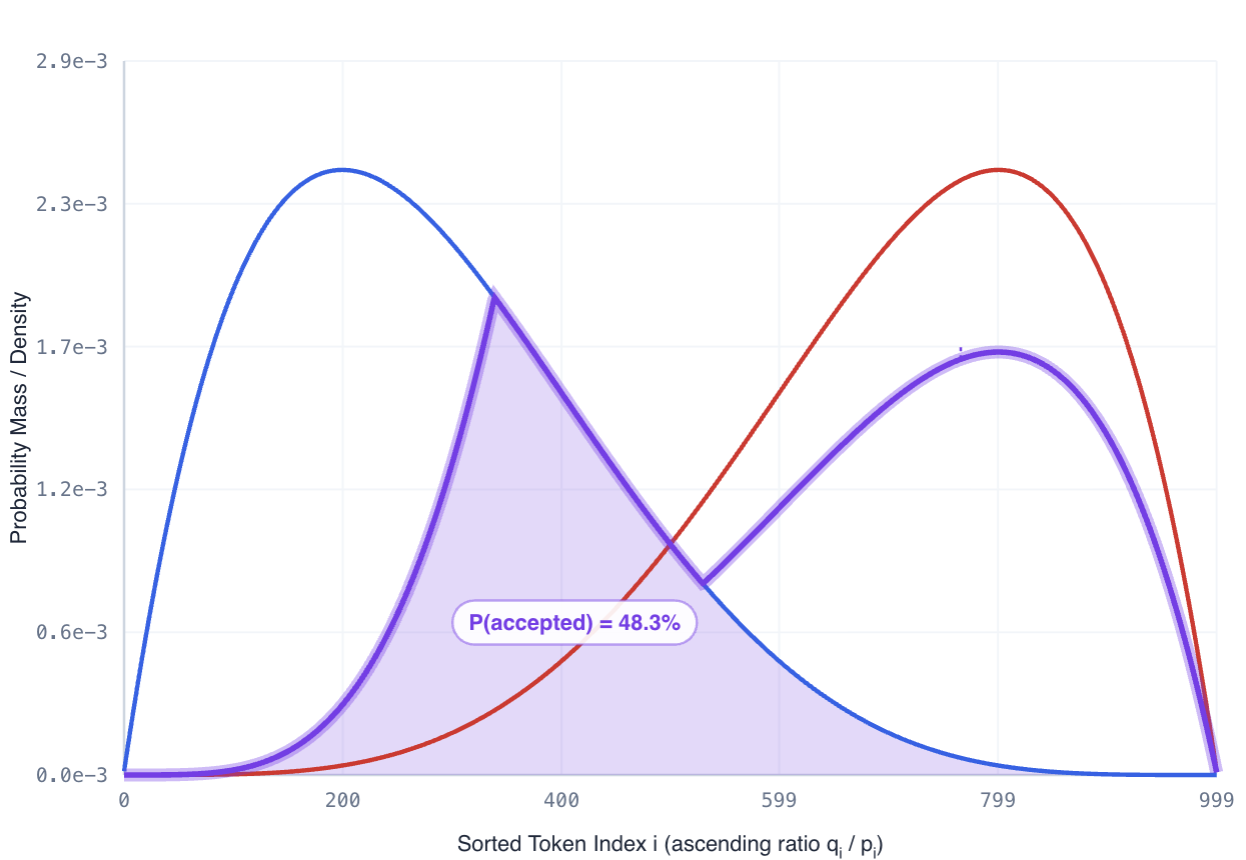}  \\ \hline
    $b=0$ ($\pacc = 0.219$) & 0.1 ($0.307$) & 0.2 ($0.395$) & 0.3 ($0.483$)\\ \Xhline{2pt}
    \includegraphics[trim=0bp 20bp 0bp 0bp,clip,width=0.22\columnwidth]{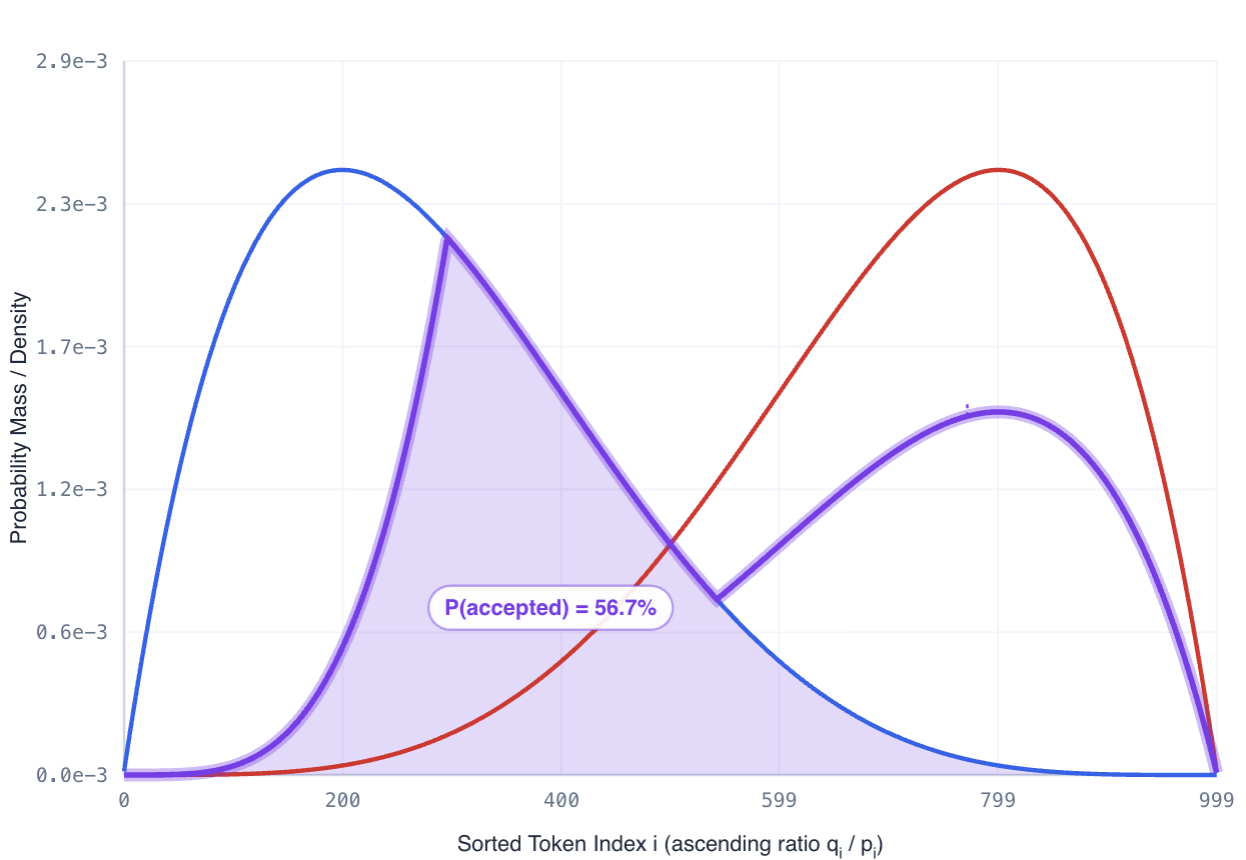}  &  \includegraphics[trim=0bp 20bp 0bp 0bp,clip,width=0.22\columnwidth]{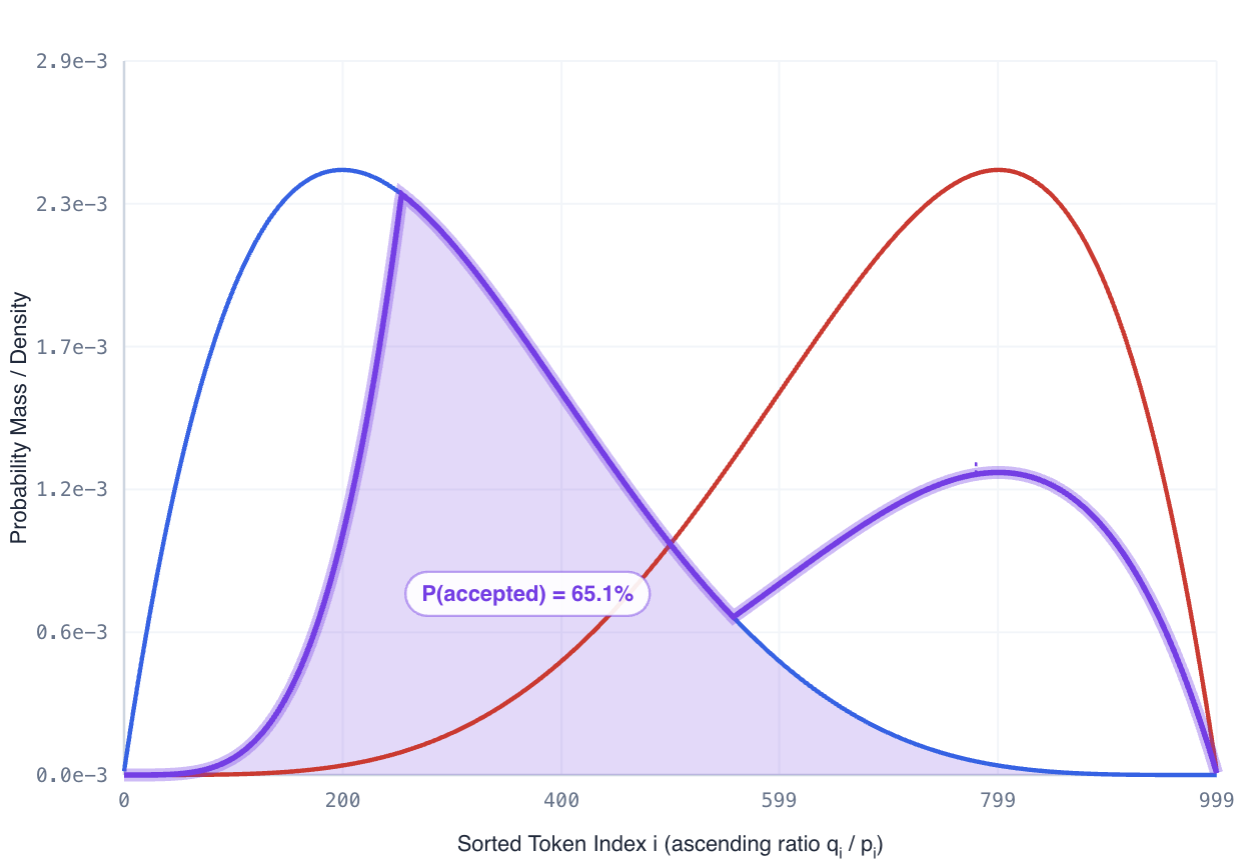}  &  \includegraphics[trim=0bp 20bp 0bp 0bp,clip,width=0.22\columnwidth]{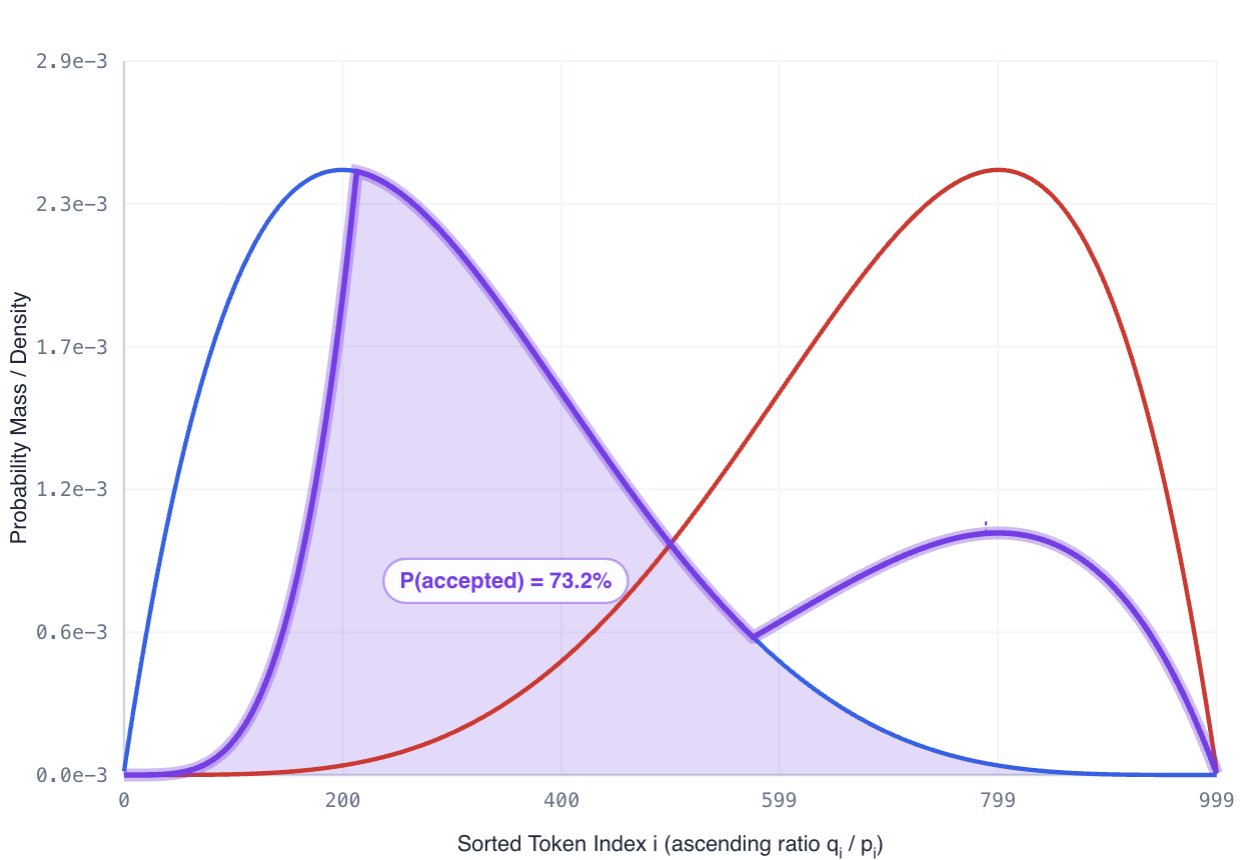}  &  \includegraphics[trim=0bp 20bp 0bp 0bp,clip,width=0.22\columnwidth]{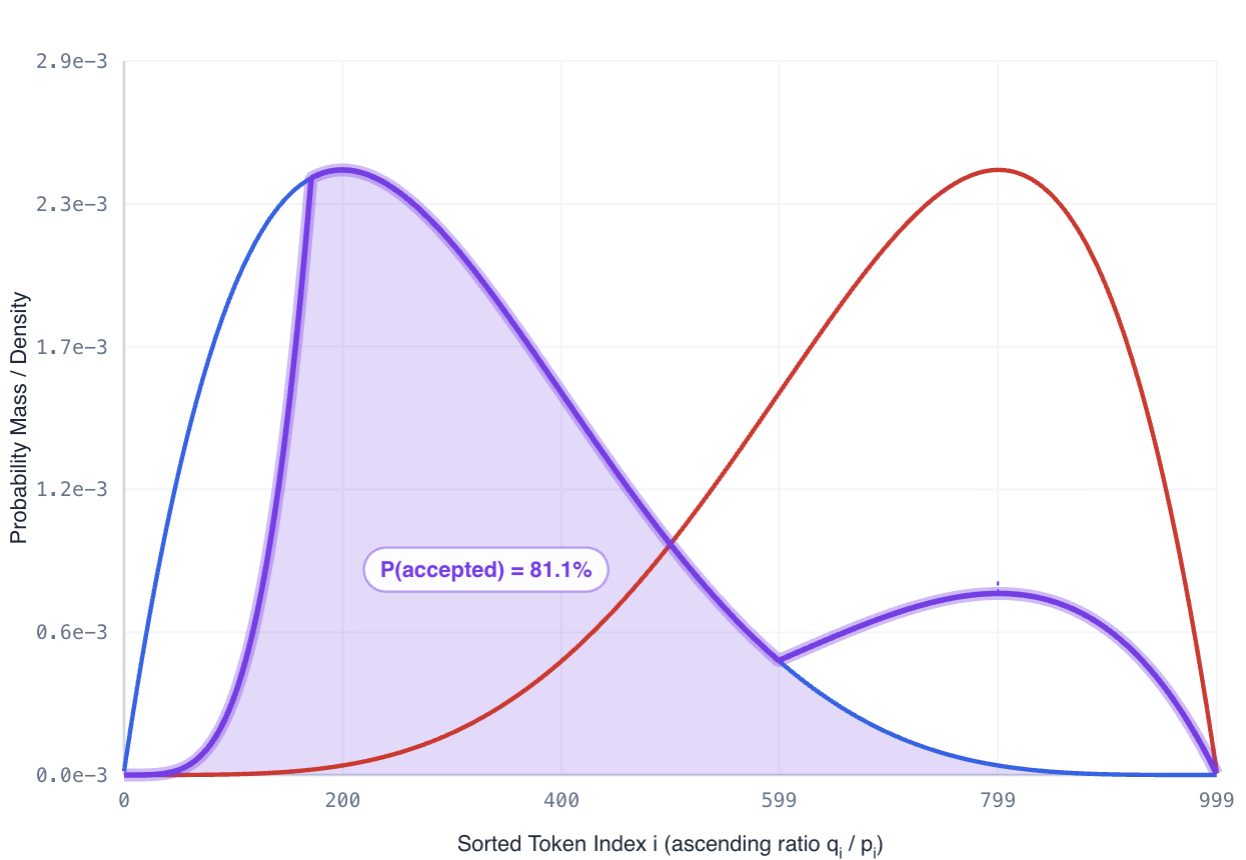}  \\ \hline
    $b=0.4$ ($\pacc = 0.567$) & 0.5 ($0.651$) & 0.6 ($0.732$) & 0.7 ($0.811$) \\ \Xhline{2pt}  
                                                                                       \includegraphics[trim=0bp 20bp 0bp 0bp,clip,width=0.22\columnwidth]{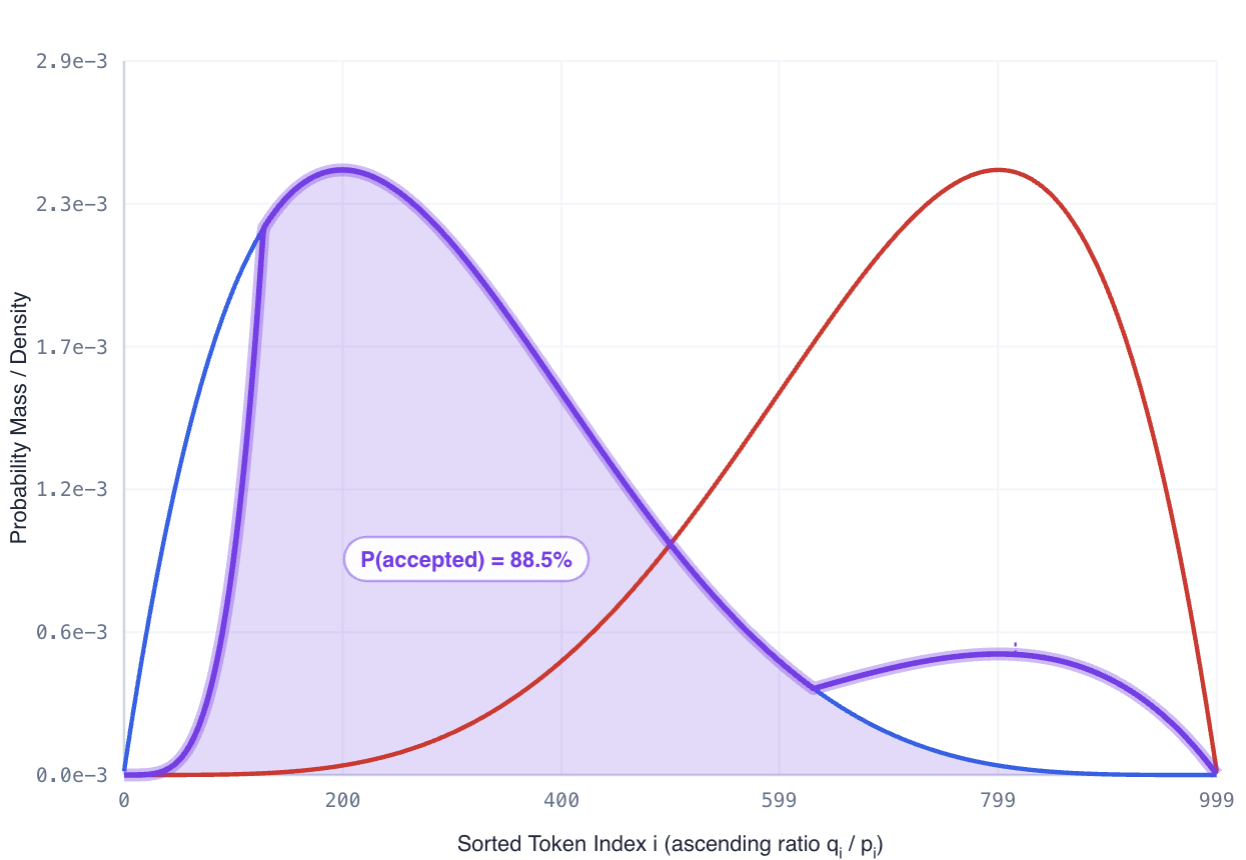}  &  \includegraphics[trim=0bp 20bp 0bp 0bp,clip,width=0.22\columnwidth]{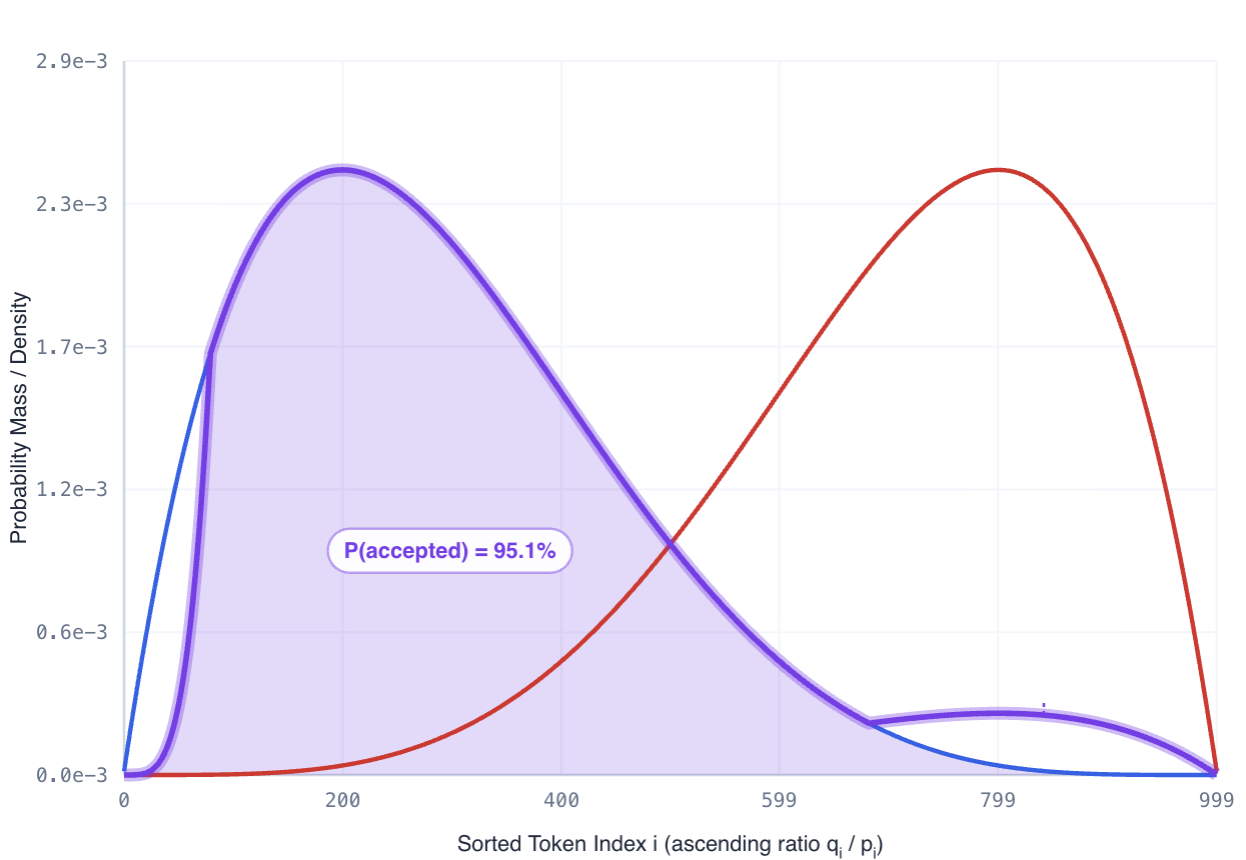}  &  \includegraphics[trim=0bp 20bp 0bp 0bp,clip,width=0.22\columnwidth]{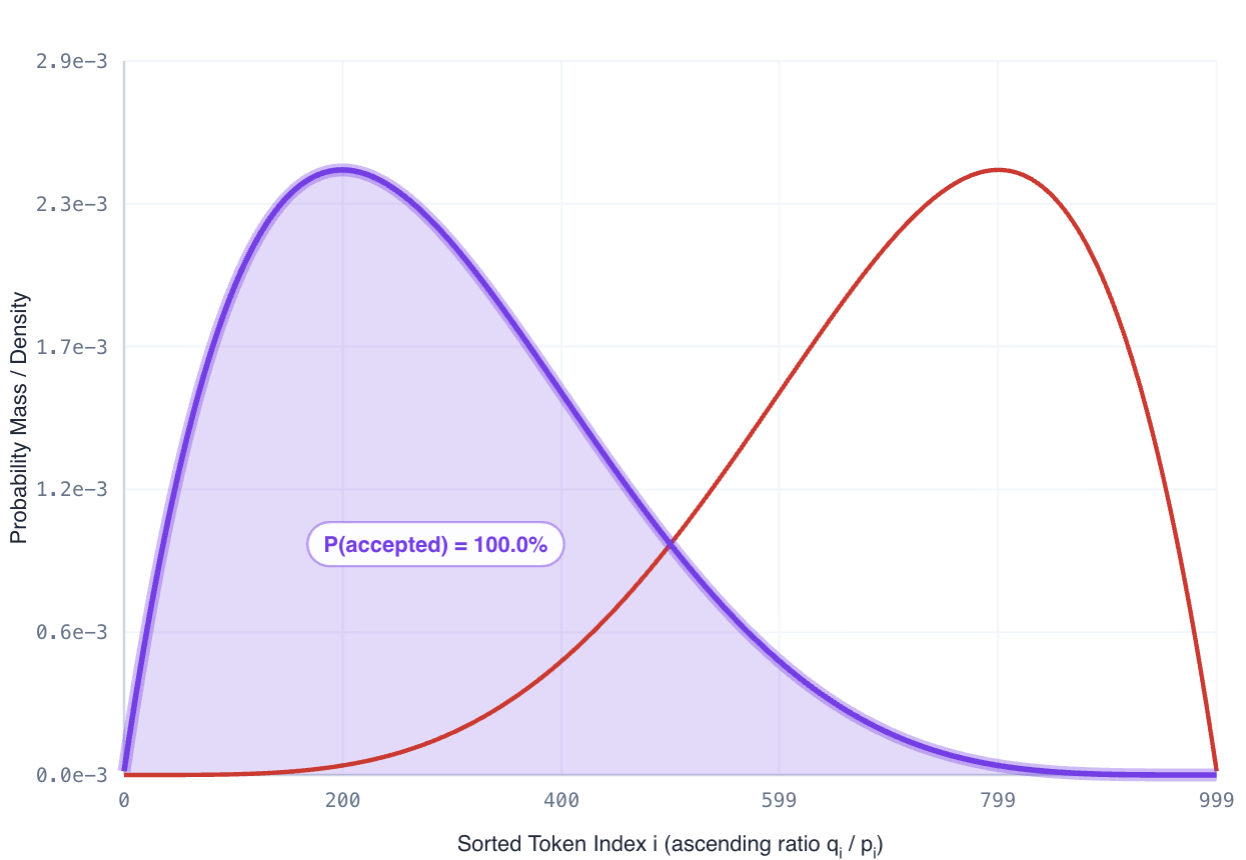} &  \\ \hline
    $b=0.8$ ($\pacc = 0.885$) & 0.9 ($0.951$) & 1 ($1$) & \\ \Xhline{2pt} 
    \end{tabular}
    \caption{Mentored distribution (thick {\color{RoyalPurple} purple} curve) and acceptance probability ({\color{Purple} purple} area) when $\ve{p}$ (thin {\color{blue} blue} curve) and $\ve{q}$ (thin {\color{red} red} curve) are discretized from Beta distributions. The values under each plot are $b$ and $\pacc(MD)$ (in parentheses, see text).}
    \label{tab:sim-4}
\end{table*}

\section{Discussion}\label{sec-disc}

In this section, we discuss the case where we want to enforce the decoding of only the top-$k$ tokens of the target, and two additional points on boosting. The relevance of the discussion on boosting extends beyond the mentored decoding case.

\paragraph{Efficient restriction to top-$k$ decoding}

Suppose we mask $n-k$ tokens on $\ve{q}$ by replacing the coordinates by $0$, e.g. to single out the top-$k$ coordinates of $\ve{q}$, with $k\geq 1$. This is a practically relevant setting that clearly breaks Assumption \ref{assum-mda}. For the sake of readability, we assume the remaining token coordinates of $\ve{q}$ are renormalized in the simplex so we do not need to overload our MD problems with new parameters: what happens for the solutions of our mentored decoding problems \eqref{def-pb-f-md-2-general}, \eqref{def-pb-f-md-1-general} ? Note that $(q_i/\pi_i) \cdot f(\pi_i / q_i) = f(u)/u$ with $u \defeq \pi_i / q_i$. If $q_i = 0$ but $\pi_i \neq 0$, the expression takes the limit $\lim_{z \rightarrow +\infty} f(z)/z$. That is why we generalize the definition of a $f$-divergence in \eqref{eq-def-f-div} to the possibility that some $q_i = 0$ by following \citet{cAC} and letting
\begin{eqnarray}
  D_f(\ve{\pi}\| \ve{q}) & \defeq & \sum_{i: q_i > 0} q_i f\left(\frac{\pi_i}{q_i}\right) + \sum_{i: q_i = 0 \wedge \pi_i > 0} \pi_i \cdot \lim_{z \rightarrow +\infty} \frac{f(z)}{z}.\label{eq-def-f-div-gen-2}
\end{eqnarray}
For clarity, we explicitly discard the problematic cases where both $\pi_i = q_i = 0$: we treat it as the limit case $\lim_{z \rightarrow 1} f(z) = f(1) = 0$ ($f$ is convex, thus continuous) per \eqref{eq-const-f}. 

\begin{lemma}\label{lem-top-k}
Suppose $\ve{q}$ has been masked to a set of $1 \leq k< n$ coordinates. Then there always exists an optimal solution $\ve{\pi}$ of \eqref{def-pb-f-md-1-general} whose support is those $k$ coordinates.
\end{lemma}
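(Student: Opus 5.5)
The plan is an exchange argument. Take any optimal $\ve{\pi}$ of \eqref{def-pb-f-md-1-general} and move all its mass off the masked coordinates onto the support $\mathbb{K}$ of $\ve{q}$. The goal is to show that this does not increase $D_f(\cdot\|\ve{q})$ in \eqref{eq-def-f-div-gen-2} and does not increase $D_{\mathrm{TV}}(\cdot\|\ve{p})$. Existence of an optimum I take for granted: the feasible set is a closed subset of $\Delta_n$ because $D_f$ is lower semicontinuous, it is nonempty since $\ve{q}$ is feasible, and the objective is continuous.

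Write $\overline{\mathbb{K}}$ for the masked coordinates and $m \defeq \pi(\overline{\mathbb{K}})$, and assume $m>0$. Let $\ell \defeq \lim_{z\to+\infty} f(z)/z$, which is possibly $+\infty$. If $\ell=+\infty$, any feasible $\ve{\pi}$ already has support in $\mathbb{K}$ and we are done, so assume $\ell<\infty$. The masked coordinates then contribute exactly $m\ell$ to $D_f$. Their contribution to $2D_{\mathrm{TV}}(\ve{\pi}\|\ve{p})$ is $\sum_{i\in\overline{\mathbb{K}}}|\pi_i-p_i|$.

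Construct $\ve{\pi}'$ by setting $\pi'_i=0$ on $\overline{\mathbb{K}}$ and adding the mass $m$ to coordinates $i\in\mathbb{K}$ where $\pi_i<p_i$. Fill these deficits in order until $m$ is exhausted; any remainder left after the deficits are full is spread proportionally to $\ve{q}$ on $\mathbb{K}$.

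\textbf{TV comparison.} Zeroing $\pi_i$ on $\overline{\mathbb{K}}$ changes the $\overline{\mathbb{K}}$-contribution to $p(\overline{\mathbb{K}})$. Filling deficits lowers the $\mathbb{K}$-part by the amount filled. Spreading a remainder $\rho$ raises the $\mathbb{K}$-part by at most $\rho$. Bookkeeping that these changes net to no increase of $\sum_i|\pi_i-p_i|$ is routine. It uses that the total deficit on $\mathbb{K}$ equals the total excess over the whole vocabulary, both being $D_{\mathrm{TV}}(\ve{\pi}\|\ve{p})$.

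\textbf{$f$-divergence comparison.} The masked part contributed $m\ell$. On $\mathbb{K}$, each increase of $\pi_i$ by $\delta$ raises $q_if(\pi_i/q_i)$ by at most $\delta\cdot f'_+(+\infty)=\delta\ell$. This holds because $f$ is convex: its right derivative is nondecreasing with limit $\ell$, so every slope of $f$ is at most $\ell$. The total increase on $\mathbb{K}$ is therefore at most $m\ell$, which is exactly the removed contribution, so $D_f(\ve{\pi}'\|\ve{q})\le D_f(\ve{\pi}\|\ve{q})\le D$.

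Hence $\ve{\pi}'$ is feasible, has objective no larger than $\ve{\pi}$, and is therefore optimal with support in $\mathbb{K}$.

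I expect the main obstacle to be the TV bookkeeping, specifically making sure that relocating mass never strictly hurts when the deficits on $\mathbb{K}$ are smaller than $m$. If the direct accounting gets messy, I would fall back on the triangle inequality. The bound $|\pi'_i-p_i|\le|\pi_i-p_i|+|\pi'_i-\pi_i|$ alone is too lossy, however, so the argument needs the global identity that total excess equals total deficit.
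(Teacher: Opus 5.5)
Your handling of the case $\ell=+\infty$ matches the paper's opening remark. Your $D_f$ comparison is also correct, since every slope of a convex $f$ is bounded by its recession slope $\ell$. The gap is in the TV comparison, in the case you flagged as the main obstacle, and it is not a bookkeeping issue.

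The identity you plan to use is false. The total deficit over the \emph{whole} vocabulary equals the total excess, but the masked coordinates can themselves carry deficit ($0<\pi_i<p_i$ for $i\in\overline{\mathbb{K}}$). So the deficit on $\mathbb{K}$ alone can be much smaller than $D_{\mathrm{TV}}(\ve{\pi}\|\ve{p})$. When it is smaller than $m$, your $\ve{\pi}'$ has every deficit on $\mathbb{K}$ filled and deficit $p_i$ at every masked coordinate, so $D_{\mathrm{TV}}(\ve{\pi}'\|\ve{p})=p(\overline{\mathbb{K}})$. This can strictly exceed the old value: take $\pi_i=p_i$ on $\overline{\mathbb{K}}$ and a small discrepancy on $\mathbb{K}$. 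The underlying obstruction is that every distribution supported in $\mathbb{K}$ is at TV distance at least $p(\overline{\mathbb{K}})$ from $\ve{p}$. An optimal $\ve{\pi}$ may sit strictly below that level, and then no relocation of the masked mass onto $\mathbb{K}$ can preserve the objective.

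This cannot be repaired, because for finite $\ell$ the statement itself fails. Take $n=2$, $\ve{q}=(1,0)$ (so $k=1$), $\ve{p}=(1/2,1/2)$, $f=f_{\mathrm{TV}}$ (so $\ell=1/2$) and $D\in(0,1/2)$. For $\ve{\pi}=(1-m,m)$ the generalized divergence gives $D_f(\ve{\pi}\|\ve{q})=m/2+m/2=m$, while $D_{\mathrm{TV}}(\ve{\pi}\|\ve{p})=|1/2-m|$. The unique optimum is therefore $m=D>0$, with value $1/2-D$. The only point supported on the mask, $(1,0)$, has value $1/2$. Your construction maps the optimum $(1-D,D)$ exactly to $(1,0)$.

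Reverse KL ($\ell=0$) behaves the same way. There $D_f(\ve{\pi}\|\ve{q})=-\ln(1-m)$, and the unique optimum is $m=\min\{1/2,1-e^{-D}\}>0$.

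The paper's own argument for the finite-$\ell$ case does not establish the claim either. It re-derives $\alpha,\beta$ with the modified coefficients $C_i$ and then simply asserts that masked coordinates must vanish. What is actually provable is the case $\ell=+\infty$ (e.g.\ KL, Neyman, Amari with $\alpha>1$), where feasibility alone forces the support into the mask, exactly as in your first step.
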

Proof in Appendix, Section \ref{sec-proof-lem-top-k}. By Lemma \ref{lem-equiv-sol}, there also exists an optimal solution $(\ve{r}, \ve{s}) \in \textsc{md}^2_f(\ve{p}, \ve{q}; D)$ whose coordinates satisfy $q_i = 0 \Rightarrow r_i = 0 \wedge s_i = 0, \forall i \in [n]$. We stress the substantial practical importance of Lemma \ref{lem-top-k}: when masking the target, instead of solving MD over the potentially huge set of $n$ coordinates / tokens (e.g. $n \approx 10^5$), we can restrict MD over the subset defining the mask (e.g. $k=16$). Also, this does not affect the connection with boosting but must be applied \textit{mutatis mutandis} for the parameters involved.

The practical consequence of Lemma~\ref{lem-top-k} is significant for modern LLM serving. 
In production pipelines, target verification is often executed with top-$k$ 
truncation. When it is the case, \textit{identifying the top-$k$ 
tokens and renormalizing their probabilities is already performed by the baseline speculative 
decoding verification stage.} Consequently, mentored decoding incurs \textbf{zero additional overhead} for top-$k$ extraction and the sorting complexity drops from $O(n \log n)$ to $O(k \log k)$. Hence, mentored decoding achieves higher acceptance rates with negligible wall-clock overhead.

\paragraph{Boosting the boosting advantages beyond \eqref{eq-def-wla}} The weak learning assumption has been instrumental in showing that boosting effectively works by amplifying the performances of models barely better than random. Here, we show that, if instead of absolute performances we focus on relative performances of the weak models, i.e. correlations between each other, then there is a similar amplification framework which, instead of providing a boosting advantage linear in the number of models $T$, gets a boosting advantage which is exponential in $T$. This result is facilitated in our case (vs AdaBoost) because weights in \eqref{eq-def-weights} are automatically normalized in the simplex. Denote $\tilde{\ve{h}}_t \defeq (1/(2 h_{t,\infty})) \cdot \ve{h}_t$, so we have $|\ve{y}_i^\top \tilde{\ve{h}}_t|\leq 1$ for $i \in [m]$. For any $t\geq 1$ and any sequence\footnote{To spare notations, we write for example $\mathcal{I} = 1, 2, 5$ without other symbol.} of integers $\mathcal{I} \geq t$ (element-wise), we let
\begin{eqnarray}
  \expect_t(\mathcal{I}) & \defeq & \sum_{i \in [m]} w_{ti} \cdot \prod_{j \in \mathcal{I}} \ve{y}_{i}^\top \tilde{\ve{h}}_j(\bm{x}_i).\label{def-E}
\end{eqnarray}
We can unravel its formula with respect to the weight index, from  \eqref{eq-def-mut} and \eqref{eq-def-weights}, into a very useful formula:
\begin{eqnarray}
  \expect_{t+1}(\mathcal{I}) & = & \sum_{i \in [m]} w_{ti} \cdot \frac{1 -\mu_{t} \cdot \ve{y}_{i}^\top \tilde{\ve{h}}_{t}(\bm{x}_i)}{1-\mu_{t}^2} \cdot \prod_{j \in \mathcal{I}} \ve{y}_{i}^\top \tilde{\ve{h}}_j(\bm{x}_i) .\nonumber\\
  & = & \frac{\expect_{t}(\mathcal{I}) - \expect_{t}(t) \cdot \expect_{t}(t, \mathcal{I})}{1 - \expect^2_{t}(t)} . \label{eq-ttm1}
\end{eqnarray}
Suppose we replace the content of \eqref{eq-def-wla} by the following. First, assume $\mu_{t} > 0$, a lightweight assumption since otherwise the chosen hypothesis does not perform better than random. We also add assumptions for $t\geq 1$ that $\tilde{\ve{h}}_{t+1}$ is not too bad with respect to $\tilde{\ve{h}}_{t}$ while being different enough from $\tilde{\ve{h}}_{t}$, which is especially relevant for large models. These are grouped in a setting called Weak Correlation Assumption.
\begin{lemma}\label{lem-ba-exp}
  Let $T>1$ be a number of boosting iterations and assume the following Weak Correlation Assumption holds: 
\begin{align}
  \exists \beta \geq 0, \delta > 0 :
  \left\{
  \begin{array}{ll}
           (1.) & \mu_{t} > 0\\
           (2.) & \expect_{t}(t+1) \geq (1-\beta) \mu_{t}\\
           (3.) & \expect_{t}(t ,t+1) \leq 1 - \beta -  (1+\delta) \mu_{t}^2
  \end{array}
                  \right., \forall t = 1, 2, ... T. \label{eq-def-wca}\tag{\textbf{WCA}}
    \end{align}
   Then the boosting advantage grows exponentially with $T$ as:
   \begin{eqnarray}
  A\left(\{\ve{h}_{t}\}_{t\in [T]}\right) & \geq & \mu_1^2 \cdot \frac{(1+\delta)^{2T} - 1}{2 \delta + \delta^2}. \label{b-adv}
\end{eqnarray}
\end{lemma}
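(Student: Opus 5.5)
The plan is to prove a one-step amplification inequality for the edges,
\begin{eqnarray*}
\mu_{t+1} & \geq & (1+\delta)\cdot \mu_t, \quad t = 1, 2, ..., T-1,
\end{eqnarray*}
iterate it to get $\mu_t \geq (1+\delta)^{t-1}\mu_1 > 0$, and then sum squares in Definition \ref{def-ba}. The last step is routine:
\begin{eqnarray*}
A\left(\{\ve{h}_{t}\}_{t\in [T]}\right) & = & \sum_{t=1}^T \mu_t^2 \;\geq\; \mu_1^2 \sum_{t=0}^{T-1} (1+\delta)^{2t} \;=\; \mu_1^2\cdot \frac{(1+\delta)^{2T}-1}{(1+\delta)^2-1}.
\end{eqnarray*}
Since $(1+\delta)^2 - 1 = 2\delta+\delta^2$, this is exactly \eqref{b-adv}. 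So everything rests on the one-step inequality.

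For that step, I would first note from \eqref{eq-def-mut} and \eqref{def-E} that $\mu_t = \expect_t(t)$ and $\mu_{t+1} = \expect_{t+1}(t+1)$. I would then apply the unravelling formula \eqref{eq-ttm1} with $\mathcal{I} = t+1$, which gives
\begin{eqnarray*}
\mu_{t+1} & = & \frac{\expect_t(t+1) - \mu_t\cdot \expect_t(t,t+1)}{1-\mu_t^2}.
\end{eqnarray*}
Here $1-\mu_t^2>0$ because $\mu_t\in[0,1)$. Since $\mu_t>0$ by (1.) of \eqref{eq-def-wca}, I would lower-bound the first term in the numerator using (2.) and upper-bound $\expect_t(t,t+1)$ using (3.). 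In the numerator, the $\beta$ terms cancel.

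The main obstacle is precisely this cancellation. Carried out literally, it leaves the numerator $\geq (1+\delta)\mu_t^3$, so the bound obtained is only $\mu_{t+1}\geq (1+\delta)\mu_t^3/(1-\mu_t^2)$. This dominates $(1+\delta)\mu_t$ only when $\mu_t^2 \geq 1/2$. I would therefore re-examine the normalisation in (3.), checking whether the intended condition is $\expect_t(t,t+1)\leq (1-\beta) - (1+\delta)(1-\mu_t^2)$ or an equivalent form. Such a form makes the numerator at least $(1+\delta)\mu_t(1-\mu_t^2)$ and yields the clean recursion.

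Failing that, I would retain the cubic recursion, $\mu_{t+1}\geq(1+\delta)\mu_t^3/(1-\mu_t^2)$. I would then show by induction that it implies $\mu_{t+1}\geq(1+\delta)\mu_t$ as long as the iterates stay in the regime where $\mu_t^2/(1-\mu_t^2)\geq 1$. With either fix, the geometric summation above goes through unchanged.

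A side check is that the recursion keeps $\mu_t<1$. This caps $T$ implicitly: if \eqref{eq-def-wca} holds for all $t\leq T$, then necessarily $(1+\delta)^{T-1}\mu_1<1$, which is consistent with the statement.
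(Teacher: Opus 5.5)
Your route is the paper's route: the one-step amplification $\mu_{t+1}\geq(1+\delta)\mu_t$ from \eqref{eq-ttm1} with $\mathcal{I}=t+1$, followed by a geometric sum. Your diagnosis of the obstacle is correct, and it pinpoints an error in the paper's own proof.

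The paper correctly reduces amplification to $\mu_t\,\expect_t(t,t+1)\leq(1-\beta)\mu_t-(1+\delta)\mu_t(1-\mu_t^2)$. After dividing by $\mu_t$, however, it writes the right-hand side as $(1+\delta)(1-\mu_t^2)-(\beta+\delta)=1-\beta-(1+\delta)\mu_t^2$. The actual quotient is $(1-\beta)-(1+\delta)(1-\mu_t^2)=(1+\delta)\mu_t^2-\beta-\delta$. That is exactly the replacement for (3.) you propose, and with it your proof is complete. Your summation index is also the right one: the paper's intermediate sum $\sum_{t=1}^T(1+\delta)^{2t}$ is $(1+\delta)^2$ times the closed form \eqref{b-adv} it then writes, which is the one you obtain.

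No argument can avoid modifying the hypotheses, because the lemma as stated is false. Take $T=2$, $\beta=0$, $\delta=0.01$, $\mu_1=0.1$ and $\ve{h}_2=\ve{h}_1$. Choose margins $z_i\defeq\ve{y}_i^\top\tilde{\ve{h}}_1(\ve{x}_i)=\pm\sqrt{s}$, with $s\defeq 1-(1+\delta)\mu_1^2$, whose weighted mean is $\mu_1$. These are realizable with $n=2$ by adding a constant vector to $\ve{h}_1$, which leaves $\ve{y}_i^\top\ve{h}_1(\ve{x}_i)$ unchanged and inflates $h_{1,\infty}$.

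With this choice, (2.) and (3.) hold with equality at $t=1$. Moreover $\mu_2=\mu_1(1-s)/(1-\mu_1^2)=(1+\delta)\mu_1^3/(1-\mu_1^2)\approx 1.02\cdot 10^{-3}>0$, so your cubic bound is attained. The conditions at $t=2$ are met by an $\ve{h}_3$ whose margins all equal $1/2$ (e.g.\ $\ve{h}_3(\ve{x}_i)=\ve{y}_i+\ve{1}$). The result is $A\approx 0.0100$, while \eqref{b-adv} claims $A\geq 0.0202$.

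Hence your fallback must carry the explicit extra hypothesis $\mu_1^2\geq 1/2$, rather than ``as long as the iterates stay in the regime''. Under that hypothesis the edges increase and your induction is valid.

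Note also what the corrected (3.) demands. It forces $\expect_t(t,t+1)<0$ as soon as $\mu_t^2<(\beta+\delta)/(1+\delta)$. So when edges are small, consecutive hypotheses must be anti-correlated under $\ve{w}_t$, which is considerably stronger than the ``weak'' reading the paper gives to \eqref{eq-def-wca}.
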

Proof in Appendix, Section \ref{sec-proof-lem-ba-exp}. \eqref{eq-def-wca} guarantees much better rates than \eqref{eq-def-wla}, \textit{but} it can typically hold for a much more limited number of iterations. Indeed, the crux of \eqref{eq-def-wca} is to imply a geometric increase in edges, $\mu_{t+1} \geq  (1+\delta) \mu_{t}$, and we obviously observe $\mu_{t+1} \leq 1$. Yet, let us compare what the \eqref{eq-def-wla} and \eqref{eq-def-wca} can get after a maximal number $T^*$ of iterations for \eqref{eq-def-wca} to stand. To simplify, assume $\mu_{1} \geq \upgamma$ and pick $\delta = \upgamma$ of the \eqref{eq-def-wla}. To ensure $\mu_{t} \leq 1, \forall t$, we must have $T \leq T^* \defeq \log(1/\upgamma) / \log(1+\upgamma)$.

After such a number of iterations, the boosting advantage in \eqref{b-adv} satisfies $A\left(\{\ve{h}_{t}\}_{t\in [T^*]}\right) \geq A_{WCA}$ with
\begin{eqnarray*}
  A_{WCA}  & \defeq & \upgamma^2 \cdot \frac{\frac{(1+\upgamma)^2}{\upgamma^2} - 1}{2 \upgamma + \upgamma^2} = \frac{1+2\upgamma}{2\upgamma +\upgamma^2},
\end{eqnarray*}
while the boosting advantage in \eqref{b-adv-lin} yields only $A\left(\{\ve{h}_{t}\}_{t\in [T^*]}\right) \geq A_{WLA}$ with
\begin{eqnarray*}
  A_{WLA}  & \defeq & \upgamma^2 \cdot \frac{\log\left(\frac{1}{\upgamma}\right)}{\log(1+\upgamma)},
\end{eqnarray*}
and it is not hard to show that $A_{WCA} = \Omega(A_{WLA} / \upgamma)$, yielding a potential drop in the boosting rate dependence $O(1/\upgamma^2)$ in \eqref{eq-bsup-T} to the much more seldom $O(1/\upgamma)$ under \eqref{eq-def-wca}, which can be of independent interest in the context of boosting \citep[Open problems]{DBLP:journals/theoretics/AlonGHM23} but comes with the substantial caveat that the number of iterations $T$ during which the \eqref{eq-def-wca} can hold is substantially smaller than for \eqref{eq-def-wla}.

\begin{remark}
We make two important remarks regarding the \eqref{eq-def-wca} framework:
\begin{itemize}
\item from the standpoint of the weak/strong learning framework, a crucial question about the \eqref{eq-def-wca} is how "weak" it is. If we take the \eqref{eq-def-wla}, as $\upgamma \rightarrow 0$, the requirements of the \eqref{eq-def-wla} converge to the fact that $\tilde{\ve{h}}_t$ be just better than random guessing. This turns out to be the same for \eqref{eq-def-wca}: as $\beta \rightarrow 1$ and $\delta \rightarrow -1$, the requirements coalesce to the sole $\mu_{.} > 0$ -- i.e. $\tilde{\ve{h}}_t$ be just better than random guessing.
\item disregarding $\delta$, the smaller $\beta$, the weaker is (3.), but in fact, in classical boosting and in our setting where we may pick models from a pool, the picking is greedy, which always imposes $\expect_{t}(t+1) \leq \mu_{t}$ (otherwise, $\ve{h}_{t+1}$ would have been picked at iteration $t$), and leads to $\expect_{t}(t+1) < \mu_{t}$ often, so $\beta > 0$.
\end{itemize}
\end{remark}

\paragraph{Potential (sub)optimality of the greedy boosted sequence} The sequence of boosted classifiers is built iteratively, but the boosting advantage \eqref{def-boostad} is not invariant by permutation in the sequence. Usually, $\ve{h}_t$ is the "best" classifier at iteration $t$, say by maximizing $\mu_t$. When we pick it from a pool of available classifiers, which is especially relevant in our case, a natural question comes as to whether this simple strategy always delivers the best boosting advantage. A simple results shows that the greedy pick of the best classifier for $\mu_t$ can, in a particular case highlighted below, lead to a suboptimal boosting advantage even from a very local standpoint, i.e. by just permuting two successive classifiers (say $\ve{h}_t$ and $\ve{h}_{t+1}$) in the sequence.

The reasoning is straightforward and comes directly from \eqref{eq-ttm1}: since $\mu_{t+1} = \expect_{t+1}(t+1)$ and $\mu_{t} = \expect_{t}(t)$, we have:
\begin{eqnarray*}
  \mu_{t+1} & = & \frac{\expect_{t}(t+1) - \mu_{t} \cdot \expect_{t}(t ,t+1)}{1 - \mu_{t}^2 } .
\end{eqnarray*}
Denote $\ve{h}_d$ the hypothesis used in $\mu_{t+1}$ and $\ve{h}_c$ the hypothesis used in $\mu_{t}$. Under the best greedy fit scenario, we have
\begin{eqnarray}
  \mu_{t} > |\tilde{\mu}_{t}| \quad \mbox{ with } \tilde{\mu}_{t} \defeq \expect_{t}(d) \label{eq-greedy-choice}
\end{eqnarray}
(we remove the possibility of identity for simplicity). The contribution to the boosting advantage of adding in this order $\ve{h}_c, \ve{h}_d$ is $\mu^2_{t} + \mu^2_{t+1}$. There is also the (seemingly) suboptimal scenario of preferring the sequence $\ve{h}_d, \ve{h}_c$, for an alternative contribution to the boosting advantage $\tilde{\mu}^2_{t} + \tilde{\mu}^2_{t+1}$ with $\tilde{\mu}_{t+1} \defeq \expect_{t+1}(c)$. Surprisingly perhaps, we show that there is a simple condition on the \textit{covariance} of the two hypotheses such that the alternative scenario is strictly better than the best greedy fit.
\begin{lemma}\label{lem-subopt-greedy}
  With the definition stated above, under the greedy choice condition \eqref{eq-greedy-choice}, there exists $0< \rho < 0.34$ depending on $\mu_{t}, \tilde{\mu}_{t}$ such that $\tilde{\mu}^2_{t} + \tilde{\mu}^2_{t+1} > \mu^2_{t} + \mu^2_{t+1}$ iff one of the following holds:
  \begin{itemize}
  \item [(i)] $\tilde{\mu}_{t}>0$ and $\expect_{t}(c, d) - \expect_{t}(c) \expect_{t}(d) \in (0, \rho)$, or
  \item [(ii)] $\tilde{\mu}_{t}<0$ and $\expect_{t}(c, d) - \expect_{t}(c) \expect_{t}(d) \in (-\rho,0)$.
    \end{itemize}
  \end{lemma}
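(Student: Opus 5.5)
The plan is to expand both boosting-advantage contributions in terms of the covariance and compare them directly.

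\textbf{Setup.} Write $x \defeq \mu_t = \expect_t(c)$, $y \defeq \tilde{\mu}_t = \expect_t(d)$, and $e \defeq \expect_t(c,d)$. Introduce the covariance $\kappa \defeq e - xy$, together with $A \defeq 1/(1-x^2)$ and $B \defeq 1/(1-y^2)$. The greedy condition \eqref{eq-greedy-choice} reads $1 > x > |y|$.

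Applying \eqref{eq-ttm1} to each ordering gives $\mu_{t+1} = (y - xe)A$ and $\tilde{\mu}_{t+1} = (x - ye)B$. Substituting $e = \kappa + xy$ yields the clean forms
\begin{eqnarray*}
\mu_{t+1} = y - x\kappa A, & & \tilde{\mu}_{t+1} = x - y\kappa B .
\end{eqnarray*}

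\textbf{Reduction to a quadratic in $\kappa$.} Expanding squares, the gain of the alternative ordering is
\begin{eqnarray*}
\Gamma & \defeq & \tilde{\mu}^2_{t} + \tilde{\mu}^2_{t+1} - \mu^2_{t} - \mu^2_{t+1} \\
& = & \kappa\left[2xy(A-B) - \kappa\,(x^2A^2 - y^2B^2)\right].
\end{eqnarray*}
Since $x > |y|$, we have $A > B$, and also $xA > |y|B$. Hence $C \defeq x^2A^2 - y^2B^2 > 0$, and $\Gamma$ is a concave quadratic in $\kappa$ with roots $0$ and $2xy(A-B)/C$.

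Set $\rho \defeq 2x|y|(A-B)/C > 0$, which needs $y \neq 0$; when $y = 0$ neither case of the Lemma applies and $\Gamma = -\kappa^2 C \leq 0$. Then $\Gamma > 0$ exactly when $\kappa$ lies strictly between the two roots:
\begin{itemize}
\item if $y > 0$, this is $\kappa \in (0,\rho)$, which is case (i);
\item if $y < 0$, this is $\kappa \in (-\rho, 0)$, which is case (ii).
\end{itemize}
This proves the ``iff'' with $\rho$ depending only on $\mu_t, \tilde{\mu}_t$.

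\textbf{Simplifying $\rho$.} Let $s \defeq |y|$. Then
\begin{eqnarray*}
A - B & = & (x^2 - s^2)AB, \\
xA - sB & = & \left(x(1-s^2) - s(1-x^2)\right)AB \;=\; (x-s)(1+xs)AB .
\end{eqnarray*}
Factoring $C = (xA - sB)(xA + sB)$ and cancelling $AB$ and $(x-s)$ gives
\begin{eqnarray*}
\rho & = & \frac{2xs(x+s)}{(1+xs)(xA + sB)} \;=\; \frac{2xs(x+s)}{(1+xs)\left(\frac{x}{1-x^2} + \frac{s}{1-s^2}\right)} .
\end{eqnarray*}

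\textbf{Main obstacle: the numerical bound $\rho < 0.34$.} It remains to show $\sup_{0 < s < x < 1} \rho < 0.34$. The crude estimate $xA + sB \geq x + s$ only gives $\rho \leq 2xs/(1+xs) < 1$, so it is not enough. The growth of $A$ as $x \to 1$ has to be used: it kills $\rho$ near $x = 1$, while $\rho \to 0$ as $s \to 0$.

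I would proceed in three steps.
\begin{enumerate}
\item Show that $\rho$ is increasing in $s$ on $(0,x)$, so the supremum is approached on the diagonal $s \to x$.
\item On the diagonal, $\rho \to \phi(x) \defeq x^2(1-x^2)/(1+x^2)$. This is maximized where $x^4 + 2x^2 - 1 = 0$, i.e. $x^2 = \sqrt{2} - 1$. The maximum value is $(\sqrt{2}-1)^2 = 3 - 2\sqrt{2} \approx 0.1716$.
\item Conclude that $\rho < 0.34$, even with room to spare.
\end{enumerate}
If monotonicity in $s$ fails, the fallback is to bound numerator and denominator separately on a grid of $(x,s)$, using the explicit formula above to show the value stays below $0.34$.
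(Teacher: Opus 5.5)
Your reduction is correct and is essentially the paper's argument. The paper also rewrites both orderings via \eqref{eq-ttm1} in terms of the covariance. It then factors the difference as a positive prefactor times $C\,(C-\rho_{\pm})$, with exactly your threshold $\rho = 2\mu_t|\tilde\mu_t|(1-\mu_t^2)(1-\tilde\mu_t^2)/(1-\mu_t^2\tilde\mu_t^2)$.

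\textbf{The gap is in the numerical bound, and it cannot be repaired as stated.} On the diagonal your own formula gives $xA+sB = 2x/(1-x^2)$. Hence $\rho \to 2x^2(1-x^2)/(1+x^2)$, not $x^2(1-x^2)/(1+x^2)$: you dropped a factor $2$. The maximum, at $x^2=\sqrt2-1$, is $2(\sqrt2-1)^2=6-4\sqrt2\approx 0.3431$, which exceeds $0.34$.

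By continuity there are admissible pairs with $\rho>0.34$. For instance $\mu_t=0.645$, $\tilde\mu_t=0.642$ give $\rho\approx 0.3431$, and covariances in $(0.34,\rho)$ are realizable for these edges. Since the ``iff'' pins $\rho$ down uniquely, the bound $\rho<0.34$ is false. What holds, and what the paper's own proof concludes, is $\rho<6-4\sqrt2<0.35$. Your ``room to spare'' came from the slip, and no grid check could have confirmed $0.34$.

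\textbf{Your step 1 is also false.} For fixed $x$, $\partial_s\rho$ has the sign of $1-(3-x^2)s^2+x^2s^4$. This is negative near $s=x$ when, e.g., $x=0.9$; the maximum in $s$ is then at $s\approx 0.763$.

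A correct way to locate the supremum is as follows. With $u=x^2$, $v=s^2$, stationarity of $\ln\rho$ reads $1-\frac{2u}{1-u}+\frac{2uv}{1-uv}=0$, together with the same equation with $u$ and $v$ swapped. Subtracting forces $u=v$, so there is no critical point in the open triangle $0<s<x<1$. Since $\rho\le 2\min\{s,1-x^2\}$, $\rho$ vanishes on the edges $s=0$ and $x=1$. Over the closed triangle the maximum is therefore attained only on the excluded diagonal, where it equals $\max_x 2x^2(1-x^2)/(1+x^2)=6-4\sqrt2$. This gives the sharp strict bound $0<\rho<6-4\sqrt2$, and the constant in the statement should be corrected accordingly.
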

  The proof, in Appendix, Section \ref{sec-proof-lem-subopt-greedy}, makes $\rho$ explicit. 

\section{Conclusion}\label{sec-conc}

There have been a number of recent approaches relaxing the key constraint of speculative decoding -- that the output distribution be equal to the target's. While inference speedup was the original intent, a few recent papers also observed experimentally that the equivalent output model can sometimes beat the target when it comes to model quality. In our paper, we have shown that such a remarkable feat -- speeding up inference while getting a better model -- is indeed possible. Our two main bricks are mentored decoding as the formal setting authorizing deviations from the target, and boosting to evaluate the quality of the model produced by mentored decoding. In the course of getting to this result, we derived several new key properties of the mentored decoding setting. Among these, the particular geometric appeal of the total variation case is interesting for the variety of optimal solutions it supports, some of which are very convenient for boosting, but others might as well be relevant for other constraints. We also reached an utterly simple approximation scheme of the optimal solutions for any $f$-divergence, also with interesting ties to boosting, which shows that there exists a data structure independent from the choice of $f$, but which, once computed, can be used for any $f$-divergence to get the two parameters to compute the optimal mentored decoding solution as fast as for speculative decoding. Getting those parameters is done in logarithmic time via our breakpoint data structure. 
While constructing the breakpoints involves sorting probability ratios, this overhead 
is practically negligible: under standard top-$k$ decoding, sorting operates over only 
$k \ll n$ elements already identified by the baseline pipeline, incurring negligible compute 
on accelerators.

Another interesting avenue for future research relies on the boosting part of our paper. The boosting part of our approach is efficient with respect to the canon of AdaBoost: it is self-normalized and can be carried out with a bypass of boosting's famous weight updates. This latter property goes with computing linear correlation coefficients between models, which can be costly when the number of models increases, but at least for a few models it shows that the "architecture" of boosting does not necessarily need to be carved in the computation of the composite model producing the mentored distribution (notwithstanding the risk of numerical approximation errors with weight updates in traditional (Ada)boosting). Given the training cost of even the smallest LLM models, the LLM space -- public or private -- has plenty stored models for which boosting directly applies, but not the mentored decoding framework which originally applies to two models only. Extending mentored decoding beyond the (1 drafter, 1 target) setting is an interesting question.

\section*{Acknowledgments}

The authors thank Ariel Brand, Yishay Mansour, Nir Shabat and Ayala Shaubi-Mann for early discussions on this material.

\bibliography{bibgen}

@misc{chen2023accelerating,
      title={Accelerating Large Language Model Decoding with Speculative Sampling}, 
      author={Charlie Chen and Sebastian Borgeaud and Geoffrey Irving and Jean-Baptiste Lespiau and Laurent Sifre and John Jumper},
      year={2023},
      eprint={2302.01318},
      archivePrefix={arXiv},
      primaryClass={cs.CL}
}

@inproceedings{leviathan2023fast,
  title={Fast inference from transformers via speculative decoding},
  author={Leviathan, Yaniv and Kalman, Matan and Matias, Yossi},
  booktitle={International Conference on Machine Learning},
  pages={19274--19286},
  year={2023},
  organization={PMLR}
}

@BOOK{anMO,
       TITLE = "Methods of Information Geometry",
       AUTHOR = "{Shun-ichi} Amari and Hiroshi Nagaoka",
       YEAR = "2000",
       PUBLISHER = "Oxford University Press"}

@misc{li2026targetimitationcollaborationspeculative,
      title={Beyond the Target: From Imitation to Collaboration in Speculative Decoding}, 
      author={Jinze Li and Yixing Xu and Guanchen Li and Jinfeng Xu and Shuo Yang and Yang Zhang and Xuanwu Yin and Dong Li and Edith C. H. Ngai and Emad Barsoum},
      year={2026},
      eprint={2605.24793},
      archivePrefix={arXiv},
      primaryClass={cs.CL},
      url={https://arxiv.org/abs/2605.24793}, 
}

@inproceedings{
byun2025model,
title={3-Model Speculative Decoding},
author={Sanghyun Byun and Mohanad Odema and Jung Ick Guack and Baisub Lee and Jacob Song and Woo Seong Chung},
booktitle={NeurIPS 2025 Workshop on Structured Probabilistic Inference {\&} Generative Modeling},
year={2025},
url={https://openreview.net/forum?id=2e2RCF4Ncc}
}

@inproceedings{wang-etal-2025-alignment,
    title = "Alignment-Augmented Speculative Decoding with Alignment Sampling and Conditional Verification",
    author = "Wang, Jikai  and
      Tian, Zhenxu  and
      Li, Juntao  and
      Xia, Qingrong  and
      Duan, Xinyu  and
      Wang, Zhefeng  and
      Huai, Baoxing  and
      Zhang, Min",
    editor = "Christodoulopoulos, Christos  and
      Chakraborty, Tanmoy  and
      Rose, Carolyn  and
      Peng, Violet",
    booktitle = "Proceedings of the 2025 Conference on Empirical Methods in Natural Language Processing",
    month = nov,
    year = "2025",
    address = "Suzhou, China",
    publisher = "Association for Computational Linguistics",
    url = "https://aclanthology.org/2025.emnlp-main.343/",
    doi = "10.18653/v1/2025.emnlp-main.343",
    pages = "6751--6763",
    ISBN = "979-8-89176-332-6"
}

@inproceedings{
bachmann2025judge,
title={Judge Decoding: Faster Speculative Sampling Requires Going Beyond Model Alignment},
author={Gregor Bachmann and Sotiris Anagnostidis and Albert Pumarola and Markos Georgopoulos and Artsiom Sanakoyeu and Yuming Du and Edgar Sch{\"o}nfeld and Ali Thabet and Jonas K Kohler},
booktitle={The Thirteenth International Conference on Learning Representations},
year={2025},
url={https://openreview.net/forum?id=mtSSFiqW6y}
}

@inproceedings{zhang-etal-2025-draft,
    title = "Draft Model Knows When to Stop: Self-Verification Speculative Decoding for Long-Form Generation",
    author = "Zhang, Ziyin  and
      Xu, Jiahao  and
      Liang, Tian  and
      Chen, Xingyu  and
      He, Zhiwei  and
      Wang, Rui  and
      Tu, Zhaopeng",
    editor = "Christodoulopoulos, Christos  and
      Chakraborty, Tanmoy  and
      Rose, Carolyn  and
      Peng, Violet",
    booktitle = "Proceedings of the 2025 Conference on Empirical Methods in Natural Language Processing",
    month = nov,
    year = "2025",
    address = "Suzhou, China",
    publisher = "Association for Computational Linguistics",
    url = "https://aclanthology.org/2025.emnlp-main.844/",
    doi = "10.18653/v1/2025.emnlp-main.844",
    pages = "16685--16697",
    ISBN = "979-8-89176-332-6"
}

@inproceedings{pankratov-alistarh-2026-speculative,
    title = "Speculative Decoding Speed-of-Light: Optimal Lower Bounds via Branching Random Walks",
    author = "Pankratov, Sergey  and
      Alistarh, Dan",
    editor = "Demberg, Vera  and
      Inui, Kentaro  and
      Marquez, Llu{\'i}s",
    booktitle = "Proceedings of the 19th Conference of the {E}uropean Chapter of the {A}ssociation for {C}omputational {L}inguistics (Volume 1: Long Papers)",
    month = mar,
    year = "2026",
    address = "Rabat, Morocco",
    publisher = "Association for Computational Linguistics",
    url = "https://aclanthology.org/2026.eacl-long.301/",
    doi = "10.18653/v1/2026.eacl-long.301",
    pages = "6404--6418",
    ISBN = "979-8-89176-380-7"
}

@inproceedings{
zhong2025speeding,
title={Speeding up Speculative Decoding via Sequential Approximate Verification},
author={Meiyu Zhong and Noel Teku and Ravi Tandon},
booktitle={ES-FoMo III: 3rd Workshop on Efficient Systems for Foundation Models},
year={2025},
url={https://openreview.net/forum?id=Y4KcfotBkf}
}

@inproceedings{
liu2026speculative,
title={Speculative Decoding: Performance or Illusion?},
author={Xiaoxuan Liu and Jiaxiang Yu and Jongseok Park and Ion Stoica and Alvin Cheung},
booktitle={Ninth Conference on Machine Learning and Systems},
year={2026},
url={https://openreview.net/forum?id=fzkqtezFEi}
}

@article{DBLP:journals/ai/NockN07,
  author       = {Richard Nock and
                  Frank Nielsen},
  title        = {A $\mathbb{R}$eal generalization of discrete {AdaBoost}},
  journal      = {Artif. Intell.},
  volume       = {171},
  number       = {1},
  pages        = {25--41},
  year         = {2007},
  url          = {https://doi.org/10.1016/j.artint.2006.10.014},
  doi          = {10.1016/J.ARTINT.2006.10.014},
  bibsource    = {dblp computer science bibliography, https://dblp.org}
}

@article{kTN,
 author = "D.-E. Knuth",
 journal = {The American Mathematical Monthly},
 number = {5},
 pages = {403--422},
 publisher = {Mathematical Association of America},
 title = {Two Notes on Notation},
 volume = {99},
 year = {1992}
}

@article{10.1214/aos/1024691352,
author = {Peter Bartlett and Yoav Freund and Wee Sun Lee and Robert E. Schapire},
title = {{Boosting the margin: a new explanation for the effectiveness of voting methods}},
volume = {26},
journal = {The Annals of Statistics},
number = {5},
publisher = {Institute of Mathematical Statistics},
pages = {1651 -- 1686},
year = {1998},
doi = {10.1214/aos/1024691352},
URL = {https://doi.org/10.1214/aos/1024691352}
}

@article{DBLP:journals/theoretics/AlonGHM23,
  author       = {Noga Alon and
                  Alon Gonen and
                  Elad Hazan and
                  Shay Moran},
  title        = {Boosting Simple Learners},
  journal      = {TheoretiCS},
  volume       = {2},
  eid          = {8},
  year         = {2023},
  url          = {https://doi.org/10.46298/theoretics.23.8},
  doi          = {10.46298/THEORETICS.23.8},
  bibsource    = {dblp computer science bibliography, https://dblp.org}
}

@inproceedings{DBLP:conf/aaai/AmidNNW24,
  author       = {Ehsan Amid and
                  Frank Nielsen and
                  Richard Nock and
                  Manfred K. Warmuth},
  editor       = {Michael J. Wooldridge and
                  Jennifer G. Dy and
                  Sriraam Natarajan},
  title        = {Optimal Transport with Tempered Exponential Measures},
  booktitle    = {Thirty-Eighth {AAAI} Conference on Artificial Intelligence, {AAAI}
                  2024, Thirty-Sixth Conference on Innovative Applications of Artificial
                  Intelligence, {IAAI} 2024, Fourteenth Symposium on Educational Advances
                  in Artificial Intelligence, {EAAI} 2024, February 20-27, 2024, Vancouver,
                  Canada},
  pages        = {10838--10846},
  publisher    = {{AAAI} Press},
  year         = {2024},
  url          = {https://doi.org/10.1609/aaai.v38i10.28957},
  doi          = {10.1609/AAAI.V38I10.28957},
  bibsource    = {dblp computer science bibliography, https://dblp.org}
}

@inproceedings{DBLP:conf/aistats/AmidNW23,
  author       = {Ehsan Amid and
                  Richard Nock and
                  Manfred K. Warmuth},
  editor       = {Francisco J. R. Ruiz and
                  Jennifer G. Dy and
                  Jan{-}Willem van de Meent},
  title        = {Clustering above Exponential Families with Tempered Exponential Measures},
  booktitle    = {International Conference on Artificial Intelligence and Statistics,
                  25-27 April 2023, Palau de Congressos, Valencia, Spain},
  series       = {Proceedings of Machine Learning Research},
  volume       = {206},
  pages        = {2994--3017},
  publisher    = {{PMLR}},
  year         = {2023},
  url          = {https://proceedings.mlr.press/v206/amid23a.html},
  bibsource    = {dblp computer science bibliography, https://dblp.org}
}

@inproceedings{DBLP:conf/nips/NockAW23,
  author       = {Richard Nock and
                  Ehsan Amid and
                  Manfred K. Warmuth},
  editor       = {Alice Oh and
                  Tristan Naumann and
                  Amir Globerson and
                  Kate Saenko and
                  Moritz Hardt and
                  Sergey Levine},
  title        = {Boosting with Tempered Exponential Measures},
  booktitle    = {Advances in Neural Information Processing Systems 36: Annual Conference
                  on Neural Information Processing Systems 2023, NeurIPS 2023, New Orleans,
                  LA, USA, December 10 - 16, 2023},
  year         = {2023},
  url          = {http://papers.nips.cc/paper\_files/paper/2023/hash/82d3258eb58ceac31744a88005b7ddef-Abstract-Conference.html},
  bibsource    = {dblp computer science bibliography, https://dblp.org}
}

@BOOK{nGT,
      TITLE = "Generalized thermostatistics",
      AUTHOR = "J. Naudts",
      YEAR = "2011",
      PUBLISHER = "Springer"}

@article{fdiv-AliSilvey-1966,
  title={A general class of coefficients of divergence of one distribution from another},
  author={Ali, Syed Mumtaz and Silvey, Samuel D},
  journal={Journal of the Royal Statistical Society: Series B (Methodological)},
  volume={28},
  number={1},
  pages={131--142},
  year={1966},
  publisher={Wiley Online Library}
}

@ARTICLE{cEI,
      TITLE = "Eine informationstheoretische Ungleichung und ihre Anwendung auf den Beweis der Ergodizitat von {M}arkoffschen Ketten",
      AUTHOR = "I. Csisz{\'a}r",
      JOURNAL = "Magyar. Tud. Akad. Mat. Kutato Int. Kozl.",
      VOLUME = "8",
      PAGES = "85--108",
      YEAR = "1963"}

@ARTICLE{cAC,
      TITLE = "A class of measures of informativity of observation channels",
      AUTHOR = "I. Csisz{\'a}r",
      JOURNAL = "Periodica Mathematica Hungarica",
      VOLUME = "2",
      PAGES = "191--213",
      YEAR = "1972"}

@BOOK{sfBF,
       TITLE = "Boosting, Foundations and Algorithms",
       AUTHOR = "R.-E. Schapire and Y. Freund",
       YEAR = "2012",
       PUBLISHER = "MIT Press"}

@BOOK{rCA,
      TITLE = "Convex {A}nalysis",
      AUTHOR = "R. T. Rockafellar",
      YEAR = "1970",
      PUBLISHER = "Princeton University Press"}

@article{stern2018blockwise,
  title={Blockwise parallel decoding for deep autoregressive models},
  author={Stern, Mitchell and Shazeer, Noam and Uszkoreit, Jakob},
  journal={Advances in Neural Information Processing Systems},
  volume={31},
  year={2018}
}

@article{xia2022speculative,
  title={Speculative Decoding: Exploiting Speculative Execution for Accelerating Seq2seq Generation},
  author={Xia, Heming and Ge, Tao and Wang, Peiyi and Chen, Si-Qing and Wei, Furu and Sui, Zhifang},
  journal={Findings of the Association for Computational Linguistics: EMNLP 2023},
  pages={3909--3925},
  year={2023}
}

@article{yin2024theoretical,
  title={A theoretical perspective for speculative decoding algorithm},
  author={Yin, Ming and Chen, Minshuo and Huang, Kaixuan and Wang, Mengdi},
  journal={Advances in Neural Information Processing Systems},
  volume={37},
  pages={128082--128117},
  year={2024}
}

@article{sun2023spectr,
  title={SpecTr: Fast Speculative Decoding via Optimal Transport},
  author={Sun, Ziteng and Suresh, Ananda Theertha and Ro, Jae Hun and Beirami, Ahmad and Jain, Himanshu and Yu, Felix},
  journal={Advances in Neural Information Processing Systems},
  volume={36},
  pages={30222--30242},
  year={2023}
}

@inproceedings{hu2025towards,
  title={Towards Optimal Multi-Draft Speculative Decoding},
  author={Hu, Zhengmian and Zheng, Tong and Viswanathan, Vignesh and Chen, Ziyi and Rossi, Ryan and Wu, Yihan and Manocha, Dinesh and Huang, Heng},
  booktitle={International Conference on Learning Representations (ICLR)},
  volume={2025},
  pages={3181--3203},
  year={2025}
}

@inproceedings{miao2024specinfer,
  title={Specinfer: Accelerating large language model serving with tree-based speculative inference and verification},
  author={Miao, Xupeng and Oliaro, Gabriele and Zhang, Zhihao and Cheng, Xinhao and Wang, Zeyu and Zhang, Zhengxin and Wong, Rae Ying Yee and Zhu, Alan and Yang, Lijie and Shi, Xiaoxiang and others},
  booktitle={Proceedings of the 29th ACM International Conference on Architectural Support for Programming Languages and Operating Systems, Volume 3},
  pages={932--949},
  year={2024}
}

@article{chen2024sequoia,
  title={Sequoia: Scalable and robust speculative decoding},
  author={Chen, Zhuoming and May, Avner and Svirschevski, Ruslan and Huang, Yuhsun and Ryabinin, Max and Jia, Zhihao and Chen, Beidi},
  journal={Advances in Neural Information Processing Systems},
  volume={37},
  pages={129531--129563},
  year={2024}
}

@article{cai2024medusa,
  title={Medusa: Simple LLM Inference Acceleration Framework with Multiple Decoding Heads},
  author={Cai, Tianle and Li, Yuhong and Geng, Zhengyang and Peng, Hongwu and Lee, Jason D and Chen, Deming and Dao, Tri},
  journal={arXiv preprint arXiv:2401.10774},
  year={2024}
}

@article{li2024eagle,
  title={EAGLE: Speculative Sampling Requires Rethinking Feature Uncertainty},
  author={Li, Yuhui and Wei, Fangyun and Zhang, Chao and Zhang, Hongyang},
  journal={International Conference on Machine Learning (ICML)},
  year={2024}
}

@inproceedings{li2024eagle2,
  title={Eagle-2: Faster inference of language models with dynamic draft trees},
  author={Li, Yuhui and Wei, Fangyun and Zhang, Chao and Zhang, Hongyang},
  booktitle={Proceedings of the 2024 conference on empirical methods in natural language processing},
  pages={7421--7432},
  year={2024}
}

@article{li2026eagle,
  title={Eagle-3: Scaling up inference acceleration of large language models via training-time test},
  author={Li, Yuhui and Wei, Fangyun and Zhang, Chao and Zhang, Hongyang},
  journal={Advances in Neural Information Processing Systems},
  volume={38},
  pages={136737--136756},
  year={2026}
}

@article{gloeckle2024better,
  title={Better \& faster large language models via multi-token prediction},
  author={Gloeckle, Fabian and Idrissi, Badr Youbi and Rozi{\`e}re, Baptiste and Lopez-Paz, David and Synnaeve, Gabriel},
  journal={arXiv preprint arXiv:2404.19737},
  year={2024}
}

@inproceedings{zhang2024draft,
  title={Draft\& verify: Lossless large language model acceleration via self-speculative decoding},
  author={Zhang, Jun and Wang, Jue and Li, Huan and Shou, Lidan and Chen, Ke and Chen, Gang and Mehrotra, Sharad},
  booktitle={Proceedings of the 62nd Annual Meeting of the Association for Computational Linguistics (Volume 1: Long Papers)},
  pages={11263--11282},
  year={2024}
}

@article{liu2024kangaroo,
  title={Kangaroo: Lossless self-speculative decoding for accelerating llms via double early exiting},
  author={Liu, Fangcheng and Tang, Yehui and Liu, Zhenhua and Ni, Yunsheng and Tang, Duyu and Han, Kai and Wang, Yunhe},
  journal={Advances in Neural Information Processing Systems},
  volume={37},
  pages={11946--11965},
  year={2024}
}

@article{kim2023speculative,
  title={Speculative decoding with big little decoder},
  author={Kim, Sehoon and Mangalam, Karttikeya and Moon, Suhong and Malik, Jitendra and Mahoney, Michael W and Gholami, Amir and Keutzer, Kurt},
  journal={Advances in Neural Information Processing Systems},
  volume={36},
  pages={39236--39256},
  year={2023}
}

@article{fu2024break,
  title={Break the sequential dependency of llm inference using lookahead decoding},
  author={Fu, Yichao and Bailis, Peter and Stoica, Ion and Zhang, Hao},
  journal={arXiv preprint arXiv:2402.02057},
  year={2024}
}

@inproceedings{he2024rest,
  title={Rest: Retrieval-based speculative decoding},
  author={He, Zhenyu and Zhong, Zexuan and Cai, Tianle and Lee, Jason and He, Di},
  booktitle={Proceedings of the 2024 Conference of the North American Chapter of the Association for Computational Linguistics: Human Language Technologies (Volume 1: Long Papers)},
  pages={1582--1595},
  year={2024}
}

@article{yang2023inference,
  title={Inference with reference: Lossless acceleration of large language models},
  author={Yang, Nan and Ge, Tao and Wang, Liang and Jiao, Binxing and Jiang, Daxin and Yang, Linjun and Majumder, Rangan and Wei, Furu},
  journal={arXiv preprint arXiv:2304.04487},
  year={2023}
}

@article{chen2026dflash,
  title={DFlash: Block Diffusion for Flash Speculative Decoding},
  author={Chen, Jian and Liang, Yesheng and Liu, Zhijian},
  journal={arXiv preprint arXiv:2602.06036},
  year={2026}
}

@inproceedings{zhou2024distillspec,
  title={DistillSpec: Improving Speculative Decoding via Knowledge Distillation},
  author={Zhou, Yongchao and Lyu, Kaifeng and Rawat, Ankit Singh and Menon, Aditya Krishna and Rostamizadeh, Afshin and Kumar, Sanjiv and Kagy, Jean-Fran{\c{c}}ois and Agarwal, Rishabh},
  booktitle={International Conference on Learning Representations (ICLR)},
  year={2024}
}

@misc{Tran-Thien_2023,
  title={An optimal lossy variant of speculative decoding},
  author={Tran-Thien, Vivien},
  howpublished={\url{https://vivien000.github.io/blog/journal/a-provably-optimal-lossy-variant-of-speculative-decoding.html}},
  journal={Unsupervised Thoughts (blog)},
  year={2023}
}

@inproceedings{yuan2024speculative,
  title={Speculative Contrastive Decoding},
  author={Yuan, Hongyi and Lu, Keming and Huang, Fei and Yuan, Zheng and Zhou, Chang},
  booktitle={Proceedings of the 62nd Annual Meeting of the Association for Computational Linguistics (Volume 2: Short Papers)},
  pages={56--64},
  year={2024}
}

@inproceedings{narasimhan2025faster,
  title={Faster cascades via speculative decoding},
  author={Narasimhan, Harikrishna and Jitkrittum, Wittawat and Rawat, Ankit Singh and Kim, Seungyeon and Gupta, Neha and Menon, Aditya Krishna and Kumar, Sanjiv},
  booktitle={International Conference on Learning Representations},
  volume={2025},
  pages={44949--44987},
  year={2025}
}

@article{hao2026cactus,
  title={Cactus: Accelerating Auto-Regressive Decoding with Constrained Acceptance Speculative Sampling},
  author={Hao, Yongchang and Mou, Lili},
  journal={International Conference on Learning Representations (ICLR)},
  year={2026}
}

@article{xia2026practical,
  title={A Practical Investigation of Training-free Relaxed Speculative Decoding},
  author={Xia, Guoxuan and Ribar, Luka and Balanca, Paul},
  journal={arXiv preprint arXiv:2607.08690},
  year={2026}
}

@inproceedings{holsman2025fuzzy,
  title={Fuzzy speculative decoding for a tunable accuracy-runtime tradeoff},
  author={Holsman, Maximilian and Huang, Yukun and Dhingra, Bhuwan},
  booktitle={Findings of the Association for Computational Linguistics: ACL 2025},
  pages={26257--26273},
  year={2025}
}

@inproceedings{wang2025diversed,
title={{DIVERSED}: Relaxed Speculative Decoding via Dynamic Ensemble Verification},
author={Ziyi Wang and Siva Rajesh Kasa and Ankith M S and Santhosh Kumar Kasa and Jiaru Zou and Nan Jiang and Sumit Negi and Ruqi Zhang and Qifan Song},
booktitle={NeurIPS 2025 Workshop on Efficient Reasoning},
year={2025},
url={https://openreview.net/forum?id=yrkf0GxTe7}
}

@inproceedings{DBLP:conf/icml/LiaoXD0MSSX25,
  author       = {Baohao Liao and
                  Yuhui Xu and
                  Hanze Dong and
                  Junnan Li and
                  Christof Monz and
                  Silvio Savarese and
                  Doyen Sahoo and
                  Caiming Xiong},
  editor       = {Aarti Singh and
                  Maryam Fazel and
                  Daniel Hsu and
                  Simon Lacoste{-}Julien and
                  Felix Berkenkamp and
                  Tegan Maharaj and
                  Kiri Wagstaff and
                  Jerry Zhu},
  title        = {Reward-Guided Speculative Decoding for Efficient {LLM} Reasoning},
  booktitle    = {Forty-second International Conference on Machine Learning, {ICML}
                  2025, Vancouver, BC, Canada, July 13-19, 2025},
  series       = {Proceedings of Machine Learning Research},
  volume       = {267},
  publisher    = {{PMLR} / OpenReview.net},
  year         = {2025},
  url          = {https://proceedings.mlr.press/v267/liao25f.html},
  bibsource    = {dblp computer science bibliography, https://dblp.org}
}

@inproceedings{DBLP:conf/aaai/QinHPCS25,
  author       = {Zongyue Qin and
                  Zifan He and
                  Neha Prakriya and
                  Jason Cong and
                  Yizhou Sun},
  editor       = {Toby Walsh and
                  Julie Shah and
                  Zico Kolter},
  title        = {Dynamic-Width Speculative Beam Decoding for {LLM} Inference},
  booktitle    = {Thirty-Ninth {AAAI} Conference on Artificial Intelligence, Thirty-Seventh
                  Conference on Innovative Applications of Artificial Intelligence,
                  Fifteenth Symposium on Educational Advances in Artificial Intelligence,
                  {AAAI} 2025, Philadelphia, PA, USA, February 25 - March 4, 2025},
  pages        = {25056--25064},
  publisher    = {{AAAI} Press},
  year         = {2025},
  url          = {https://doi.org/10.1609/aaai.v39i23.34690},
  doi          = {10.1609/AAAI.V39I23.34690},
  bibsource    = {dblp computer science bibliography, https://dblp.org}
}

@article{zzrsMC,
  author    = {J. Zhu and H. Zou and S. Rosset and T. Hastie},
  title     = "Multi-class {A}daboost",
  journal   = "Statistics and Its Interface",
  volume    = 2,
  pages = "349--360",
  year      = {2009}
}

\newpage
\clearpage
\renewcommand\thesection{\Roman{section}}
\renewcommand\thesubsection{\thesection.\arabic{subsection}}
\renewcommand\thesubsubsection{\thesection.\thesubsection.\arabic{subsubsection}}

\renewcommand*{\thetheorem}{\Alph{theorem}}
\renewcommand*{\thelemma}{\Alph{lemma}}
\renewcommand*{\thecorollary}{\Alph{corollary}}

\renewcommand{\thetable}{A\arabic{table}}


\begin{center}
\Huge{\supplementLong}
\end{center}

This is the Appendix to paper "\papertitle". To
differentiate with the numberings in the main file, the numbering of
Theorems, etc. is letter-based (A, B, ...).

\section*{Table of contents}

\noindent \textbf{Proofs} \hrulefill Pg \pageref{sec-app-proofs}\\

\noindent $\hookrightarrow$ Proof of Lemma \ref{lem-equiv-sol} \hrulefill Pg \pageref{sec-proof-lem-equiv-sol}\\
\noindent $\hookrightarrow$ Proof of Lemma \ref{lem-slater} \hrulefill Pg \pageref{sec-proof-lem-slater}\\
\noindent $\hookrightarrow$ Proof of Theorem \ref{th-opt1} and Lemma \ref{lemLAlphaBeta} \hrulefill Pg \pageref{sec-proof-th-opt1}\\
\noindent $\hookrightarrow$ Proof of Lemma \ref{lemPOST} \hrulefill Pg \pageref{sec-proof-lemPOST}\\
\noindent $\hookrightarrow$ Proof of Theorem \ref{thmAPPROX1} \hrulefill Pg \pageref{sec-proof-thmAPPROX1}\\
\noindent $\hookrightarrow$ Proof of Theorem \ref{thmUniversal} \hrulefill Pg \pageref{sec-proof-thmUniversal}\\
\noindent $\hookrightarrow$ Proof of Theorem \ref{th-boost-P} \hrulefill Pg \pageref{sec-proof-th-boost-P}\\
\noindent $\hookrightarrow$ Proof of Lemma \ref{lem-tempered-mentor} \hrulefill Pg \pageref{sec-proof-lem-tempered-mentor}\\
\noindent $\hookrightarrow$ Proof of Theorem \ref{thm-boost-and-mentor-TV} \hrulefill Pg \pageref{sec-proof-thm-boost-and-mentor-TV}\\
\noindent $\hookrightarrow$ Proof of Theorem \ref{thm-boost-plus-md-general-f} \hrulefill Pg \pageref{sec-proof-thm-boost-plus-md-general-f}\\
\noindent $\hookrightarrow$ Proof of Theorem \ref{thm-cab-from-c-breakpoints} \hrulefill Pg \pageref{sec-proof-thm-cab-from-c-breakpoints}\\
\noindent $\hookrightarrow$ Proof of Lemma \ref{lem-b-from-a-and-c-breakpoints} \hrulefill Pg \pageref{sec-proof-lem-b-from-a-and-c-breakpoints}\\
\noindent $\hookrightarrow$ Proof of Lemma \ref{lem-cont-ab} \hrulefill Pg \pageref{sec-proof-lem-cont-ab}\\
\noindent $\hookrightarrow$ Proof of Theorem \ref{thm-opt-f} \hrulefill Pg \pageref{sec-proof-thm-opt-f}\\
\noindent $\hookrightarrow$ Proof of Lemma \ref{lem-top-k} \hrulefill Pg \pageref{sec-proof-lem-top-k}\\
\noindent $\hookrightarrow$ Proof of Lemma \ref{lem-ba-exp} \hrulefill Pg \pageref{sec-proof-lem-ba-exp}\\
\noindent $\hookrightarrow$ Proof of Lemma \ref{lem-subopt-greedy} \hrulefill Pg \pageref{sec-proof-lem-subopt-greedy}\\

\newpage

\section{Proofs}\label{sec-app-proofs}

\subsection{Proof of Lemma \ref{lem-equiv-sol}}\label{sec-proof-lem-equiv-sol}

  Suppose $\ve{\pi} \in \textsc{md}^1_f(\ve{p}, \ve{q}; D)$. Then with the choice $\ve{r} \defeq \min \{\ve{1}, \ve{\pi} \oslash \ve{p}\}\in [0,1]^n$ \eqref{eq-r-s-from-pi}, we get $\ve{p}^\top \ve{r} = \ve{1}^\top \min\{\ve{\pi}, \ve{p}\} = 1 - D_{\mathrm{TV}} (\ve{\pi} \|\ve{p})$. We also trivially have $\ve{s} \in \Delta_n$ so the couple $(\ve{r}, \ve{s})$ is feasible for \eqref{def-pb-f-md-2-general}. Suppose it is not optimal and build $\ve{\pi}'$ from a better solution $(\ve{r}', \ve{s}')$ -- thus with $\ve{p}^\top \ve{r}' > \ve{p}^\top \ve{r}$ -- via \eqref{eq-pi-from-r-s}. For any $i\in [n]$, we have $p_i r'_i \leq p_i$ but also $p_i r'_i \leq \pi'_i$ because of \eqref{eq-pi-from-r-s}. So we have $D_{\mathrm{TV}} (\ve{\pi}' \|\ve{p}) = 1 - \ve{1}^\top \min\{\ve{\pi}', \ve{p}\} \leq 1 - \ve{p}^\top \ve{r}' < 1 - \ve{p}^\top \ve{r} = D_{\mathrm{TV}} (\ve{\pi} \|\ve{p})$, which contradicts the fact that $\ve{\pi} \in \textsc{md}^1_f(\ve{p}, \ve{q}; D)$. So we have $(\ve{r}, \ve{s}) \in \textsc{md}^2_f(\ve{p}, \ve{q}; D)$.

  Respectively, suppose $(\ve{r}, \ve{s}) \in \textsc{md}^2_f(\ve{p}, \ve{q}; D)$. Clearly $\ve{\pi}$ as per \eqref{eq-pi-from-r-s} is feasible for \eqref{def-pb-f-md-1-general}. Suppose it is not optimal and build this time $(\ve{r}', \ve{s}')$ from a better solution $\ve{\pi}'$ -- thus with $D_{\mathrm{TV}} (\ve{\pi}' \|\ve{p}) < D_{\mathrm{TV}} (\ve{\pi} \|\ve{p})$ -- via \eqref{eq-r-s-from-pi}. This time, we directly have from the construction of $\ve{r}'$ the chain of (in)equalities $1 - \ve{p}^\top \ve{r}' = D_{\mathrm{TV}} (\ve{\pi}' \|\ve{p}) < D_{\mathrm{TV}} (\ve{\pi} \|\ve{p}) = 1 - \ve{p}^\top \ve{r}$, resulting in $- \ve{p}^\top \ve{r}' <  - \ve{p}^\top \ve{r}$, a contradiction with the fact that $(\ve{r}, \ve{s}) \in \textsc{md}^2_f(\ve{p}, \ve{q}; D)$. So we have $\ve{\pi} \in \textsc{md}^1_f(\ve{p}, \ve{q}; D)$, which ends the main part of the proof of Lemma \ref{lem-equiv-sol}. We easily check the two equivalent formulations for $\ve{s}$ since $\pi_i - p_i r_i = \pi_i - \min\{\pi_i, p_i\} = \max\{0, \pi_i - p_i\}, \forall i \in [n]$.

\subsection{Proof of Lemma  \ref{lem-slater}}\label{sec-proof-lem-slater}

We consider \eqref{def-pb-f-md-1-general} (there is no difficulty in reparameterizing the proof using Lemma \ref{lem-equiv-sol} for \eqref{def-pb-f-md-2-general}). Since $f$-divergences satisfy the identity of indiscernibles, $\ve{q} > \ve{0}$ implies the existence of $\ve{q} \neq \tilde{\ve{\pi}} \in \Delta_n$ such that $\tilde{\ve{\pi}} > 0$ and $D_{f} (\tilde{\ve{\pi}} \|\ve{q}) \leq D$ (and we also have $\tilde{\ve{\pi}} \neq \ve{p}$). We then check Slater's constraint qualification by picking any $0 < \delta < \min_i \{\min\{p_i, \tilde{\pi}_i\}/(p_i + \tilde{\pi}_i)\}$, and choosing
  \begin{eqnarray}
    \ve{t} & \defeq & \delta \cdot (\ve{p} + \tilde{\ve{\pi}}).
  \end{eqnarray}
  For this choice and that of $\delta$ we get $\ve{t} < \ve{p}$ and for the choice (note that $\delta < 1/2$)
  \begin{eqnarray*}
\ve{s} & \defeq & \frac{1-\delta}{1-2\delta} \cdot \tilde{\ve{\pi}} - \frac{\delta}{1-2\delta} \cdot \ve{p},
  \end{eqnarray*}
we have $\ve{1}^\top \ve{s} = 1$ but more importantly $\ve{s}>\ve{0}$, so Slater's constraint qualification are satisfied.

\subsection{Proof of Theorem  \ref{th-opt1} and Lemma  \ref{lemLAlphaBeta}}\label{sec-proof-th-opt1}

For readability reasons, we reparameterize \eqref{def-pb-f-md-2-general} as:
\begin{eqnarray}
      \textsc{md}'_f(\ve{p}, \ve{q}; D) \defeq \arg\min _{\ve{0}\leq \ve{t} \leq \ve{p}, \ve{s} \in  \Delta_n} -\ve{1}^\top \ve{t}  \quad \mbox{s.t. } D_{f} (\ve{t} + (1-\ve{1}^\top \ve{t})\cdot \ve{s} \|\ve{q}) \leq D. \label{def-pb-f-md-general-alt}
\end{eqnarray}
Mentored decoding's $\ve{r}$ in \eqref{def-pb-f-md-2-general} is obtained as $\ve{r} \defeq \ve{t} \oslash \ve{p}$.\\

\noindent \textbf{($\textsc{md}'_f\rightarrow\textsc{is}_f$)} We have the Lagrangian,
\begin{eqnarray}
\mathcal{L}_1(\ve{t}, \ve{s} ; \mu, \ve{\chi}, \ve{\nu}) & = & -\ve{1}^\top \ve{t} + \lambda \cdot( D_{f} (\ve{t} + (1-\ve{1}^\top \ve{t})\cdot \ve{s}  \|\ve{q}) - D) + \mu\cdot(\ve{1}^\top \ve{s} - 1) \nonumber\\
 & & + \ve{\chi}^\top -\ve{s} + \ve{\nu}^\top (\ve{t} - \ve{p}). \label{defLag}
\end{eqnarray}
We have KKT the conditions (using notations from \eqref{def-pb-f-md-2-general} and \eqref{defLag})
  \begin{eqnarray}
    \ve{t} & \leq & \ve{p}, \label{kkt1}\\
    \ve{s} & \geq & \ve{0}, \label{kkt2}\\
    \ve{1}^\top \ve{s} & = & 1, \label{kkt3}\\
    \ve{\chi} & \geq & \ve{0}, \label{kkt4}\\
    \ve{\nu} & \geq & \ve{0}, \label{kkt5}\\
    \ve{\chi} \odot \ve{s} & = & \ve{0}, \label{kkt6}\\
    \ve{\nu} \odot (\ve{t} - \ve{p}) & = & \ve{0}, \label{kkt7}\\
    \nabla_{\ve{t}} \mathcal{L}_1 = \nabla_{\ve{s}} \mathcal{L}_1 & = & \ve{0}, \label{kkt8}\\
    \lambda & \geq & 0, \label{kkt9}\\
    \lambda \cdot( D_{f} (\ve{t} + (1-\ve{1}^\top \ve{t})\cdot \ve{s}  \|\ve{q}) - D) & = & 0, \label{kkt85}\\
    D_{f} (\ve{t} + (1-\ve{1}^\top \ve{t})\cdot \ve{s}  \|\ve{q}) & \leq & D.  \label{kkt10}
  \end{eqnarray}
Denote for short 
\begin{eqnarray}
\pi_i & \defeq & t_i + (1-\ve{1}^\top \ve{t}) \cdot s_i. \label{defPI1}
\end{eqnarray}
\eqref{kkt8} is equivalent to:
\begin{eqnarray}
  \frac{\partial \mathcal{L}_1}{\partial t_i} = \lambda \cdot \sum_j (-f') \left(\frac{\pi_j}{q_j}\right)\cdot s_j - \lambda \cdot (-f') \left(\frac{\pi_i}{q_i}\right)  - 1 +\nu_i & = & 0, \forall i, \label{eq-stat1}\\
  \frac{\partial \mathcal{L}_1}{\partial s_i} = \mu - \lambda \cdot (-f') \left(\frac{\pi_i}{q_i}\right) \cdot (1-\ve{1}^\top \ve{t}) - \chi_i & = & 0, \forall i. \label{eq-stat2}
\end{eqnarray}
We have a first Lemma.
\begin{lemma}\label{lemR1}
If $D < D_{f}(\ve{p} \| \ve{q})$ then $\ve{t} \neq \ve{p}$ and $\lambda > 0$ at the optimum.
\end{lemma}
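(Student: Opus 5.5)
Both claims follow from the two ways the optimum can be trivial, and I would treat them in order.

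\emph{Claim $\ve{t}\neq\ve{p}$.} Suppose instead that $\ve{t}=\ve{p}$ at an optimum of \eqref{def-pb-f-md-general-alt}. Then $\ve{1}^\top\ve{t}=1$, so the resampling term vanishes and $\ve{\pi}=\ve{p}$ by \eqref{defPI1}. Feasibility \eqref{kkt10} then gives $D_f(\ve{p}\|\ve{q})\leq D$, which contradicts the hypothesis $D<D_f(\ve{p}\|\ve{q})$. Hence $\ve{t}\neq\ve{p}$, and since $\ve{t}\leq\ve{p}$, $\ve{1}^\top\ve{t}<1$. KKT conditions are necessary here, by Slater (Lemma \ref{lem-slater}).

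\emph{Claim $\lambda>0$.} Suppose $\lambda=0$. The stationarity condition \eqref{eq-stat1} then reduces to $\nu_i=1$ for all $i$. Complementary slackness \eqref{kkt7} forces $t_i=p_i$ for every $i$, i.e. $\ve{t}=\ve{p}$, contradicting the first part. Therefore $\lambda>0$.

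Consistency check: with $\lambda=0$, \eqref{eq-stat2} gives $\chi_i=\mu$ for all $i$. Since $\ve{s}\in\Delta_n$ has some $s_i>0$, \eqref{kkt6} forces $\mu=0$ and hence $\ve{\chi}=\ve{0}$. This causes no contradiction, so the argument rests entirely on \eqref{eq-stat1}.

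\emph{Possible obstacle.} For non-differentiable $f$, the stationarity conditions must be read with subgradients. This is harmless here, because the derivative terms in \eqref{eq-stat1} are multiplied by $\lambda=0$. The only care needed is to state that KKT holds via convexity of $D_f$ in $(\ve{t},\ve{s})$ through the affine-in-$\ve{t}$ parameterization. Otherwise the argument is immediate.
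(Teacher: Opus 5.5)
Your argument is the paper's proof almost verbatim: $\ve{t}=\ve{p}$ forces $\ve{\pi}=\ve{p}$, which violates the constraint; and $\lambda=0$ turns \eqref{eq-stat1} into $\nu_i=1$, so \eqref{kkt7} gives $\ve{t}=\ve{p}$, a contradiction. One small correction to your closing aside: the map $(\ve{t},\ve{s})\mapsto\ve{t}+(1-\ve{1}^\top\ve{t})\ve{s}$ is bilinear rather than jointly affine, so the constraint is not jointly convex in $(\ve{t},\ve{s})$; to justify KKT necessity yourself you would pass to $\ve{w}\defeq(1-\ve{1}^\top\ve{t})\ve{s}$, in which the problem is convex, though for this lemma you may, like the paper, simply take the KKT system as given.
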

\begin{proof}
Proof immediate for $\ve{t} \neq \ve{p}$ because for $\ve{t} = \ve{p}$, $D_{f}(\ve{\pi} \| \ve{q}) = D_{f}(\ve{p} \| \ve{q}) > D$, not feasible in this case. If $\lambda = 0$, we get from \eqref{eq-stat1} $\nu_i = 1 \neq 0, \forall i$ and thus complementary slackness \eqref{kkt7} imposes $\ve{t} = \ve{p}$, impossible since $D < D_{f}(\ve{p} \| \ve{q})$.
\end{proof}
We can thus reorganize \eqref{eq-stat1} and \eqref{eq-stat2} with the complementary slackness conditions \eqref{kkt6}, \eqref{kkt7} to give ($\iver{.}$ is Iverson's bracket):
\begin{eqnarray}
(-f') \left(\frac{\pi_i}{q_i}\right) & = &  \alpha + \underbrace{\frac{\nu_i}{\lambda}\cdot \color{red}{\iver{t_i = p_i}}}_{\geq 0}, \forall i, \quad \mbox{with } \alpha \defeq \expect_{i \sim \ve{s}} \left[(-f') \left(\frac{\pi_i}{q_i}\right)\right] - \frac{1}{\lambda}.\label{eq22bis}\\
   (-f') \left(\frac{\pi_i}{q_i}\right) & = & \beta - \underbrace{\frac{ \chi_i }{\lambda \cdot (1 - \ve{1}^\top \ve{t})}\cdot \color{red}{\iver{s_i = 0}}}_{\geq 0}, \forall i, \quad \mbox{with } \beta \defeq \frac{\mu}{\lambda \cdot (1 -\ve{1}^\top \ve{t})},\label{eq11bis}
\end{eqnarray}

\begin{lemma}\label{lemAB}
 At the optimum, $\alpha = \beta - (1/\lambda)$; hence $\alpha < \beta$. Furthermore, $\alpha$ and $\beta$ also satisfy:
\begin{eqnarray}
\alpha & = & \expect_{i \sim \ve{u}} \left[(-f') \left(\frac{\pi_i}{q_i}\right)\right], \quad\mbox{ with }\ve{u} \defeq \frac{1}{1-\ve{1}^\top\ve{t}} \cdot (\ve{p}-\ve{t}) \in \Delta_n.\label{mdisAlphaL}\\
\beta & = & \expect_{i \sim \ve{s}} \left[(-f') \left(\frac{\pi_i}{q_i}\right)\right]\label{mdisBetaL}
\end{eqnarray}
 \end{lemma}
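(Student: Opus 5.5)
The plan is to read everything off the two stationarity conditions \eqref{eq-stat1}, \eqref{eq-stat2} together with the definitions of $\alpha$ in \eqref{eq22bis} and of $\beta$ in \eqref{eq11bis}. By Lemma \ref{lemR1} we have $\lambda>0$ and $\ve{t}\neq\ve{p}$. Since $\ve{t}\leq\ve{p}$ and $\ve{1}^\top\ve{p}=1$, this forces $1-\ve{1}^\top\ve{t}>0$, so every quantity involved is well defined.

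\textbf{Step 1: $\beta$ as an expectation.} Multiply \eqref{eq-stat2} by $s_i$ and sum over $i$. Complementary slackness \eqref{kkt6} kills the $\chi_i s_i$ terms, and $\ve{1}^\top\ve{s}=1$ by \eqref{kkt3}. This gives
\[
\mu=\lambda(1-\ve{1}^\top\ve{t})\,\expect_{i\sim\ve{s}}\left[(-f')\left(\frac{\pi_i}{q_i}\right)\right].
\]
Dividing by $\lambda(1-\ve{1}^\top\ve{t})>0$ yields \eqref{mdisBetaL}. Comparing with the definition of $\alpha$ in \eqref{eq22bis} then gives $\alpha=\beta-1/\lambda$ immediately, and $\lambda>0$ gives $\alpha<\beta$.

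\textbf{Step 2: $\alpha$ as an expectation under $\ve{u}$.} First, $\ve{u}\in\Delta_n$, because $\ve{p}-\ve{t}\geq\ve{0}$ and its entries sum to $1-\ve{1}^\top\ve{t}>0$. Next, rewrite \eqref{eq-stat1} (equivalently \eqref{eq22bis}) as
\[
(-f')\left(\frac{\pi_i}{q_i}\right)=\alpha+\frac{\nu_i}{\lambda}.
\]
Multiply by $p_i-t_i$ and sum over $i$. Complementary slackness \eqref{kkt7} gives $\nu_i(t_i-p_i)=0$ for every $i$, so the $\nu_i$ terms vanish. What remains is
\[
\sum_i (p_i-t_i)\,(-f')\left(\frac{\pi_i}{q_i}\right)=\alpha\,(1-\ve{1}^\top\ve{t}),
\]
and dividing by $1-\ve{1}^\top\ve{t}$ gives \eqref{mdisAlphaL}.

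\textbf{Main obstacle.} The computation is routine; the only delicate point is justifying the divisions, i.e.\ $\lambda>0$ and $1-\ve{1}^\top\ve{t}>0$, which Lemma \ref{lemR1} provides. One caveat: Lemma \ref{lemR1} is stated under $D<D_f(\ve{p}\|\ve{q})$, whereas Assumption \ref{assum-mda} only bounds $D$ by the TV divergence. I would therefore note that if $D\geq D_f(\ve{p}\|\ve{q})$, then $\ve{\pi}=\ve{p}$ is feasible and the problem is trivial, so that case may be excluded. Differentiability of $f$ is assumed in this part of the analysis, so no subgradient issues arise.
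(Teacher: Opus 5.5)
Your proposal is correct and matches the paper's proof: weight the $\ve{s}$-stationarity condition by $s_i$ and sum, killing the $\ve{\chi}$ terms by complementary slackness, to get $\beta$ as an $\ve{s}$-expectation and hence $\alpha=\beta-1/\lambda<\beta$; then weight \eqref{eq22bis} by $p_i-t_i$ and sum, killing the $\ve{\nu}$ terms, to get $\alpha$ as a $\ve{u}$-expectation. Your explicit check that $1-\ve{1}^\top\ve{t}>0$ (from $\ve{t}\leq\ve{p}$, $\ve{t}\neq\ve{p}$) and your remark that Lemma \ref{lemR1} needs $D<D_f(\ve{p}\|\ve{q})$ rather than the TV bound are sound details that the paper leaves implicit.
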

\begin{proof}
Sum \eqref{eq11bis} times $s_i$ and we get
\begin{eqnarray*}
\sum_i (-f') \left(\frac{\pi_i}{q_i}\right) \cdot s_i & = & \beta \cdot \underbrace{\sum_i s_i}_{=1} - \sum_i \frac{ \chi_i }{\lambda \cdot (1 - \ve{1}^\top \ve{t})} \cdot \underbrace{{\color{red}{\iver{s_i = 0} \cdot s_i}}}_{=0, \forall i} = \beta\label{alterbetabis},
\end{eqnarray*}
and we reorganize using \eqref{eq22bis} to get 
\begin{eqnarray*}
\beta & = & \expect_{i \sim \ve{s}} \left[(-f') \left(\frac{\pi_i}{q_i}\right)\right],
\end{eqnarray*}
and  $\alpha = \beta - (1/\lambda)$ because of the definition of $\alpha$ in \eqref{eq22bis}. Now, sum \eqref{eq22bis} times $p_i - t_i$ and we get
\begin{eqnarray*}
\sum_i (-f') \left(\frac{\pi_i}{q_i}\right) \cdot (p_i - t_i) & = & \alpha \cdot \underbrace{\sum_i p_i - t_i}_{=1 - \ve{1}^\top \ve{t}} - \sum_i \frac{\nu_i}{\lambda}\cdot \underbrace{\color{red}{\iver{t_i = p_i} \cdot (t_i - p_i)}}_{=0, \forall i}\label{alterbetabis2},
\end{eqnarray*}
and rearrange to find the expression of $\alpha$ in \eqref{mdisAlphaL}.
\end{proof}

Pick any $i$ such that $t_i < p_i$. \eqref{eq22bis} yields $(-f') \left(\frac{\pi_i}{q_i}\right) = \alpha$ and so, in any optimal solution,
\begin{eqnarray}
t_i < p_i & \Rightarrow & \pi_i \in q_i \cdot L_\alpha(-f').\label{eqdec}
\end{eqnarray}
Pick any $i$ such that $s_i>0$. \eqref{eq11bis} yields $(-f') \left(\frac{\pi_i}{q_i}\right) = \beta$ and so, in any optimal solution,
\begin{eqnarray}
s_i > 0 & \Rightarrow & \pi_i \in q_i \cdot L_\beta(-f').\label{eqinc}
\end{eqnarray}
All other cases must meet $t_i = p_i$ and $s_i = 0$, hence $\pi_i = p_i$. Since $\alpha < \beta$, we must have $L_\alpha(-f') > L_\beta(-f')$ ($f$ is convex), so it is impossible that $1 > L_\alpha(-f')$ or $L_\beta(-f') > 1$ otherwise $\ve{\pi}$ would not be a distribution. We thus have simultaneously
\begin{eqnarray}
  L_\beta(-f') & < & L_\alpha(-f'),\label{lalphabeta1}\\
  1 & \leq & \max L_\alpha(-f'), \label{lalphabeta2}\\
  \min L_\beta(-f') & \leq & 1. \label{lalphabeta3}
\end{eqnarray}
We go back to \eqref{eqdec}: for any $i$ such that $t_i < p_i$ and since $\alpha < \beta$ \eqref{eq11bis} yields that for all these indices $s_i = 0$ so $\pi_i = t_i < p_i$. Since otherwise $t_i = p_i$, we get that in all cases, 
\begin{eqnarray}
t_i & \in & \mins(p_i, q_i \cdot L_\alpha(-f')) = \{p_i\} + \mins(0, q_i \cdot L_\alpha(-f') - p_i) \label{eqfinti}
\end{eqnarray}
and while this guarantees $\ve{t}  \preceq \ve{p}$, we must also ensure $\ve{t}  \neq \ve{p}$ (Lemma \ref{lemR1}). We separately note that
\begin{eqnarray}
\min L_\alpha(-f') & < & \max_j p_j / q_j \label{lalphabeta4}
\end{eqnarray}
otherwise the only feasible solution is $\ve{t} = \ve{p} = \ve{\pi}$, impossible (Lemma \ref{lemR1}). 

We go back to \eqref{eqinc}: for any $i$ such that $s_i>0$ and since $\alpha < \beta$ \eqref{eq22bis} yields that for all these indices $t_i = p_i$. Since otherwise $s_i = 0$, we get from $\pi_i \defeq t_i + s_i (1-\ve{1}^\top \ve{t})$:
\begin{eqnarray}
s_i(1-\ve{1}^\top \ve{t}) & \in & \maxs(0, q_i \cdot L_\beta(-f') - p_i).  \label{eqfinsi}
\end{eqnarray}
We separately note that
\begin{eqnarray}
\max L_\beta(-f') & > & \min_j p_j / q_j \label{lalphabeta5}
\end{eqnarray}
otherwise $\ve{s} = \ve{0}$, not admissible. From \eqref{lalphabeta1}, we get $q_i \cdot L_\beta(-f') - p_i <  q_i \cdot L_\alpha(-f') - p_i$ and so we get the final expression for $\ve{\pi}$ from its definition and \eqref{eqfinti}, \eqref{eqfinsi}:
\begin{eqnarray*}
  \pi_i & \in & \{p_i\} + \mins(0, q_i \cdot L_\alpha(-f') - p_i) + \maxs(0, q_i \cdot L_\beta(-f') - p_i)\\
           & & = \clamps(p_i, q_i \cdot L_\beta(-f'), q_i \cdot L_\alpha(-f')),                   
\end{eqnarray*}
where the equality comes by definition of $\clamps$. Thus, all optimal solutions satisfy
\begin{eqnarray}
  \ve{\pi} & \in & \clamps(\ve{p}, L_\beta(-f') \cdot \ve{q}, \cdot L_\alpha(-f') \cdot \ve{q}) \cap \Delta_n, \label{defPICHI}
  \end{eqnarray}
At this stage, we have explicitly satisfied KKT conditions \eqref{kkt1}, \eqref{kkt2}, \eqref{kkt3}, \eqref{kkt8}, \eqref{kkt9}.

To check \eqref{kkt4} and \eqref{kkt6}, we compute
\begin{eqnarray}
\chi_i & = & \left\{
\begin{array}{ccl}
\lambda \cdot (1-\ve{1}^\top \ve{t}) \cdot \left( \beta - (-f') \left(\frac{\pi_i}{q_i}\right)\right) & \mbox{ if } & t_i < p_i \vee \pi_i = p_i\\
0 & \multicolumn{2}{l}{\mbox{ otherwise }}
\end{array}
\right. . \label{defCHII}
\end{eqnarray}
Note that the first condition is equivalent to $\pi_i \leq p_i$. 

Suppose first that $t_i < p_i$. Then we know that $s_i = 0$ so that \eqref{defCHII} is in fact \eqref{eq11bis}. In this case, \eqref{eqdec} yields the second identity in
\begin{eqnarray}
\chi_i & = & \lambda \cdot (1-\ve{1}^\top \ve{t}) \cdot \left( \beta - (-f') \left(\frac{\pi_i}{q_i}\right)\right) \label{chi1}\\
& = & \lambda \cdot (1-\ve{1}^\top \ve{t}) \cdot \left( \beta - (-f') \left(L_\alpha(-f')\right)\right)\nonumber\\
& = & \lambda \cdot (1-\ve{1}^\top \ve{t}) \cdot \left( \beta - \alpha\right)\nonumber\\
& = & 1-\ve{1}^\top \ve{t} \geq 0.\nonumber
\end{eqnarray}
the penultimate identity comes from the definition of $L_\alpha(-f')$ and the last one is Lemma \ref{lemAB}. So $t_i < p_i$ implies $\chi_i \geq 0$ and $\chi_i s_i = 0$.

Suppose now that $\pi_i = p_i (=t_i)$. In this case, we have
\begin{eqnarray*}
(-f') \left(\frac{\pi_i}{q_i}\right) = (-f') \left(\frac{p_i}{q_i}\right) \in [\alpha, \beta],
\end{eqnarray*}
Hence from \eqref{chi1} $\chi_i \geq 0$ again while, since we also have $s_i = 0$, we have $\chi_i s_i = 0$.

Finally, if $\neg(t_i < p_i \vee \pi_i = p_i) \equiv \pi_i > p_i$, we have $s_i > 0$ but $\chi_i s_i = 0$ and still $\chi_i \geq 0$.

Hence, KKT \eqref{kkt4} and \eqref{kkt6} are satisfied.

We now check KKT \eqref{kkt5} and \eqref{kkt7} and for that we let:
\begin{eqnarray*}
\nu_i & = & \left\{
\begin{array}{ccl}
\lambda \cdot \left((-f') \left(\frac{\pi_i}{q_i}\right) - \alpha \right) & \mbox{ if } & \pi_i \geq p_i\\
0 & \multicolumn{2}{l}{\mbox{ otherwise }}
\end{array}
\right. . \label{defNUI}
\end{eqnarray*}
In the topmost case, it comes that because of \eqref{defPICHI} and $f$ is convex (thus $-f'$ is non-increasing), we always have $(-f') \left(\pi_i/q_i\right) \geq \alpha$ so $\ve{\nu} \ge \ve{0}$, which is \eqref{kkt5}. We also know that $t_i < p_i \Rightarrow \pi_i < p_i$, hence $\pi_i \geq p_i$ implies $t_i = p_i$ and KKT \eqref{kkt7} is satisfied.

At this stage, we have checked KKT conditions \eqref{kkt1}, \eqref{kkt2}, \eqref{kkt3}, \eqref{kkt4}, \eqref{kkt5}, \eqref{kkt6}, \eqref{kkt7}, \eqref{kkt8}, \eqref{kkt9}. To get optimality, we only need the last KKT conditions \eqref{kkt85} and \eqref{kkt10} to be satisfied and since $\lambda > 0$, this implies 
\begin{eqnarray*}
D_{f} (\ve{\pi} \|\ve{q}) & = & D.
\end{eqnarray*}
Hence, via the construction \eqref{eqfinti} and \ref{eqfinsi} and $\ve{\pi} \defeq \ve{t} + (1-\ve{1}^\top \ve{t})\cdot \ve{s}$, any optimal solution satisfies
\begin{eqnarray}
\left\{
\begin{array}{l}
\ve{\pi} \in \clamps(\ve{p}, L_\beta(-f') \cdot \ve{q}, L_\alpha(-f') \cdot \ve{q}) \cap \Delta_n\\
D_{f} (\ve{\pi} \|\ve{q}) = D
\end{array}
\right. . \label{eqCONDPI}
\end{eqnarray}

This ends the proof of ($\textsc{md}'_f\rightarrow\textsc{is}_f$).\\

\noindent \textbf{($\textsc{is}_f\rightarrow\textsc{md}'_f$)} Pick any $\ve{\pi}$ satisfying \eqref{eqCONDPI} for $\alpha < \beta \in \mathrm{Im}(-f')$. We craft
\begin{eqnarray}
\ve{t} & \defeq & \min\{\ve{p}, \ve{\pi}\}, \label{eqfintivec}\\
\ve{s} & \defeq & (1-\ve{1}^\top \ve{t})^{-1} \cdot \max\{\ve{0}, \ve{\pi}- \ve{p}\}
\end{eqnarray}
(Because of Assumption \ref{assum-mda}, we have $\ve{p}^\top\ve{r} < 1$, so there is one choice only for $\ve{s}$ as per \eqref{eq-r-s-from-pi}). Lemma \ref{lemR1} implies $\ve{1}^\top \ve{t} < 1$ so $\ve{s}$ is positive and finite, and $1-\ve{1}^\top \ve{t} = \sum_i \pi_i - \min\{\pi_i, p_i\} = \sum_i \max\{0, \pi_i - p_i\}$, so $\ve{s} \in \Delta_n$.

At this point, we easily check KKT \eqref{kkt1}, \eqref{kkt2}, \eqref{kkt3}. Furthermore, we observe from \eqref{pMMC3}
\begin{eqnarray*}
\{\min\{\ve{p}, \ve{\pi}\}: \ve{\pi} \in \clamps(\ve{p}, L_\beta(-f') \cdot \ve{q}, L_\alpha(-f') \cdot \ve{q})\} & = & \mins(\ve{p}, L_\alpha(-f') \cdot \ve{q}),
\end{eqnarray*}
which is just the way we built $\ve{t}$ in \eqref{eqfinti}. Also, 
\begin{eqnarray*}
\{\max\{\ve{0}, \ve{\pi}- \ve{p}\}: \ve{\pi} \in \clamps(\ve{p}, L_\beta(-f') \cdot \ve{q}, L_\alpha(-f') \cdot \ve{q})\} & = & \maxs(\ve{0}, L_\beta(-f') \cdot \ve{q}  - \ve{p})
\end{eqnarray*}
from \eqref{pMMC4} which is just the way we built $\ve{s}$ from \eqref{eqfinsi}. So the way we craft $\ve{t}$ and $\ve{s}$ is the same as for the first step and ends the proof of Theorem \ref{th-opt1}.

\begin{remark}
Note that Lemma \ref{lemLAlphaBeta} follows from \eqref{lalphabeta1}, \eqref{lalphabeta2}, \eqref{lalphabeta3}, \eqref{lalphabeta4}, \eqref{lalphabeta5}.
\end{remark}

\subsection{Proof of Lemma \ref{lemPOST}}\label{sec-proof-lemPOST}

Since $p_i>0$, $r_i p_i \leq 0$ would imply $0 \in L_{\alpha}(-f')$ and then $\alpha = \beta$ since we would be forced to also have $0 \in L_{\beta}(-f')$ \eqref{mdisBetaL}, which is impossible (Lemma \ref{lemAB}).

\subsection{Proof of Theorem  \ref{thmAPPROX1}}\label{sec-proof-thmAPPROX1}

We show that all KKT conditions of \eqref{def-pb-f-md-2-general} are satisfied except eventually one, the $f$-divergence constraint \eqref{kkt85} and then compute bounds for the corresponding $D$, so we start with the assumption that $f$ is differentiable and then remove it. We reuse the KKT conditions in \eqref{kkt1}, \eqref{kkt2}, \eqref{kkt3}, \eqref{kkt4}, \eqref{kkt5}, \eqref{kkt6}, \eqref{kkt7}, \eqref{kkt8}, \eqref{kkt9}, \eqref{kkt85}. We consider $\ve{\pi}$ defined by
\begin{eqnarray}
 \pi_i \defeq p_i \mbox{ if } i \in \mathbb{I}, \mbox{ else } (1+a) q_i \mbox{ if } i \in \mathbb{A}, \mbox{ else } (1-b) q_i \mbox{ if } i \in \mathbb{B} \label{defPI}.
\end{eqnarray}
We then have $\ve{\pi} = \ve{t} + (1-\ve{1}^\top \ve{t})\cdot \ve{s}$, letting $\ve{t} \defeq \ve{p} \odot \ve{r}$, for the choices:
\begin{eqnarray}
  t_i & \defeq & p_i \mbox{ if } i \in \mathbb{B} \cup \mathbb{I} \mbox{ else } t_i \defeq (1+a) q_i \mbox{ if } i \in \mathbb{A}, \label{defTI}\\
  s_i & \defeq & 0 \mbox{ if } i \in \mathbb{A} \cup \mathbb{I} \mbox{ else } s_i \defeq (1-\ve{1}^\top \ve{t})^{-1} \cdot ((1-b) q_i - p_i) \mbox{ if } i \in \mathbb{B}. \label{defSI}
\end{eqnarray}
(if $b \geq 1 - \min_i p_i/q_i$, we pick any distribution $\ve{s} \in \Delta_n$ since $\mathbb{B} = \emptyset$). KKT \eqref{kkt1} holds because all indexes in $\mathbb{A}$ satisfiy $p_i / q_i > 1+a$. KKT \eqref{kkt2} is satisfied because all indexes in $\mathbb{B}$ satisfy $p_i / q_i < 1-b$. We note from the taxonomy \eqref{defAbis}, \eqref{defBbis}, \eqref{defIbis},
\begin{eqnarray}
  \ve{1}^\top \ve{t} & = & (1+a) q(\mathbb{A}) + p(\mathbb{B}) + p(\mathbb{I}) = 1 - (p(\mathbb{A})  - (1+a) q(\mathbb{A})), \label{optcost}
\end{eqnarray}
and \eqref{kktconst3} also yields
\begin{eqnarray}
\ve{1}^\top \ve{t} = 1 - ( (1-b) q(\mathbb{B}) - p(\mathbb{B}) ).  \label{optcost2}
\end{eqnarray}
We also have from the taxonomy $(1-b)q(\mathbb{B}) = \sum_{i \in \mathbb{B}} \pi_i = \sum_{i \in \mathbb{B}} p_i + (1-\ve{1}^\top \ve{t}) \cdot s_i = p(\mathbb{B}) + (1-\ve{1}^\top \ve{t}) \cdot \ve{1}^\top \ve{s}$ which, via identification with \eqref{optcost2}, yields  $\ve{1}^\top \ve{s} = 1$ because $ (1-b) q(\mathbb{B}) - p(\mathbb{B}) > 0$ from the definition of $\mathbb{B}$. Hence KKT \eqref{kkt3} holds. We now compute $\ve{\chi}$ and $\ve{\nu}$ as
\begin{eqnarray}
  \chi_i & \defeq & \lambda (1-\ve{1}^\top{t}) \cdot \left( (a+b) \mbox{ if } i \in \mathbb{A}, \mbox{ else } \left(b - 1 + \frac{p_i}{q_i}\right) \mbox{ if } i \in \mathbb{I}, \mbox{ else } 0 \mbox{ if } i \in \mathbb{B} \right),\label{defCHI} \\
  \nu_i & \defeq & \lambda \cdot \left( (a+b) \mbox{ if } i \in \mathbb{B}, \mbox{ else } \left(1 - \frac{p_i}{q_i} + a\right) \mbox{ if } i \in \mathbb{I}, \mbox{ else } 0 \mbox{ if } i \in \mathbb{A} \right). \label{defNU}
\end{eqnarray}
We easily check KKT \eqref{kkt4} and \eqref{kkt5} ($a, b \geq 0$ and the taxonomy \eqref{defIbis}). \eqref{kkt6} is checked from \eqref{defSI} and \eqref{defCHI}, \eqref{kkt7} is checked from \eqref{defTI} and \eqref{defNU}, and finally \eqref{kkt8} is just \eqref{eq11bis} and \eqref{eq22bis}. We finally let $\alpha \defeq (-f)'(1+a), \beta \defeq (-f')(1-b)$ and
\begin{eqnarray}
\lambda & \defeq & \frac{1}{\beta - \alpha} > 0, 
\end{eqnarray}
so \eqref{kkt9} is satisfied; there is only \eqref{kkt85} which is eventually not satisfied. Note that if $f$ is not differentiable, we just switch to $\alpha \defeq (-g)(1+a), \beta \defeq (-h)(1-b)$ for $g, h \in \partial f$.

Hence, \textit{any} values $a,b$ as in \eqref{kktconst1}, \eqref{kktconst2} define the optimum of \eqref{def-pb-f-md-2-general} for \textit{some} $\tilde{D}$ that we can compute:
\begin{eqnarray*}
  \tilde{D} = D_f(\ve{\pi} \| \ve{q}) & = & \sum_{\mathbb{A}} f \left(\frac{\pi_i}{q_i}\right) \cdot q_i  + \sum_{\mathbb{B}} f \left(\frac{\pi_i}{q_i}\right) \cdot q_i + \underbrace{\sum_{\mathbb{I}} f \left(\frac{\pi_i}{q_i}\right) \cdot q_i }_{\defeq D_f(\mathbb{I})}\\
  & = & q(\mathbb{A}) \cdot f(1+a)  + q(\mathbb{B}) \cdot f(1-b)  + D_f(\mathbb{I}).
\end{eqnarray*}
Because of the definition of $\mathbb{I}$ in \eqref{defBbis}, $\ve{\pi}$ in \eqref{defPI} and the fact that $f$ is convex (therefore continuous), we can upperbound $D_f(\mathbb{I})$ as
\begin{eqnarray*}
D_f(\mathbb{I}) & = & \sum_{i \in \mathbb{I}} f \left(\frac{\pi_i}{q_i}\right) \cdot q_i \\
& = & \sum_{i \in \mathbb{I}} f \left(\frac{p_i}{q_i}\right) \cdot q_i \\
& = & q(\mathbb{I}) \cdot \sum_{i \in \mathbb{I}} \frac{q_i}{q(\mathbb{I})} \cdot f \left(\frac{p_i}{q_i}\right) = q(\mathbb{I}) \cdot \expect_{i \sim \ve{q}_{|\mathbb{I}}}\left[ f\left(\frac{p_i}{q_i}\right)\right]\\
& = & q(\mathbb{I}) \cdot f(u)
\end{eqnarray*}
for some $u\in [1-b, 1+a]$ (here, $\ve{q}_{|\mathbb{I}}$ is $\ve{q}$ restricted to set $\mathbb{I}$). Summarizing, 
\begin{eqnarray*}
\exists u\in [1-b, 1+a]: \tilde{D} & = & q(\mathbb{A}) \cdot f(1+a)  + q(\mathbb{B}) \cdot f(1-b) + q(\mathbb{I}) \cdot f(u).
\end{eqnarray*}
We finally compute the acceptance probability as
    \begin{eqnarray*}
\pacc(MD) & \defeq & \ve{1}^\top \ve{t}\\
& = & \sum_i \min\{p_i, (1+a) q_i\}\\
& =& \pacc(SD) + a q(\mathbb{A}) + (p(\mathbb{I}_{>1})-q(\mathbb{I}_{>1})),
    \end{eqnarray*}
    where we have let
    \begin{eqnarray}
\mathbb{I}_{>1} \defeq \left\{i : p_i\in q_i \cdot (1, 1+a]\right\},
    \end{eqnarray}
    which ends the proof of Theorem \ref{thmAPPROX1}.

\subsection{Proof of Theorem \ref{thmUniversal}}\label{sec-proof-thmUniversal}

Let $\tilde{\textsc{is}}_f(\ve{p}, \ve{q}; D) \subseteq \textsc{is}_f(\ve{p}, \ve{q}; D)$ be defined as:
\begin{eqnarray}
\lefteqn{\tilde{\textsc{is}}_f(\ve{p}, \ve{q}; D)}\nonumber\\
& \defeq & \left\{\ve{\pi} \in \Delta_n: \left\{
\begin{array}{l}
\exists L_\beta(-h) \leq 1 \leq L_\alpha(-g): \ve{\pi} \in\clamps(\ve{p}, L_\beta(-h) \cdot \ve{q}, L_\alpha(-g) \cdot \ve{q})\\
D_{f} (\ve{\pi} \|\ve{q}) = D
\end{array}
\right.\right\}, \label{isABTILDE}
  \end{eqnarray}
with the additional constraint $\alpha < \beta$ (we remind $g, h \in \partial f$, \eqref{isAB-nonconvex}). Using the definition of level sets and $\clamps$, we get that in this case and coordinate-wise,
\begin{eqnarray*}
\lefteqn{\clamps(p_i, L_\beta(-h) q_i, L_\alpha(-g) q_i)}\\
& = & 
\left\{
\begin{array}{ccll}
L_\beta(-h) q_i & \mbox{ if } & p_i < L_\beta(-h) q_i & (I)\\
\{z \in L_\beta(-h) q_i : z > p_i\} & \mbox{ if } & p_i \in L_\beta(-h) q_i \wedge p_i < \max L_\beta(-h) q_i & (II)\\
p_i & \mbox{ if } & p_i \in [\max L_\beta(-h) q_i, \min L_\alpha(-g) q_i] & (III)\\
\{z \in L_\alpha(-g) q_i : z < p_i\} & \mbox{ if } & p_i \in L_\alpha(-g) q_i \wedge p_i > \min L_\alpha(-g) q_i & (IV)\\
L_\alpha(-g) q_i & \mbox{ if } & p_i > L_\alpha(-g) q_i & (V)
\end{array}
\right.
  \end{eqnarray*}
We analyze case by case, noting that $L_\beta(-h) \leq 1$ implies $L_\beta(-h) q_i \leq q_i$, and $L_\alpha(-g) \geq 1$ implies $L_\alpha(-g) q_i \geq q_i$:
\begin{itemize}
\item [Case (I)] Here, $L_\beta(-h) q_i \subseteq [p_i, q_i] = [\min\{p_i, q_i\}, \max\{p_i, q_i\}]$;
\item [Case (II)] is a subset of (I) still with $p_i < q_i$;
\item [Case (III)] in this case, $[\min\{p_i, q_i\}, \max\{p_i, q_i\}] = [q_i, p_i]$ and we clearly have $p_i \in [q_i, p_i]$;
\item [Case (IV)] we observe $q_i < \{z \in L_\alpha(-g) q_i : z < p_i\} < p_i$ so $\{z \in L_\alpha(-g) q_i : z < p_i\} \subset [\min\{p_i, q_i\}, \max\{p_i, q_i\}]$;
\item [Case (V)] we observe again $q_i < L_\alpha(-g) q_i < p_i$, so same conclusion as in (IV).
\end{itemize}
  To summarize, we have shown that there exists $D'$ such that
  \begin{eqnarray*}
\tilde{\textsc{is}}_f(\ve{p}, \ve{q}; D) & \subseteq & \textsc{is}_{f_{\mathrm{TV}}}(\ve{p}, \ve{q}; D'),
  \end{eqnarray*}
Now denote $\tilde{\textsc{md}}^2_f(\ve{p}, \ve{q}; D)$ the set of optimal solutions in $\textsc{md}^2_f(\ve{p}, \ve{q}; D)$ having $L_\beta(-h) \leq 1 \leq L_\alpha(-g)$ with $\alpha, \beta$ in \eqref{mdisAlpha}, \eqref{mdisBeta}. What we have shown above make the following mappings connections, also using the bijection of Theorem \ref{th-opt1}:
\begin{eqnarray*}
\tilde{\textsc{is}}_f(\ve{p}, \ve{q}; D) \rightarrow \textsc{is}_{f_{\mathrm{TV}}}(\ve{p}, \ve{q}; D') \rightarrow \textsc{md}^2_{f_{\mathrm{TV}}}(\ve{p}, \ve{q}; D'),
\end{eqnarray*}
which shows the first part of Theorem \ref{thmUniversal}.\\

The second part and \eqref{eq-opt-f-g-md} follows from the proof of Theorem \ref{thmAPPROX1} and the fact that (i) for strictly convex generators, level sets $L_\beta, L_\alpha$ are singletons and (ii) the set of couples $\mathcal{C}(\ve{p}, \ve{q})$ in Definition \ref{def-cab} define optimal solution irrespectively of the generator.

\subsection{Proof of Theorem \ref{th-boost-P}}\label{sec-proof-th-boost-P}

We first need two technical Lemmata.
\begin{lemma}\label{lem-Ratio}
  For any $x_1, x_2, ... x_T \geq 0$ and any real $k \geq 1$, it holds that
  \begin{eqnarray*}
\frac{\sum_{i\in [T]} x_i^2}{\sum_{i\in [T]} x_i} & \leq & \frac{\sum_{i\in [T]} x_i^{2k}}{\sum_{i\in [T]} x_i^{2k-1}}.
    \end{eqnarray*}
  \end{lemma}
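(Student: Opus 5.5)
We prove the inequality $\sum x_i^2/\sum x_i \le \sum x_i^{2k}/\sum x_i^{2k-1}$ for $x_i\ge 0$, $k\ge 1$. Discard indices with $x_i=0$, since they contribute nothing to any of the sums. If all $x_i$ vanish, the statement is vacuous or holds by convention. So assume every $x_i>0$.

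Cross-multiplying, it suffices to show
\begin{eqnarray*}
\sum_{i,j} x_i^2 x_j^{2k-1} & \leq & \sum_{i,j} x_i^{2k} x_j .
\end{eqnarray*}
Symmetrize by pairing the $(i,j)$ and $(j,i)$ terms. The difference RHS $-$ LHS becomes $\frac12\sum_{i,j} d_{ij}$, where
\begin{eqnarray*}
d_{ij} & \defeq & x_i^{2k}x_j + x_j^{2k}x_i - x_i^2x_j^{2k-1} - x_j^2 x_i^{2k-1}.
\end{eqnarray*}
It then suffices to show $d_{ij}\ge 0$ for each pair. Factor out $x_ix_j>0$:
\begin{eqnarray*}
d_{ij} & = & x_ix_j\left(x_i^{2k-1}+x_j^{2k-1}-x_ix_j^{2k-2}-x_jx_i^{2k-2}\right)\\
 & = & x_ix_j\,(x_i^{2k-2}-x_j^{2k-2})(x_i-x_j).
\end{eqnarray*}
Since $2k-2\ge 0$, the map $z\mapsto z^{2k-2}$ is non-decreasing on $\mathbb{R}_+$ (it is constant when $k=1$). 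Hence the two factors $x_i^{2k-2}-x_j^{2k-2}$ and $x_i-x_j$ have the same sign, and their product is non-negative. This is a Chebyshev-type rearrangement argument.

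The only step needing any care is the factorization. One can check it by expanding $(x_i^{2k-2}-x_j^{2k-2})(x_i-x_j)$, which gives exactly $x_i^{2k-1}+x_j^{2k-1}-x_ix_j^{2k-2}-x_jx_i^{2k-2}$. It holds for real (non-integer) $k\ge1$, since only the monotonicity of a real power is used.

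Summing $d_{ij}\ge 0$ over all pairs gives the cross-multiplied inequality. Dividing by the positive denominators $\sum x_i$ and $\sum x_i^{2k-1}$ concludes the proof.
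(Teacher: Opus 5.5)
Your proof is correct and follows essentially the same route as the paper: cross-multiply, symmetrize the double sum over $(i,j)$ and $(j,i)$, and factor each symmetrized term as $x_ix_j(x_i-x_j)(x_i^{2k-2}-x_j^{2k-2})\geq 0$ using monotonicity of $z\mapsto z^{2k-2}$ on $\mathbb{R}_+$. Your explicit removal of zero coordinates is a small tidy-up that the paper leaves implicit; it is harmless because every exponent appearing in the sums is at least $1$.
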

  \begin{proof}
    We note that this is equivalent to showing
    \begin{eqnarray}
\underbrace{\left(\sum_{i\in [T]} x_i\right) \cdot \left( \sum_{i\in [T]} x_i^{2k} \right)  - \left(\sum_{i\in [T]} x_i^2 \right) \cdot \left(\sum_{i\in [T]} x_i^{2k-1}\right)}_{\defeq A} & \geq & 0.
    \end{eqnarray}
    We develop the sums in $A$ in two equivalent forms (swapping indexes):
    \begin{eqnarray*}
      A = \sum_{i\in [T]} \sum_{j \in [T]} x_i x_j^{2k} - x_i^2 x_j^{2k-1} = \sum_{i\in [T]} \sum_{j \in [T]} x_j x_i^{2k} - x_j^2 x_i^{2k-1}.
    \end{eqnarray*}
    We then write $A$ as the arithmetic average of both expressions and factor:
     \begin{eqnarray}
       2 A & = & \sum_{i\in [T]} \sum_{j \in [T]} x_i x_j^{2k} - x_i^2 x_j^{2k-1} +  x_j x_i^{2k} - x_j^2 x_i^{2k-1}\nonumber \\
           & = & \sum_{i\in [T]} \sum_{j \in [T]} x_ix_j \cdot (\underbrace{x_j^{2k-1} - x_i x_j^{2k-2}}_{(x_j - x_i)x_j^{2k-2}} + \underbrace{x_i^{2k-1} - x_j x_i^{2k-2}}_{- (x_j - x_i)x_i^{2k-2}}) \nonumber \\
       & = &  \sum_{i\in [T]} \sum_{j \in [T]} x_ix_j \cdot (x_j - x_i) \cdot (x_j^{2k-2} - x_i^{2k-2}). \label{eq-Alast}
    \end{eqnarray}
    Since $z \mapsto z^{u}$ is strictly increasing for $z\geq 0$ and $u>0$, we get that for $k\geq 1$ we always have $ (x_j - x_i) \cdot (x_j^{2k-2} - x_i^{2k-2}) \geq 0$ for any $x_i, x_j \geq 0$ while $ x_ix_j \geq 0$; hence, all terms in \eqref{eq-Alast} are $\geq 0$, so $A \geq 0$ and the Lemma is proven.
    \end{proof}

\begin{lemma}\label{lem-bsup}
  for any $x_1, x_2, ... x_T \in [0,1)$ and any
  \begin{eqnarray}
y & \in & \left[0, \frac{1}{3} \cdot  \frac{\sum_{i\in [T]} x_i^2}{\sum_{i\in [T]} x_i}\right],\label{eq-int-y}
    \end{eqnarray}
it holds that
  \begin{eqnarray*}
\left(\prod_{i\in [T]} (1+x_i)\right)^{1+y} \cdot \left(\prod_{i\in [T]} (1-x_i)\right)^{1-y} & \leq & \exp\left(-y \cdot \sum_{i\in [T]} x_i\right).
    \end{eqnarray*}
\end{lemma}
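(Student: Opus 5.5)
We take logarithms and compare the two sides term by term through power series in the $x_i$. The inequality then follows from Lemma \ref{lem-Ratio} applied with every integer $k \geq 2$.

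\textbf{Reduction.} If $\sum_i x_i = 0$, then all $x_i = 0$, $y = 0$, and both sides equal $1$. Otherwise, since every $x_i \in [0,1)$, all logarithms below are finite. Write $S_j \defeq \sum_{i\in[T]} x_i^j$. Taking $\ln$ of both sides, the claim becomes
\begin{eqnarray*}
\sum_{i\in [T]} \left[\ln(1-x_i^2) + y \ln\frac{1+x_i}{1-x_i}\right] & \leq & -y S_1 .
\end{eqnarray*}
This uses the identity $(1+y)\ln(1+x) + (1-y)\ln(1-x) = \ln(1-x^2) + y\ln\frac{1+x}{1-x}$.

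\textbf{Series expansion.} For $x\in[0,1)$ we use the absolutely convergent expansions
\begin{eqnarray*}
\ln(1-x^2) = -\sum_{k\geq 1} \frac{x^{2k}}{k}, \qquad \ln\frac{1+x}{1-x} = 2\sum_{k\geq 1}\frac{x^{2k-1}}{2k-1}.
\end{eqnarray*}
Summing over $i$ and interchanging the sums, which is legitimate since all terms are controlled by geometric series in $\max_i x_i<1$, the left-hand side equals $\sum_{k\geq 1}\left[-S_{2k}/k + 2y\,S_{2k-1}/(2k-1)\right]$. We move $-yS_1$ into the $k=1$ term. It then suffices to show
\begin{eqnarray*}
\left(-S_2 + 3yS_1\right) + \sum_{k\geq 2}\left[-\frac{S_{2k}}{k} + \frac{2y\,S_{2k-1}}{2k-1}\right] & \leq & 0 .
\end{eqnarray*}

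\textbf{Termwise control.} The $k=1$ bracket $-S_2+3yS_1$ is $\leq 0$ directly by the range \eqref{eq-int-y} of $y$.

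For $k\geq 2$, Lemma \ref{lem-Ratio} gives $S_2/S_1 \leq S_{2k}/S_{2k-1}$ whenever $S_{2k-1}>0$. If $S_{2k-1}=0$, then $S_{2k}=0$ and the bracket vanishes. Hence $y \leq S_{2k}/(3S_{2k-1})$, and
\begin{eqnarray*}
\frac{2y\,S_{2k-1}}{2k-1} & \leq & \frac{2S_{2k}}{3(2k-1)} \;\leq\; \frac{S_{2k}}{k}.
\end{eqnarray*}
The last inequality is equivalent to $2k \leq 6k-3$, which holds for every $k\geq 1$. So every bracket is non-positive, and the lemma follows.

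\textbf{Main obstacle.} The only delicate point is seeing that one must expand to all orders. Bounding only the second-order terms would need an additional smallness assumption on the $x_i$. The uniform ratio monotonicity of Lemma \ref{lem-Ratio} is exactly what lets the constraint on $y$, imposed only at order $2$, propagate to every order.
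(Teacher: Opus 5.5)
Your proof is correct and takes essentially the same route as the paper. Both take logarithms, expand $\ln(1-x^2)$ and $\ln\frac{1+x}{1-x}$ in power series, handle the order-one bracket with the constraint on $y$, and bound every higher-order bracket using Lemma \ref{lem-Ratio}; the only cosmetic difference is that you check $\frac{2}{3(2k-1)}\leq \frac{1}{k}$ directly, whereas the paper goes through the intermediate sufficient condition $y \leq \frac{1}{2}\cdot\frac{\sum_i x_i^2}{\sum_i x_i}$.
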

\begin{proof}
    Take the logs and reorganize: we want equivalently
    \begin{eqnarray*}
      y \cdot \sum_{i\in [T]} x_i  + \sum_{i\in [T]} \log(1-x_i^2) + y\cdot \sum_{i\in [T]} \log\left(\frac{1+x_i}{1-x_i}\right) & \leq & 0.
    \end{eqnarray*}
Since $|x_i| < 1, \forall i$, we consider the (convergent) Taylor-MacLaurin series $\log(1-x^2) = \sum_{k\geq 1} (-x^{2k}/k)$ and $\log((1+x)/(1-x)) = \sum_{k\geq 1} (2/(2k-1)) \cdot x^{2k-1}$ and plug them in the desired inequality and isolating the term for $k=1$:
\begin{eqnarray*}
      \underbrace{y \cdot \sum_{i\in [T]} x_i -\sum_{i\in [T]} x_i^2 + 2y \sum_{i\in [T]} x_i}_{\defeq A} + \sum_{k\geq 2} \underbrace{\sum_{i\in [T]} \left(\frac{2}{2k-1} \cdot y x_i^{2k-1} - \frac{1}{k} \cdot x_i^{2k}\right)}_{\defeq B_k}& \leq & 0.
    \end{eqnarray*}
    We now show that all of $A$ and $B_k, k\geq 2$ are non-positive. First, we factor $A$:
    \begin{eqnarray*}
      A & = & \left( 3y - \frac{\sum_{i\in [T]} x_i^2}{\sum_{i\in [T]} x_i} \right) \cdot \sum_{i\in [T]} x_i,
    \end{eqnarray*}
    so $A \leq 0$ iff
    \begin{eqnarray}
y & \leq & \frac{1}{3} \cdot  \frac{\sum_{i\in [T]} x_i^2}{\sum_{i\in [T]} x_i} .\label{eq-const-A}
    \end{eqnarray}
    and to have each $B_k\leq 0$ we must observe equivalently
    \begin{eqnarray*}
      y & \leq & \frac{2k-1}{2k} \cdot \frac{\sum_{i\in [T]} x_i^{2k}}{\sum_{i\in [T]} x_i^{2k-1}}, \forall k \geq 2,
    \end{eqnarray*}
    and from Lemma \ref{lem-Ratio} and the fact that $k \mapsto 1 - (1/(2k))$ is strictly increasing, it is sufficient to require
    \begin{eqnarray*}
      y & \leq & \frac{1}{2} \cdot  \frac{\sum_{i\in [T]} x_i^2}{\sum_{i\in [T]} x_i},
    \end{eqnarray*}
    and we observe that this is satisfied if \eqref{eq-const-A} holds, which brings the statement of the Lemma.
    \end{proof}
We now embark on the proof of Theorem \ref{th-boost-P}. We use the following inequality \citet[Lemma 2]{DBLP:journals/ai/NockN07}:
  \begin{eqnarray}
    1-ab & \geq & \sqrt{1-a^2} \cdot \exp\left(-\frac{b}{2}\cdot \log\left(\frac{1+a}{1-a}\right)\right), \forall a, b \in [-1,1]. \label{eqb1}
\end{eqnarray}
Consider prediction for $i$-th training example with associated next token vector $\ve{y}_{i} \in \mathcal{Y}$, fix $a\defeq \mu_t \in [-1,1]$ and 
\begin{eqnarray*}
  b & \defeq & \frac{\ve{y}_{i}^\top \ve{h}_t (\bm{x}_i) }{2 h_{t, \infty}}.
\end{eqnarray*}
We note that H{\"o}lder's inequality and the definition of $\mathcal{Y}$ imply $|\ve{y}_{i}^\top \ve{h}_t (\bm{x}_i)| \leq \|\ve{y}_{i}\|_1 \cdot h_{t, \infty}$ and $\|\ve{y}_{i}\|_1 = n_i * (1/n_i) + (n-n_i) * (1/(n-n_i)) = 2$, so $|b| \leq 1$. \eqref{eqb1} brings for these choices:
\begin{eqnarray}
  \lefteqn{1 - \frac{\mu_t}{2 h_{t, \infty}}\cdot \ve{y}_{i}^\top \ve{h}_t(\bm{x}_i)}\nonumber\\
  & \geq & \sqrt{1-\mu^2_t} \cdot \exp\left(-\frac{\ve{y}_{i}^\top \ve{h}_t(\bm{x}_i)}{4 h_{t, \infty}} \ln\left( \frac{1+\mu_t}{1-\mu_t}\right)\right)\nonumber\\
   & & = \sqrt{1-\mu^2_t} \cdot \exp\left(- \ve{y}_{i}^\top \left(c_t \cdot \ve{h}^\star_t (\bm{x}_i)\right)\right), \: c_t \defeq \frac{1}{4}\cdot \ln\left( \frac{1+\mu_t}{1-\mu_t}\right), \: \ve{h}^\star_t \defeq \frac{1}{h_{t, \infty}} \cdot \ve{h}_t\:\:.\label{ubound}
\end{eqnarray}
Unraveling the weight update rule, we also obtain:
\begin{eqnarray}
w_{T+1, i} \cdot \prod_{j=1}^{T} {(1 - \mu^2_{t})} & = & w_{1,i}\cdot \prod_{j=1}^{T} {\left(1 -\frac{\mu_t}{2 h_{t, \infty}}\cdot \ve{y}_{i}^\top \ve{h}_t(\bm{x}_i)\right)}\:\:. \label{onex}
\end{eqnarray}
Using $T$ times (\ref{ubound}) on the right-hand side of (\ref{onex}) and simplifying yields:
\begin{eqnarray}
w_{1,i} \cdot \exp\left( -\ve{y}_{i}^\top \sum_{t=1}^{T} c_t \cdot \ve{h}^\star_t (\bm{x}_i) \right) & \leq & w_{T+1, i} \cdot \prod_{t=1}^{T} {\sqrt{1 - \mu^2_{t}}} \:\:,\label{funeq0}
\end{eqnarray}
which we then sum for $i\in [m]$ and simplify ($\ve{w}_. \in \Delta_n$):
\begin{eqnarray}
  \sum_{i=1}^m w_{1,i} \cdot \exp\left( -\ve{y}_{i}^\top \sum_{t=1}^{T} c_t \cdot \ve{h}^\star_t (\bm{x}_i) \right) & \leq & \left(\sum_{i=1}^m w_{T+1, i}\right) \cdot \prod_{t=1}^{T} {\sqrt{1 - \mu^2_{t}}} \nonumber\\
   & & = \prod_{t=1}^{T} {\sqrt{1 - \mu^2_{t}}} \:\:,\label{funeq1}
\end{eqnarray}
Denote
\begin{eqnarray*}
  \ve{H}_T(\bm{x}) & \defeq & \frac{1}{\sum_{t=1}^{T} c_t} \cdot \sum_{t=1}^{T} c_t \cdot \ve{h}^\star_t (\bm{x}).
\end{eqnarray*}
We get from $\iver{q\leq 0} \leq \exp(-q), \forall q$ the first inequality and from \eqref{funeq1} the last inequality, of
 \begin{eqnarray}
   \sum_{i=1}^m w_{1,i} \cdot  \bigiver{\ve{y}_{i}^\top \ve{H}_T(\bm{x}_i) \leq \theta}  & = & \sum_{i=1}^m w_{1,i} \cdot  \bigiver{\ve{y}_{i}^\top  \sum_{t=1}^{T} c_t \cdot \ve{h}^\star_t (\bm{x}) - \theta \cdot \sum_{t=1}^{T} c_t \leq 0} \nonumber\\
                                                                                              & \leq &  \sum_{i=1}^m w_{1,i} \cdot \exp\left(-\ve{y}_{i}^\top  \sum_{t=1}^{T} c_t \cdot \ve{h}^\star_t (\bm{x}) + \theta \cdot \sum_{t=1}^{T} c_t  \right)\nonumber\\
                                                                                              & & = \exp\left(\theta \cdot \sum_{t=1}^{T} c_t \right) \cdot \sum_{i=1}^m w_{1,i} \cdot \exp\left(-\ve{y}_{i}^\top  \sum_{t=1}^{T} c_t \cdot \ve{h}^\star_t (\bm{x}) \right)\nonumber\\
    & \leq & \exp\left(\theta \cdot \sum_{t=1}^{T} c_t \right) \cdot \prod_{t=1}^{T} {\sqrt{1 - \mu^2_{t}}}, \forall \theta \in \mathbb{R}. \label{eq-bM1}
 \end{eqnarray}
 We simplify the RHS using the expression of $c_t$ in \eqref{ubound}:
 \begin{eqnarray*}
   \exp\left(\theta \cdot \sum_{t=1}^{T} c_t \right) \cdot \prod_{t=1}^{T} {\sqrt{1 - \mu^2_{t}}} & = & \prod_{t=1}^{T} \left( \frac{1+\mu_t}{1-\mu_t}\right)^{\frac{\theta}{4}}\cdot \prod_{t=1}^{T} {\sqrt{1 - \mu^2_{t}}} \\
   & = & \sqrt{\left(\prod_{t=1}^{T} (1+\mu_t)\right)^{1+\frac{\theta}{2}}\cdot  \left(\prod_{t=1}^{T} (1-\mu_t)\right)^{1-\frac{\theta}{2}}},
 \end{eqnarray*}
 and we use Lemma \ref{lem-bsup}: assuming $0 \leq \mu_t <1, \forall t$ (note that we necessarily have $|\mu_t|\leq 1, \forall t$), we get
 \begin{eqnarray*}
   \exp\left(\theta \cdot \sum_{t=1}^{T} c_t \right) \cdot \prod_{t=1}^{T} {\sqrt{1 - \mu^2_{t}}} & \leq & \exp \left(-\frac{\theta }{4} \cdot \sum_{t=1}^T \mu_t \right), \forall \theta \in \left[0, \frac{2}{3} \cdot \frac{\sum_{t=1}^T \mu_t^2}{\sum_{t=1}^T \mu_t}\right),
 \end{eqnarray*}
 which we connect to \eqref{eq-bM1} and finally get
 \begin{eqnarray}
\pr_{i \sim \ve{w}}\left[\ve{y}_{i}^\top \ve{H}_T(\bm{x}_i) \leq \theta\right] & \leq & \exp \left(-\frac{\theta }{4} \cdot \sum_{t=1}^T \mu_t \right). \label{eq-boosted-bsup}
 \end{eqnarray}
We then remark that the LHS is a non-decreasing function of $\theta$ while the RHS is a strictly decreasing function of $\theta$, and so we get
  \begin{eqnarray}
    \forall \theta \leq \frac{2}{3} \cdot \frac{\sum_{t=1}^T \mu_t^2}{\sum_{t=1}^T \mu_t},  \quad\pr_{i \sim \ve{w}}\left[\ve{y}_{i}^\top \ve{H}_T(\bm{x}_i) \leq \theta\right] & \leq & \exp \left(-\frac{1}{6} \cdot \frac{\sum_{t=1}^T \mu_t^2}{\sum_{t=1}^T \mu_t} \cdot \sum_{t=1}^T \mu_t \right)\nonumber\\
     & & = \exp \left(-\frac{1}{6} \cdot \sum_{t=1}^T \mu_t^2 \right) .\label{eq-boosted-bsup-gen}
 \end{eqnarray}
 We finally process the event: we remark that
 \begin{eqnarray}
   \ve{y}_{i}^\top \ve{H}_T(\bm{x}_i) & \defeq & \frac{1}{\sum_{t=1}^{T} c_t} \cdot \sum_{t=1}^{T} c_t \cdot \ve{y}_{i}^\top \ve{h}^\star_t (\bm{x}_i)\nonumber\\
                                           & = & \frac{1}{\sum_{t=1}^{T} c_t} \cdot \sum_{t=1}^{T} c_t \cdot \frac{1}{h_{t,\infty}} \cdot \ve{y}_{i}^\top \ve{h}_t(\bm{x}_i) \nonumber\\
   & = &  \frac{1}{\sum_{t=1}^{T} c_t} \cdot \sum_{t=1}^{T}  \left( \frac{1}{n_i} \cdot \sum_{j\in \mathcal{Y}_i} \frac{c_t h_{t, j}}{h_{t,\infty}} - \frac{1}{n-n_i}\cdot \sum_{j\in \overline{\mathcal{Y}}_i}  \frac{c_t h_{t, j}}{h_{t,\infty}}\right)  \label{eq-latent-c}
 \end{eqnarray}
 where we have used the definition of $\ve{h}^\star_t$ and the fact that $\ve{y}_. \in \mathcal{Y}$, reminding that $\mathcal{Y}_i \defeq \{j : y_{ij} = 1/n_i\}$ denotes the set of potential next tokens, while $\overline{\mathcal{Y}}_i \defeq \{j : y_{ij} = -1/(n-n_i)\}$ denotes the rest of the tokens (because of the definition of $\mathcal{Y}$). 
   Recall that the mentored boosted probability vector is defined as 
   \begin{eqnarray*}
     \tilde{\ve{\pi}}_T (\ve{x}) & \defeq &\frac{1}{Z_T} \cdot \prod_{t=1}^{T} \left(\ve{p}_t(\ve{x})\right)^{\frac{c_t}{\sum_{u \in [T]} c_u}}, \quad \ve{p}_t(\ve{x}) \defeq  \frac{1}{Z_t} \cdot \exp\left(\frac{1}{h_{t,\infty}}\cdot \ve{h}_t(\ve{x})\right) 
   \end{eqnarray*}
   so that the event "$\ve{y}_{i}^\top \ve{H}_T(\bm{x}_i) \leq \theta$" equivalently states, after taking exponentials and normalizing by $Z_T \cdot \prod_{t=1}^{T} Z_t^{\frac{c_t}{\sum_{u \in [T]} c_u}}$,
   \begin{eqnarray*}
     \overline{\{\tilde{\pi}_{T,j}(\ve{x}_i), j \in \mathcal{Y}_i\}}^G& \leq & \overline{\{\tilde{\pi}_{T,j}(\ve{x}_i), j \in \overline{\mathcal{Y}}_i\}}^G \cdot \exp(\theta)
   \end{eqnarray*}
   where $\tilde{\pi}_{T,k}(\ve{x})$ is coordinate $k$ in $\ve{p}_T(\ve{x})$ and for any set of non negative reals $\mathcal{A}$, $\overline{\mathcal{A}}^G$ denotes the geometric average of the elements of $\mathcal{A}$. There remains to put this event in \eqref{eq-boosted-bsup-gen} to get that $\forall \rho \leq \exp\left(\frac{2}{3} \cdot \frac{\sum_{t=1}^T \mu_t^2}{\sum_{t=1}^T \mu_t}\right)$,
    \begin{eqnarray*}
     \pr_{i \sim \ve{w}}\left[ \overline{\{\tilde{\pi}_{T,j}(\ve{x}_i), j \in \mathcal{Y}_i\}}^G \leq \rho \cdot \overline{\{\tilde{\pi}_{T,j}(\ve{x}_i), j \in \overline{\mathcal{Y}}_i\}}^G \right] & \leq & \exp \left(-\frac{1}{6} \cdot \sum_{t=1}^T \mu_t^2\right).
    \end{eqnarray*}
    Finally, we remark that $(\sum_{t=1}^T \mu_t^2) / \sum_{t=1}^T \mu_t = \expect[\mu] + \var[\mu] / \expect[\mu]$, and conclude with the statement of the Theorem.

  \subsection{Proof of Lemma \ref{lem-tempered-mentor}}\label{sec-proof-lem-tempered-mentor}

Coordinates of $\ve{v}_\alpha$ are $v_{\alpha, i} = p_i^\alpha q_i^{1-\alpha} / Z$ with $Z = \sum_{j \in [n]} p_j^\alpha q_j^{1-\alpha}$. So we want
  \begin{eqnarray}
    \min\{p_i, q_i\}  \leq \frac{p_i^\alpha q_i^{1-\alpha}}{Z} \leq  \max\{p_i, q_i\}, \forall i \in [n], \label{ineqGM}
  \end{eqnarray}
The AGH inequality yields $Z \leq \sum_{j \in [n]} \alpha p_j + (1-\alpha) q_j = 1$ so to get \eqref{ineqGM} we only have to guarantee $Z \geq p_i^\alpha q_i^{1-\alpha} / \max\{p_i, q_i\}, \forall i \in [n]$, or equivalently,
  \begin{eqnarray}
    Z & \geq & \max_i \left(\frac{q_i}{p_i}\right)^{\iver{p_i > q_i}-\alpha}\label{eq-B-Z}.
  \end{eqnarray}
  Introducing $i_* \in [n]$ the index realizing the max (assuming it is unique for simplicity), the RHS can be reformulated as (we recall that $\alpha \in (0,1)$)
  \begin{eqnarray*}
    \max_i \left(\frac{q_i}{p_i}\right)^{\iver{p_i > q_i}-\alpha} & = & \max_i \min \left\{\left(\frac{q_i}{p_i}\right)^{1-\alpha}, \left(\frac{p_i}{q_i}\right)^{\alpha}\right\}\\
     & = & \left(\max_i \min \left\{\frac{q_i}{p_i}, \frac{p_i}{q_i}\right\}\right)^{\iver{p_{i_*} > q_{i_*}}\cdot(1-\alpha) + \iver{p_{i_*} < q_{i_*}}\cdot \alpha}.
  \end{eqnarray*}
We use the tempered logarithm and tempered exponential as \cite[Chapter 7]{nGT}:
  \begin{eqnarray*}
\log_t (z) \defeq \frac{1}{1-t}\cdot\left(z^{1-t}-1\right) & , & \exp_t (z) \defeq \left[1+(1-t) z\right]^{1/(1-t)}_+ \quad ([z]_+ \defeq \max\{0,z\}),
  \end{eqnarray*}
  where the case $t=1$ is supposed to be the extension by continuity to the $\log$ and $\exp$ functions, respectively. We shall consider $t\in (0,1)$, values for which the concavity / convexity of functions is the same as for $t=1$, see also \citet{DBLP:conf/aaai/AmidNNW24,DBLP:conf/aistats/AmidNW23,DBLP:conf/nips/NockAW23}.
  
  Suppose first that $p_{i_*} > q_{i_*}$. Remark that $Z^{\frac{1}{1-\alpha}}$ can be conveniently rewritten as (we recall that $\alpha \in (0,1)$ and $Z \in [0,1]$)
  \begin{eqnarray*}
    Z^{\frac{1}{1-\alpha}} & = & \left[1 + (1-\alpha)\cdot \left( \frac{1}{1-\alpha} \cdot \sum_{i\in [n]} p_i^\alpha q_i^{1-\alpha} -p_i\right)\right]^{\frac{1}{1-\alpha}}\\
                           & = & \left[1 + (1-\alpha)\cdot \left( \sum_{i\in [n]} p_i \cdot \left\{\frac{1}{1-\alpha} \cdot \left(\left(\frac{q_i}{p_i}\right)^{1-\alpha} -1\right)\right\}\right)\right]^{\frac{1}{1-\alpha}}\\
                                 & = & \exp_t \expect_{i \sim \ve{p}} \log_t \frac{q_i}{p_i}, \quad \mbox{ with $t \defeq \alpha$},
  \end{eqnarray*}
  and if $p_{i_*} < q_{i_*}$, we rewrite $Z^{\frac{1}{\alpha}}$ as
  \begin{eqnarray*}
    Z^{\frac{1}{\alpha}} & = & \left[1 + (1-(1-\alpha))\cdot \left( \frac{1}{1-(1-\alpha)} \cdot \sum_{i\in [n]} p_i^\alpha q_i^{1-\alpha} -q_i\right)\right]^{\frac{1}{1-(1-\alpha)}}\\
                           & = & \left[1 + (1-(1-\alpha))\cdot \left( \sum_{i\in [n]} q_i \cdot \left\{\frac{1}{1-(1-\alpha)} \cdot \left(\left(\frac{p_i}{q_i}\right)^{1-(1-\alpha)} -1\right)\right\}\right)\right]^{\frac{1}{1-(1-\alpha)}}\\
                                 & = & \exp_t \expect_{i \sim \ve{q}} \log_t \frac{p_i}{q_i}, \quad \mbox{ with $t \defeq 1-\alpha$}.
  \end{eqnarray*}
  To summarize, \eqref{eq-B-Z} is equivalent to having
  \begin{eqnarray*}
    \exp_\alpha \expect_{i \sim \ve{p}} \log_\alpha \frac{q_i}{p_i} & \geq & \max_i \min \left\{\frac{q_i}{p_i}, \frac{p_i}{q_i}\right\} \quad \mbox{ if $p_{i_*} > q_{i_*}$},\\
    \exp_{1-\alpha} \expect_{i \sim \ve{q}} \log_{1-\alpha} \frac{p_i}{q_i} & \geq & \max_i \min \left\{\frac{q_i}{p_i}, \frac{p_i}{q_i}\right\} \quad \mbox{ if $p_{i_*} < q_{i_*}$},
  \end{eqnarray*}
  and provided these hold, we get
  \begin{eqnarray}
    \ve{v}_\alpha \in [\min\{\ve{p}, \ve{q}\}, \max\{\ve{p}, \ve{q}\}],\label{eq-const-tv-md-proof}
  \end{eqnarray}
  which is the statement of the Lemma.

  \subsection{Proof of Theorem \ref{thm-boost-and-mentor-TV}}\label{sec-proof-thm-boost-and-mentor-TV}

  We proceed in two steps, first showing the bullet elements of the statement, and then showing the bound on $D_{f_{\mathrm{TV}}} (\ve{v}_\alpha \|\ve{q})$ in  \eqref{eq-bound-tv-alpha}. Our first step proof uses the following technical Lemma.
  \begin{lemma}\label{lem-exp-esp-log}
    let $X$ be a random variable with values in an interval $[u,v]$ satisfying $u>0$ and $\expect[X] = 1$. Then for any $t \in [0,1]$ it holds that
    \begin{eqnarray}
\exp_t \expect[\log_t X] & \geq & \left(\frac{u}{v}\right)^t \label{eq-binf-expt}.
      \end{eqnarray}
    \end{lemma}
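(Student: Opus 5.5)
The plan is to rewrite the left-hand side of \eqref{eq-binf-expt} as a power mean and reduce to a two-point extremal distribution.

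\textbf{Step 1 (power-mean form).} For $t\in[0,1)$, linearity of expectation gives
\[
1+(1-t)\expect[\log_t X]=\expect[X^{1-t}]\geq 0,
\]
so the clipping $[\cdot]_+$ in $\exp_t$ is inactive. Hence
\[
\exp_t \expect[\log_t X]=\left(\expect[X^{1-t}]\right)^{1/(1-t)}\defeq M_{1-t}(X),
\]
the power mean of order $1-t$. The case $t=1$ follows by continuity: it becomes $\exp\expect[\log X]$, the geometric mean. Since $\expect[X]=1$ and $X\in[u,v]$, we also have $u\leq 1\leq v$. The target is therefore $M_{1-t}(X)\geq (u/v)^t$.

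\textbf{Step 2 (reduction to two points).} The map $x\mapsto x^{1-t}$ is concave on $[u,v]$, so it lies above its chord there. Write $\lambda\defeq (v-1)/(v-u)\in[0,1]$, which satisfies $\lambda u+(1-\lambda)v=1$. Taking expectations of the chord bound and using $\expect[X]=1$ gives
\[
\expect[X^{1-t}]\geq \lambda u^{1-t}+(1-\lambda)v^{1-t}.
\]
So it suffices to treat the two-point law on $\{u,v\}$ with mean $1$, which is the extremal case.

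\textbf{Step 3 (two-point inequality).} By weighted AM--GM,
\[
\lambda u^{1-t}+(1-\lambda)v^{1-t}\geq \left(u^{\lambda}v^{1-\lambda}\right)^{1-t},
\]
which gives $M_{1-t}\geq u^{\lambda}v^{1-\lambda}$, i.e.\ at least the geometric mean of the two-point law.

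\textbf{Main obstacle.} The step I expect to be hardest is showing that this geometric-mean bound still dominates $(u/v)^t$ for every $t$. It clearly dominates $u$, because $v\geq 1\geq u$; this settles $t=1$, since then $(u/v)^t=u/v\leq u$. It is not clear that it is enough for intermediate $t$. I would first try to prove directly the one-variable inequality
\[
\lambda u^{1-t}+(1-\lambda)v^{1-t}\geq (u/v)^{t(1-t)}.
\]
The approach is to view both sides as functions of $t$ and compare them at the endpoints $t=0$ (both sides equal $1$) and $t=1$ (both sides equal $1$). In between I would use log-convexity in $t$ of the left-hand side, since it is a sum of exponentials in $t$.

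\textbf{Fallback.} If that comparison fails, I would use the Hölder bound
\[
1=\expect[X]=\expect\!\left[X^{1-t}X^{t}\right]\leq v^{t}\,\expect[X^{1-t}],
\]
combined with $u\leq 1$, and adjust the exponents accordingly. This is the step where the precise constant $(u/v)^t$ has to be matched.
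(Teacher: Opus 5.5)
\paragraph{Where the proposal stops.} Your Steps 1--2 are correct and match how the paper begins: the secant of the concave map $x\mapsto x^{1-t}$, the reduction to the two-point law on $\{u,v\}$ with mean $1$, and the sufficient inequality $\lambda u^{1-t}+(1-\lambda)v^{1-t}\ge (u/v)^{t(1-t)}$. But the proposal stops exactly where the real work lies, and none of the routes you sketch closes it.
\begin{itemize}
\item Step 3 is too weak. As $t\to 0$ the target $(u/v)^t$ tends to $1$, whereas $u^{\lambda}v^{1-\lambda}<1$ whenever $u<1<v$, by strict AM--GM. For example, $u=1/2$, $v=3/2$, $t=0.1$ gives $0.866<0.896$.
\item The log-convexity plan points the wrong way. $L(t)=\lambda u^{1-t}+(1-\lambda)v^{1-t}$ is log-convex with $L(0)=L(1)=1$, so convexity plus matching endpoints only yields $\log L\le 0$ on $[0,1]$, an \emph{upper} bound. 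Moreover $t\mapsto t(1-t)\log(u/v)$ is itself convex with the same endpoint values, so no endpoint/chord comparison can order the two sides.
\item The H\"older fallback alone gives $M_{1-t}(X)\ge v^{-t/(1-t)}$, and knowing $u\le 1$ does not upgrade this to $(u/v)^t$. For $u=0.9$, $v=100$, $t=1/2$ it gives $0.01$ against a target of about $0.095$.
\end{itemize}

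\paragraph{The missing idea.} You already hold two elementary lower bounds; \emph{interpolate} them instead of asking either one to do the job alone. For $t\in(0,1)$ let $m=\mathbb{E}[X^{1-t}]$. Since $X\ge u$, $m\ge u^{1-t}$. Since $X^{-t}\ge v^{-t}$ and $\mathbb{E}[X]=1$, $m=\mathbb{E}[X\cdot X^{-t}]\ge v^{-t}$. Hence
\[
m=m^{t}\,m^{1-t}\ \ge\ \bigl(u^{1-t}\bigr)^{t}\bigl(v^{-t}\bigr)^{1-t}=(u/v)^{t(1-t)}.
\]
Raising to the power $1/(1-t)$ gives $\exp_t\mathbb{E}[\log_t X]=m^{1/(1-t)}\ge (u/v)^t$. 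The case $t=0$ is an equality, and $t=1$ follows from $\exp\mathbb{E}[\log X]\ge u\ge u/v$. This argument does not even need the two-point reduction.

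Your convexity idea can be salvaged the same way. For the two-point variable $Y$, the supporting lines of $\log L$ at $t=0$ and $t=1$ give $\log L(t)\ge -t\,\mathbb{E}[Y\log Y]\ge -t\log v$ and $\log L(t)\ge (1-t)\,\mathbb{E}[\log Y]\ge (1-t)\log u$. Their $(t,1-t)$-combination is exactly $t(1-t)\log(u/v)$.

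\paragraph{Comparison with the paper.} The paper closes the two-point inequality differently: it substitutes $x=v/u$ and claims $g_x(p)=\log(p+(1-p)x^{1-t})-(1-t)\log(p+(1-p)x)\ge 0$. That claim is false. Concavity of $y\mapsto y^{1-t}$ gives $g_x(p)\le 0$; for instance $g_4(1/2)\approx -0.053$ at $t=1/2$. So the paper's own final step has a hole as written. What is true, and all that is needed, is $g_x(p)\ge -t(1-t)\log x$, which is precisely what the interpolation argument delivers.
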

    \begin{proof}
      We first prove the result for $t \in (0,1)$. Since $\log_t$ is concave for any $t \in [0,1]$, it lies above any of its secants in the interval defined by the intersections, so we get
      \begin{eqnarray*}
        \log_t z & \geq & \frac{v-z}{v-u} \cdot \log_t(u) + \frac{z-u}{v-u} \cdot \log_t(v),
      \end{eqnarray*}
      which yields, since $\expect[X] = 1$,
      \begin{eqnarray*}
        \expect[\log_t X] & \geq & \frac{v-1}{v-u} \cdot \log_t(u) + \frac{1-u}{v-u} \cdot \log_t(v),
      \end{eqnarray*}
      and letting $p \defeq (v-1)/(v-u) \in [0,1]$, yields the equivalent inequality after expressing $\log_t$:
      \begin{eqnarray*}
        \expect[\log_t X] & \geq & \frac{p u^{1-t} + (1-p) v^{1-t} - 1}{1-t},
      \end{eqnarray*}
      and since $\exp_t$ is non-decreasing for any $t \in [0,1]$, ensures \eqref{eq-binf-expt} provided the sufficient condition holds: $\exp_t\left(\frac{p u^{1-t} + (1-p) v^{1-t} - 1}{1-t}\right) \geq \left(\frac{u}{v}\right)^t$, which simplifies to checking the condition:
      \begin{eqnarray}
        p u^{1-t} + (1-p) v^{1-t} & \geq & \left(\frac{u}{v}\right)^{t(1-t)}. \label{eq-suff-cond}
      \end{eqnarray}
      Since $u\leq \expect[X] = 1 \leq v$, it is enough to check this inequality for any $0<u<1, v>1$ with $p \defeq (v-1)/(v-u) \in [0,1]$. Let us simplify it once more. Define $x \defeq v/u \geq 1$. The RHS of \eqref{eq-suff-cond} only depends on $x$ and $t$, and it turns out the LHS simplifies:
      \begin{eqnarray*}
        p u^{1-t} + (1-p) v^{1-t} & = & u^{1-t} \cdot \left( p + (1-p) x^{1-t}\right)\\
        & = & \frac{p + (1-p) x^{1-t}}{(p+(1-p)x)^{1-t}},
      \end{eqnarray*}
      since $pu + (1-p)v = 1$ and $v=ux$ which yields $u=1/(p+(1-p)x)$. Using these expressions depending on $p,x,t$ and taking logs in \eqref{eq-suff-cond}, we want to show
      \begin{eqnarray}
        \underbrace{\log(p + (1-p) x^{1-t}) - (1-t)\log(p+(1-p)x)}_{\defeq g_x(p)} & \geq & -t(1-t) \log (x). \label{eq-suff-cond-log}
      \end{eqnarray}
      To show this, let us first analyze $g_x(p)$. We have
      \begin{eqnarray*}
        g'_x(p) & = & \frac{(1-t)(x-1)}{p+(1-p)x} -\frac{x^{1-t} -1}{p + (1-p) x^{1-t}} ,
      \end{eqnarray*}
      and we obtain $g'_x(p) \leq 0$ iff
      \begin{eqnarray*}
        p & \leq & Q \defeq \frac{(1-t)x^{1-t}+tx^{2-t}-x}{t(x-1)(x^{1-t}-1)},
      \end{eqnarray*}
      and we remark that
      \begin{eqnarray*}
        Q & = & 1 + \frac{x^{1-t} + (t+1)x - t}{t(x-1)(x^{1-t}-1)}\\
         & \geq & 1,
      \end{eqnarray*}
      since $x\geq 1, t \in [0,1]$, so the minimum of $g_x(p)$ for $p\in [0,1]$ is obtained at $g_x(1) = 0$, showing $g_x(p)\geq 0, \forall x\geq 1,  t \in [0,1]$. Since the RHS of \eqref{eq-suff-cond-log} is $\leq 0$ under these conditions, \eqref{eq-suff-cond-log} is proven and so is Lemma \ref{lem-exp-esp-log} for $t\in (0,1)$. We then remark the continuity of both functions in \eqref{eq-binf-expt} for $t\in [0,1]$ so taking the limits in $0^+$ and $1^-$ completes the proof.
    \end{proof}
We now shift to analyzing boosting \textit{under the constraint} that we must keep \eqref{eq-const-tv-md-proof} true. Our first step consists of showing constraints on the exponent $\alpha$, and for this we distinguish two cases.\\
    
\noindent \textbf{Case 1:}  we first assume $p_{i_*} > q_{i_*}$, so we work with the constraint
  \begin{eqnarray*}
    \exp_\alpha \expect_{i \sim \ve{p}} \log_\alpha \frac{q_i}{p_i} & \geq & \max_i \min \left\{\frac{q_i}{p_i}, \frac{p_i}{q_i}\right\},
  \end{eqnarray*}
  which, from Lemma \ref{lem-exp-esp-log}, holds if we have $\alpha \log\left(\frac{\min_i \frac{q_i}{p_i}}{\max_i \frac{q_i}{p_i}}\right) \geq \log\left(\max_i \min \left\{\frac{q_i}{p_i}, \frac{p_i}{q_i}\right\}\right)$, that is,
  \begin{eqnarray}
\alpha & \leq & \frac{\log\left(\min_i \max \left\{\frac{q_i}{p_i}, \frac{p_i}{q_i}\right\}\right)}{\log\left(\frac {\max_i \frac{q_i}{p_i}}{\min_i \frac{q_i}{p_i}}\right)}.\label{eq-alpha1}
  \end{eqnarray}
  This provides our first constraint on $\alpha$. We move to the alternative one.\\

\noindent \textbf{Case 2:}  if instead $p_{i_*} < q_{i_*}$, we want from Lemma \ref{lem-tempered-mentor}
  \begin{eqnarray*}
    \exp_{1-\alpha} \expect_{i \sim \ve{q}} \log_{1-\alpha} \frac{p_i}{q_i}  & \geq & \max_i \min \left\{\frac{q_i}{p_i}, \frac{p_i}{q_i}\right\},
  \end{eqnarray*}
  and so we want from Lemma \ref{lem-exp-esp-log}
  \begin{eqnarray}
1-\alpha & \leq & \frac{\log\left(\min_i \max \left\{\frac{q_i}{p_i}, \frac{p_i}{q_i}\right\}\right)}{\log\left(\frac {\max_i \frac{q_i}{p_i}}{\min_i \frac{q_i}{p_i}}\right)}. \label{eq-alpha2}
  \end{eqnarray}

  At this point, if we can provide boosting coefficients $c_1 = f(\mu_1)$ and $c_2=f(\mu_2)$ where $f$ is given in \eqref{eq-def-ct} (main file), such that (a)
  \begin{eqnarray}
    \alpha & \defeq & \frac{c_u}{c_u+c_v}\label{eq-alpha-uv}
  \end{eqnarray}
  satisfies whichever \eqref{eq-alpha1} or \eqref{eq-alpha2} is relevant (where distinct $u,v$ are in $\{1,2\}$), \textit{and} (b) the associated boosting advantage is large enough to show that the combination of two models does satisfy the exponential decrease associated in \eqref{eq-bound-boosted-2} (main file), then we are done: in all cases, the boosted solution is also optimal for the TV-MD problem. What we need to do is find the sequence of models (among drafter and target) to include in the boosted model, and find the edges $\mu_1$ and $\mu_2$ such that the related boosting advantage is large enough.

  In the context of boosting, we want the best guarantee from the boosting advantage standpoint, so let us assume that the first model we include is the target model, so that $\mu_1 = \mu_\target$, and show how to collapse both cases above as a single one that controls $\mu_2$ as an eventual clamping of $\mu_\drafter$. If $p_{i_*} > q_{i_*}$, we fix $u=1, v=2$ in \eqref{eq-alpha-uv} and need to show \eqref{eq-alpha1} for $\alpha = c_1 / (c_1 + c_2)$. Otherwise if $p_{i_*} < q_{i_*}$, we permute $u=2, v=1$ in \eqref{eq-alpha-uv} so that $1-\alpha = c_1 / (c_1 + c_2)$ and \eqref{eq-alpha2} is the same condition as for the first case.\\

We now include \eqref{eq-def-eps}. The RHS of \eqref{eq-alpha1} is $\geq \log(1+\rhopq)/\log(1/\epsilonpq^2)$ and since $\alpha = c_2/(c_1 + c_2)$, \eqref{eq-alpha1} is equivalent to having
  \begin{eqnarray*}
    c_2 & \leq & \frac{\log(1+\rhopq)}{\log(1/\epsilonpq^2)-\log(1+\rhopq)} \cdot c_1,
  \end{eqnarray*}
  which, predictably, prevents a too large boosting coefficient for the drafter model. We now need another technical Lemma
  \begin{lemma}\label{lem-zx}
    For any $z,x$ such that $z \in (0,1]$, $x\geq 0$ and $z (1+x) \leq 1$,
    \begin{eqnarray}
      \frac{\ln(1+x)}{\ln\left(\frac{1}{z^2}\right) - \ln(1+x)} & \geq & z \cdot x. \label{eq-binf-zx}
    \end{eqnarray}
  \end{lemma}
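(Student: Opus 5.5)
The plan is to rewrite \eqref{eq-binf-zx} as the nonnegativity of a single-variable function and then study its shape on the admissible interval. The degenerate case $x=0$ is immediate: the left-hand side is $0$, or the inequality is read as $0 \geq 0$ if $z=1$ as well. So assume $x>0$ and put $y \defeq 1+x \in (1, 1/z]$, the upper end coming from $z(1+x)\leq 1$, and $L \defeq \ln(1/z) \geq 0$. The denominator is $\ln\frac{1}{z^2 y} \geq \ln\frac1z = L$, so it is positive whenever $z<1$. If $z=1$, the constraint $z(1+x)\leq 1$ forces $x=0$, which was handled above. Clearing the positive denominator, \eqref{eq-binf-zx} becomes
\begin{eqnarray*}
g(y) & \defeq & \ln y - z\,(y-1)\,(2L - \ln y) \;\geq\; 0, \qquad y \in [1, 1/z].
\end{eqnarray*}

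The key steps, in order, are as follows. First, check the endpoints: $g(1)=0$, and $g(1/z) = L - (1-z)L = zL \geq 0$. Second, compute the derivatives:
\begin{eqnarray*}
g'(y) & = & \frac1y - z\,(2L-\ln y) + z\,\frac{y-1}{y}, \qquad g''(y) \;=\; \frac{zy+z-1}{y^2}.
\end{eqnarray*}
Hence $g$ is concave on $[1,y_0]$ and convex on $[y_0, 1/z]$, with $y_0 \defeq (1-z)/z$. Third, note that $g'(1) = 1-2z\ln(1/z) \geq 1-2/e > 0$, because $\max_{z\in(0,1]} z\ln(1/z)=1/e$. On the concave piece $g$ is therefore at least the minimum of its values at $1$ and $y_0$. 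It then suffices to show $g(y_0)\geq 0$ and $g\geq 0$ on the convex piece. If $y_0\leq 1$ (that is, $z\geq 1/2$), $g$ is convex on the whole interval with $g(1)=0$ and $g'(1)>0$, so $g\geq 0$ and we are done.

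The main obstacle is the regime of small $z$, where $g'(y_0) = z\,(2-L+\ln(1-z))$ turns negative. Then $g$ rises, falls, and rises again, and the convex-region minimum must be shown to stay nonnegative. A naive tangent bound at $y=1/z$ is too weak there: with $y\geq y_0=1/z-1$ it only yields $2zL-1<0$. My first attempt would be to locate the critical point $y^\star$ of $g$ in the convex region from $g'(y^\star)=0$. That equation lets us eliminate $2L-\ln y^\star$, and $g(y^\star)$ becomes an explicit expression in $y^\star$ and $z$ alone, to be shown nonnegative using $1\leq y^\star\leq 1/z$.

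Failing that, I would instead strengthen the left side via $\ln y \geq 2(y-1)/(y+1)$ on the low range. On the high range I would use that $2L-\ln y \leq L + \ln(1/(zy)) + \ldots$ is small near $y=1/z$. I would then split $[1,1/z]$ at $y=\sqrt{1/z}$ and bound each half separately.
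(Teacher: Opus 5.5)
Your reduction is sound. Clearing the denominator, the values $g(1)=0$ and $g(1/z)=zL$, the formula for $g''$, the concave/convex split at $y_0=1/z-1$, and the case $z\ge 1/2$ are all correct.

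The gap is that for $z<1/2$ the proposal stops exactly where the work is. You never prove $g(y_0)\ge 0$, nor $g\ge 0$ on $[y_0,1/z]$; the last two paragraphs list things you would try, not an argument. Moreover, the diagnosis that steers you away from the simple route rests on an arithmetic slip. Correctly, $g'(1/z)=z-zL+z(1-z)=z(2-z-L)$. Your bound $2zL-1$ corresponds to $g'(1/z)=1-zL$, i.e.\ you dropped the division by $y$ in $z(y-1)/y$. In particular, the picture ``rises, falls, rises again'' is wrong for $z$ below roughly $0.158$: there $g$ decreases all the way to $y=1/z$. Your first fallback, as stated, would not close either. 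Using only $y^\star\le 1/z$, the elimination gives $g(y^\star)\ge \ln y^\star-u-u^2$ with $u=1-1/y^\star$, which is negative at $y^\star=2$. You would also need that a critical point in the convex piece forces $y^\star$ to be large.

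Here is how your route closes.

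First, $g(y_0)=2\bigl[z\ln(1/z)-(1-z)\ln(1/(1-z))\bigr]\ge 0$ for every $z\in(0,1/2]$. This function of $z$ has second derivative $(2z-1)/(z(1-z))<0$ on $(0,1/2)$ and vanishes at $0^+$ and at $1/2$. Hence concavity gives $g\ge\min\{g(1),g(y_0)\}\ge 0$ on $[1,y_0]$.

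On the convex piece there are two cases:
\begin{itemize}
\item If $g'(y_0)\ge 0$, then $g$ is nondecreasing there, so $g\ge g(y_0)\ge 0$.
\item If $g'(y_0)=z(2-L+\ln(1-z))<0$, then $z(1-z)<e^{-2}$, so $z<0.17<1/e$ and $L>1$. Since $g$ is convex on $[y_0,1/z]$ and $y-1/z\in[-1,0]$ there, the tangent at $1/z$ gives $g(y)\ge zL-\max\{0,z(2-z-L)\}=\min\{zL,\,z(2L-2+z)\}>0$.
\end{itemize}

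For comparison, the paper avoids the concave/convex split altogether. It fixes $x$ and views the cleared inequality $(1+zx)\ln(1+x)+2zx\ln z\ge 0$ as a function of $z$, which is strictly convex (second derivative $2x/z$). The single critical point $z_*=1/(e\sqrt{1+x})$ then reduces everything to $ew\ln w-w^2+1\ge 0$ for $w=\sqrt{1+x}\in[1,e]$. The boundary $z=1/(1+x)$ covers $x>e^2-1$.
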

  \begin{proof}
    The denominator in the LHS of \eqref{eq-binf-zx} being non negative, we reformulate the inequality as $f_x(z) \defeq (1+zx)\ln(1+x) + 2zx \ln(z) \geq 0$, where we treat $x$ as a constant. We easily get that $f_x(z)$ is strictly convex and has a global minimum at $z_* \defeq 1/(e \sqrt{1+x})$. If the minimum falls in the set of constraints for the value of $x$ (we can show that this happens iff $x \leq e^2 - 1$), we get that
    \begin{eqnarray*}
      f_x(z_*) & = & \ln(1+x) - \frac{2x}{e\sqrt{1+x}},
    \end{eqnarray*}
    and so, letting $y \defeq \sqrt{1+x}$, we need to show $e y\ln y - y^2 +1 \geq 0, \forall y \in [1, e]$, which is easily checked (the function is 0 in $y=1$ and its derivative is $\geq 0$ on $[1,e]$). A similar proof holds if $z_*$ is not in the set of constraints.
\end{proof}
Lemma \ref{lem-zx} yields, since $\epsilonpq \in (0,1]$, $\rhopq\geq 0$ and $\epsilonpq (1+\rhopq) \leq 1$, 
  \begin{eqnarray*}
\frac{\log(1+\rhopq)}{\log(1/\epsilonpq^2)-\log(1+\rhopq)} & \geq & \rhopq \cdot \epsilonpq,
  \end{eqnarray*}
  so we can simplify the requirement to $c_2 \leq \rhopq \epsilonpq c_1$, which equivalently reads
  \begin{eqnarray}
\frac{1+\mu_2}{1-\mu_2} & \leq & \left(\frac{1+\mu_\target}{1-\mu_\target} \right)^{\rhopq \epsilonpq},\label{eq-b-comp-p}
  \end{eqnarray}
  and thus yields
  \begin{eqnarray}
\mu_2 & \defeq & \min \left\{\mu_\drafter, \frac{\left(\frac{1+\mu_\target}{1-\mu_\target}\right)^{\rhopq \epsilonpq}-1}{\left(\frac{1+\mu_\target}{1-\mu_\target}\right)^{\rhopq \epsilonpq}+1}\right\}.\label{eq-def-mu2-app}
  \end{eqnarray}
  Note that this biases the computation of boosting coefficients but since we do not include further models after $t=2$, this does not change the analysis of the convergence, and the current analysis in the proof of Theorem \ref{th-boost-P} accommodates for the case where the \textit{last} boosting coefficient is eventually reduced.\\
  
  We now need another simple technical Lemma.
  \begin{lemma}\label{lem-perspective}
Let $f$ be convex such that $f(0) = 0$. Then for any $z \in \mathrm{dom}f$ and any $t\in [0,1]$, $t \cdot f(z) \geq f(t\cdot z)$.
\end{lemma}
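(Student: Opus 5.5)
The plan is to write the scaled point as a convex combination of $0$ and $z$, and then apply the definition of convexity directly. Fix $z \in \mathrm{dom} f$ and $t \in [0,1]$. First I note that $\mathrm{dom} f$ is convex and contains $0$, since $f(0)=0$ is finite. Hence $t z = t\cdot z + (1-t)\cdot 0$ also lies in $\mathrm{dom} f$, and $f(tz)$ is well defined (possibly after extending $f$ to $\mathbb{R}\cup\{+\infty\}$ outside its domain, which changes nothing here).

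Next, I apply Jensen's inequality for two points with weights $t$ and $1-t$:
\begin{eqnarray*}
f(t z) = f\left(t\cdot z + (1-t)\cdot 0\right) & \leq & t\cdot f(z) + (1-t)\cdot f(0) = t \cdot f(z),
\end{eqnarray*}
where the last equality uses $f(0) = 0$. This is exactly the claimed inequality $t \cdot f(z) \geq f(t\cdot z)$.

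I also check the boundary cases. For $t=0$ the inequality reads $0 \geq f(0) = 0$, and for $t=1$ it is an equality, so both are consistent with the general argument. There is no real obstacle. The only point needing care is that $0 \in \mathrm{dom} f$, which the hypothesis $f(0)=0$ provides, so that the segment $[0,z]$ lies in the domain and convexity applies on it.

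In the surrounding proof, I expect this lemma to be invoked with a function such as $z \mapsto f_{\mathrm{TV}}(1+z)$ or a shifted divergence generator that vanishes at the origin. There it converts the exponent scaling $\rhopq\epsilonpq$ from the clamping \eqref{eq-def-mu2} into the linear bound \eqref{eq-bound-tv-alpha}.
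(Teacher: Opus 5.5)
Your proof is correct and is the same argument as the paper's: write $tz = t\cdot z + (1-t)\cdot 0$, apply convexity, and use $f(0)=0$, with $t=0$ trivial. Your closing guess about where the lemma is used is off, though. The paper applies it to $z \mapsto \log\frac{1+z}{1-z}$ on $[0,1)$ to lower-bound the clamped edge $\mu_2$ of \eqref{eq-def-mu2}, and hence the boosting advantage, not to obtain the TV bound \eqref{eq-bound-tv-alpha}.
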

\begin{proof}
For $t\in (0,1]$, we just remark that since $f$ is convex, $f(tz + (1-t)\cdot 0) \leq t\cdot f(z) + (1-t)\cdot f(0)$, which, since $f(0) = 0$, simplifies into the Lemma's statement. The case $t=0$ is immediate.
\end{proof}
Now, pick
\begin{eqnarray*}
f(z) & \defeq & \log\left(\frac{1+z}{1-z}\right)
\end{eqnarray*}
restricted to $[0,1]$, in which it is convex. Using Lemma \ref{lem-perspective}, we can expand the inequality $t \cdot \log\left(\frac{1+z}{1-z}\right) \geq \log\left(\frac{1+tz}{1-tz}\right)$ (for any $z, t \in [0,1]$) into the equivalent one in which we substitute $t \defeq \rhopq \epsilonpq$:
\begin{eqnarray*}
  \left(\frac{\left(\frac{1+z}{1-z}\right)^{\rhopq \epsilonpq}-1}{\left(\frac{1+z}{1-z}\right)^{\rhopq \epsilonpq}+1}\right)^2 & \geq & \rhopq^2 \epsilonpq^2 z^2, \forall z \in [0,1],
\end{eqnarray*}
so getting back to \eqref{eq-def-mu2-app}, this translates into a guaranteed boosting advantage
\begin{eqnarray*}
  A\left(\{\ve{h}_1 \defeq \ve{h}_\target, \ve{h}_2 \defeq \ve{h}_\drafter\}\right) & \geq & \mu^2_\target + \rhopq^2 \epsilonpq^2 \mu^2_\target\\
  &  & = (1 + \rhopq^2 \epsilonpq^2)\cdot \mu^2_\target.
\end{eqnarray*}
This guarantee holds for the $\epsilonpq$ corresponding to the current outputs of the drafter and target models. We just need to replace it by the minimal $\rhopq\epsilonpq > 0$ that would be satisfied for any outputs, and we get the bullet statements of Theorem \ref{thm-boost-and-mentor-TV}.\\

We now proceed to showing the bound on $D_{f_{\mathrm{TV}}} (\ve{v}_\alpha \|\ve{q})$ in  \eqref{eq-bound-tv-alpha}. Let us rewrite the TV distance:
\begin{eqnarray*}
  D_{f_{\mathrm{TV}}} (\ve{v}_\alpha \|\ve{q}) & = & \frac{1}{2} \sum_{i\in n} \left| \frac{p_i^\alpha q_i^{1-\alpha}}{Z} - q_i \right|\\
                                  & = &  \frac{1}{2} \sum_{i\in n} q_i \cdot \left| \frac{1}{Z} \cdot \frac{p_i^\alpha }{q_i^{\alpha}} - 1\right|\\
  & = & \frac{1}{2} \cdot \expect \left[\left| \frac{X^\alpha}{Z} - 1 \right|\right],
\end{eqnarray*}
where $X$ is a random variable taking value $p_i/q_i$ with probability $q_i$, hence satisfying $\expect[X] = 1$. Since $\ve{p}, \ve{q}$ obey \eqref{eq-def-eps} and $z \mapsto z^\alpha$ is strictly monotonic for $\alpha \in (0,1]$, the TV is maximal iff all values taken by $X$ are at the boundary of its range, i.e. either $\epsilonpq\leq 1$ or $1/\epsilonpq\geq 1$. But $\expect[X] = 1$ and so the total mass at $\epsilonpq$, $\mu_{\epsilonpq}$ and the total mass at $1/\epsilonpq$, $\mu_{1/\epsilonpq}$ satisfy (a) $\mu_{1/\epsilonpq} + \mu_{\epsilonpq} = 1$ and (b) $(1/\epsilonpq) \cdot \mu_{1/\epsilonpq} + \epsilonpq\cdot \mu_{\epsilonpq} = 1$, a system whose solution is $\mu_{1/\epsilonpq} = \epsilonpq/(1+\epsilonpq), \mu_{\epsilonpq} = 1/(1+\epsilonpq)$, giving the normalization coefficient
\begin{eqnarray*}
Z & = & \mu_{\epsilonpq} \cdot \epsilonpq^\alpha + \mu_{1/\epsilonpq} \cdot \left(\frac{1}{\epsilonpq}\right)^\alpha = \frac{\epsilonpq^\alpha+\epsilonpq^{1-\alpha}}{1+\epsilonpq}
\end{eqnarray*}
and yielding the upperbound
\begin{eqnarray}
  D_{f_{\mathrm{TV}}} (\ve{v}_\alpha \|\ve{q}) &\leq & \frac{1}{2} \cdot \left( \mu_{\epsilonpq} \cdot \left(1-\frac{\epsilonpq^\alpha}{Z}\right) + \mu_{1/\epsilonpq} \cdot \left(\frac{\epsilonpq^{-\alpha}}{Z}-1\right) \right)\nonumber\\
   & & = \frac{1-\epsilonpq}{2\cdot (1+\epsilonpq)} + \frac{\epsilonpq^{1-\alpha} - \epsilonpq^\alpha}{2\cdot(\epsilonpq^{1-\alpha} + \epsilonpq^\alpha)}\label{eq-upper-TV}.
\end{eqnarray}
Now, remark that if we pick
\begin{eqnarray}
\mu_2 & \defeq & \frac{\left(1+\mu_\target\right)^{\epsilonpq\rhopq} - \left(1-\mu_\target\right)^{\epsilonpq\rhopq}}{\left(1+\mu_\target\right)^{\epsilonpq\rhopq} + \left(1-\mu_\target\right)^{\epsilonpq\rhopq}},
\end{eqnarray}
then $\alpha$ simplifies to
\begin{eqnarray*}
\alpha & \defeq & \frac{c_2}{c_2 + c_\target}\\  
       &  =& \frac{\frac{1}{4}\cdot \log\left(\frac{1+\frac{\left(1+\mu_\target\right)^{\epsilonpq\rhopq} - \left(1-\mu_\target\right)^{\epsilonpq\rhopq}}{\left(1+\mu_\target\right)^{\epsilonpq\rhopq} + \left(1-\mu_\target\right)^{\epsilonpq\rhopq}}}{1-\frac{\left(1+\mu_\target\right)^{\epsilonpq\rhopq} - \left(1-\mu_\target\right)^{\epsilonpq\rhopq}}{\left(1+\mu_\target\right)^{\epsilonpq\rhopq} + \left(1-\mu_\target\right)^{\epsilonpq\rhopq}}}\right)}{\frac{1}{4}\cdot\log\left(\frac{1+\frac{\left(1+\mu_\target\right)^{\epsilonpq\rhopq} - \left(1-\mu_\target\right)^{\epsilonpq\rhopq}}{\left(1+\mu_\target\right)^{\epsilonpq\rhopq} + \left(1-\mu_\target\right)^{\epsilonpq\rhopq}}}{1-\frac{\left(1+\mu_\target\right)^{\epsilonpq\rhopq} - \left(1-\mu_\target\right)^{\epsilonpq\rhopq}}{\left(1+\mu_\target\right)^{\epsilonpq\rhopq} + \left(1-\mu_\target\right)^{\epsilonpq\rhopq}}}\right) + \frac{1}{4}\cdot\log\left(\frac{1+\mu_\target}{1-\mu_\target}\right)}\\
   & = & \frac{\epsilonpq\rhopq}{1+\epsilonpq\rhopq},
\end{eqnarray*}
so that \eqref{eq-upper-TV} becomes an upperbound depending solely on $\epsilonpq$:
\begin{eqnarray}
  D_{f_{\mathrm{TV}}} (\ve{v}_\alpha \|\ve{q}) &\leq & \frac{1-\epsilonpq}{2\cdot (1+\epsilonpq)} + \frac{\exp\left(\frac{1}{1+\epsilonpq\rhopq} \cdot \log \epsilonpq\right) - \exp\left(\frac{\epsilonpq\rhopq}{1+\epsilonpq\rhopq} \cdot \log \epsilonpq\right)}{2\cdot\left(\exp\left(\frac{1}{1+\epsilonpq\rhopq} \cdot \log \epsilonpq\right) + \exp\left(\frac{\epsilonpq\rhopq}{1+\epsilonpq\rhopq} \cdot \log \epsilonpq\right)\right)}\label{eq-upper-TV2}.
\end{eqnarray}
Some tedious calculation allow to show that the RHS is $\leq \rhopq\epsilonpq $ for any $\epsilonpq \in [0,1], \rhopq \geq 0$, which gives $D_{f_{\mathrm{TV}}} (\ve{v}_\alpha \|\ve{q}) \leq \rhopq\epsilonpq$, as claimed.

\subsection{Proof of Theorem \ref{thm-boost-plus-md-general-f}}\label{sec-proof-thm-boost-plus-md-general-f}

We recall some notations: for any $i\in [m]$, $\ve{\pi}_i$ denotes the mentored distribution for example $\# i$ in $\mathcal{S}$ following the mentored decoding setting:
\begin{eqnarray}
  \pi_{ij} & \defeq & \left\{
                      \begin{array}{rcll}
                        (1-b_i) \cdot q_{ij} & \mbox{if} & p_{ij} < (1-b_i) q_{ij} & \mbox{(Case (i))}\\
                        p_{ij} & \mbox{if} & p_{ij} \in [1-b_i,1+a_i]\cdot q_{ij} & \mbox{(Case (ii))}\\
                        (1+a_i) \cdot q_{ij} & \mbox{if} & p_{ij} > (1+a_i) q_{ij} & \mbox{(Case (iii))}
                        \end{array}
                      \right., j \in [n] \label{def-eq-cases}
\end{eqnarray}
and
\begin{eqnarray*}
  \tilde{\pi}_{ij} & \defeq & \frac{p_{ij}^\alpha q_{ij}^{1-\alpha}}{Z}, j \in [n]
\end{eqnarray*}
denote the boosting distribution coordinates following the boosting combination setup in \eqref{eq-def-boosted-p}. Notice that under Assumption \ref{assum-mda}, $\tilde{\ve{\pi}}_i > \ve{0}, \forall i \in [m]$. Let us say that example $\# i$ is $\rho$-good iff the event in \eqref{eq-bound-boosted-2} (Theorem \ref{th-boost-P}) is false: in such a case, boosting guarantees
\begin{eqnarray}
  \frac{\overline{\{\tilde{\pi}_{ij}  : j \in \mathcal{Y}_i\}}^G}{\overline{\{\tilde{\pi}_{ij}  : j \in \overline{\mathcal{Y}}_i\}^G}} & \geq & \rho\label{eq-ratio-rho}.
\end{eqnarray}
We recall that $\overline{\mathcal{A}}^G$ denotes the geometric average of the elements of set $\mathcal{A}$. Denore for short $\epsilon_i \defeq \epsilonpqi$ in \eqref{eq-def-eps}. We now combine \eqref{eq-def-eps} with \eqref{def-eq-cases} to find intervals $\tilde{\pi}_{ij} \in \mathbb{I}(\pi_{ij})$ to transform \eqref{eq-ratio-rho} in an inequality involving only the mentored distribution $\ve{\pi}_i$. To simplify notations, we drop index $i$ to focus on coordinate $j$ only.\\

\noindent \textbf{Case (i)} Let us start with Case (i) \eqref{def-eq-cases}. Here, $p_{j} < (1-b_i) q_{j}$, but \eqref{eq-def-eps} guarantees $p_j \geq \epsilon_i q_j$, so to get Case (i), we must have
\begin{eqnarray}
  \epsilon_i & < & 1-b_i \label{eq-const-casei}.
\end{eqnarray}
Provided this holds, we also know $\pi_{j} = (1-b_i) \cdot q_{j}$, so the boosting coordinate $\tilde{\pi}_j$ satisfies $\tilde{\pi}_j < (1-b_i)^\alpha q_j/Z = \pi_{j} / (Z(1-b_i)^{1-\alpha})$. On the other hand, \eqref{eq-def-eps} guarantees $p_j \geq \epsilon_i q_j$, which yields a lowerbound $\tilde{\pi}_j \geq (\epsilon_i^\alpha / Z) \cdot q_i = (\epsilon_i^\alpha / (Z(1-b_i))) \cdot \pi_{j}$, and thus we overall obtain
\begin{eqnarray}
  \tilde{\pi}_j \in \frac{\pi_{j}}{(1-b_i)Z} \cdot \left[ \epsilon_i^\alpha,  (1-b_i)^{\alpha}\right) \quad\mbox{ if } p_j \in q_{j} \cdot [\epsilon_i, 1-b_i).\label{eq-interval-casei}
\end{eqnarray}

\noindent \textbf{Case (iii)} Now, we have $p_{j} > (1+a_i) q_{j}$ but \eqref{eq-def-eps} guarantees $p_j \leq q_j/\epsilon_i$, so to get Case (iii), we must have
\begin{eqnarray}
  \epsilon_i & < & \frac{1}{1+a_i} \label{eq-const-caseiii}.
\end{eqnarray}
Provided this holds, we also know $\pi_{j} = (1+a_i) \cdot q_{j}$, so the boosting coordinate $\tilde{\pi}_j$ satisfies $\tilde{\pi}_j > (1+a_i)^\alpha q_j/Z = \pi_{j} / (Z(1+a_i)^{1-\alpha})$. On the other hand, \eqref{eq-def-eps} guarantees $p_j \leq q_j/\epsilon_i$, which yields an upperbound $\tilde{\pi}_j \leq (1/ (Z \epsilon_i^\alpha)) \cdot q_i = (1 / (Z(1+a_i) \epsilon_i^\alpha)) \cdot \pi_{j}$, and thus we overall obtain
\begin{eqnarray}
  \tilde{\pi}_j \in \frac{\pi_{j}}{(1+a_i)Z} \cdot \left(  (1+a_i)^{\alpha}, \frac{1}{\epsilon_i^\alpha}\right] \quad\mbox{ if } p_j \in q_{j} \cdot (1+a_i, 1/\epsilon_i].\label{eq-interval-caseiii}
\end{eqnarray}

\noindent \textbf{Case (ii)} Now, we have simultaneously $ (1-b_i) q_{j} \leq p_j \leq (1+a_i) q_{j} $, but \eqref{eq-def-eps} guarantees $\epsilon_i q_j \leq p_j \leq q_j / \epsilon_j$ so to get Case (ii), the intervals must have a non-empty intersection and we must have $\epsilon_i  \leq 1/(1-b_i)$ or $\epsilon_i  \leq 1+a_i$ -- which always holds since $b_i\in [0, 1], a_i\geq 0$ --. Since $\pi_j = p_j$, we now have the direct expression
\begin{eqnarray*}
  \tilde{\pi}_j & = & \frac{p_j^\alpha q_j^{1-\alpha}}{Z}\\
  & = & \pi_j \cdot  \frac{1}{Z} \cdot \left(\frac{q_j}{p_j}\right)^\alpha,
\end{eqnarray*}
and we can check that with the inequalities above, we get
\begin{eqnarray}
  \tilde{\pi}_j \in \frac{\pi_{j}}{Z} \cdot \left[\frac{1}{(1+a_i)^\alpha}, \frac{1}{(1-b_i)^\alpha}\right] \quad\mbox{ if } p_j \in q_{j} \cdot [1-b_i, 1 + a_i].\label{eq-interval-caseii}
\end{eqnarray}
Using \eqref{eq-interval-casei}, \eqref{eq-interval-caseiii}, \eqref{eq-interval-caseii}, we obtain the upperbound,
\begin{eqnarray}
  \frac{\overline{\{\tilde{\pi}_{ij}  : j \in \mathcal{Y}_i\}}^G}{\overline{\{\tilde{\pi}_{ij}  : j \in \overline{\mathcal{Y}}_i\}}^G} & \leq & \frac{\overline{\{\pi_{ij}  : j \in \mathcal{Y}_i\}}^G}{\overline{\{\pi_{ij}  : j \in \overline{\mathcal{Y}}_i\}}^G} \cdot \underbrace{\frac{\max\left\{\frac{1}{(1-b_i)^{1-\alpha}}, \frac{1}{(1-b_i)^{\alpha}}, \frac{1}{(1+a_i)\cdot \epsilon_i^\alpha}\right\}}{\min\left\{\frac{1}{(1+a_i)^{1-\alpha}}, \frac{1}{(1+a_i)^{\alpha}}, \frac{\epsilon_i^\alpha}{1-b_i}\right\}}}_{\defeq R^{-1}}. \label{eq-boost-ment}
\end{eqnarray}
We now need to find a lowerbound for $R$. We distinguish two cases:\\

\noindent \textbf{Case (A)}: $\epsilon^\alpha_i < (1-b_i)/(1+a_i)$. Thus, $\epsilon^\alpha_i / (1-b_i)< 1/(1+a_i)\leq 1$ since $a_i\leq 0$, and since $0\leq \alpha \leq 1$, the numerator of $R$ satisfies
\begin{eqnarray*}
\min\left\{\frac{1}{(1+a_i)^{1-\alpha}}, \frac{1}{(1+a_i)^{\alpha}}, \frac{\epsilon_i^\alpha}{1-b_i}\right\} & \geq & \frac{\epsilon^\alpha_i}{1-b_i};
\end{eqnarray*}
We also get $1 \leq 1/(1-b_i) \leq 1/((1+a_i) \epsilon^\alpha_i)$ since $b_i\in [0,1]$, and since $0\leq \alpha \leq 1$, the denominator of $R$ satisfies
\begin{eqnarray*}
\max\left\{\frac{1}{(1-b_i)^{1-\alpha}}, \frac{1}{(1-b_i)^{\alpha}}, \frac{1}{(1+a_i)\cdot \epsilon_i^\alpha}\right\} & \leq & \frac{1}{(1+a_i) \epsilon^\alpha_i},
\end{eqnarray*}
and finally
\begin{eqnarray}
R & \geq & \frac{1+a_i}{1-b_i} \cdot \epsilon^{2\alpha}_i. \label{eq-binf-r-1}
  \end{eqnarray}
  \noindent \textbf{Case (B)}: $\epsilon^\alpha_i \geq (1-b_i)/(1+a_i)$. Thus $\epsilon^\alpha_i / (1-b_i) \geq 1/(1+a_i)$ and the numerator of $R$ satisfies
  \begin{eqnarray*}
    \min\left\{\frac{1}{(1+a_i)^{1-\alpha}}, \frac{1}{(1+a_i)^{\alpha}}, \frac{\epsilon_i^\alpha}{1-b_i}\right\} & \geq & \min\left\{\frac{1}{(1+a_i)^{1-\alpha}}, \frac{1}{(1+a_i)^{\alpha}}, \frac{1}{1+a_i}\right\}\\
    & & = \frac{1}{1+a_i}
  \end{eqnarray*}
  since $0\leq \alpha \leq 1$. Similarly for the denominator, since $1/(1-b_i) \geq 1/((1+a_i) \epsilon^\alpha_i)$, we observe
  \begin{eqnarray*}
    \max\left\{\frac{1}{(1-b_i)^{1-\alpha}}, \frac{1}{(1-b_i)^{\alpha}}, \frac{1}{(1+a_i)\cdot \epsilon_i^\alpha}\right\} & \leq & \max\left\{\frac{1}{(1-b_i)^{1-\alpha}}, \frac{1}{(1-b_i)^{\alpha}}, \frac{1}{1-b_i}\right\}\\
     & & = \frac{1}{1-b_i},
  \end{eqnarray*}
  and finally 
  \begin{eqnarray}
R & \geq & \frac{1-b_i}{1+a_i}. \label{eq-binf-r-2}
  \end{eqnarray}
  and we finally check that \eqref{eq-binf-r-1} and \eqref{eq-binf-r-2} can be folded into one:
  \begin{eqnarray*}
    R & \geq & \frac{1+a_i}{1-b_i} \cdot \min\left\{\epsilon^\alpha_i , \frac{1-b_i}{1+a_i}\right\}^2\\
     & & = \min\left\{\epsilon^\alpha_i \cdot \sqrt{\frac{1+a_i}{1-b_i} }, \sqrt{\frac{1-b_i}{1+a_i} }\right\}^2
  \end{eqnarray*}
  We can then simplify \eqref{eq-boost-ment} into a more readable uperbound:
\begin{eqnarray*}
  \frac{\overline{\{\tilde{\pi}_{ij}  : j \in \mathcal{Y}_i\}}^G}{\overline{\{\tilde{\pi}_{ij}  : j \in \overline{\mathcal{Y}}_i\}}^G} & \leq & \frac{\overline{\{\pi_{ij}  : j \in \mathcal{Y}_i\}}^G}{\overline{\{\pi_{ij}  : j \in \overline{\mathcal{Y}}_i\}}^G} \cdot \frac{1}{\min\left\{\epsilon^\alpha_i \cdot \sqrt{\frac{1+a_i}{1-b_i} }, \sqrt{\frac{1-b_i}{1+a_i} }\right\}^2},
\end{eqnarray*}
from which it comes that, for any $i\in [m]$ and $\rho \geq 0$, if
\begin{eqnarray*}
  \overline{\{\pi_{ij}  : j \in \mathcal{Y}_i\}}^G & \leq & \rho \cdot \min\left\{\epsilon^\alpha_i \cdot \sqrt{\frac{1+a_i}{1-b_i} }, \sqrt{\frac{1-b_i}{1+a_i} }\right\}^2\cdot \overline{\{\pi_{ij}  : j \in \overline{\mathcal{Y}}_i\}}^G,
\end{eqnarray*}
then
\begin{eqnarray*}
  \overline{\{\tilde{\pi}_{ij}  : j \in \mathcal{Y}_i\}}^G & \leq & \rho \cdot \overline{\{\tilde{\pi}_{ij}  : j \in \overline{\mathcal{Y}}_i\}}^G,
\end{eqnarray*}
and yields to the statement of Theorem \ref{thm-boost-plus-md-general-f} via Theorem \ref{th-boost-P}.

\subsection{Proof of Theorem \ref{thm-cab-from-c-breakpoints}}\label{sec-proof-thm-cab-from-c-breakpoints}
    
Without loss of generality, we assume all ratios $p_. / q_.$ are distinct (otherwise, we group the $p$s and $q$s) and indices are ordered in increasing ratio. Let us index in $\{1, 2, ...\}$ the breakpoints in the order they are put in $\textsc{c}$ by algorithm \cabbreakpoints.

Clearly, the list of breakpoints built by \cabbreakpoints~is built in strictly increasing order of $a$ and $b$. Clearly also, the first breakpoint, $(a,b) = (0,0)$ is in $\mathcal{C}(\ve{p}, \ve{q})$ because $ a q(\mathbb{A}) = 0 = 0 + 1 - 1 = b q(\mathbb{B}) + q(\mathbb{I}) - p(\mathbb{I})$ so all \eqref{kktconst1}, \eqref{kktconst2} and \eqref{kktconst3} are satisfied. Now take any such breakpoint $(a,b) \in \mathcal{C}(\ve{p}, \ve{q})$. Let
    \begin{eqnarray*}
      i & \defeq & \min \mathbb{A},\\
      j & \defeq & \max \mathbb{B}.
    \end{eqnarray*}

    We have three cases:\\

    \noindent \textbf{Case 1}: suppose that the current breakpoint satisfies
    \begin{eqnarray}
\left(1 - b - \frac{p_j}{q_j}\right) \cdot q(\mathbb{B}) & < & \left(\frac{p_i}{q_i} - 1 - a\right)\cdot q(\mathbb{A}). \label{eq-ab-case-i}
      \end{eqnarray}
Let $b' \defeq 1 - p_j / q_j$ and $\Delta_b \defeq b' - b > 0$ (by definition of $\mathbb{B}$ \eqref{defBbis}). Reformulate \eqref{kktconst3} as
\begin{eqnarray}
    a q(\mathbb{A}) & = & b q(\mathbb{B}) + q(\mathbb{I}) - p(\mathbb{I}),\label{eq-kktconst3-b}
\end{eqnarray}
and rewrite the RHS:
\begin{eqnarray}
  \lefteqn{b q(\mathbb{B}) + q(\mathbb{I}) - p(\mathbb{I})}\nonumber\\
  & = & \left(1-\Delta_b - \frac{p_j}{q_j}\right) (q(\mathbb{B}\backslash \{j\}) + q_j) + q(\mathbb{I}) - p(\mathbb{I}) \nonumber\\
                                                  & = & \left(1 - \frac{p_j}{q_j}\right) q(\mathbb{B}\backslash \{j\}) + q(\mathbb{I}) - p(\mathbb{I}) + (q_j - p_j) - \Delta_b \cdot (q(\mathbb{B}\backslash \{j\}) + q_j) \nonumber\\
  & = & \underbrace{\left(1 - \frac{p_j}{q_j}\right) q(\mathbb{B}\backslash \{j\}) + q(\mathbb{I} \cup \{j\}) - p(\mathbb{I}\cup \{j\})}_{\defeq R_a} - \Delta_b \cdot q(\mathbb{B}). \label{eq-newb}
\end{eqnarray}
Note that $R_a$ is the RHS of \eqref{eq-kktconst3-b} for a new solution $(a',b') \in \textsc{c}(\ve{p}, \ve{q})$ where $b'$ has already been defined and $a' \defeq a + \delta_a$ is such that
\begin{eqnarray}
    (a + \delta_a) q(\mathbb{A}) & = & R_a,\label{eq-constR-b}
\end{eqnarray}
which means we need to guarantee that there is no change in $\mathbb{A}$ in the process of moving from $b$ to $b'$, i.e. $a + \delta_a < p_i / q_i - 1$. Replacing $R_a$ in \eqref{eq-newb} by its expression in \eqref{eq-constR-b} and using \eqref{eq-kktconst3-b} yields the sufficient conditions for $(a',b') \in \textsc{c}(\ve{p}, \ve{q})$:
\begin{eqnarray}
  \delta_a & = & \Delta_b\cdot \frac{q(\mathbb{B})}{q(\mathbb{A})} = \left(1 - \frac{p_j}{q_j} - b\right) \cdot \frac{q(\mathbb{B})}{q(\mathbb{A})} , \label{eq-def-deltaa} \\
  a + \delta_a & < & \frac{p_i}{q_i} - 1.  \label{eq-def-deltab} 
\end{eqnarray}
and we check that the inequality is \eqref{eq-ab-case-i}. To summarize, if $(a,b) \in  \textsc{c}(\ve{p}, \ve{q})$ and \eqref{eq-ab-case-i} holds, then the new breakpoint
\begin{eqnarray}
(a', b') & \defeq & \left(a + \left(1 - \frac{p_j}{q_j} - b\right) \cdot \frac{q(\mathbb{B})}{q(\mathbb{A})}, 1 - \frac{p_j}{q_j}\right) \label{eq-breakpoint-casei}
\end{eqnarray}
is in $\textsc{c}(\ve{p}, \ve{q})$. Also $\mathbb{A}$ does not change but we have the updates $\mathbb{B} \leftarrow \mathbb{B}\backslash \{j\}$ (one index less) and  $\mathbb{I} \leftarrow \mathbb{I}\cup \{j\}$.

    \noindent \textbf{Case 2}: suppose that the current breakpoint satisfies
    \begin{eqnarray}
\left(\frac{p_i}{q_i} - 1 - a\right)\cdot q(\mathbb{A}) & < & \left(1 - b - \frac{p_j}{q_j}\right) \cdot q(\mathbb{B}). \label{eq-ab-case-ii}
      \end{eqnarray}
Let $a' \defeq p_i / q_i - 1$ and $\Delta_a \defeq a' - a > 0$ (by definition of $\mathbb{A}$ \eqref{defAbis}). Reformulate \eqref{kktconst3} as
\begin{eqnarray}
    b q(\mathbb{B}) & = & a q(\mathbb{A}) - q(\mathbb{I}) + p(\mathbb{I}),\label{eq-kktconst3-a}
\end{eqnarray}
and rewrite the RHS:
\begin{eqnarray}
  \lefteqn{a q(\mathbb{A}) - q(\mathbb{I}) + p(\mathbb{I})}\nonumber\\
  & = & \left(\frac{p_i}{q_i} - 1 - \Delta_a\right) (q(\mathbb{A}\backslash \{i\}) + q_i) - q(\mathbb{I}) + p(\mathbb{I})\nonumber\\
                                                  & = & \left(\frac{p_i}{q_i} - 1\right) q(\mathbb{A}\backslash \{i\}) - q(\mathbb{I}) + p(\mathbb{I})+ (p_i - q_i) - \Delta_a \cdot (q(\mathbb{A}\backslash \{i\}) + q_i) \nonumber\\
  & = & \underbrace{\left(\frac{p_i}{q_i} - 1\right) q(\mathbb{A}\backslash \{i\}) - q(\mathbb{I} \cup \{i\}) + p(\mathbb{I}\cup \{i\})}_{\defeq R_b} - \Delta_a \cdot q(\mathbb{A}). \label{eq-newa}
\end{eqnarray}
Note that $R_b$ is the RHS of \eqref{eq-kktconst3-a} for a new solution $(a',b') \in \textsc{c}(\ve{p}, \ve{q})$ where $a'$ has already been defined and $b' \defeq b + \delta_b$ is such that
\begin{eqnarray}
    (b + \delta_b) q(\mathbb{B}) & = & R_b,\label{eq-constR-a}
\end{eqnarray}
which means we need to guarantee that there is no change in $\mathbb{B}$ in the process of moving from $a$ to $a'$, i.e. $b + \delta_b < 1 - p_j / q_j$. Replacing $R_b$ in \eqref{eq-newa} by its expression in \eqref{eq-constR-a} and using \eqref{eq-kktconst3-a} yields the sufficient conditions for $(a',b') \in \textsc{c}(\ve{p}, \ve{q})$:
\begin{eqnarray}
  \delta_b & = & \Delta_a \cdot \frac{q(\mathbb{A})}{q(\mathbb{B})} = \left(\frac{p_i}{q_i} - 1 - a\right) \cdot \frac{q(\mathbb{A})}{q(\mathbb{B})} , \label{eq-def-deltba} \\
  b + \delta_b & < & 1 - \frac{p_j}{q_j}. \label{eq-def-deltbb} 
\end{eqnarray}
and we check that the inequality is \eqref{eq-ab-case-ii}. To summarize, if $(a,b) \in  \textsc{c}(\ve{p}, \ve{q})$ and \eqref{eq-ab-case-ii} holds, then the new breakpoint
\begin{eqnarray*}
(a', b') & \defeq & \left(\frac{p_i}{q_i} - 1, b + \left(\frac{p_i}{q_i} - 1 - a\right) \cdot \frac{q(\mathbb{A})}{q(\mathbb{B})}\right)
\end{eqnarray*}
is in $\textsc{c}(\ve{p}, \ve{q})$. Also $\mathbb{B}$ does not change but we have the updates $\mathbb{A} \leftarrow \mathbb{A}\backslash \{i\}$ (one index less) and  $\mathbb{I} \leftarrow \mathbb{I}\cup \{i\}$.\\

    \noindent \textbf{Case 3}: suppose that the current breakpoint satisfies
    \begin{eqnarray}
\left(\frac{p_i}{q_i} - 1 - a\right)\cdot q(\mathbb{A}) & = & \left(1 - b - \frac{p_j}{q_j}\right) \cdot q(\mathbb{B}). \label{eq-ab-case-iii}
    \end{eqnarray}
We now work with the following equivalent to \eqref{kktconst3}: 
    \begin{eqnarray}
    -a q(\mathbb{A}) + b q(\mathbb{B}) - p(\mathbb{I}) + q(\mathbb{I}) & = & 0, \label{eq-kktconst3-ab}
    \end{eqnarray}
    and we now consider $a' \defeq p_i / q_i - 1, b' \defeq p_j / q_j$ simultaneously. Rewrite the LHS of \eqref{eq-kktconst3-ab} using both \eqref{eq-newb} and \eqref{eq-newa} with their notations as
    \begin{eqnarray}
      \lefteqn{-a q(\mathbb{A}) + b q(\mathbb{B}) - p(\mathbb{I}) + q(\mathbb{I})}\nonumber\\
      & = & - \left(\frac{p_i}{q_i} - 1\right) q(\mathbb{A}\backslash \{i\}) - p(\mathbb{I})  + q(\mathbb{I}) - p_i + q_i + \Delta_a \cdot q(\mathbb{A}) \nonumber\\
       & & + \left(1 - \frac{p_j}{q_j}\right) q(\mathbb{B}\backslash \{j\})  + q_j - p_j - \Delta_b \cdot q(\mathbb{B}),\label{eq-ab}
    \end{eqnarray}
and we check that \eqref{eq-ab-case-iii} implies $\Delta_a \cdot q(\mathbb{A}) = \Delta_b \cdot q(\mathbb{B})$, so with
\begin{eqnarray*}
(a', b') & \defeq & \left(\frac{p_i}{q_i} - 1, 1 - \frac{p_j}{q_j}\right)
\end{eqnarray*}
we check that \eqref{eq-ab} becomes
\begin{eqnarray}
      -a q(\mathbb{A}) + b q(\mathbb{B}) - p(\mathbb{I}) + q(\mathbb{I}) & = & -  a' q(\mathbb{A}')  + b' q(\mathbb{B}')  - p(\mathbb{I}') + q(\mathbb{I}') ,\label{eq-ab2}
\end{eqnarray}
with $\mathbb{A}' \defeq \mathbb{A} \backslash \{i\}$, $\mathbb{B}' \defeq \mathbb{B} \backslash \{j\}$ and $\mathbb{I}' \defeq \mathbb{I} \cup \{j, i\}$ the updates to make, and since \eqref{eq-ab2} is 0 from \eqref{eq-kktconst3-ab}, $(a', b') \in \textsc{c}(\ve{p}, \ve{q})$. This achieves the proof of Theorem \ref{thm-cab-from-c-breakpoints}.

\subsection{Proof of Lemma \ref{lem-b-from-a-and-c-breakpoints}}\label{sec-proof-lem-b-from-a-and-c-breakpoints}

We prove Lemma \ref{lem-b-from-a-and-c-breakpoints} by using the proof of Theorem \ref{thm-cab-from-c-breakpoints} in Section \ref{sec-proof-thm-cab-from-c-breakpoints}. If we are not on a breakpoint (otherwise, the algorithm returns the breakpoint), we have two cases:

\noindent \textbf{Case 1}: We query $\tilde{a}$ and ask for $\tilde{b}$ such that $(\tilde{a},\tilde{b}) \in \mathcal{C}(\ve{p}, \ve{q})$. Suppose we have $a<\tilde{a}<a'$ for two consecutive breakpoints $(a,b)$ and $(a',b')$ in $\textsc{c}(\ve{p}, \ve{q})$. We have three subcases, where indexes $i,j$ are defined in the proof of Theorem \ref{thm-cab-from-c-breakpoints}:\\

\noindent \textbf{Subcase 1.1}: we have
\begin{eqnarray*}
(a', b') & \defeq & \left(a + \left(1 - \frac{p_j}{q_j} - b\right) \cdot \frac{q(\mathbb{B})}{q(\mathbb{A})}, 1 - \frac{p_j}{q_j}\right), 
\end{eqnarray*}
so we reuse the proof of Theorem \ref{thm-cab-from-c-breakpoints}, Case 1. Remark that as long as $\delta'_b = \epsilon \cdot \Delta_b$ for $b' \defeq b + \delta'_b$ with $0<\epsilon<1$, the RHS of \eqref{eq-kktconst3-b} is
\begin{eqnarray*}
  b q(\mathbb{B}) + q(\mathbb{I}) - p(\mathbb{I}) & = & b' q(\mathbb{B}) + q(\mathbb{I}) - p(\mathbb{I}) - \epsilon \cdot \Delta_b \cdot q(\mathbb{B}). \label{eq-newb-2}
\end{eqnarray*}
while the LHS becomes $(a' - \delta'_a) q(\mathbb{A}) = - \delta'_a q(\mathbb{A}) + a'  q(\mathbb{A})$. Since $\epsilon<1$, $\mathbb{A}, \mathbb{B}, \mathbb{I}$ do not change and if we ensure (i) $\delta'_a \leq \delta_a$ \eqref{eq-def-deltaa} and (ii) $\delta'_a q(\mathbb{A}) =  \epsilon \cdot \Delta_b \cdot q(\mathbb{B})$, yielding
\begin{eqnarray*}
  \delta'_a & = & \epsilon \cdot \Delta_b \cdot \frac{q(\mathbb{B})}{q(\mathbb{A})},
\end{eqnarray*}
and we check $\delta'_a \leq \delta_a$. Solving for $\epsilon$ while $\delta'_a \defeq \tilde{a} - a$ yields $\epsilon = (\tilde{a} - a) q(\mathbb{A}) / (\Delta_b \cdot q(\mathbb{B}))$ and the solution
\begin{eqnarray*}
\left(\tilde{a}, \tilde{b} \defeq b + (\tilde{a} - a) \cdot \frac{q(\mathbb{A})}{q(\mathbb{B})}\right) \in \mathcal{C}(\ve{p}, \ve{q}),
\end{eqnarray*}
where $\mathbb{A}, \mathbb{B}$ are associated to breakpoint $(a,b)$.\\

\noindent \textbf{Subcase 1.2}: we have
\begin{eqnarray*}
(a', b') & \defeq & \left(\frac{p_i}{q_i} - 1, b + \left(\frac{p_i}{q_i} - 1 - a\right) \cdot \frac{q(\mathbb{A})}{q(\mathbb{B})}\right)
\end{eqnarray*}
so we reuse the proof of Theorem \ref{thm-cab-from-c-breakpoints}, Case 2. Remark that as long as $\delta'_b = \epsilon \cdot \Delta_a q(\mathbb{A}) / q(\mathbb{B})$ for $b' \defeq b + \delta'_b$ with $0<\epsilon<1$, the RHS of \eqref{eq-kktconst3-a} is
\begin{eqnarray*}
  a q(\mathbb{A}) - q(\mathbb{I}) + p(\mathbb{I}) & = & a' q(\mathbb{A}) - q(\mathbb{I}) + p(\mathbb{I})  - \epsilon \cdot  \Delta_a \cdot q(\mathbb{A}). \label{eq-newa-2}
\end{eqnarray*}
while the LHS becomes $(b' - \delta'_b) q(\mathbb{B}) = - \delta'_b q(\mathbb{B}) + b'  q(\mathbb{B})$. Since $\epsilon<1$, $\mathbb{A}, \mathbb{B}, \mathbb{I}$ do not change and if we ensure (i) $\delta'_b \leq \delta_b$ \eqref{eq-def-deltba} and (ii) $\delta'_b q(\mathbb{B}) =  \epsilon \cdot \Delta_a \cdot q(\mathbb{A})$, yielding
\begin{eqnarray*}
  \delta'_b & = & \epsilon \cdot \Delta_a \cdot \frac{q(\mathbb{A})}{q(\mathbb{B})},
\end{eqnarray*}
and we check $\delta'_b \leq \delta_b$. This time, $\Delta_a$ maps $a$ to the next breakpoint in the proof of Theorem \ref{thm-cab-from-c-breakpoints} so we have $\epsilon \cdot \Delta_a = \tilde{a} - a$ and we get that the solution
\begin{eqnarray*}
\left(\tilde{a}, \tilde{b} \defeq b + (\tilde{a} - a) \cdot \frac{q(\mathbb{A})}{q(\mathbb{B})}\right) \in \mathcal{C}(\ve{p}, \ve{q}),
\end{eqnarray*}
where $\mathbb{A}, \mathbb{B}$ are associated to breakpoint $(a,b)$.\\

\noindent \textbf{Subcase 1.3} is Theorem \ref{thm-cab-from-c-breakpoints}, Case 3, and yields the same solution as the two preceding cases.\\

\noindent \textbf{Case 2}: We query $\tilde{b}$ and ask for $\tilde{a}$ such that $(\tilde{a},\tilde{b}) \in \mathcal{C}(\ve{p}, \ve{q})$. This time, we suppose we have $b<\tilde{b}<b'$ for two consecutive breakpoints $(a,b)$ and $(a',b')$ in $\textsc{c}(\ve{p}, \ve{q})$. Sparing all the computation, we get this time
\begin{eqnarray*}
\left(\tilde{a}\defeq a + (\tilde{b} - b) \cdot \frac{q(\mathbb{B})}{q(\mathbb{A})}, \tilde{b} \right) \in \mathcal{C}(\ve{p}, \ve{q}),
\end{eqnarray*}
where $\mathbb{A}, \mathbb{B}$ are associated to breakpoint $(a,b)$. This ends the proof of Lemma \ref{lem-b-from-a-and-c-breakpoints}.

\subsection{Proof of Lemma \ref{lem-cont-ab}}\label{sec-proof-lem-cont-ab}

All properties except continuity are immediate consequences of Lemma \ref{lem-b-from-a-and-c-breakpoints}. For the continuity part, pick any $a$ satisfying \eqref{kktconst1}. For $b=0$, we necessarily have $-a q(\mathbb{A}) + b q(\mathbb{B}) = -a q(\mathbb{A}) < p(\mathbb{I}) - q(\mathbb{I})$ since  $p(\mathbb{I}) - q(\mathbb{I}) = \sum_{i: q_i \leq p_i \leq (1+a) q_i} p_i -q_i \geq 0$. Elements in $\mathbb{A}$ have $p_i > (1+a) q_i$, which yields after summing and taking negation $-a q(\mathbb{A}) > q(\mathbb{A})  - p(\mathbb{A})$ so for the other extreme case, $b=1$, since we have in this case  $p(\mathbb{I}) - q(\mathbb{I}) = (1 - p(\mathbb{A})) - (1 - q(\mathbb{A})) = q(\mathbb{A}) - p(\mathbb{A})$, we observe $-a q(\mathbb{A}) + 0 > p(\mathbb{A}) - q(\mathbb{A}) = p(\mathbb{I}) - q(\mathbb{I})$. Now, for $b\in (0,1)$, reformulate \eqref{kktconst3} as
\begin{eqnarray}
    a q(\mathbb{A}) & = & b q(\mathbb{B}) + q(\mathbb{I}) - p(\mathbb{I}),\label{kktconst3b}
\end{eqnarray}
and remark that having chosen $a$, the LHS is fixed. Suppose the current $b$ is at $(1-b) = (p_i/q_i) + \delta$ with $\delta>0$, meaning $i\in \mathbb{B}$, and suppose no other element of $\mathbb{B}$ is closer. Suppose $\delta$ small enough so that when we increase $b$, $q(\mathbb{I}) - p(\mathbb{I})$ does not change. In this case, the RHS of \eqref{kktconst3b} equals
\begin{eqnarray*}
  b q(\mathbb{B}) + q(\mathbb{I}) - p(\mathbb{I}) & = & \left(1-\delta - \frac{p_i}{q_i}\right) (q(\mathbb{B}\backslash \{i\}) + q_i) + q(\mathbb{I}) - p(\mathbb{I}) \\
                                                  & = & \left(1 - \frac{p_i}{q_i}\right) q(\mathbb{B}\backslash \{i\}) + q(\mathbb{I}) - p(\mathbb{I}) + (q_i - p_i) - \delta \cdot (q(\mathbb{B}\backslash \{i\}) + q_i) \\
  & = & \left(1 - \frac{p_i}{q_i}\right) q(\mathbb{B}\backslash \{i\}) + q(\mathbb{I} \cup \{i\}) - p(\mathbb{I}\cup \{i\}) - \delta \cdot (q(\mathbb{B}\backslash \{i\}) + q_i),
\end{eqnarray*}
and as $b$ continuously increases further while $\delta >0$ decreases, the RHS increases, but the limit of the RHS as $\delta \rightarrow 0^+$ is $\left(1 - (p_i/q_i)\right) q(\mathbb{B}\backslash \{i\}) + q(\mathbb{I} \cup \{i\}) - p(\mathbb{I}\cup \{i\})$, which, in fact, is the RHS of \eqref{kktconst3b} as $i$ goes from $\mathbb{B}$ to $\mathbb{I}$. What we just showed is that the RHS of \eqref{kktconst3b} has continuous variations with $b\in [0,1]$. Since $\textsc{c} (\ve{p}, \ve{q}) \defeq \{(a_i, b_i) : i \in [N]\}$ has its elements indexed in strictly increasing values of both $a$ and $b$ with $(a_1, b_1) = (0,0)$, this achieves the proof of the Lemma (choosing $b$ first yields the same proof).

\subsection{Proof of Theorem \ref{thm-opt-f}}\label{sec-proof-thm-opt-f}

As it is formulated, the problem can be conveniently solved by addressing the dual of \eqref{def-pb-f-md-2-general}: 
 \begin{eqnarray*}
      \textsc{dmd}^2_f(\ve{p}, \ve{q}; D) & \defeq & \arg\min _{\ve{r} \in [0,1]^n, \ve{s} \in  \Delta_n} D_{f} (\ve{p} \odot \ve{r} + (1-\ve{p}^\top \ve{r})\cdot \ve{s} \|\ve{q})  \quad \mbox{s.t. }  \ve{p}^\top \ve{r} \geq P, 
\end{eqnarray*}  
 for $P > \pacc(SD)$.

We first prove the (strict) convexity part. For any given outputs $\ve{p}, \ve{q}$ of the drafter and target, consider three distinct acceptance probabilities $P_.$, for $P_b \defeq \gamma P_a + (1-\gamma) P_c$ and $P_a \leq P_b \leq P_c$. Denote $\ve{r}_a, \ve{s}_a$; $\ve{r}_b, \ve{s}_b$ and $\ve{r}_c, \ve{s}_c$ the respective optimal solutions components. We need to show
\begin{eqnarray}
  D_{f} (\ve{p}\odot \ve{r}_b + (1-\ve{p}^\top \ve{r}_b)\cdot \ve{s}_b \|\ve{q}) & \leq & \gamma \cdot D_{f} (\ve{p}\odot \ve{r}_a + (1-\ve{p}^\top \ve{r}_a)\cdot \ve{s}_a \|\ve{q}) \\
  & & + (1-\gamma) \cdot D_{f} (\ve{p}\odot \ve{r}_c + (1-\ve{p}^\top \ve{r}_c)\cdot \ve{s}_c \|\ve{q}),\label{eq-cvx2}
\end{eqnarray}
and this will hold if we can find a \textit{feasible} solution for $P_b$ that can be put in the LHS (the corresponding optimal one cannot increase the divergence by definition). First, pick
\begin{eqnarray}
  \tilde{\ve{r}} & \defeq & \gamma \cdot \ve{r}_a + (1-\gamma) \cdot \ve{r}_c.\label{defrtilde}
\end{eqnarray}
Since $\ve{p}^\top \ve{r}_a \geq P_a$ and $\ve{p}^\top \ve{r}_c \geq P_c$, we have $\ve{p}^\top \tilde{\ve{r}} \geq P_b$ and also $\tilde{\ve{r}} \in [0,1]^n$. To find the corresponding feasible $\tilde{\ve{s}} \in \Delta_n$, we want it to satisfy
\begin{eqnarray*}
(1 - \ve{p}^\top \tilde{\ve{r}} ) \cdot\tilde{\ve{s}} = \gamma (1-\ve{p}^\top \ve{r}_a ) \cdot\ve{s}_a + (1-\gamma) (1-\ve{p}^\top \ve{r}_c ) \cdot \ve{s}_c,
\end{eqnarray*}
but since $1 - \ve{p}^\top \tilde{\ve{r}} = \gamma (1-\ve{p}^\top \ve{r}_a ) +  (1-\gamma) (1-\ve{p}^\top \ve{r}_c ) $, we may choose $\tilde{\ve{s}} $ as:
\begin{eqnarray}
\tilde{\ve{s}} & \defeq & \epsilon \cdot \ve{s}_a + (1-\epsilon) \cdot \ve{s}_c, \quad \epsilon \defeq \gamma \cdot \frac{1-\ve{p}^\top \ve{r}_a}{1 - \ve{p}^\top \tilde{\ve{r}}}, \label{defstilde}
  \end{eqnarray}
  and we have $\tilde{\ve{s}} \in \Delta_n$. $\tilde{\ve{r}}, \tilde{\ve{s}}$ as in \eqref{defrtilde}, \eqref{defstilde} is feasible and because of the convexity of $f$, we get the inequality (strict if $\gamma \neq 0,1$ and $f$ is strictly convex) in
  \begin{eqnarray*}
    \lefteqn{D_{f} (\ve{p}\odot \tilde{\ve{r}} + (1-\ve{p}^\top \tilde{\ve{r}})\cdot \tilde{\ve{s}}  \|\ve{q})}\\
    & = & D_{f} \left(\gamma \cdot(\ve{p}\odot \ve{r}_a + (1-\ve{p}^\top \ve{r}_a)\cdot \ve{s}_a) + (1-\gamma) \cdot(\ve{p}\odot \ve{r}_c + (1-\ve{p}^\top \ve{r}_c)\cdot \ve{s}_c) \|\ve{q}\right) \\
    & \leq & \gamma \cdot D_{f} (\ve{p}\odot \ve{r}_a + (1-\ve{p}^\top \ve{r}_a)\cdot \ve{s}_a \|\ve{q})  + (1-\gamma) \cdot D_{f} (\ve{p}\odot \ve{r}_c + (1-\ve{p}^\top \ve{r}_c)\cdot \ve{s}_c \|\ve{q}),
  \end{eqnarray*}
  and of course $D_{f} (\ve{p}\odot \ve{r}_b + (1-\ve{p}^\top \ve{r}_b)\cdot \ve{s}_b \|\ve{q}) \leq D_{f} (\ve{p}\odot \tilde{\ve{r}} + (1-\ve{p}^\top \tilde{\ve{r}})\cdot \tilde{\ve{s}}  \|\ve{q})$ (the optimum cannot be worse than any feasible solution), which shows \eqref{eq-cvx2} and ends the proof of the (strict) convexity part.\\

  Let us now tackle the right derivative part. Without loss of generality, indexes are ordered such that $p_{i+1} / q_{i+1}\geq p_{i} / q_{i}, \forall i$ and all ratios are distinct (any of the $n$ distinct ratio values is called a "tick"). For a current optimal solution given by $z_\beta \in (\min_i p_i / q_i, 1] , z_\alpha \in [1, \max_i p_i / q_i)$, the corresponding value of the $f$-divergence is:
  \begin{eqnarray}
    D_f(\ve{\pi}\| \ve{q})  & = & \sum_{\mathbb{I}_\beta} q_i f\left(z_\beta\right) + \sum_{\mathbb{I}} q_i f\left(\frac{p_i}{q_i}\right) + \sum_{\mathbb{I}_\alpha} q_i f\left(z_\alpha\right),\label{eqvalf}
  \end{eqnarray}
  with the three sets of ticks
  \begin{eqnarray}
    \mathbb{I}_\beta \defeq \{i : p_i/q_i < z_\beta\} \quad; \quad  \mathbb{I} \defeq \{i : z_\beta  \leq p_i/q_i \leq z_\alpha\}   \quad; \quad \mathbb{I}_\alpha \defeq \{i : z_\alpha  < p_i/q_i \}. \label{defallI}
  \end{eqnarray}
  Suppose we pick $\delta_\beta > 0, \delta_\alpha > 0$ such that the choice
  \begin{eqnarray}
    z'_\beta & \defeq & z_\beta - \delta_\beta, \label{eq-def-zpbeta}\\
    z'_\alpha & \defeq & z_\alpha + \delta_\alpha \label{eq-def-zpalpha}
  \end{eqnarray}
  satisfies
\begin{itemize}
\item  [(i)] it yields a new optimal solution,
\item [(ii)] sets $\mathbb{I}_\beta, \mathbb{I}, \mathbb{I}_\alpha$ do not change.
\end{itemize}
Let us compute the variation of the acceptance probability $P$ and the variation of the $f$-divergence as a function of this variation. We get for $P$, 
\begin{eqnarray}
P(z'_\alpha)  = P(z_\alpha) + \delta_\alpha q(\mathbb{I}_\alpha) & ; & P(z'_\beta)  = P(z_\beta) + \delta_\beta q(\mathbb{I}_\beta), \label{defPP}
\end{eqnarray}
(where $q(\mathbb{U} \subseteq [n]) \defeq \sum_{i \in \mathbb{U}} q_i$) so for (i) to hold we must have the relationship between $\delta_\alpha$ and $\delta_\beta$
 \begin{eqnarray}
\delta_\alpha q(\mathbb{I}_\alpha) & = & \delta_\beta q(\mathbb{I}_\beta).\label{relalphabeta}
 \end{eqnarray}
(and we also need to assume $P(z'_\alpha) = P(z'_\beta) < 1$). Recall $\pacc(SD) \defeq \sum_i \min\{p_i, q_i\}$ and denote $D_f(P)$ the optimal value of the $f$-divergence for the requested $P$. Under Assumption \ref{assum-mda}, $p_i / q_i \neq 1, \forall i \in [n]$.  To compute the right derivative at $\pacc(SD) $, $(D_f)'_r(\pacc(SD))$, we start from $z_\alpha = z_\beta = 1$ (Note that $\mathbb{I}_\beta \cup \mathbb{I}_\alpha = [n]$ in \eqref{defallI}) and then compute a variation ($z'_\alpha = z_\alpha + \delta_\alpha$, negative for $z'_\beta = z_\beta - \delta_\beta$, $\delta_\alpha, \delta_\beta > 0$), small enough not to change $\mathbb{I}_\beta$ and $\mathbb{I}_\alpha$. We know from \eqref{defPP} that we must have the relationship $\delta_\beta q_\beta = \delta_\alpha(1-q_\beta)$ with $q_\beta \defeq q(\mathbb{I}_\beta )$ \eqref{defPP} for optimality to hold for a new value $P'$ of the acceptance probability. Keeping $f(1)$ (=0) in expressions for clarity, we compute the ratio
  \begin{eqnarray*}
    \frac{D_f(P')-D_f(\pacc(SD))}{P' - \pacc(SD)} & = & \frac{\sum_{\mathbb{I}_\beta} q_i f \left(1-\delta_\beta\right) + \sum_{\mathbb{I}_\alpha} q_i f \left(1+\delta_\alpha\right) - f(1)}{\pacc(SD) +  \delta_\alpha(1-q_\beta) - \pacc(SD)}\\
                                                                        & = & \frac{q_\beta \cdot \left( f\left(1 - \frac{ \delta_\alpha(1-q_\beta)}{q_\beta}\right) - f(1)\right)}{\delta_\alpha(1-q_\beta) } + \frac{(1-q_\beta) \cdot \left( f \left(1+\delta_\alpha\right) - f(1) \right)}{\delta_\alpha(1-q_\beta)}\\
     & = & - \frac{f(1) - f(1-\tilde{\delta}_\alpha)}{\tilde{\delta}_\alpha} + \frac{ f \left(1+\delta_\alpha\right) - f(1) }{\delta_\alpha}
  \end{eqnarray*}
  where $\tilde{\delta}_\alpha \defeq \delta_\alpha(1-q_\beta) / q_\beta$. Now we pass to the limit:
  \begin{eqnarray}
    (D_f)'_r(\pacc(SD)) & \defeq & \lim_{P' \searrow \pacc(SD)} \frac{D_f(P')-D_f(\pacc(SD))}{P' - \pacc(SD)}\nonumber\\
                                   & =& -\lim_{\tilde{\delta}_\alpha \searrow 0} \frac{f(1) - f(1-\tilde{\delta}_\alpha)}{\tilde{\delta}_\alpha} + \lim_{\delta_\alpha \searrow 0} \frac{ f \left(1+\delta_\alpha\right) - f(1) }{\delta_\alpha}\nonumber\\
     & = & - f'_l(1) + f'_r(1) \label{eq-diff-der}
  \end{eqnarray}
  where $f'_l$ is the left derivative and $f'_r$ the right derivative, that must exist because $f$ is convex over an open set so differentiable anywhere except maybe on a set of measure zero \citep[Theorem 25.5]{rCA}, and any point of non-differentiability $z$ has a subdifferential which is exactly $[f'_l(z), f'_r(z)]$, yielding for \eqref{eq-diff-der} $- f'_l(1) + f'_r(1) = \max \partial f(1) - \min \partial f(1)$, as claimed.

\subsection{Proof of Lemma \ref{lem-top-k}}\label{sec-proof-lem-top-k}
  
Without loss of generality, we assume masking $\ve{q}$ is accompanied by a renormalization of the top-$k$ coordinates for the sake of the proof. We immediately remark that if $\lim_{z \rightarrow +\infty} f(z)/z = +\infty$, any solution where $q_i = 0$ and $\pi_i > 0$ enforces $D_f(\ve{\pi}\| \ve{q}) = +\infty > D$ and is thus not feasible, so the Lemma is proven. Suppose $\lim_{z \rightarrow +\infty} f(z)/z = \ell \ll +\infty$. Denote $\Pi \defeq \sum_{i \in \mathcal{I}} \pi_i$ with $\mathcal{I} \defeq \{i : q_i = 0 \wedge \pi_i > 0\}$, so that
\begin{eqnarray}
  D_f(\ve{\pi}\| \ve{q}) & \defeq & \sum_{i \not\in \mathcal{I}} q_i f\left(\frac{\pi_i}{q_i}\right) + \ell \cdot \Pi.\label{eq-def-f-div-gen}
\end{eqnarray}
$\ve{q}$ being renormalized over the top-$k$ coordinates, Lemma \ref{lem-slater} still applies. We show how the proof of Theorem \ref{th-opt1} adapts via a simple change of parameter in the computation of $\alpha$ and $\beta$ in Lemma \ref{lemAB}. \eqref{kkt8} now becomes:
\begin{eqnarray*}
  \frac{\partial \mathcal{L}_1}{\partial t_i} = \lambda \cdot \sum_j C_j \cdot s_j - \lambda \cdot C_i  - 1 +\nu_i & = & 0, \forall i, \\
  \frac{\partial \mathcal{L}_1}{\partial s_i} = \mu - \lambda \cdot C_i \cdot (1-\ve{1}^\top \ve{t}) - \chi_i & = & 0, \forall i, 
\end{eqnarray*}
with
\begin{eqnarray*}
  C_i & \defeq &  (-f') \left(\frac{\pi_i}{q_i}\right) \cdot \iver{i\not\in \mathcal{I}}  - \iver{i\in \mathcal{I}},
\end{eqnarray*}
so we just have to replace $\alpha$ and $\beta$ by
\begin{eqnarray*}
\alpha & = & \expect_{i \sim \ve{u}} \left[C_i\right], \quad\mbox{ with }\ve{u} \defeq \frac{1}{1-\ve{1}^\top\ve{t}} \cdot (\ve{p}-\ve{t}) \in \Delta_n,\\
\beta & = & \expect_{i \sim \ve{s}} \left[C_i\right],
\end{eqnarray*}
and the rest of the proof of Theorem \ref{th-opt1} follows. Notice however that
\begin{eqnarray*}
\alpha & = & (1-u(\mathcal{I})) \cdot \expect_{i \sim \tilde{\ve{u}}} \left[(-f') \left(\frac{\pi_i}{q_i}\right)\right] - u(\mathcal{I}),
\end{eqnarray*}
where $u(\mathcal{I}) \defeq \sum_{i \in \mathcal{I}} u_i$ and $\tilde{\ve{u}}$ is distribution $\ve{u}$ masked and renormalized to support in the top-$k$ coordinates. A similar reformulation holds for $\beta$. So instead of $L_\alpha(-g)$ in \eqref{isAB-nonconvex}, we have to consider $L_\alpha(-\gamma g - (1-\gamma))$ for some $\gamma \in [0,1]$, which slightly extends the possible range of values to search in, and the same happens for $ L_\beta(-h)$. In the end, $\clamps(\ve{p}, L_\beta(-h) \cdot \ve{q}, L_\alpha(-g) \cdot \ve{q})$ in \eqref{isAB-nonconvex} becomes $\clamps(\ve{p}, L_\beta(-\gamma' h - (1-\gamma')) \cdot \ve{q}, L_\alpha(-\gamma g - (1-\gamma))  \cdot \ve{q})$ for some $\gamma, \gamma' \in [0,1]$. The set of optimal solutions looks more complicated but keeps the fundamental property that coordinate $i$ is necessarily $0$ if $i \in \mathcal{I}$, and we do not look for all optimal solutions but just for one whose support coincides with the mask. If $f$ is strictly convex, we still observe that the optimal solution $\ve{\pi}$ of \eqref{def-pb-f-md-1-general} has the same support as the top-$k$ coordinates of $\ve{q}$. If $f$ is not, we easily check that \textit{there exists} optimal solutions $\ve{\pi}$ of \eqref{def-pb-f-md-1-general} with the same support as the top-$k$ coordinates of $\ve{q}$. This ends the proof of Lemma \ref{lem-top-k}.

\subsection{Proof of Lemma \ref{lem-ba-exp}}\label{sec-proof-lem-ba-exp}

Fix any $t \in \{2, ..., T-1\}$. Since $\mu_{t+1} = \expect_{t+1}(t+1)$, note the dependence between $\mu_{t+1} = \expect_{t+1}(t+1)$ and $\mu_{t} = \expect_{t}(t-1)$:
\begin{eqnarray}
  \mu_{t+1} & = & \frac{\expect_{t}(t+1) - \mu_{t} \cdot \expect_{t}(t ,t+1)}{1 - \mu_{t}^2 } .\label{eq-dep-ttm1}
\end{eqnarray}
We now want a geometric progression on edges:
\begin{eqnarray}
     \mu_{t+1} & \geq&  (1+\delta) \mu_{t}.\label{eq-geom-edge}
   \end{eqnarray}
This is equivalent, from \eqref{eq-dep-ttm1}, to requesting
\begin{eqnarray*}
  \mu_{t} \cdot \expect_{t}(t ,t+1) & \leq & \expect_{t}(t+1) - (1+\delta)  \cdot \mu_{t}(1 - \mu_{t}^2),
\end{eqnarray*}
and since we assume $\expect_{t}(t+1) \geq (1-\beta) \mu_{t}$ (2.), it is sufficient to request $\mu_{t} \cdot \expect_{t}(t ,t+1) \leq (1-\beta) \mu_{t} - (1+\delta)  \cdot \mu_{t}(1 - \mu_{t}^2)$, which after simplification ($\mu_{t} > 0$ (1.)) gives the sufficient condition
\begin{eqnarray*}
  \expect_{t}(t ,t+1) & \leq & (1+\delta)  \cdot (1 - \mu_{t}^2) - (\beta + \delta)\\
   & & = 1 - \beta -  (1+\delta) \mu_{t}^2,
\end{eqnarray*}
which is (3.). We then get for the boosting advantage:
\begin{eqnarray*}
  A\left(\{\ve{h}_{t}\}_{t\in [T]}\right) & = & \sum_{t=1}^T \mu_t^2\\
                                          & \geq & \mu_1^2 \cdot \sum_{t=1}^T (1+\delta)^{2t} \\
   &  & =  \mu_1^2 \cdot \frac{(1+\delta)^{2T} - 1}{2 \delta + \delta^2},
\end{eqnarray*}
as claimed.

\subsection{Proof of Lemma \ref{lem-subopt-greedy}}\label{sec-proof-lem-subopt-greedy}

  Denote for short $a \defeq \mu_{t}$ and $b \defeq \tilde{\mu}_{t}$. Remark that
  \begin{eqnarray*}
  B(c,d) & \defeq & \mu^2_{t} + \mu^2_{t+1}\\
                      & = & \mu^2_{t} + \left(\frac{\expect_{t}(d) - \mu_{t} \expect_{t}(c, d)}{1 - \mu_{t}^2}\right)^2\\
                      & = & \mu^2_{t} + \left(\frac{\tilde{\mu}_{t} - \mu_{t} \expect_{t}(c, d)}{1 - \mu_{t}^2}\right)^2\\
                      & = & a^2 + \left(\frac{b-a \cdot \expect_{t}(c, d)}{1-a^2}\right)^2,\\
  \end{eqnarray*}
  and similarly
    \begin{eqnarray*}
  B(d,c) & = & b^2 + \left(\frac{a - b\cdot \expect_{t}(c, d)}{1-b^2}\right)^2.
    \end{eqnarray*}
Let $C \defeq \expect_{t}(c, d) - ab$ be the covariance between the sequence of $\ve{y}_{i}^\top \tilde{\ve{h}}_{c}(\bm{x}_i)$ and $\ve{y}_{i}^\top \tilde{\ve{h}}_d(\bm{x}_i)$ computed using $\ve{w}_{t}$. We remark the simplification:
\begin{eqnarray*}
  \left(\frac{b-a \cdot \expect_{t}(c, d)}{1-a^2}\right)^2 & = & \left(\frac{b-aC - a^2b}{1-a^2}\right)^2\\
  & = & \left(b - \frac{aC}{1-a^2}\right)^2
\end{eqnarray*}
and similarly $\left(\frac{a - b\cdot \expect_{t}(c, d)}{1-b^2}\right)^2 = \left(a - \frac{bC}{1-b^2}\right)^2$. We compute the difference and get after factoring
\begin{eqnarray*}
  \lefteqn{  B(c,d) - B(d,c))}\\
  & = &  a^2 - b^2 + \left(b - \frac{aC}{1-a^2}\right)^2 - \left(a - \frac{bC}{1-b^2}\right)^2\\
  & = &  \underbrace{\frac{1}{(1-a^2b^2)(1-a^2)^2(1-b^2)^2} \cdot (a^2 - b^2)}_{\defeq A} \cdot \underbrace{C \cdot \left(C-\frac{2ab (1-a^2)(1-b^2)}{(1-a^2b^2)} \right)}_{\defeq D}.
\end{eqnarray*}
Under the greedy fit scenario, $A > 0$ so the difference is $<0$ iff $D<0$. We also remark 
\begin{eqnarray*}
 0 \leq \rho \defeq \frac{2a|b| (1-a^2)(1-b^2)}{(1-a^2b^2)}\leq 1, \forall a \in (0,1], b \in [-1,1],
\end{eqnarray*} 
and with a bit more analytical analysis, we get that the upperbound can be replaced by $6 - 4\sqrt{2} < 0.35$. We know that $a>0$, so if $b>0$ then $D<0$ iff $C \in (0, \rho)$ while if $b<0$,  then $D<0$ iff $C \in (-\rho, 0)$, as claimed. 
 
\end{document}